\documentclass[english]{article}

\usepackage{algorithmic}
\usepackage{algorithm}
\usepackage{amsfonts}
\usepackage{amsmath}
\usepackage{amssymb}
\usepackage{amsthm}
\usepackage{array}
\usepackage{arydshln}
\usepackage{bm}
\usepackage{cite}
\usepackage{color}
\usepackage{comment}
\usepackage{dsfont}
\usepackage{enumitem}
\usepackage{float}
\usepackage[T1]{fontenc}
\usepackage{geometry}\geometry{verbose,tmargin=1in,bmargin=1in,lmargin=1in,rmargin=1in}
\usepackage{graphicx}
\usepackage{grffile}
\usepackage{hyperref}\hypersetup{colorlinks,linkcolor=red,anchorcolor=blue,citecolor=blue}
\usepackage[latin9]{inputenc}
\usepackage{letltxmacro}
\usepackage{mathrsfs}
\usepackage{mathtools}
\usepackage{multirow}
\usepackage{nicefrac}   
\usepackage{verbatim}


\newcommand{\ba}{\bm{a}}

\newcommand{\bu}{\bm{u}}
\newcommand{\bv}{\bm{v}}
\newcommand{\bw}{\bm{w}}

\newcommand{\by}{\bm{y}}

\newcommand{\bA}{\bm{A}}
\newcommand{\bB}{\bm{B}}

\newcommand{\bF}{\bm{F}}
\newcommand{\bG}{\bm{G}}
\newcommand{\bH}{\bm{H}}
\newcommand{\bI}{\bm{I}}

\newcommand{\bL}{\bm{L}}
\newcommand{\bM}{\bm{M}}

\newcommand{\bO}{\bm{O}}

\newcommand{\bQ}{\bm{Q}}
\newcommand{\bR}{\bm{R}}
\newcommand{\bS}{\bm{S}}

\newcommand{\bU}{\bm{U}}
\newcommand{\bV}{\bm{V}}
\newcommand{\bW}{\bm{W}}
\newcommand{\bX}{\bm{X}}
\newcommand{\bY}{\bm{Y}}


\newcommand{\bDelta}{\bm{\Delta}}

\newcommand{\bSigma}{\bm{\Sigma}}


\newcommand{\cA}{\mathcal{A}}

\newcommand{\cE}{\mathcal{E}}

\newcommand{\cH}{\mathcal{H}}
\newcommand{\cI}{\mathcal{I}}

\newcommand{\cL}{\mathcal{L}}

\newcommand{\cN}{\mathcal{N}}

\newcommand{\cP}{\mathcal{P}}

\newcommand{\cS}{\mathcal{S}}
\newcommand{\cT}{\mathcal{T}}


\newcommand{\RR}{\mathbb{R}}

\newcommand{\VV}{\mathbb{V}}


\newcommand{\mfk}{\mathfrak} 

\newcommand{\zero}{\bm{0}}

\newcommand{\argmin}{\mathop{\mathrm{argmin}}}
\newcommand{\minimize}{\mathop{\mathrm{minimize}}}

\DeclareMathOperator{\dist}{\mathrm{dist}}
\DeclareMathOperator{\fro}{\mathsf{F}}
\DeclareMathOperator{\GL}{\mathrm{GL}}

\DeclareMathOperator{\op}{\mathsf{op}}
\DeclareMathOperator{\rank}{\mathrm{rank}}
\DeclareMathOperator{\sgn}{\mathrm{sgn}}
\DeclareMathOperator{\tr}{\mathrm{tr}}
\DeclareMathOperator{\vc}{\mathrm{vec}}
 
\allowdisplaybreaks
\makeatletter
\providecommand{\tabularnewline}{\\}

\let\tilde\widetilde

\setlist[itemize]{leftmargin=1em}
\setlist[enumerate]{leftmargin=1em}

\theoremstyle{plain}\newtheorem{lemma}{\textbf{Lemma}} 
\newtheorem{proposition}{\textbf{Proposition}}
\newtheorem{theorem}{\textbf{Theorem}}\setcounter{theorem}{0}

\newtheorem{claim}{\textbf{Claim}} 
\theoremstyle{definition}\newtheorem{definition}{\textbf{Definition}}
 
\theoremstyle{remark}\newtheorem{remark}{\textbf{Remark}}

\definecolor{tian}{RGB}{0,150,0}
\definecolor{cm}{RGB}{250,0,200}
\definecolor{yc}{RGB}{255,0,0}

\begin{document}
\title{Accelerating Ill-Conditioned Low-Rank Matrix Estimation \\ via Scaled Gradient Descent}

\author
{
	Tian Tong\thanks{Department of Electrical and Computer Engineering, Carnegie Mellon University, Pittsburgh, PA 15213, USA; Emails:
		\texttt{\{ttong1,yuejiec\}@andrew.cmu.edu}.} \\
		Carnegie Mellon University \\
		\and
	Cong Ma\thanks{Department of Electrical Engineering and Computer Science, UC Berkeley, Berkeley, CA 94720, USA; Email:
		\texttt{congm@berkeley.edu}.} \\
		UC Berkeley \\
		\and
	Yuejie Chi\footnotemark[1] \\
	Carnegie Mellon University
}

\date{May 2020; Revised June 2021}
 
\setcounter{tocdepth}{2}
\maketitle

\begin{abstract}

Low-rank matrix estimation is a canonical problem that finds numerous applications in signal processing, machine learning and imaging science. A popular approach in practice is to factorize the matrix into two compact low-rank factors, and then optimize these factors directly via simple iterative methods such as gradient descent and alternating minimization. Despite nonconvexity, recent literatures have shown that these simple heuristics in fact achieve linear convergence when initialized properly for a growing number of problems of interest. However, upon closer examination, existing approaches can still be computationally expensive especially for ill-conditioned matrices: the convergence rate of gradient descent depends linearly on the condition number of the low-rank matrix, while the per-iteration cost of alternating minimization is often prohibitive for large matrices.

The goal of this paper is to set forth a competitive algorithmic approach dubbed {\em Scaled Gradient Descent} (\texttt{ScaledGD}) which can be viewed as preconditioned or diagonally-scaled gradient descent, where the preconditioners are adaptive and iteration-varying with a minimal computational overhead. With tailored variants for low-rank matrix sensing, robust principal component analysis and matrix completion, we theoretically show that \texttt{ScaledGD} achieves the best of both worlds: it converges linearly at a rate independent of the condition number of the low-rank matrix similar as alternating minimization, while maintaining the low per-iteration cost of gradient descent. Our analysis is also applicable to general loss functions that are restricted strongly convex and smooth over low-rank matrices. To the best of our knowledge, \texttt{ScaledGD} is the first algorithm that provably has such properties over a wide range of low-rank matrix estimation tasks. At the core of our analysis is the introduction of a new distance function that takes account of the preconditioners when measuring the distance between the iterates and the ground truth. Finally, numerical examples are provided to demonstrate the effectiveness of \texttt{ScaledGD} in accelerating  the convergence rate of ill-conditioned low-rank matrix estimation in a wide number of applications.

\end{abstract}

\medskip
\noindent\textbf{Keywords:} low-rank matrix factorization, scaled gradient descent, ill-conditioned matrix recovery, matrix sensing, robust PCA, matrix completion. \\

\tableofcontents{}

\section{Introduction}

Low-rank matrix estimation plays a critical role in fields such as machine learning, signal processing, imaging science, and many others. 
Broadly speaking, one aims to recover a rank-$r$ matrix $\bX_{\star}\in\RR^{n_{1}\times n_{2}}$ from a set of observations $\by=\cA(\bX_{\star})$, where the operator $\cA(\cdot)$ models the measurement process. It is natural to minimize the least-squares loss function subject to a rank constraint: 
\begin{align}
\minimize_{\bX\in\RR^{n_1\times n_2}}\;f(\bX)\coloneqq\tfrac{1}{2}\|\cA(\bX)-\by\|_{2}^{2}\qquad\mbox{s.t.}\quad\rank(\bX)\le r,
\end{align}
which is, however, computationally intractable in general due to the rank constraint. 
Moreover, as the size of the matrix increases, the costs involved in optimizing over the full matrix space (i.e.~$\RR^{n_{1}\times n_{2}}$) are prohibitive in terms of both memory and computation. To cope with these challenges, one popular approach is to parametrize $\bX=\bL\bR^{\top}$ by two low-rank factors $\bL\in\RR^{n_{1}\times r}$ and $\bR\in\RR^{n_{2}\times r}$ that are more memory-efficient, and then to optimize over the factors instead: 
\begin{align}
\minimize_{\bL\in\RR^{n_1\times r},\bR\in\RR^{n_2\times r}}\;\cL(\bL,\bR)\coloneqq f(\bL\bR^{\top}).\label{eq:problem}
\end{align}
Although this leads to a nonconvex optimization problem over the factors, recent breakthroughs have shown that simple algorithms (e.g.~gradient descent, alternating minimization), when properly initialized (e.g.~via the spectral method), can provably converge to the true low-rank factors under mild statistical assumptions. 
These benign convergence guarantees hold for a growing number of problems such as low-rank matrix sensing, matrix completion, robust principal component analysis (robust PCA), phase synchronization, and so on. 

However, upon closer examination, existing approaches such as gradient descent and alternating minimization are still computationally expensive, especially for ill-conditioned matrices. Take low-rank matrix sensing as an example: although the per-iteration cost is small, the iteration complexity of gradient descent scales linearly with respect to the condition number of the low-rank matrix $\bX_{\star}$ \cite{tu2015low}; on the other end, while the iteration complexity of alternating minimization \cite{jain2013low} is independent of the condition number, each iteration requires inverting a linear system whose size is proportional to the dimension of the matrix and thus the per-iteration cost is prohibitive for large-scale problems. 
These together raise an important open question: {\em can one design an algorithm with a comparable per-iteration cost as gradient descent, but converges much faster at a rate that is independent of the condition number as alternating minimization in a provable manner for a wide variety of low-rank matrix estimation tasks?}

\subsection{Preconditioning helps: scaled gradient descent}

In this paper, we answer this question affirmatively by studying the following scaled gradient descent (\texttt{ScaledGD}) algorithm to optimize \eqref{eq:problem}. Given an initialization $(\bL_{0}, \bR_{0})$, \texttt{ScaledGD} proceeds as follows
\begin{align}
\begin{split} \bL_{t+1} & =\bL_{t}-\eta\nabla_{\bL}\cL(\bL_{t},\bR_{t})(\bR_{t}^{\top}\bR_{t})^{-1},\\
 \bR_{t+1} & =\bR_{t}-\eta\nabla_{\bR}\cL(\bL_{t},\bR_{t})(\bL_{t}^{\top}\bL_{t})^{-1},
\end{split}\label{eq:scaledGD}
\end{align}
where $\eta > 0$ is the step size and $\nabla_{\bL}\cL(\bL_{t},\bR_{t})$ (resp.~$\nabla_{\bR}\cL(\bL_{t},\bR_{t})$) is the gradient of the loss function $\cL$ with respect to the factor $\bL_{t}$ (resp.~$\bR_{t}$) at the $t$-th iteration. Comparing to vanilla gradient descent, the search directions of the low-rank factors $\bL_{t},\bR_{t}$ in \eqref{eq:scaledGD} are {\em scaled} by $(\bR_{t}^{\top}\bR_{t})^{-1}$ and $(\bL_{t}^{\top}\bL_{t})^{-1}$ respectively. 
Intuitively, the scaling serves as a preconditioner as in quasi-Newton type algorithms, with the hope of improving the quality of the search direction to allow larger step sizes. Since the computation of the Hessian is extremely expensive, it is necessary to design preconditioners that are both theoretically sound and practically cheap to compute. Such requirements are met by \texttt{ScaledGD}, where the preconditioners are computed by inverting two $r\times r$ matrices, whose size is much smaller than the dimension of matrix factors.
Therefore, each iteration of \texttt{ScaledGD} adds minimal overhead to the gradient computation and has the order-wise same per-iteration cost as gradient descent. Moreover, the preconditioners are adaptive and iteration-varying. Another key property of \texttt{ScaledGD} is that it ensures the iterates are covariant with respect to the parameterization of low-rank factors up to invertible transforms. 

While \texttt{ScaledGD} and its alternating variants have been proposed in  \cite{mishra2012riemannian,mishra2016riemannian,tanner2016low} for a subset of the problems we studied, none of these prior art provides any theoretical validations to the empirical success.
In this work, we confirm {\em theoretically} that \texttt{ScaledGD} achieves linear convergence at a rate {\em independent of} the condition number of the matrix when initialized properly, e.g.~using the standard spectral method, for several canonical problems: low-rank matrix sensing, robust PCA, and matrix completion. Table~\ref{tab:performance-guarantees-ScaledGD} summarizes the performance guarantees of \texttt{ScaledGD} in terms of both statistical and computational complexities with comparisons to prior algorithms using the vanilla gradient method. 
\begin{itemize}
\item {\em Low-rank matrix sensing.} As long as the measurement operator satisfies the standard restricted isometry property (RIP) with an RIP constant $\delta_{2r}\lesssim 1/(\sqrt{r}\kappa)$, where $\kappa$ is the condition number of $\bX_{\star}$, \texttt{ScaledGD} reaches $\epsilon$-accuracy in $O(\log(1/\epsilon))$ iterations when initialized by the spectral method. This strictly improves the iteration complexity $O(\kappa\log(1/\epsilon))$ of gradient descent in \cite{tu2015low} under the same sample complexity requirement.

\item {\em Robust PCA.} Under the deterministic corruption model \cite{chandrasekaran2011siam},  as long as the fraction $\alpha$ of corruptions per row\;/\;column satisfies $\alpha \lesssim 1/(\mu r^{3/2}\kappa)$, where $\mu$ is the incoherence parameter of $\bX_{\star}$, \texttt{ScaledGD} in conjunction with hard thresholding reaches $\epsilon$-accuracy in $O(\log(1/\epsilon))$ iterations when initialized by the spectral method. This strictly improves the iteration complexity of projected gradient descent \cite{yi2016fast}.

\item {\em Matrix completion.} Under the random Bernoulli observation model, as long as the sample complexity satisfies $n_{1}n_{2}p\gtrsim (\mu\kappa^2\vee \log n)\mu nr^2\kappa^2$ with $n=n_1 \vee n_2$, \texttt{ScaledGD} in conjunction with a properly designed projection operator reaches $\epsilon$-accuracy in $O(\log(1/\epsilon))$ iterations when initialized by the spectral method. This improves the iteration complexity of projected gradient descent \cite{zheng2016convergence} at the expense of requiring a larger sample size. 
\end{itemize}
In addition, \texttt{ScaledGD} does not require any explicit regularizations that balance the norms of two low-rank factors as required in \cite{tu2015low,yi2016fast,zheng2016convergence}, and removed the additional projection that maintains the incoherence properties in robust PCA \cite{yi2016fast}, thus unveiling the implicit regularization property of \texttt{ScaledGD}.
To the best of our knowledge, this is the first factored gradient descent algorithm that achieves a fast convergence rate that is independent of the condition number of the low-rank matrix at near-optimal sample complexities without increasing the per-iteration computational cost. Our analysis is also applicable to general loss functions that are restricted strongly convex and smooth over low-rank matrices.

\begin{table}[t]
\centering %
\resizebox{\textwidth}{!}{\begin{tabular}{c||c|c||c|c||c|c}
\hline 
 & \multicolumn{2}{c||}{Matrix sensing}   & \multicolumn{2}{c||}{Robust PCA}  & \multicolumn{2}{c}{Matrix completion}  \tabularnewline
\hline 
\hline 
\multirow{2}{*}{Algorithms} & sample  & iteration & corruption  & iteration  & sample  & iteration \tabularnewline
 & complexity  & complexity     & fraction  & complexity &   complexity  & complexity \tabularnewline
\hline 
\multirow{2}{*}{\texttt{GD}}  & \multirow{2}{*}{$nr^{2}\kappa^{2}$} & \multirow{2}{*}{$\kappa\log\frac{1}{\epsilon}$} &   \multirow{2}{*}{$\frac{1}{\mu r^{3/2}\kappa^{3/2} \vee \mu r\kappa^{2}}$} & \multirow{2}{*}{$\kappa\log\frac{1}{\epsilon}$}   & \multirow{2}{*}{$(\mu\vee\log n)\mu n r^{2}\kappa^{2}$ } & \multirow{2}{*}{$\kappa\log\frac{1}{\epsilon}$} \tabularnewline
 &  &    &  &   &  &  \tabularnewline 
\hline 
\texttt{ScaledGD}  & \multirow{2}{*}{$nr^{2}\kappa^{2}$} & \multirow{2}{*}{$\log\frac{1}{\epsilon}$}   & \multirow{2}{*}{$\frac{1}{\mu r^{3/2}\kappa}$} & \multirow{2}{*}{$\log\frac{1}{\epsilon}$}    & \multirow{2}{*}{$(\mu\kappa^{2}\vee\log n)\mu n r^{2}\kappa^{2} $} & \multirow{2}{*}{$\log\frac{1}{\epsilon}$}\tabularnewline
(this paper)  &  &    &  &     & & \tabularnewline
\hline
\end{tabular}}\vspace{0.04in}
 \caption{Comparisons of \texttt{ScaledGD} with gradient descent (\texttt{GD}) when tailored to various problems (with spectral initialization) \cite{tu2015low,yi2016fast,zheng2016convergence}, where they have comparable per-iteration costs. Here, we say that the output $\bX$ of an algorithm reaches $\epsilon$-accuracy, if it satisfies $\|\bX-\bX_{\star}\|_{\fro}\le\epsilon\sigma_{r}(\bX_{\star})$. Here, $n\coloneqq n_1 \vee n_2 = \max\{n_{1},n_{2}\}$, $\kappa$ and $\mu$ are the condition number and incoherence parameter of $\bX_{\star}$.
 \label{tab:performance-guarantees-ScaledGD}  }
\end{table}

At the core of our analysis, we introduce a new distance metric (i.e.~Lyapunov function) that accounts for the preconditioners, and carefully show the contraction of the \texttt{ScaledGD} iterates under the new distance metric.
We expect that the \texttt{ScaledGD} algorithm can accelerate the convergence for other low-rank matrix estimation problems, as well as facilitate the design and analysis of other quasi-Newton first-order algorithms. 
As a teaser, Figure~\ref{fig:scaledGD_teaser} illustrates the relative error of completing a $1000\times 1000$ incoherent matrix of rank $10$ with varying condition numbers from $20\%$ of its entries, using either \texttt{ScaledGD} or vanilla GD with spectral initialization. Even for moderately ill-conditioned matrices, the convergence rate of vanilla GD slows down dramatically, while it is evident that \texttt{ScaledGD} converges at a rate independent of the condition number and therefore is much more efficient. 
 
\begin{figure}[t]
\centering
\includegraphics[width=0.45\textwidth]{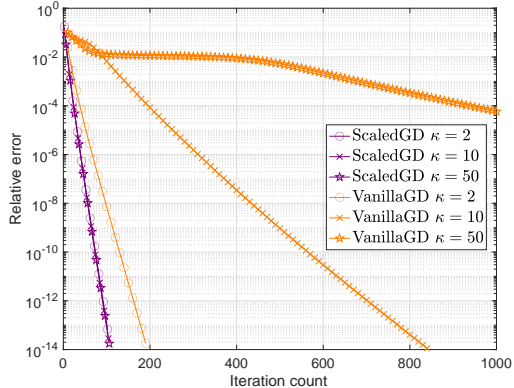}
\caption{Performance of \texttt{ScaledGD} and vanilla GD for completing a $1000\times 1000$ incoherent matrix of rank $10$ with different condition numbers $\kappa=2, 10, 50$, where each entry is observed independently with probability $0.2$. Here, both methods are initialized via the spectral method. It can be seen that \texttt{ScaledGD} converges much faster than vanilla GD even for moderately large condition numbers.}\label{fig:scaledGD_teaser}
\end{figure}

\begin{remark}[\texttt{ScaledGD} for PSD matrices] When the low-rank matrix of interest is positive semi-definite (PSD), we factorize the matrix $\bX\in\RR^{n\times n}$ as $\bX =\bL\bL^{\top}$, with $\bL\in\RR^{n\times r}$. The update rule of \texttt{ScaledGD} simplifies to
\begin{align}
\bL_{t+1}=\bL_{t}-\eta\nabla_{\bL}\cL(\bL_{t})(\bL_{t}^{\top}\bL_{t})^{-1}.\label{eq:scaledGD_PSD} 
\end{align}
We focus on the asymmetric case since the analysis is more involved with two factors. Our theory applies to the PSD case without loss of generality.
\end{remark}

\subsection{Related work} \label{sec:related}

Our work contributes to the growing literature of design and analysis of provable nonconvex optimization procedures for high-dimensional signal estimation; see e.g.~\cite{jain2017non,chen2018harnessing,chi2019nonconvex} for recent overviews.
A growing number of problems have been demonstrated to possess benign geometry that is amenable for optimization \cite{mei2016landscape} either globally or locally under appropriate statistical models. On one end, it is shown that there are no spurious local minima in the optimization landscape of matrix sensing and completion \cite{ge2016matrix,bhojanapalli2016global,park2017non,ge2017no}, phase retrieval \cite{sun2018geometric,davis2017nonsmooth}, dictionary learning \cite{sun2015complete}, kernel PCA \cite{chen2019model} and linear neural networks \cite{baldi1989neural,kawaguchi2016deep}.
Such landscape analysis facilitates the adoption of generic saddle-point escaping algorithms \cite{nesterov2006cubic,ge2015escaping,jin2017escape} to ensure global convergence. However, the resulting iteration complexity is typically high. On the other end, local refinements with carefully-designed initializations often admit fast convergence, for example in phase retrieval \cite{candes2015phase,ma2017implicit}, matrix sensing \cite{jain2013low,zheng2015convergent,wei2016guarantees}, matrix completion \cite{sun2016guaranteed,chen2015fast,ma2017implicit,chen2019nonconvex,zheng2016convergence,chen2019noisy}, blind deconvolution \cite{li2019rapid,ma2017implicit}, and robust PCA \cite{netrapalli2014non,yi2016fast,chen2020bridging}, to name a few. 

Existing approaches for asymmetric low-rank matrix estimation often require additional regularization terms to balance the two factors, either in the form of $\frac{1}{2}\|\bL^{\top}\bL-\bR^{\top}\bR\|_{\fro}^{2}$ \cite{tu2015low,park2017non} or $\frac{1}{2}\|\bL\|_{\fro}^{2}+\frac{1}{2}\|\bR\|_{\fro}^{2}$ \cite{zhu2017global,chen2019noisy, chen2020bridging}, which ease the theoretical analysis but are often unnecessary for the practical success, as long as the initialization is balanced. Some recent work studies the unregularized gradient descent for low-rank matrix factorization and sensing including \cite{charisopoulos2019low,du2018algorithmic,ma2021beyond}.
However, the iteration complexity of all these approaches scales at least linearly with respect to the condition number $\kappa$ of the low-rank matrix, e.g.~$O(\kappa\log(1/\epsilon))$, to reach $\epsilon$-accuracy, therefore they converge slowly when the underlying matrix becomes ill-conditioned.
In contrast, \texttt{ScaledGD} enjoys a local convergence rate of $O(\log(1/\epsilon))$, therefore incurring a much smaller computational footprint when $\kappa$ is large. Last but not least, alternating minimization \cite{jain2013low,hardt2014fast} (which alternatively updates $\bL_{t}$ and $\bR_{t}$) or singular value projection \cite{netrapalli2014non,jain2010guaranteed} (which operates in the matrix space) also converge at the rate $O(\log(1/\epsilon))$, but the per-iteration cost is much higher than \texttt{ScaledGD}. Another notable algorithm is the Riemannian gradient descent algorithm in \cite{wei2016guarantees}, which also converges at the rate $O(\log(1/\epsilon))$ under the same sample complexity for low-rank matrix sensing, but requires a higher memory complexity since it operates in the matrix space rather than the factor space.

From an algorithmic perspective, our approach is closely related to the alternating steepest descent (ASD) method in \cite{tanner2016low} for low-rank matrix completion, which performs the proposed updates \eqref{eq:scaledGD} for the low-rank factors in an alternating manner. Furthermore, the scaled gradient updates were also introduced in \cite{mishra2012riemannian,mishra2016riemannian} for low-rank matrix completion from the perspective of Riemannian optimization. However, none of \cite{tanner2016low,mishra2012riemannian,mishra2016riemannian} offered any statistical nor computational guarantees for global convergence. Our analysis of \texttt{ScaledGD} can be viewed as providing justifications to these precursors. Moreover, we have systematically extended the framework of \texttt{ScaledGD} to work in a large number of low-rank matrix estimation tasks such as robust PCA.

\subsection{Paper organization and notation}

The rest of this paper is organized as follows. Section~\ref{sec:problem_formulation} describes the proposed \texttt{ScaledGD} method and details its application to low-rank matrix sensing, robust PCA and matrix completion with theoretical guarantees in terms of both statistical and computational complexities, highlighting the role of a new distance metric. The convergence guarantee of \texttt{ScaledGD} under the general loss function is also presented.
In Section~\ref{sec:proof-outline}, we outline the proof for our main results. Section~\ref{sec:numerical} illustrates the excellent empirical performance of \texttt{ScaledGD} in a variety of low-rank matrix estimation problems. Finally, we conclude in Section~\ref{sec:discussion}. 

Before continuing, we introduce several notation used throughout the paper. First of all, we use boldfaced symbols for vectors and matrices. For a vector $\bv$, we use $\|\bv\|_{0}$ to denote its $\ell_0$ counting norm, and $\|\bv\|_{2}$ to denote the $\ell_2$ norm. For any matrix $\bA$, we use $\sigma_{i}(\bA)$ to denote its $i$-th largest singular value, and let $\bA_{i,\cdot}$ and $\bA_{\cdot,j}$ denote its $i$-th row and $j$-th column, respectively. In addition, $\|\bA\|_{\op}$, $\|\bA\|_{\fro}$, $\|\bA\|_{1,\infty}$, $\|\bA\|_{2,\infty}$, and $\|\bA\|_{\infty}$ stand for the spectral norm (i.e.~the largest singular value), the Frobenius norm, the $\ell_{1,\infty}$ norm (i.e.~the largest $\ell_1$ norm of the rows), the $\ell_{2,\infty}$ norm (i.e.~the largest $\ell_2$ norm of the rows), and the entrywise $\ell_{\infty}$ norm (the largest magnitude of all entries) of a matrix $\bA$. We denote 
\begin{align}
\cP_{r}(\bA)=\min_{\tilde{\bA}:\rank(\tilde{\bA}) \le r}\;\|\bA-\tilde{\bA}\|_{\fro}^2\label{eq:rank_r_proj}
\end{align}
as the rank-$r$ approximation of $\bA$, which is given by the top-$r$ SVD of $\bA$ by the Eckart-Young-Mirsky theorem. We also use $\vc(\bA)$ to denote the vectorization of a matrix $\bA$. For matrices $\bA,\bB$ of the same size, we use $\langle\bA,\bB\rangle=\sum_{i,j}\bA_{i,j}\bB_{i,j}=\tr(\bA^{\top}\bB)$ to denote their inner product.  The set of invertible matrices in $\RR^{r\times r}$ is denoted by $\GL(r)$.  
Let $a\vee b=\max\{a,b\}$ and $a\wedge b=\min\{a,b\}$. Throughout, $f(n)\lesssim g(n)$ or $f(n)=O(g(n))$ means $|f(n)|/|g(n)|\le C$
for some constant $C>0$ when $n$ is sufficiently large; $f(n)\gtrsim g(n)$ means $|f(n)|/|g(n)|\ge C$
for some constant $C>0$ when $n$ is sufficiently large.
Last but not least, we use the terminology ``with overwhelming probability'' to denote the event happens with probability at least $1-c_{1}n^{-c_{2}}$, where $c_{1},c_{2}>0$ are some universal constants, whose values may vary from line to line.

\section{Scaled Gradient Descent for Low-Rank Matrix Estimation}\label{sec:problem_formulation}

This section is devoted to introducing \texttt{ScaledGD} and establishing its statistical and computational guarantees for various low-rank matrix estimation problems. Before we instantiate tailored versions of \texttt{ScaledGD} on concrete low-rank matrix estimation problems, we first pause to provide more insights of the update rule of \texttt{ScaledGD}, by connecting it to the quasi-Newton method. Note that the update rule \eqref{eq:scaledGD} for \texttt{ScaledGD} can be equivalently written in a vectorization form as
\begin{align}
\vc(\bF_{t+1}) & =\vc(\bF_{t})-\eta\begin{bmatrix}(\bR_{t}^{\top}\bR_{t})^{-1}\otimes\bI_{n_{1}} & \zero\\
\zero & (\bL_{t}^{\top}\bL_{t})^{-1}\otimes\bI_{n_{2}}
\end{bmatrix}\vc(\nabla_{\bF}\cL(\bF_{t})) \nonumber\\
 & =\vc(\bF_{t})-\eta\bH_{t}^{-1}\vc(\nabla_{\bF}\cL(\bF_{t})),\label{eq:scaledGD_vec}
\end{align}
where we denote $\bF_{t}=[\bL_{t}^{\top},\bR_{t}^{\top}]^{\top}\in\RR^{(n_{1}+n_{2})\times r}$, and by $\otimes$ the Kronecker product. Here, the block diagonal matrix $\bH_{t}$ is set to be 
\begin{align*}
\bH_t \coloneqq \begin{bmatrix}(\bR_{t}^{\top}\bR_{t})\otimes\bI_{n_{1}} & \zero\\
\zero & (\bL_{t}^{\top}\bL_{t})\otimes\bI_{n_{2}}
\end{bmatrix}.
\end{align*}
The form \eqref{eq:scaledGD_vec} makes it apparent that \texttt{ScaledGD} can be interpreted as a quasi-Newton algorithm, where the inverse of $\bH_{t}$ can be cheaply computed through inverting two rank-$r$ matrices. 

\subsection{Assumptions and error metric} \label{subsec:scaledGD_assumptions}

Denote by $\bU_{\star}\bSigma_{\star}\bV_{\star}^{\top}$ the compact singular value decomposition (SVD) of the rank-$r$ matrix $\bX_{\star}\in\RR^{n_{1}\times n_{2}}$.
Here $\bU_{\star}\in\RR^{n_{1}\times r}$ and $\bV_{\star}\in\RR^{n_{2}\times r}$ are composed of $r$ left and right singular vectors, respectively, and $\bSigma_{\star}\in\RR^{r\times r}$ is a diagonal matrix consisting of $r$ singular values of $\bX_{\star}$ organized in a non-increasing order, i.e.~$\sigma_{1}(\bX_{\star})\ge\dots\ge\sigma_{r}(\bX_{\star})>0$. Define 
\begin{align}
\kappa\coloneqq\sigma_{1}(\bX_{\star})/\sigma_{r}(\bX_{\star})\label{eq:kappa}
\end{align} 
as the condition number of $\bX_{\star}$. Define the ground truth low-rank factors as 
\begin{align}
\bL_{\star}\coloneqq\bU_{\star}\bSigma_{\star}^{1/2}, \qquad\mbox{and}\qquad \bR_{\star}\coloneqq\bV_{\star}\bSigma_{\star}^{1/2},\label{eq:true_SVD}
\end{align} 
so that $\bX_{\star}=\bL_{\star}\bR_{\star}^{\top}$. Correspondingly, denote the stacked factor matrix as
\begin{align}
\bF_{\star}\coloneqq\begin{bmatrix}\bL_{\star} \\ \bR_{\star}\end{bmatrix}\in\RR^{(n_{1}+n_{2})\times r}.\label{eq:true_factor}
\end{align} 

Next, we are in need of a right metric to measure the performance of the \texttt{ScaledGD} iterates $\bF_{t}\coloneqq[\bL_{t}^{\top},\bR_{t}^{\top}]^{\top}$. 
Obviously, the factored representation is not unique in that for any invertible matrix $\bQ\in\GL(r)$, one has $\bL\bR^\top=(\bL\bQ)(\bR\bQ^{-\top})^{\top}$. Therefore, the reconstruction error metric needs to take into account this identifiability issue. More importantly, we need a diagonal scaling in the distance error metric to properly account for the effect of preconditioning. To provide intuition, note that the update rule \eqref{eq:scaledGD} can be viewed as finding the best local quadratic approximation of $\cL(\cdot)$ in the following sense:
\begin{align*}
(\bL_{t+1},\bR_{t+1})=\argmin_{\bL,\bR}\; & \cL(\bL_{t},\bR_{t})+\langle\nabla_{\bL}\cL(\bL_{t},\bR_{t}),\bL-\bL_{t}\rangle+\langle\nabla_{\bR}\cL(\bL_{t},\bR_{t}),\bR-\bR_{t}\rangle \\ 
 &\quad +\frac{1}{2\eta}\left(\left\Vert (\bL-\bL_{t})(\bR_{t}^{\top}\bR_{t})^{1/2}\right\Vert _{\fro}^{2}+\left\Vert (\bR-\bR_{t})(\bL_{t}^{\top}\bL_{t})^{1/2}\right\Vert _{\fro}^{2}\right),
\end{align*}
where it is different from the common interpretation of gradient descent in the way the quadratic approximation is taken by a scaled norm. When $\bL_{t}\approx\bL_{\star}$ and $\bR_{t}\approx\bR_{\star}$ are approaching the ground truth, the additional scaling factors can be approximated by $\bL_{t}^{\top}\bL_{t}\approx\bSigma_{\star}$ and $\bR_{t}^{\top}\bR_{t}\approx\bSigma_{\star}$, leading to the following error metric
\begin{align}
\dist^{2}(\bF,\bF_{\star})\coloneqq\inf_{\bQ\in\GL(r)}\;\left\Vert (\bL\bQ-\bL_{\star})\bSigma_{\star}^{1/2}\right\Vert _{\fro}^{2}+\left\Vert (\bR\bQ^{-\top}-\bR_{\star})\bSigma_{\star}^{1/2}\right\Vert _{\fro}^{2}.\label{eq:dist}
\end{align}
Correspondingly, we define the optimal alignment matrix $\bQ$ between $\bF$ and $\bF_{\star}$ as 
\begin{align}
\bQ\coloneqq\argmin_{\bQ\in\GL(r)}\;\left\Vert (\bL\bQ-\bL_{\star})\bSigma_{\star}^{1/2}\right\Vert _{\fro}^{2}+\left\Vert (\bR\bQ^{-\top}-\bR_{\star})\bSigma_{\star}^{1/2}\right\Vert _{\fro}^{2},\label{eq:Q_def}
\end{align}
whenever the minimum is achieved.\footnote{If there are multiple minimizers, we can arbitrarily take one to be $\bQ$.} It turns out that for the \texttt{ScaledGD} iterates $\{\bF_{t}\}$, the optimal alignment matrices $\{\bQ_{t}\}$ always exist (at least when properly initialized) and hence are well-defined. The design and analysis of this new distance metric are of crucial importance in obtaining the improved rate of \texttt{ScaledGD}; see Appendix~\ref{subsec:distance_metric} for a collection of its properties.
In comparison, the previously studied distance metrics (proposed mainly for GD) either do not include the diagonal scaling \cite{ma2021beyond,tu2015low}, or only consider the ambiguity class up to orthonormal transforms \cite{tu2015low}, which fail to unveil the benefit of \texttt{ScaledGD}.

\subsection{Matrix sensing}\label{subsec:scaledGD_MS}

Assume that we have collected a set of linear measurements about a rank-$r$ matrix $\bX_{\star}\in\RR^{n_{1}\times n_{2}}$, given as 
\begin{align}
\by=\cA(\bX_{\star})\in\RR^{m},\label{eq:sensing_measurements}
\end{align}
where $\cA(\bX)=\{\langle\bA_{k},\bX\rangle\}_{k=1}^{m}:\RR^{n_{1}\times n_{2}}\mapsto\RR^{m}$ is the linear map modeling the measurement process. The goal of low-rank matrix sensing is to recover $\bX_{\star}$ from $\by$, especially when the number of measurements $m\ll n_{1}n_{2}$, by exploiting the low-rank property. 
This problem has wide applications in medical imaging, signal processing, and data compression \cite{candes2011tight}. 

\paragraph{Algorithm.} Writing $\bX\in\RR^{n_{1}\times n_{2}}$ into a factored form $\bX=\bL\bR^{\top}$, we consider the following optimization problem: 
\begin{align}
\minimize_{\bF\in\RR^{(n_{1}+n_{2})\times r}}\;\cL(\bF)=\frac{1}{2}\left\Vert \cA(\bL\bR^{\top})-\by\right\Vert _{2}^{2}.\label{eq:loss_MS}
\end{align}
Here as before, $\bF$ denotes the stacked factor matrix $[\bL^\top, \bR^\top]^\top$. We suggest running \texttt{ScaledGD} \eqref{eq:scaledGD} with the spectral initialization to solve \eqref{eq:loss_MS}, which performs the top-$r$ SVD on $\cA^{*}(\by)$, where $\cA^{*}(\cdot)$ is the adjoint operator of $\cA(\cdot)$. The full algorithm is stated in Algorithm~\ref{alg:MS}. The low-rank matrix can be estimated as $\bX_{T}=\bL_{T}\bR_{T}^{\top}$ after running $T$ iterations of \texttt{ScaledGD}.

\begin{algorithm}[ht]
\caption{\texttt{ScaledGD} for low-rank matrix sensing with spectral initialization}\label{alg:MS} 
\begin{algorithmic} \STATE \textbf{{Spectral initialization}}: Let $\bU_{0}\bSigma_{0}\bV_{0}^{\top}$ be the top-$r$ SVD of $\cA^{*}(\by)$, and set 
\begin{align}
\bL_{0}=\bU_{0}\bSigma_{0}^{1/2},\quad\mbox{and}\quad\bR_{0}=\bV_{0}\bSigma_{0}^{1/2}.\label{eq:init_MS}
\end{align}
\vspace{-0.1in}
\STATE \textbf{{Scaled gradient updates}}: \textbf{for} $t=0,1,2,\dots,T-1$ \textbf{do} 
\begin{align}
\begin{split} & \bL_{t+1}=\bL_{t}-\eta\cA^{*}(\cA(\bL_{t}\bR_{t}^{\top})-\by)\bR_{t}(\bR_{t}^{\top}\bR_{t})^{-1},\\
 & \bR_{t+1}=\bR_{t}-\eta\cA^{*}(\cA(\bL_{t}\bR_{t}^{\top})-\by)^{\top}\bL_{t}(\bL_{t}^{\top}\bL_{t})^{-1}.
\end{split}\label{eq:iterates_MS}
\end{align}
\end{algorithmic} 
\end{algorithm}

\paragraph{Theoretical guarantees.} To understand the performance of \texttt{ScaledGD} for low-rank matrix sensing, we adopt a standard assumption on the sensing operator $\cA(\cdot)$, namely the Restricted Isometry Property (RIP). 
\begin{definition}[RIP \cite{recht2010guaranteed}] The linear map $\cA(\cdot)$ is said to obey the rank-$r$ RIP with a constant $\delta_{r}\in[0,1)$, if for all matrices $\bM\in\RR^{n_{1}\times n_{2}}$ of rank at most $r$, one has 
\begin{align*}
(1-\delta_{r})\|\bM\|_{\fro}^{2}\le\|\cA(\bM)\|_{2}^{2}\le(1+\delta_{r})\|\bM\|_{\fro}^{2}.
\end{align*}
\end{definition}

It is well-known that many measurement ensembles satisfy the RIP property \cite{recht2010guaranteed,candes2011tight}. For example, if the entries of $\bA_{i}$'s are composed of i.i.d.~Gaussian entries $\cN(0,1/m)$, then the RIP is satisfied for a constant $\delta_{r}$ as long as $m$ is on the order of $(n_{1}+n_{2})r/\delta_{r}^{2}$. With the RIP condition in place, the following theorem demonstrates that \texttt{ScaledGD} converges linearly --- in terms of the new distance metric (cf.~\eqref{eq:dist}) --- at a constant rate as long as the sensing operator $\cA(\cdot)$ has a sufficiently small RIP constant.

\begin{theorem}\label{thm:MS} Suppose that $\cA(\cdot)$ obeys the $2r$-RIP with $\delta_{2r}\le0.02/(\sqrt{r}\kappa)$. If the step size obeys $0<\eta\le2/3$, then for all $t\ge0$, the iterates of the \texttt{ScaledGD} method in Algorithm~\ref{alg:MS} satisfy 
\begin{align*}
\dist(\bF_{t},\bF_{\star})\le(1-0.6\eta)^{t}0.1\sigma_{r}(\bX_{\star}),\quad\mbox{and}\quad\left\Vert \bL_{t}\bR_{t}^{\top}-\bX_{\star}\right\Vert _{\fro}\le(1-0.6\eta)^{t}0.15\sigma_{r}(\bX_{\star}).
\end{align*}
\end{theorem}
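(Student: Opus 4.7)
My approach would be the standard two-part strategy for local nonconvex methods: (i) show the spectral initialization lands in a basin where the iteration contracts, and (ii) establish one-step contraction of the new distance metric inside that basin, then iterate.

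First I would handle the \emph{initialization}. Since $\cA^{\ast}\cA$ is close to identity on rank-$2r$ matrices by RIP, $\cA^{\ast}(\by)=\cA^{\ast}\cA(\bX_{\star})$ is close to $\bX_{\star}$ in the sense that $|\langle\bM,\cA^{\ast}\cA(\bX_{\star})-\bX_{\star}\rangle|\le\delta_{2r}\|\bM\|_{\fro}\|\bX_{\star}\|_{\fro}$ for rank-$r$ test matrices $\bM$; combining with the Eckart--Young optimality of the top-$r$ SVD yields $\|\bL_{0}\bR_{0}^{\top}-\bX_{\star}\|_{\op}\lesssim \delta_{2r}\sqrt{r}\sigma_{1}(\bX_{\star})\le 0.02\sigma_{r}(\bX_{\star})$. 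Then a perturbation argument for the scaled distance (standard and collected in Appendix~\ref{subsec:distance_metric} according to the excerpt) translates this spectral/Frobenius-level closeness into $\dist(\bF_{0},\bF_{\star})\le 0.1\sigma_{r}(\bX_{\star})$.

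Next, the \emph{one-step contraction}. Assume inductively that $\dist(\bF_{t},\bF_{\star})\le 0.1\sigma_{r}(\bX_{\star})$ and let $\bQ_{t}$ be the optimal alignment, $\tilde{\bL}_{t}=\bL_{t}\bQ_{t}$, $\tilde{\bR}_{t}=\bR_{t}\bQ_{t}^{-\top}$, $\bDelta_{\bL}=\tilde{\bL}_{t}-\bL_{\star}$, $\bDelta_{\bR}=\tilde{\bR}_{t}-\bR_{\star}$. The identity $\bR_{t}(\bR_{t}^{\top}\bR_{t})^{-1}\bQ_{t}=\tilde{\bR}_{t}(\tilde{\bR}_{t}^{\top}\tilde{\bR}_{t})^{-1}$ shows that the ScaledGD update is covariant under $\bQ_{t}$, so that
\begin{equation*}
\bL_{t+1}\bQ_{t}-\bL_{\star}=\bDelta_{\bL}-\eta\,\cA^{\ast}\cA\bigl(\tilde{\bL}_{t}\tilde{\bR}_{t}^{\top}-\bL_{\star}\bR_{\star}^{\top}\bigr)\tilde{\bR}_{t}(\tilde{\bR}_{t}^{\top}\tilde{\bR}_{t})^{-1},
\end{equation*}
and symmetrically for $\bR$. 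Since $\bQ_{t}$ is only a feasible (not the optimal) alignment for $\bF_{t+1}$, it gives an upper bound on $\dist(\bF_{t+1},\bF_{\star})$. I would split $\cA^{\ast}\cA=\cI+(\cA^{\ast}\cA-\cI)$, call the first piece the ``ideal'' step and the second the RIP perturbation, and analyze the squared scaled distance. For the ideal piece, after right-multiplying by $\bSigma_{\star}^{1/2}$, algebra using $\tilde{\bL}_{t}\tilde{\bR}_{t}^{\top}\tilde{\bR}_{t}(\tilde{\bR}_{t}^{\top}\tilde{\bR}_{t})^{-1}=\tilde{\bL}_{t}$ gives
\begin{equation*}
(\bL_{t+1}\bQ_{t}-\bL_{\star})\bSigma_{\star}^{1/2}=(1-\eta)\bDelta_{\bL}\bSigma_{\star}^{1/2}-\eta\,\bL_{\star}\bDelta_{\bR}^{\top}\tilde{\bR}_{t}(\tilde{\bR}_{t}^{\top}\tilde{\bR}_{t})^{-1}\bSigma_{\star}^{1/2}+\text{(RIP)}.
\end{equation*}
Squaring and summing with the symmetric $\bR$ update, the cross terms combine so that the linear contraction is $(1-\eta)^{2}\dist^{2}(\bF_{t},\bF_{\star})$ plus a higher-order term controlled by $\dist(\bF_{t},\bF_{\star})/\sigma_{r}(\bX_{\star})\le 0.1$, yielding an ideal contraction of roughly $(1-1.8\eta)\dist^{2}(\bF_{t},\bF_{\star})$ once one absorbs the quadratic-in-$\eta$ term using $\eta\le 2/3$.

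For the RIP perturbation, the key step is to bound terms of the form $\langle\cA^{\ast}\cA(\bE)-\bE,\bZ\rangle$ where $\bE=\tilde{\bL}_{t}\tilde{\bR}_{t}^{\top}-\bL_{\star}\bR_{\star}^{\top}$ has rank at most $2r$ and $\bZ$ is a low-rank test matrix built from $\bDelta_{\bL}\bSigma_{\star}^{1/2}$ and $\tilde{\bR}_{t}(\tilde{\bR}_{t}^{\top}\tilde{\bR}_{t})^{-1}\bSigma_{\star}^{1/2}$ (and the symmetric pieces). Using $\delta_{2r}$-RIP gives a bound $\delta_{2r}\|\bE\|_{\fro}\|\bZ\|_{\fro}$; because $\|\tilde{\bR}_{t}(\tilde{\bR}_{t}^{\top}\tilde{\bR}_{t})^{-1}\bSigma_{\star}^{1/2}\|_{\op}$ is close to $1$ (the scaling exactly cancels $\bSigma_{\star}^{1/2}$ at the truth, and the induction keeps it $O(1)$) while $\|\bE\|_{\fro}\lesssim\sqrt{r}\sigma_{1}(\bX_{\star})\cdot(\dist/\sigma_{r}(\bX_{\star}))\sim\sqrt{r}\kappa\,\dist(\bF_{t},\bF_{\star})$, the perturbation contributes at most $\eta\cdot O(\delta_{2r}\sqrt{r}\kappa)\,\dist^{2}(\bF_{t},\bF_{\star})$. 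The RIP hypothesis $\delta_{2r}\le 0.02/(\sqrt{r}\kappa)$ makes this a small constant times $\eta\,\dist^{2}$, and choosing the constants conservatively gives the overall bound $\dist^{2}(\bF_{t+1},\bF_{\star})\le(1-0.6\eta)^{2}\dist^{2}(\bF_{t},\bF_{\star})$, closing the induction.

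Finally, the Frobenius bound follows from the telescoping identity $\bL_{t}\bR_{t}^{\top}-\bX_{\star}=\bDelta_{\bL}\bR_{\star}^{\top}+\bL_{\star}\bDelta_{\bR}^{\top}+\bDelta_{\bL}\bDelta_{\bR}^{\top}$ and $\bR_{\star}=\bV_{\star}\bSigma_{\star}^{1/2}$, which bounds the first two terms by $\dist(\bF_{t},\bF_{\star})$ each, while the last term is at most $\dist^{2}(\bF_{t},\bF_{\star})/\sigma_{r}(\bX_{\star})\le 0.1\,\dist(\bF_{t},\bF_{\star})$; this yields $\|\bL_{t}\bR_{t}^{\top}-\bX_{\star}\|_{\fro}\le 1.5\,\dist(\bF_{t},\bF_{\star})$, giving the advertised $0.15\sigma_{r}(\bX_{\star})$ constant. \textbf{The main obstacle} is the one-step contraction in the ideal piece: the scaled metric makes the cross terms $-\eta\bL_{\star}\bDelta_{\bR}^{\top}\tilde{\bR}_{t}(\tilde{\bR}_{t}^{\top}\tilde{\bR}_{t})^{-1}\bSigma_{\star}^{1/2}$ nontrivially coupled with $\bDelta_{\bL}\bSigma_{\star}^{1/2}$, and one has to expand, symmetrize, and use $\|\bDelta_{\bL}\bSigma_{\star}^{1/2}\|_{\fro}^{2}+\|\bDelta_{\bR}\bSigma_{\star}^{1/2}\|_{\fro}^{2}=\dist^{2}$ together with a careful bound of $\|\tilde{\bR}_{t}^{\top}\tilde{\bR}_{t}-\bSigma_{\star}\|$ (a consequence of $\dist\le 0.1\sigma_{r}$) to make the cross terms help rather than hurt. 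Everything else (spectral initialization, Frobenius conversion, RIP perturbation) is comparatively routine once that core calculation is in place.
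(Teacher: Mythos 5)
Your overall architecture is the same as the paper's: a spectral-initialization bound of the form $\dist(\bF_{0},\bF_{\star})\lesssim\delta_{2r}\sqrt{r}\kappa\,\sigma_{r}(\bX_{\star})$, a local one-step contraction inside the basin $\dist\le 0.1\sigma_{r}(\bX_{\star})$, an induction, and the conversion $\|\bL_{t}\bR_{t}^{\top}-\bX_{\star}\|_{\fro}\le 1.5\,\dist(\bF_{t},\bF_{\star})$ via the three-term decomposition. However, there is a genuine gap exactly at the place you flag as the ``main obstacle,'' and your proposed remedy would not close it. The cross term $-2\eta(1-\eta)\tr\bigl(\bL_{\star}\bDelta_{\bR}^{\top}\tilde{\bR}_{t}(\tilde{\bR}_{t}^{\top}\tilde{\bR}_{t})^{-1}\bSigma_{\star}\bDelta_{\bL}^{\top}\bigr)$ is \emph{first order} in each of $\bDelta_{\bL},\bDelta_{\bR}$, so it is of size $\dist^{2}$, not higher order as your plan asserts; expanding, symmetrizing over the $\bL$ and $\bR$ updates, and using $\|\tilde{\bR}_{t}^{\top}\tilde{\bR}_{t}-\bSigma_{\star}\|\lesssim\epsilon\sigma_r$ only lets you bound its magnitude by roughly $\eta(1-\eta)\dist^{2}$ (via Cauchy--Schwarz/AM--GM), which yields a squared contraction factor of about $1-\eta+\eta^{2}$ --- strictly weaker than the required $(1-0.6\eta)^{2}=1-1.2\eta+0.36\eta^{2}$, and certainly not your claimed ideal rate $1-1.8\eta$. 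The missing idea is the first-order optimality condition of the \emph{scaled} alignment matrix $\bQ_{t}$ (Lemma~\ref{lemma:Q_criterion}): $(\bL\bQ)^{\top}(\bL\bQ-\bL_{\star})\bSigma_{\star}=\bSigma_{\star}(\bR\bQ^{-\top}-\bR_{\star})^{\top}\bR\bQ^{-\top}$. This identity is what lets the paper rewrite the dominant part of the cross term as $\tr\bigl(\bR(\bR^{\top}\bR)^{-1}\bR^{\top}\bDelta_{\bR}\bSigma_{\star}\bDelta_{\bR}^{\top}\bigr)\ge 0$, so that the cross term \emph{helps} and the rate $1-0.7\eta$ (factorization) and $1-0.6\eta$ (sensing, after RIP perturbations) can be achieved. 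Without invoking this property (or an equivalent balancedness mechanism tied to the optimal alignment, whose existence also needs Lemma~\ref{lemma:Q_existence}), the stated constants are out of reach.

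Two smaller points on your RIP perturbation step. First, the deviation inequality with constant $\delta_{2r}$ applies to pairs of matrices each of rank at most $r$; since $\bE=\tilde{\bL}_{t}\tilde{\bR}_{t}^{\top}-\bX_{\star}$ has rank up to $2r$, you must first decompose $\bE=\bDelta_{\bL}\bR_{\star}^{\top}+\bL_{\star}\bDelta_{\bR}^{\top}+\bDelta_{\bL}\bDelta_{\bR}^{\top}$ (as the paper does) rather than apply the bound to $\bE$ directly, otherwise you implicitly need $\delta_{3r}$ or $\delta_{4r}$. Second, your estimate $\|\bE\|_{\fro}\lesssim\sqrt{r}\kappa\,\dist(\bF_{t},\bF_{\star})$ is needlessly loose: the correct bound is $\|\bE\|_{\fro}\le 1.5\,\dist(\bF_{t},\bF_{\star})$, so the perturbation is $O(\delta_{2r})\eta\,\dist^{2}$ and the local contraction holds under the mild condition $\delta_{2r}\le 0.02$ alone; the factor $\sqrt{r}\kappa$ in the theorem's hypothesis is needed only to make the spectral initialization land in the basin. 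Your looser bound still suffices under the theorem's hypothesis, but it obscures this structure and would propagate an unnecessary $\sqrt{r}\kappa$ into the local analysis.
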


Theorem~\ref{thm:MS} establishes that the distance $\dist(\bF_{t},\bF_{\star})$ contracts linearly at a constant rate, as long as the sample size satisfies $m=O(nr^{2}\kappa^{2})$ with Gaussian random measurements \cite{recht2010guaranteed}, where we recall that $n=n_{1}\vee n_{2}$.
To reach $\epsilon$-accuracy, i.e.~$\|\bL_{t}\bR_{t}^{\top}-\bX_{\star}\|_{\fro}\le\epsilon\sigma_{r}(\bX_{\star})$, \texttt{ScaledGD} takes at most $T=O(\log(1/\epsilon))$ iterations, which is {\em independent} of the condition number $\kappa$ of $\bX_{\star}$. 
In comparison, alternating minimization with spectral initialization (\texttt{AltMinSense}) converges in $O(\log(1/\epsilon))$ iterations as long as $m=O(nr^{3}\kappa^{4})$ \cite{jain2013low}, where the per-iteration cost is much higher.\footnote{The exact per-iteration complexity of \texttt{AltMinSense} depends on how the least-squares subproblems are solved with $m$ equations and $nr$ unknowns; see \cite[Table 1]{luo2020recursive} for detailed comparisons.} On the other end, gradient descent with spectral initialization in \cite{tu2015low} converges in $O(\kappa\log(1/\epsilon))$ iterations as long as $m=O(nr^{2}\kappa^{2})$.
Therefore, \texttt{ScaledGD} converges at a much faster rate than GD at the same sample complexity while requiring a significantly lower per-iteration cost than \texttt{AltMinSense}.

\begin{remark} \cite{tu2015low} suggested that one can employ a more expensive initialization scheme, e.g.~performing multiple projected gradient descent steps over the low-rank matrix, to reduce the sample complexity. By seeding \texttt{ScaledGD} with the output of updates of the form $\bX_{\tau+1}=\cP_{r}\left(\bX_{\tau}-\cA^{*}(\cA(\bX_{\tau})-\by)\right)$ after $T_{0}\gtrsim\log (\sqrt{r}\kappa)$ iterations, where $\cP_r(\cdot)$ is defined in \eqref{eq:rank_r_proj}, \texttt{ScaledGD} succeeds with the sample size $O(nr)$ which is information theoretically optimal.
\end{remark}

\subsection{Robust PCA}\label{subsec:scaledGD_RPCA}

Assume that we have observed the data matrix 
\begin{align*}
\bY=\bX_{\star}+\bS_{\star},
\end{align*}
which is a superposition of a rank-$r$ matrix $\bX_{\star}$, modeling the clean data, and a sparse matrix $\bS_{\star}$, modeling the corruption or outliers.
The goal of robust PCA~\cite{candes2009robustPCA,chandrasekaran2011siam} is to separate the two matrices $\bX_{\star}$ and $\bS_{\star}$ from their mixture $\bY$. This problem finds numerous applications in video surveillance, image processing, and so on. 

Following~\cite{chandrasekaran2011siam,netrapalli2014non,yi2016fast}, we consider a deterministic sparsity model for $\bS_{\star}$, in which $\bS_{\star}$ contains at most $\alpha$-fraction of nonzero entries per row and column for some $\alpha\in[0,1)$, i.e.~$\bS_{\star}\in\cS_{\alpha}$, where we denote
\begin{align}
\cS_{\alpha}\coloneqq\{\bS\in\RR^{n_{1}\times n_{2}}:\|\bS_{i,\cdot}\|_{0}\le\alpha n_{2}\mbox{ for all }i, \mbox{ and }\|\bS_{\cdot,j}\|_{0}\le\alpha n_{1}\mbox{ for all }j\}.\label{eq:S_alpha}
\end{align}

\paragraph{Algorithm.} Writing $\bX\in\RR^{n_{1}\times n_{2}}$ into the factored form $\bX=\bL\bR^{\top}$, we consider the following optimization problem: 
\begin{align}
\minimize_{\bF\in\RR^{(n_{1}+n_{2})\times r},\bS\in\cS_{\alpha}}\;\cL(\bF,\bS)=\frac{1}{2}\left\Vert \bL\bR^{\top}+\bS-\bY\right\Vert _{\fro}^{2}.\label{eq:loss_RPCA}
\end{align}
It is thus natural to alternatively update $\bF=[\bL^{\top},\bR^{\top}]^{\top}$ and $\bS$, where $\bF$ is updated via the proposed \texttt{ScaledGD} algorithm, and $\bS$ is updated by hard thresholding, which trims the small entries of the residual matrix $\bY-\bL\bR^{\top}$. More specifically, for some truncation level $0\le\bar{\alpha}\le1$, we define the sparsification operator that only keeps $\bar{\alpha}$ fraction of largest entries in each row and column:
\begin{align}
(\cT_{\bar{\alpha}}[\bA])_{i,j}=\begin{cases}
\bA_{i,j}, & \mbox{if }|\bA|_{i,j}\ge|\bA|_{i,(\bar{\alpha}n_{2})},\mbox{ and }|\bA|_{i,j}\ge|\bA|_{(\bar{\alpha}n_{1}),j}\\
0, & \mbox{otherwise}
\end{cases},\label{eq:T_alpha}
\end{align}
where $|\bA|_{i,(k)}$ (resp.~$|\bA|_{(k),j}$) denote the $k$-th largest element in magnitude in the $i$-th row (resp.~$j$-th column).

The \texttt{ScaledGD} algorithm with the spectral initialization for solving robust PCA is formally stated in Algorithm~\ref{alg:RPCA}. Note that, comparing with \cite{yi2016fast}, we do not require a balancing term $\|\bL^{\top}\bL-\bR^{\top}\bR\|_{\fro}^{2}$ in the loss function \eqref{eq:loss_RPCA}, nor the projection of the low-rank factors onto the $\ell_{2,\infty}$ ball in each iteration.
\begin{algorithm}[ht]
\caption{\texttt{ScaledGD} for robust PCA with spectral initialization}\label{alg:RPCA} 
\begin{algorithmic} \STATE \textbf{{Spectral initialization}}: Let $\bU_{0}\bSigma_{0}\bV_{0}^{\top}$ be the top-$r$ SVD of $\bY-\cT_{\alpha}[\bY]$, and set 
\begin{align}
\bL_{0}=\bU_{0}\bSigma_{0}^{1/2},\quad\mbox{and}\quad\bR_{0}=\bV_{0}\bSigma_{0}^{1/2}.\label{eq:init_RPCA}
\end{align}
\vspace{-0.1in}
\STATE \textbf{{Scaled gradient updates}}: \textbf{for} $t=0,1,2,\dots,T-1$ \textbf{do} 
\begin{align}
\begin{split} \bS_{t} & =\cT_{2\alpha}[\bY-\bL_{t}\bR_{t}^{\top}],\\
 \bL_{t+1} & =\bL_{t}-\eta(\bL_{t}\bR_{t}^{\top}+\bS_{t}-\bY)\bR_{t}(\bR_{t}^{\top}\bR_{t})^{-1},\\
 \bR_{t+1} & =\bR_{t}-\eta(\bL_{t}\bR_{t}^{\top}+\bS_{t}-\bY)^{\top}\bL_{t}(\bL_{t}^{\top}\bL_{t})^{-1}.
\end{split}\label{eq:iterates_RPCA}
\end{align}
\end{algorithmic} 
\end{algorithm}

\paragraph{Theoretical guarantee.} Before stating our main result for robust PCA, we introduce the incoherence condition which is known to be crucial for reliable estimation of the low-rank matrix $\bX_{\star}$ in robust PCA~\cite{chen2015incoherence}. 
\begin{definition}[Incoherence]\label{def:incoherence} A rank-$r$ matrix $\bX_{\star}\in\RR^{n_{1}\times n_{2}}$ with compact SVD as $\bX_{\star}=\bU_{\star}\bSigma_{\star}\bV_{\star}^{\top}$ is said to be $\mu$-incoherent if
\begin{align*}
\|\bU_{\star}\|_{2,\infty}\le\sqrt{\frac{\mu}{n_{1}} }\|\bU_{\star}\|_{\fro}=\sqrt{\frac{\mu r}{n_{1}}},\quad\mbox{and}\quad\|\bV_{\star}\|_{2,\infty}\le\sqrt{\frac{\mu}{n_{2}}}\|\bV_{\star}\|_{\fro}=\sqrt{ \frac{\mu r}{n_{2}}}.
\end{align*}
\end{definition}

The following theorem establishes that \texttt{ScaledGD} converges linearly at a constant rate as long as the fraction $\alpha$ of corruptions is sufficiently small.

\begin{theorem}\label{thm:RPCA} Suppose that $\bX_{\star}$ is $\mu$-incoherent and that the corruption fraction $\alpha$ obeys $\alpha\le c/(\mu r^{3/2}\kappa)$ for some sufficiently small constant $c>0$. If the step size obeys $0.1\le\eta\le2/3$, then for all $t\ge0$, the iterates of \texttt{ScaledGD} in Algorithm~\ref{alg:RPCA} satisfy 
\begin{align*}
\dist(\bF_{t},\bF_{\star}) & \le(1-0.6\eta)^{t}0.02\sigma_{r}(\bX_{\star}),\quad\mbox{and}\quad\left\Vert \bL_{t}\bR_{t}^{\top}-\bX_{\star}\right\Vert _{\fro}\le(1-0.6\eta)^{t}0.03\sigma_{r}(\bX_{\star}).
\end{align*}
\end{theorem}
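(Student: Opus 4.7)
The plan is to prove Theorem~\ref{thm:RPCA} by induction on $t$, simultaneously controlling two quantities at every iteration: the scaled distance $\dist(\bF_t,\bF_\star)$ and an $\ell_{2,\infty}$ incoherence bound on the aligned factors $\bL_t\bQ_t$ and $\bR_t\bQ_t^{-\top}$, where $\bQ_t$ is the optimal alignment matrix of \eqref{eq:Q_def}. Both must be tracked jointly because the hard-thresholding step $\bS_t=\cT_{2\alpha}[\bY-\bL_t\bR_t^\top]$ only recovers $\bS_\star$ up to a controllable error when $\bL_t\bR_t^\top-\bX_\star$ is spread out entrywise, a property that in turn requires incoherence of both the ground truth and the current iterate. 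Concretely, the induction hypothesis would be $\dist(\bF_t,\bF_\star)\le (1-0.6\eta)^{t}\cdot 0.02\,\sigma_r(\bX_\star)$ together with $\|(\bL_t\bQ_t-\bL_\star)\bSigma_\star^{1/2}\|_{2,\infty}\vee\|(\bR_t\bQ_t^{-\top}-\bR_\star)\bSigma_\star^{1/2}\|_{2,\infty}\le c_0\sqrt{\mu r/n}\,\sigma_r(\bX_\star)$ for a suitable absolute constant $c_0$.

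For the base case, write $\bY-\cT_\alpha[\bY]=\bX_\star+\bDelta_0$ where $\bDelta_0=\bS_\star-\cT_\alpha[\bY]$ has at most $2\alpha$ nonzeros per row and column (by the definition of $\cT_\alpha$ and since $\bS_\star\in\cS_\alpha$), and whose entries are bounded by $\|\bX_\star\|_\infty\le\mu r\sigma_1(\bX_\star)/\sqrt{n_1 n_2}$ via incoherence. A row/column-sparsity argument then yields $\|\bDelta_0\|_\op\lesssim\alpha\mu r\sigma_1(\bX_\star)$, and standard Weyl and Wedin inequalities convert this into matching bounds on $\dist(\bF_0,\bF_\star)$ and the $\ell_{2,\infty}$ norms of the initial factors, both small enough to seed the induction provided $\alpha\lesssim 1/(\mu r^{3/2}\kappa)$.

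For the inductive contraction, decompose the residual appearing in \eqref{eq:iterates_RPCA} as $\bL_t\bR_t^\top+\bS_t-\bY=(\bL_t\bR_t^\top-\bX_\star)+\bE_t$ with $\bE_t\coloneqq\bS_t-\bS_\star$. Substituting into the scaled updates and upper bounding $\dist^2(\bF_{t+1},\bF_\star)$ by using the candidate alignment $\bQ_{t+1}=\bQ_t$, the clean term produces a $(1-\eta)$ contraction on the linearized error plus higher-order terms controlled by $\dist(\bF_t,\bF_\star)/\sigma_r(\bX_\star)$; this portion parallels the matrix-sensing analysis at $\delta_{2r}=0$ and relies on the properties of the new distance metric catalogued in Appendix~\ref{subsec:distance_metric}. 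The genuinely new piece is the corruption perturbation. A standard support-overlap argument shows that $\bE_t$ is supported on at most $3\alpha$ nonzeros per row and per column and $|\bE_t|_{i,j}\le 2\,|\bL_t\bR_t^\top-\bX_\star|_{i,j}$, and combined with the entrywise bound $\|\bL_t\bR_t^\top-\bX_\star\|_\infty\lesssim (\mu r/n)\,\dist(\bF_t,\bF_\star)$ implied by the incoherence hypothesis, this yields a bound of the form
\begin{align*}
\left\|\bE_t \bR_t (\bR_t^\top\bR_t)^{-1}\bSigma_\star^{1/2}\right\|_{\fro}\;\lesssim\;\alpha\mu r\,\dist(\bF_t,\bF_\star),
\end{align*}
which is absorbed into the contraction when $\alpha\lesssim 1/(\mu r^{3/2}\kappa)$ and delivers the stated $(1-0.6\eta)$ rate.

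The main obstacle, and the technical heart of the argument, is propagating the $\ell_{2,\infty}$ incoherence of the iterates \emph{without} any explicit projection, unlike \cite{yi2016fast}; this is precisely the implicit regularization property advertised in the introduction. For each row $i$, I would plug the update into $\|(\bL_{t+1}\bQ_t-\bL_\star)_{i,\cdot}\bSigma_\star^{1/2}\|_2$ and bound it by the row-wise effect of the clean gradient term (which interacts cleanly with the right preconditioner $(\bR_t^\top\bR_t)^{-1}$ because row structure is preserved under right multiplication, and where $\sigma_r(\bR_t)\gtrsim\sqrt{\sigma_r(\bX_\star)}$ follows from $\dist(\bF_t,\bF_\star)\le 0.02\sigma_r(\bX_\star)$) plus the sparse perturbation $\bE_t\bR_t(\bR_t^\top\bR_t)^{-1}$, each row of which has at most $3\alpha n$ active columns and can be bounded by $\sqrt{\alpha n}\,\|\bE_t\|_\infty\|\bR_t(\bR_t^\top\bR_t)^{-1}\bSigma_\star^{1/2}\|_{2,\infty}$ via Cauchy--Schwarz. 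Closing the induction then reduces to showing that this row-wise bound contracts by a factor strictly less than one, which matches the stated regime $\alpha\lesssim 1/(\mu r^{3/2}\kappa)$. With both the distance and incoherence inductions closed, the Frobenius bound $\|\bL_t\bR_t^\top-\bX_\star\|_{\fro}\le 1.5\,\dist(\bF_t,\bF_\star)$ supplied by the metric-equivalence lemmas in Appendix~\ref{subsec:distance_metric} yields the claimed $0.03\sigma_r(\bX_\star)$ constant.
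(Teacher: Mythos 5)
Your overall architecture (spectral initialization plus a local contraction lemma in which the scaled distance and an $\ell_{2,\infty}$ incoherence bound on the aligned factors are propagated jointly, with no explicit projection) is the same as the paper's. However, two steps as you state them would not go through.

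First, your treatment of the corruption perturbation $\bE_t=\bS_t-\bS_\star$ has a real gap. Your induction hypothesis keeps the incoherence error at the \emph{fixed} level $\|(\bL_t\bQ_t-\bL_\star)\bSigma_\star^{1/2}\|_{2,\infty}\lesssim\sqrt{\mu r/n}\,\sigma_r(\bX_\star)$, so the entrywise bound it yields is $\|\bL_t\bR_t^\top-\bX_\star\|_\infty\lesssim(\mu r/\sqrt{n_1n_2})\,\sigma_r(\bX_\star)$ — proportional to $\sigma_r(\bX_\star)$, \emph{not} to $\dist(\bF_t,\bF_\star)$ as you claim. With only that, the sparsity-times-sup-norm argument gives $\|\bE_t\bR_t(\bR_t^\top\bR_t)^{-1}\bSigma_\star^{1/2}\|_{\fro}\lesssim\sqrt{\alpha}\,\mu r\,\sigma_r(\bX_\star)$, an additive error that does not shrink with the iterates and therefore cannot be absorbed into a linear contraction: you would only prove convergence to a neighborhood, not the geometric decay in the theorem. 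The paper's fix (Lemma~\ref{lemma:S}) is to bound the \emph{inner product} $\langle\bS_t-\bS_\star,\bM\rangle$ by exploiting that on the $O(\alpha)$-sparse support each entry of $\bDelta_X=\bL\bR^\top-\bX_\star$ is controlled by the row norms of $\bDelta_L\bSigma_\star^{1/2}$ and $\bDelta_R\bSigma_\star^{1/2}$ times incoherence factors; summing over the sparse support then produces terms of the form $\sqrt{\alpha\mu r}\,\dist(\bF_t,\bF_\star)\|\bM\|_{\fro}$ and $\sqrt{\alpha}\,\|\bDelta_X\|_{\fro}$, both proportional to the current error. Without this correlation structure between $\bS_t-\bS_\star$ and $\bDelta_X$, the Frobenius contraction does not close. (Also note $|\bE_t|_{i,j}\le 2|\bDelta_X|_{i,j}$ is only true in $\ell_\infty$ norm; entrywise one picks up row/column order statistics of $\bDelta_X$, which the paper handles separately.)

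Second, the base case and the propagation of incoherence are thinner than they can be. Weyl plus Wedin give spectral/Frobenius control of the initialization, but they do not give the row-wise $\ell_{2,\infty}$ bounds you need; the paper instead uses the exact relation $(\bY-\cT_\alpha[\bY])\bV_0=\bU_0\bSigma_0$ to derive an identity for $\bDelta_L\bSigma_\star^{1/2}$ (Claim~\ref{claim:decomposition}) and then solves a coupled two-sided system in $\sqrt{n_1}\|\bDelta_L\bSigma_\star^{1/2}\|_{2,\infty}$ and $\sqrt{n_2}\|\bDelta_R\bSigma_\star^{1/2}\|_{2,\infty}$ (Lemma~\ref{lemma:init_2inf_RPCA}); some argument of this type is unavoidable. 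Finally, in the inductive step the $\ell_{2,\infty}$ error is not shown to \emph{contract} — it is merely maintained at the level $\sqrt{\mu r}\,\sigma_r(\bX_\star)$, and closing that bound requires (i) the lower bound $\eta\ge 0.1$ on the step size (which appears in the theorem precisely for this reason, and which your sketch never uses), and (ii) switching the alignment matrix from $\bQ_t$ to $\bQ_{t+1}$, since the $\ell_{2,\infty}$ hypothesis at step $t+1$ is stated with respect to $\bQ_{t+1}$; this re-alignment costs an additive term controlled via Lemma~\ref{lemma:Q_perturbation} and is not addressed in your row-wise argument.
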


Theorem~\ref{thm:RPCA} establishes that the distance $\dist(\bF_{t},\bF_{\star})$ contracts linearly at a constant rate, as long as the fraction of corruptions satisfies $\alpha\lesssim1/(\mu r^{3/2}\kappa)$.
To reach $\epsilon$-accuracy, i.e.~$\|\bL_{t}\bR_{t}^{\top}-\bX_{\star}\|_{\fro}\le\epsilon\sigma_{r}(\bX_{\star})$, \texttt{ScaledGD} takes at most $T=O(\log(1/\epsilon))$ iterations, which is {\em independent} of $\kappa$. 
In comparison, the \texttt{AltProj} algorithm\footnote{\texttt{AltProj} employs a multi-stage strategy to remove the dependence on $\kappa$ in $\alpha$, which we do not consider here. The same strategy might also improve the dependence on $\kappa$ for \texttt{ScaledGD}, which we leave for future work.} with spectral initialization converges in $O(\log(1/\epsilon))$ iterations as long as $\alpha\lesssim1/(\mu r)$ \cite{netrapalli2014non},
 where the per-iteration cost is much higher both in terms of computation and memory as it requires the computation of the low-rank SVD of the full matrix. 
On the other hand, projected gradient descent with spectral initialization in \cite{yi2016fast} converges in $O(\kappa\log(1/\epsilon))$ iterations as long as $\alpha\lesssim 1/(\mu r^{3/2}\kappa^{3/2}\vee\mu r\kappa^{2})$.
Therefore, \texttt{ScaledGD} converges at a much faster rate than GD while requesting a significantly lower per-iteration cost than \texttt{AltProj}. In addition, our theory suggests that \texttt{ScaledGD} maintains the incoherence and balancedness of the low-rank factors without imposing explicit regularizations, which is not captured in previous analysis \cite{yi2016fast}.

\subsection{Matrix completion}\label{subsec:scaledGD_MC}

Assume that we have observed a subset $\Omega$ of entries of $\bX_{\star}$ given as 
$\cP_{\Omega}(\bX_{\star})$, where $\cP_{\Omega}:\RR^{n_{1}\times n_{2}}\mapsto\RR^{n_{1}\times n_{2}}$ is a projection such that
\begin{align}
(\cP_{\Omega}(\bX))_{i,j}=\begin{cases}\bX_{i,j}, & \mbox{if }(i,j)\in\Omega \\
0, & \mbox{otherwise}\end{cases}.
\end{align}
Here $\Omega$ is generated according to the Bernoulli model in the sense that each $(i,j) \in \Omega$ independent with probability $p$. 
The goal of matrix completion is to recover the matrix $\bX_{\star}$ from its partial observation $\cP_{\Omega}(\bX_{\star})$. This problem has many applications in recommendation systems, signal processing, sensor network localization, and so on \cite{candes_mc}.

\paragraph{Algorithm.} Again, writing $\bX\in\RR^{n_{1}\times n_{2}}$ into the factored form $\bX=\bL\bR^{\top}$, we consider the following optimization problem: 
\begin{align}
\minimize_{\bF\in\RR^{(n_{1}+n_{2})\times r}}\;\cL(\bF)\coloneqq\frac{1}{2p}\left\Vert \cP_{\Omega}(\bL\bR^{\top}-\bX_{\star})\right\Vert _{\fro}^2.\label{eq:loss_MC}
\end{align}
Similarly to robust PCA, the underlying low-rank matrix $\bX_{\star}$ needs to be incoherent (cf.~Definition~\ref{def:incoherence}) to avoid ill-posedness. One typical strategy to ensure the incoherence condition is to perform projection after the gradient update, by projecting the iterates to maintain small $\ell_{2,\infty}$ norms of the factor matrices. However, the standard projection operator \cite{chen2015fast} is not covariant with respect to invertible transforms, and consequently, needs to be modified when using scaled gradient updates. To that end, we introduce the following new projection operator: for every $\tilde{\bF}\in\RR^{(n_{1}+n_{2})\times r} = [\tilde{\bL}^{\top}, \tilde{\bR}^{\top}]^{\top}$,
\begin{align}
\begin{split}
 \cP_{B}(\tilde{\bF}) =& \argmin_{\bF\in\RR^{(n_{1}+n_{2})\times r}}\;\left\Vert(\bL-\tilde{\bL})(\tilde{\bR}^{\top}\tilde{\bR})^{1/2}\right\Vert_{\fro}^{2} + \left\Vert(\bR-\tilde{\bR})(\tilde{\bL}^{\top}\tilde{\bL})^{1/2}\right\Vert_{\fro}^{2} \\
 & \quad\mbox{s.t.}\quad\sqrt{n_{1}}\left\Vert\bL(\tilde{\bR}^{\top}\tilde{\bR})^{1/2}\right\Vert_{2,\infty} \vee \sqrt{n_{2}}\left\Vert\bR(\tilde{\bL}^{\top}\tilde{\bL})^{1/2}\right\Vert_{2,\infty} \le B
\end{split},\label{eq:scaled_proj_opt}
\end{align} 
which finds a factored matrix that is closest to $\tilde{\bF}$ and stays incoherent in a weighted sense. Luckily, the solution to the above scaled projection admits a simple closed-form solution, as stated below.
\begin{proposition}\label{prop:scaled_proj_sol} The solution to \eqref{eq:scaled_proj_opt} is given by
\begin{align}
\begin{split}
 \cP_{B}(\tilde{\bF}) \coloneqq\begin{bmatrix}\bL \\ \bR \end{bmatrix},\quad\mbox{where}\quad
\bL_{i,\cdot} & \coloneqq \left(1\wedge \frac{B}{\sqrt{n_{1}}\|\tilde{\bL}_{i,\cdot}\tilde{\bR}^{\top}\|_{2}}\right)\tilde{\bL}_{i,\cdot}, \quad 1\le i \le n_{1}, \\ 
\bR_{j,\cdot} & \coloneqq \left(1\wedge \frac{B}{\sqrt{n_{2}}\|\tilde{\bR}_{j,\cdot}\tilde{\bL}^{\top}\|_{2}}\right)\tilde{\bR}_{j,\cdot}, \quad 1\le j \le n_{2}.
\end{split}\label{eq:scaled_proj}
\end{align}
\end{proposition}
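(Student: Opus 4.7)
The plan is to exploit the fact that both the objective and the constraint in \eqref{eq:scaled_proj_opt} decouple into two independent blocks (one for $\bL$, one for $\bR$), and that within each block the problem further decouples row by row. So the multi-dimensional constrained optimization reduces to $n_1 + n_2$ one-dimensional Euclidean ball projections, each of which has an obvious closed-form solution.

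\textbf{Step 1: Block decoupling.} Since $\tilde{\bL},\tilde{\bR}$ are fixed, the objective is a sum of a function of $\bL$ alone and a function of $\bR$ alone, and the constraint is a conjunction of two constraints, one involving only $\bL$ and one involving only $\bR$. Thus the joint minimization splits as
\begin{align*}
\min_{\bL}\;\left\Vert(\bL-\tilde{\bL})(\tilde{\bR}^{\top}\tilde{\bR})^{1/2}\right\Vert_{\fro}^{2}\;\;\text{s.t.}\;\;\sqrt{n_{1}}\left\Vert\bL(\tilde{\bR}^{\top}\tilde{\bR})^{1/2}\right\Vert_{2,\infty}\le B,
\end{align*}
and the analogous problem for $\bR$. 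I will handle the $\bL$-problem; the $\bR$-problem is identical up to relabeling.

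\textbf{Step 2: Change of variables and row-wise decoupling.} Let $\bM\coloneqq(\tilde{\bR}^{\top}\tilde{\bR})^{1/2}$, which is symmetric positive definite (assuming $\tilde{\bR}$ has full column rank, as is the case throughout the paper). Writing $\bA\coloneqq\bL\bM$ and $\tilde{\bA}\coloneqq\tilde{\bL}\bM$, the $\bL$-problem becomes
\begin{align*}
\min_{\bA}\;\sum_{i=1}^{n_{1}}\|\bA_{i,\cdot}-\tilde{\bA}_{i,\cdot}\|_{2}^{2}\;\;\text{s.t.}\;\;\|\bA_{i,\cdot}\|_{2}\le B/\sqrt{n_{1}}\text{ for all }i,
\end{align*}
since the Frobenius norm and the $\ell_{2,\infty}$ norm both decompose over rows. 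This separates completely into $n_1$ independent problems, each being the projection of $\tilde{\bA}_{i,\cdot}$ onto the Euclidean ball of radius $B/\sqrt{n_{1}}$, whose well-known solution is
\begin{align*}
\bA_{i,\cdot}=\left(1\wedge\frac{B/\sqrt{n_{1}}}{\|\tilde{\bA}_{i,\cdot}\|_{2}}\right)\tilde{\bA}_{i,\cdot}.
\end{align*}

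\textbf{Step 3: Unwinding the change of variables.} Translating back, $\bL_{i,\cdot}=\bA_{i,\cdot}\bM^{-1}$, which yields $\bL_{i,\cdot}$ equal to the same scalar factor times $\tilde{\bL}_{i,\cdot}$. It remains to simplify the norm inside the scalar: using $\bM^{2}=\tilde{\bR}^{\top}\tilde{\bR}$,
\begin{align*}
\|\tilde{\bA}_{i,\cdot}\|_{2}^{2}=\tilde{\bL}_{i,\cdot}\bM^{2}\tilde{\bL}_{i,\cdot}^{\top}=\tilde{\bL}_{i,\cdot}\tilde{\bR}^{\top}\tilde{\bR}\tilde{\bL}_{i,\cdot}^{\top}=\|\tilde{\bL}_{i,\cdot}\tilde{\bR}^{\top}\|_{2}^{2},
\end{align*}
so $\|\tilde{\bA}_{i,\cdot}\|_{2}=\|\tilde{\bL}_{i,\cdot}\tilde{\bR}^{\top}\|_{2}$. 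Substituting gives exactly the claimed formula for $\bL_{i,\cdot}$; the $\bR$-problem is treated identically with $\bM$ replaced by $(\tilde{\bL}^{\top}\tilde{\bL})^{1/2}$.

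There is no real obstacle here: the argument is a clean change of variables plus standard ball projection. The only small point to flag is the invertibility of $\tilde{\bL}^{\top}\tilde{\bL}$ and $\tilde{\bR}^{\top}\tilde{\bR}$, which is not explicitly stated but is implicit in the algorithmic context (the iterates have full column rank $r$); if either were singular, the weighted norm in the objective would degenerate and the minimizer would not be unique in the degenerate directions, but the formula in \eqref{eq:scaled_proj} would still give a valid minimizer upon interpreting the scalar coefficients as $1$ when the relevant denominator vanishes.
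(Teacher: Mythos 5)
Your proof is correct and follows essentially the same route as the paper: decouple the $\bL$ and $\bR$ blocks, change variables via $(\tilde{\bR}^{\top}\tilde{\bR})^{1/2}$ (resp.\ $(\tilde{\bL}^{\top}\tilde{\bL})^{1/2}$) to reduce to row-wise Euclidean ball projections, and unwind, using $\|\tilde{\bL}_{i,\cdot}(\tilde{\bR}^{\top}\tilde{\bR})^{1/2}\|_{2}=\|\tilde{\bL}_{i,\cdot}\tilde{\bR}^{\top}\|_{2}$ to obtain the stated scalars. The only differences are cosmetic: you spell out the ball-projection formula and the norm identity that the paper leaves implicit (citing prior work), and you explicitly flag the full-column-rank assumption that the paper takes for granted.
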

\begin{proof}
See Appendix~\ref{proof:scaled_proj_sol}.
\end{proof} 
With the new projection operator in place, we propose the scaled projected gradient descent (\texttt{ScaledPGD}) method with the spectral initialization for solving matrix completion, formally stated in Algorithm~\ref{alg:MC}.

\begin{algorithm}[ht]
\caption{\texttt{ScaledPGD} for matrix completion with spectral initialization}\label{alg:MC} 
\begin{algorithmic} \STATE \textbf{{Spectral initialization}}: Let $\bU_{0}\bSigma_{0}\bV_{0}^{\top}$ be the top-$r$ SVD of $\frac{1}{p}\cP_{\Omega}(\bX_{\star})$, and set
\begin{align}
\begin{bmatrix}\bL_{0} \\ \bR_{0}\end{bmatrix}=\cP_{B}\left(\begin{bmatrix}\bU_{0}\bSigma_{0}^{1/2} \\ \bV_{0}\bSigma_{0}^{1/2}\end{bmatrix}\right).\label{eq:init_MC}
\end{align}
\vspace{-0.1in}
\STATE \textbf{{Scaled projected gradient updates}}: \textbf{for} $t=0,1,2,\dots,T-1$ \textbf{do} 
\begin{align}
\begin{bmatrix}\bL_{t+1} \\ \bR_{t+1}\end{bmatrix}=\cP_{B}\left(\begin{bmatrix}
\bL_{t}-\frac{\eta}{p}\cP_{\Omega}(\bL_{t}\bR_{t}^{\top}-\bX_{\star})\bR_{t}(\bR_{t}^{\top}\bR_{t})^{-1} \\
\bR_{t}-\frac{\eta}{p}\cP_{\Omega}(\bL_{t}\bR_{t}^{\top}-\bX_{\star})^{\top}\bL_{t}(\bL_{t}^{\top}\bL_{t})^{-1}
\end{bmatrix}\right).\label{eq:iterates_MC}
\end{align}
\end{algorithmic} 
\end{algorithm}

\paragraph{Theoretical guarantee.} Consider a random observation model, where each index $(i,j)$ belongs to the index set $\Omega$ independently with probability $0<p\le 1$. The following theorem establishes that \texttt{ScaledPGD} converges linearly at a constant rate as long as the number of observations is sufficiently large.
 
\begin{theorem}\label{thm:MC} Suppose that $\bX_{\star}$ is $\mu$-incoherent, and that $p$ satisfies $p\ge C(\mu\kappa^{2}\vee \log(n_{1}\vee n_{2}))\mu r^{2}\kappa^{2}/(n_{1}\wedge n_{2})$ for some sufficiently large constant $C$. Set the projection radius as $B=C_{B}\sqrt{\mu r}\sigma_{1}(\bX_{\star})$ for some constant $C_{B}\ge1.02$. If the step size obeys $0<\eta\le2/3$, then with probability at least $1-c_{1}(n_{1}\vee n_{2})^{-c_{2}}$, for all $t\ge0$, the iterates of \texttt{ScaledPGD} in \eqref{eq:iterates_MC} satisfy
\begin{align*}
\dist(\bF_{t},\bF_{\star})\le(1-0.6\eta)^{t}0.02\sigma_{r}(\bX_{\star}),\quad\mbox{and}\quad\left\|\bL_{t}\bR_{t}^{\top}-\bX_{\star}\right\|_{\fro}\le(1-0.6\eta)^{t}0.03\sigma_{r}(\bX_{\star}).
\end{align*} 
Here $c_{1},c_{2}>0$ are two universal constants. 
\end{theorem}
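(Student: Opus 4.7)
My plan is to argue by induction on $t$, maintaining two invariants: (i) the scaled-distance bound $\dist(\bF_t, \bF_\star) \le (1 - 0.6\eta)^t \cdot 0.02\,\sigma_r(\bX_\star)$ claimed in the theorem, and (ii) the weighted $\ell_{2,\infty}$ control $\sqrt{n_1}\,\|\bL_t(\bR_t^\top\bR_t)^{1/2}\|_{2,\infty} \vee \sqrt{n_2}\,\|\bR_t(\bL_t^\top\bL_t)^{1/2}\|_{2,\infty} \le B$, which is automatic since $\bF_t$ is an output of $\cP_B$. A short SVD computation shows $\|\bL(\bR^\top\bR)^{1/2}\|_{2,\infty} = \|\bL\bR^\top\|_{2,\infty}$, so this constraint is the $\bF$-space incarnation of standard matrix incoherence on $\bL_t\bR_t^\top$. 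In particular, for $C_B \ge 1.02$ the $\mu$-incoherence of $\bX_\star$ places $\bF_\star$ strictly inside the constraint set, which is what will make $\cP_B$ behave as a non-expansion with respect to $\bF_\star$ under the $\dist$ metric.

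\paragraph{Base case.} For initialization, a matrix Bernstein inequality combined with the assumed sample complexity and incoherence gives $\|\tfrac{1}{p}\cP_\Omega(\bX_\star) - \bX_\star\|_\op \ll \sigma_r(\bX_\star)$. Weyl's inequality controls the singular values of $\tfrac{1}{p}\cP_\Omega(\bX_\star)$ and a Wedin/Davis--Kahan bound controls the subspaces; feeding these into the factorization lemmas for $\dist$ promised in the appendix then yields $\dist([\bU_0\bSigma_0^{1/2};\bV_0\bSigma_0^{1/2}], \bF_\star) \le 0.01\,\sigma_r(\bX_\star)$. Applying $\cP_B$ and invoking its non-expansiveness around $\bF_\star$ gives $\dist(\bF_0, \bF_\star) \le 0.02\,\sigma_r(\bX_\star)$, closing the base case; invariant (ii) is automatic.

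\paragraph{Inductive step.} Given the invariants at time $t$, let $\bQ_t$ be the optimal alignment from \eqref{eq:Q_def} and set $\hat{\bL}_t = \bL_t\bQ_t$, $\hat{\bR}_t = \bR_t\bQ_t^{-\top}$, $\bDelta_L = \hat{\bL}_t - \bL_\star$, $\bDelta_R = \hat{\bR}_t - \bR_\star$. Because \texttt{ScaledGD} is covariant under invertible transforms, multiplying the update of $\bL_t$ on the right by $\bQ_t$ (and using $\bR_t(\bR_t^\top\bR_t)^{-1}\bQ_t = \hat{\bR}_t(\hat{\bR}_t^\top\hat{\bR}_t)^{-1}$) rewrites the pre-projection iterate in the hatted variables with the same form as \eqref{eq:iterates_MC}. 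Splitting $\tfrac{1}{p}\cP_\Omega = \cI + (\tfrac{1}{p}\cP_\Omega - \cI)$ then produces a ``population'' term identical to the matrix-sensing dynamics analyzed for Theorem~\ref{thm:MS}, plus a ``noise'' term $\cE$. Substituting $\hat{\bL}_t\hat{\bR}_t^\top - \bX_\star = \bDelta_L\hat{\bR}_t^\top + \bL_\star\bDelta_R^\top$ and right-multiplying by $\bSigma_\star^{1/2}$, the population part reduces to $(1-\eta)\,\bDelta_L\bSigma_\star^{1/2} - \eta\,\bL_\star\bDelta_R^\top\hat{\bR}_t(\hat{\bR}_t^\top\hat{\bR}_t)^{-1}\bSigma_\star^{1/2}$ and its symmetric counterpart for $\bR$. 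Using $\hat{\bR}_t \approx \bR_\star$ (so $\hat{\bR}_t(\hat{\bR}_t^\top\hat{\bR}_t)^{-1}\bSigma_\star^{1/2} \approx \bV_\star$, with operator norm close to $1$), a quadratic expansion of $\dist^2$ gives contraction by at least $(1-0.6\eta)^2$ for $\eta \le 2/3$, mirroring the argument for Theorem~\ref{thm:MS}. Finally, non-expansiveness of $\cP_B$ at $\bF_\star$ transfers the bound from the pre-projection iterate to $\bF_{t+1}$, and invariant (ii) is restored by the projection itself.

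\paragraph{Main obstacle.} The hard part is bounding the noise $\cE$, which has the schematic form $(\tfrac{1}{p}\cP_\Omega - \cI)(\bDelta_L\hat{\bR}_t^\top + \bL_\star\bDelta_R^\top)\,\hat{\bR}_t(\hat{\bR}_t^\top\hat{\bR}_t)^{-1}\bSigma_\star^{1/2}$. Showing its $\dist$-contribution is at most a small constant times $\dist(\bF_t,\bF_\star)$ requires concentration estimates for $\tfrac{1}{p}\cP_\Omega - \cI$ acting on low-rank matrices whose row/column $\ell_{2,\infty}$ incoherence is controlled by invariant (ii) and by the incoherence of $\bX_\star$; the chief technical work is to decompose $\cE$ into pieces that pair one incoherent factor with one small-operator-norm factor, and apply standard Bernoulli concentration, which is precisely where the $(\mu\kappa^2 \vee \log n)\mu r^2\kappa^2$ sample scaling enters. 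A secondary subtlety is that $\cP_B$'s weights depend on $\tilde{\bL},\tilde{\bR}$ rather than on $\bSigma_\star$, so verifying its non-expansiveness under $\dist$ requires first arguing that $\hat{\bL}_t^\top\hat{\bL}_t$ and $\hat{\bR}_t^\top\hat{\bR}_t$ are close to $\bSigma_\star$ (a perturbation consequence of invariant (i)), so that the scaled distance and the projection-weighted distance are equivalent up to constants. Once these two ingredients are established, combining with the base case and a union bound over the (logarithmically many) iterates in the high-probability concentration event closes the induction.
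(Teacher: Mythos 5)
Your overall architecture (induction on a scaled-distance contraction plus an incoherence invariant enforced by $\cP_B$, spectral initialization, population-plus-noise splitting of $\tfrac1p\cP_\Omega$) matches the paper's, and your observation that $\|\bL(\bR^{\top}\bR)^{1/2}\|_{2,\infty}=\|\bL\bR^{\top}\|_{2,\infty}$ correctly identifies the incoherence invariant the paper maintains. However, your handling of the randomness has a genuine gap. You propose to bound the noise term by ``standard Bernoulli concentration'' and then close the induction with ``a union bound over the (logarithmically many) iterates.'' This does not work: the iterates $\bF_t$ are measurable functions of $\Omega$, so you cannot apply fresh concentration to the specific error matrices appearing at iteration $t$; moreover the theorem asserts the bound for \emph{all} $t\ge 0$, so there is no logarithmic cap on the number of iterations to union-bound over. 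The paper avoids both problems by using concentration inequalities that hold \emph{uniformly over all factor matrices} (its Lemmas~\ref{lemma:P_Omega_tangent} and \ref{lemma:P_Omega_Chen}, one for tangent-space-type terms with a $\sqrt{\mu r\log n/(pn)}$ rate and one trading $\sqrt{n/p}$ against $\ell_{2,\infty}$ norms of the factors), defining a single event $\cE$ on which the entire induction is deterministic. Without such uniform (or trajectory-independent) bounds — or an algorithm-altering sample-splitting scheme — your inductive step is not justified; this is also exactly where the specific decomposition into pieces pairing an incoherent factor with a Frobenius-norm factor must be carried out, rather than being a routine application of Bernstein.

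A second, smaller but real issue is your treatment of the projection. You propose to verify that $\cP_B$ is benign by showing the projection-weighted distance and the $\bSigma_\star$-weighted distance are ``equivalent up to constants.'' That is too weak: any per-iteration inflation factor $1+c$ with $c$ a fixed constant would overwhelm the contraction factor $1-0.6\eta$ when $\eta$ is small (the theorem permits any $0<\eta\le 2/3$), so the induction would not close. What is actually needed, and what the paper proves in Lemma~\ref{lemma:scaled_proj}, is \emph{exact} non-expansiveness $\dist(\cP_B(\tilde\bF),\bF_\star)\le\dist(\tilde\bF,\bF_\star)$, obtained by a row-wise argument: each row is shrunk multiplicatively toward the origin, and the margin $B\ge(1+\epsilon)\sqrt{\mu r}\,\sigma_1(\bX_\star)$ (this is where $C_B\ge 1.02$ enters) guarantees the shrinkage factor never undershoots the corresponding row of $\bL_\star\bSigma_\star^{1/2}$, so each row's error can only decrease. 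You should replace the metric-equivalence step with this row-wise contraction argument; the closeness of $\hat\bR_t^{\top}\hat\bR_t$ to $\bSigma_\star$ is used only to relate $\|\tilde\bL_{i,\cdot}\tilde\bR^{\top}\|_2$ to $\|\tilde\bL_{i,\cdot}\tilde\bQ\bSigma_\star^{1/2}\|_2$ within a $(1+\epsilon)$ factor, not to compare two distance functions.
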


Theorem~\ref{thm:MC} establishes that the distance $\dist(\bF_{t},\bF_{\star})$ contracts linearly at a constant rate, as long as the probability of observation satisfies $p \gtrsim (\mu\kappa^{2}\vee \log(n_{1}\vee n_{2}))\mu r^{2}\kappa^{2}/(n_{1}\wedge n_{2})$.
To reach $\epsilon$-accuracy, i.e.~$\|\bL_{t}\bR_{t}^{\top}-\bX_{\star}\|_{\fro}\le\epsilon\sigma_{r}(\bX_{\star})$, \texttt{ScaledPGD} takes at most $T=O(\log(1/\epsilon))$ iterations, which is {\em independent} of $\kappa$. In comparison, projected gradient descent \cite{zheng2016convergence} with spectral initialization converges in $O(\kappa\log(1/\epsilon))$ iterations as long as $p \gtrsim (\mu\vee\log(n_{1}\vee n_{2}))\mu r^{2}\kappa^{2}/(n_{1}\wedge n_{2})$. Therefore, \texttt{ScaledPGD} achieves much faster convergence than its unscaled counterpart, at an expense of higher sample complexity. We believe this higher sample complexity is an artifact of our proof techniques, as numerically we do not observe a degradation in terms of sample complexity.

\subsection{Optimizing general loss functions}

Last but not least, we generalize our analysis of \texttt{ScaledGD} to minimize a general loss function in the form of~\eqref{eq:problem}, where the update rule of \texttt{ScaledGD} is given by
\begin{align}
\begin{split} \bL_{t+1} &=\bL_{t}-\eta\nabla f(\bL_{t}\bR_{t}^{\top})\bR_{t}(\bR_{t}^{\top}\bR_{t})^{-1},\\
 \bR_{t+1} &=\bR_{t}-\eta\nabla f(\bL_{t}\bR_{t}^{\top})^{\top}\bL_{t}(\bL_{t}^{\top}\bL_{t})^{-1}.
\end{split}\label{eq:scaledGD_GL}
\end{align}

Two important properties of the loss function $f:\RR^{n_{1}\times n_{2}}\mapsto\RR$ play a key role in the analysis. 
\begin{definition}[Restricted smoothness]\label{def:restricted_smooth} A differentiable function $f:\RR^{n_{1}\times n_{2}}\mapsto\RR$ is said to be rank-$r$ restricted $L$-smooth for some $L>0$ if 
\begin{align*}
f(\bX_{2})\le f(\bX_{1})+\langle\nabla f(\bX_{1}),\bX_{2}-\bX_{1}\rangle+\frac{L}{2}\|\bX_{2}-\bX_{1}\|_{\fro}^2,
\end{align*}
for any $\bX_{1},\bX_{2}\in\RR^{n_{1}\times n_{2}}$ with rank at most $r$.
\end{definition}
\begin{definition}[Restricted strong convexity]\label{def:restricted_strongly_convex} A differentiable function $f:\RR^{n_{1}\times n_{2}}\mapsto\RR$ is said to be rank-$r$ restricted $\mu$-strongly convex for some $\mu\ge0$ if 
\begin{align*}
f(\bX_{2})\ge f(\bX_{1})+\langle\nabla f(\bX_{1}),\bX_{2}-\bX_{1}\rangle+\frac{\mu}{2}\|\bX_{2}-\bX_{1}\|_{\fro}^2,
\end{align*}
for any $\bX_{1},\bX_{2}\in\RR^{n_{1}\times n_{2}}$ with rank at most $r$. When $\mu=0$, we simply say $f(\cdot)$ is rank-$r$ restricted convex.
\end{definition}
Further, when $\mu>0$, define the condition number of the loss function $f(\cdot)$ over rank-$r$ matrices as
\begin{align} 
\kappa_{f}\coloneqq L/\mu.\label{eq:kappa_f}
\end{align}
Encouragingly, many problems can be viewed as a special case of optimizing this general loss \eqref{eq:scaledGD_GL}, including but not limited to:
\begin{itemize}
\item {\em low-rank matrix factorization}, where the loss function $f(\bX)=\frac{1}{2}\|\bX-\bX_{\star}\|_{\fro}^2$ in \eqref{eq:loss_MF} satisfies $\kappa_{f}=1$;
\item {\em low-rank matrix sensing}, where the loss function $f(\bX)=\frac{1}{2}\|\cA(\bX-\bX_{\star})\|_{2}^2$ in \eqref{eq:loss_MS} satisfies $\kappa_{f}\approx 1$ when $\cA(\cdot)$ obeys the rank-$r$ RIP with a sufficiently small RIP constant;
\item {\em quadratic sampling}, where the loss function $f(\bX)=\frac{1}{2}\sum_{i=1}^{m}|\langle \ba_i\ba_i^{\top}, \bX-\bX_{\star} \rangle|^2$ satisfies restricted strong convexity and smoothness when $\ba_i$'s are i.i.d.~Gaussian vectors for sufficiently large $m$ \cite{sanghavi2017local,li2018nonconvex};
\item {\em exponential-family PCA}, where the loss function $f(\bX)=-\sum_{i,j} \log p(\bY_{i,j}|\bX_{i,j})$, where $p(\bY_{i,j}|\bX_{i,j})$ is the probability density function of $\bY_{i,j}$ conditional on $\bX_{i,j}$, following an exponential-family distribution such as Bernoulli and Poisson distributions. The resulting loss function satisfies restricted strong convexity and smoothness with a condition number $\kappa_{f}>1$ depending on the property of the specific distribution \cite{gunasekar2014exponential,lafond2015low}.
\end{itemize}

Indeed, the treatment of a general loss function brings the condition number of $f(\cdot)$ under the spotlight, since in our earlier case studies $\kappa_{f}\approx1$. Our purpose is thus to understand the interplay of two types of conditioning numbers in the convergence of first-order methods. For simplicity, we assume that $f(\cdot)$ is minimized at the ground truth rank-$r$ matrix $\bX_{\star}$.\footnote{In practice, due to the presence of statistical noise, the minimizer of $f(\cdot)$ might be only approximately low-rank, to which our analysis can be extended in a straightforward fashion.} The following theorem establishes that as long as properly initialized, then \texttt{ScaledGD} converges linearly at a constant rate. 

\begin{theorem}\label{thm:GL} Suppose that $f(\cdot)$ is rank-$2r$ restricted $L$-smooth and $\mu$-strongly convex, of which $\bX_{\star}$ is a minimizer, and that the initialization $\bF_{0}$ satisfies $\dist(\bF_{0},\bF_{\star})\le0.1\sigma_{r}(\bX_{\star})/\sqrt{\kappa_{f}}$. If the step size obeys $0<\eta\le0.4/L$, then for all $t\ge0$, the iterates of \texttt{ScaledGD} in \eqref{eq:scaledGD_GL} satisfy
\begin{align*}
\dist(\bF_{t},\bF_{\star})\le(1-0.7\eta\mu)^{t}0.1\sigma_{r}(\bX_{\star})/\sqrt{\kappa_{f}},\quad\mbox{and}\quad\left\Vert \bL_{t}\bR_{t}^{\top}-\bX_{\star}\right\Vert _{\fro}\le(1-0.7\eta\mu)^{t}0.15\sigma_{r}(\bX_{\star})/\sqrt{\kappa_{f}}.
\end{align*} 
\end{theorem}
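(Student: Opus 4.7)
The plan is to proceed by induction on $t$, with inductive hypothesis $\dist(\bF_{t},\bF_{\star})\le(1-0.7\eta\mu)^{t}\,0.1\,\sigma_{r}(\bX_{\star})/\sqrt{\kappa_{f}}$; the base case is the assumed initialization. Fix $t\ge 0$ and let $\bQ_{t}$ denote the optimal alignment matrix guaranteed by the inductive bound. The first step is a covariance reduction: the scaled update \eqref{eq:scaledGD_GL} commutes with the reparameterization $(\bL,\bR)\mapsto(\bL\bQ,\bR\bQ^{-\top})$ for any $\bQ\in\GL(r)$, because $\bR(\bR^{\top}\bR)^{-1}\bQ=(\bR\bQ^{-\top})\bigl((\bR\bQ^{-\top})^{\top}(\bR\bQ^{-\top})\bigr)^{-1}$ and the loss depends only on $\bL\bR^{\top}$. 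Absorbing $\bQ_{t}$ into the factors I may assume $\bQ_{t}=\bI$, so $\dist^{2}(\bF_{t},\bF_{\star})=\|\Delta\bL\bSigma_{\star}^{1/2}\|_{\fro}^{2}+\|\Delta\bR\bSigma_{\star}^{1/2}\|_{\fro}^{2}$ with $\Delta\bL\coloneqq\bL_{t}-\bL_{\star}$ and $\Delta\bR\coloneqq\bR_{t}-\bR_{\star}$; I will then upper bound $\dist^{2}(\bF_{t+1},\bF_{\star})$ by using $\bQ_{t}=\bI$ as a (generally suboptimal) alignment for $\bF_{t+1}$ and show that this upper bound contracts.

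Plugging \eqref{eq:scaledGD_GL} into $\|(\bL_{t+1}-\bL_{\star})\bSigma_{\star}^{1/2}\|_{\fro}^{2}+\|(\bR_{t+1}-\bR_{\star})\bSigma_{\star}^{1/2}\|_{\fro}^{2}$ gives the standard three-term decomposition $\dist^{2}(\bF_{t},\bF_{\star})-2\eta\Phi_{t}+\eta^{2}\Psi_{t}$, where $\Phi_{t}=\langle\nabla f(\bL_{t}\bR_{t}^{\top}),\Delta\bL\,\breve\bR_{t}^{\top}+\breve\bL_{t}\,\Delta\bR^{\top}\rangle$ with the shorthands $\breve\bL_{t}\coloneqq\bL_{t}(\bL_{t}^{\top}\bL_{t})^{-1}\bSigma_{\star}$ and $\breve\bR_{t}\coloneqq\bR_{t}(\bR_{t}^{\top}\bR_{t})^{-1}\bSigma_{\star}$, which satisfy $\breve\bL_{\star}=\bL_{\star}$ and $\breve\bR_{\star}=\bR_{\star}$ since $\bL_{\star}^{\top}\bL_{\star}=\bR_{\star}^{\top}\bR_{\star}=\bSigma_{\star}$. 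The identity that makes the analysis work is
\[
\Delta\bL\,\breve\bR_{t}^{\top}+\breve\bL_{t}\,\Delta\bR^{\top}=(\bL_{t}\bR_{t}^{\top}-\bX_{\star})+\bE_{t},\qquad \bE_{t}\coloneqq\Delta\bL(\breve\bR_{t}-\bR_{\star})^{\top}+(\breve\bL_{t}-\bL_{\star})\Delta\bR^{\top}-\Delta\bL\Delta\bR^{\top}.
\]
A scaled-perturbation lemma of the type collected in Appendix~\ref{subsec:distance_metric} bounds $\|(\breve\bL_{t}-\bL_{\star})\bSigma_{\star}^{1/2}\|_{\fro}\lesssim\|\Delta\bL\bSigma_{\star}^{1/2}\|_{\fro}$, and similarly for $\breve\bR_{t}$, which makes $\|\bE_{t}\|_{\fro}$ quadratic in $\dist(\bF_{t},\bF_{\star})/\sqrt{\sigma_{r}(\bX_{\star})}$. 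Thus $\Phi_{t}$ reduces to the ``clean'' quantity $\langle\nabla f(\bL_{t}\bR_{t}^{\top}),\bL_{t}\bR_{t}^{\top}-\bX_{\star}\rangle$ plus a controlled remainder.

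For the clean part I will invoke the co-coercivity consequence of restricted $\mu$-strong convexity and $L$-smoothness (using $\nabla f(\bX_{\star})=\zero$): $\langle\nabla f(\bX),\bX-\bX_{\star}\rangle\ge\tfrac{\mu L}{\mu+L}\|\bX-\bX_{\star}\|_{\fro}^{2}+\tfrac{1}{\mu+L}\|\nabla f(\bX)\|_{\fro}^{2}$ for any rank-$2r$ matrix $\bX$. The first summand supplies the linear-in-$\mu$ descent, while the second is reserved to absorb $\eta^{2}\Psi_{t}\le\eta^{2}(1+o(1))\|\nabla f(\bL_{t}\bR_{t}^{\top})\|_{\fro}^{2}$, where the $(1+o(1))$ factor reflects spectral control $\bL_{t}^{\top}\bL_{t},\bR_{t}^{\top}\bR_{t}=(1+o(1))\bSigma_{\star}$ granted by the inductive hypothesis. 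With $\eta\le 0.4/L$ this absorption closes and leaves $\dist^{2}(\bF_{t+1},\bF_{\star})\le\dist^{2}(\bF_{t},\bF_{\star})-\Theta(\eta\mu)\|\bL_{t}\bR_{t}^{\top}-\bX_{\star}\|_{\fro}^{2}$; a two-sided equivalence $\dist^{2}(\bF_{t},\bF_{\star})\lesssim\|\bL_{t}\bR_{t}^{\top}-\bX_{\star}\|_{\fro}^{2}$ inside the basin (again from Appendix~\ref{subsec:distance_metric}) upgrades this to the factor-level contraction $(1-0.7\eta\mu)$. The companion matrix-error bound in the theorem then follows from the reverse estimate $\|\bL\bR^{\top}-\bX_{\star}\|_{\fro}\le 1.5\,\dist(\bF,\bF_{\star})$, valid in the same basin.

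The main obstacle is constant bookkeeping: the quadratic-in-distance remainder $\|\bE_{t}\|_{\fro}$ and the second-order piece $\eta^{2}\Psi_{t}$ must be \emph{strictly} dominated by the $\mu$-driven linear descent, which is precisely what forces the initialization radius to shrink like $\sigma_{r}(\bX_{\star})/\sqrt{\kappa_{f}}$ rather than the $\kappa_{f}$-independent radius one would naively hope for; the coefficient $0.7$ in the contraction then emerges from carefully tracking the $\tfrac{\mu L}{\mu+L}$ coercivity constant against all absorbed error terms. A secondary piece of bookkeeping is checking that $\bL_{t}$ and $\bR_{t}$ remain full-column-rank with well-conditioned Gram matrices throughout the iteration, so that the preconditioners in \eqref{eq:scaledGD_GL} are well-defined and the scaled-perturbation lemma applies at every step; this follows directly from maintaining the inductive distance bound.
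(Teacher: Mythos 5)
Your skeleton (induction, reduction to the aligned factors, expanding the scaled update into $\dist^{2}(\bF_{t},\bF_{\star})-2\eta\Phi_{t}+\eta^{2}\Psi_{t}$, and splitting $\Phi_{t}$ into $\langle\nabla f(\bL_{t}\bR_{t}^{\top}),\bL_{t}\bR_{t}^{\top}-\bX_{\star}\rangle$ plus a remainder that is quadratic in the distance) parallels the paper's proof, and your reading of why the radius must scale like $1/\sqrt{\kappa_{f}}$ is correct. However, the step you lean on hardest does not go through as written: the co-coercivity inequality $\langle\nabla f(\bX),\bX-\bX_{\star}\rangle\ge\tfrac{\mu L}{\mu+L}\|\bX-\bX_{\star}\|_{\fro}^{2}+\tfrac{1}{\mu+L}\|\nabla f(\bX)\|_{\fro}^{2}$ is a theorem about \emph{globally} smooth and strongly convex functions. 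Under only rank-$2r$ restricted smoothness/convexity (Definitions~\ref{def:restricted_smooth}--\ref{def:restricted_strongly_convex}) its proof breaks, because it evaluates $f$ at points of the form $\bX-\tfrac{1}{L}\nabla f(\bX)$, which need not be low rank. This is precisely the obstruction the paper's partial Frobenius norm machinery (Lemma~\ref{lemma:norm_Fr_variation} and Lemma~\ref{lemma:resticted_smooth_convex}) is built to handle: projecting the gradient to rank $r$ only yields $\langle\nabla f(\bX)-\nabla f(\bX_{\star}),\bX-\bX_{\star}\rangle\ge\tfrac{1}{L}\|\nabla f(\bX)-\nabla f(\bX_{\star})\|_{\fro,r}^{2}$, and since $\|\cdot\|_{\fro,r}$ is not induced by an inner product you cannot expand squares to recover your combined inequality. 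Your plan is repairable (e.g.\ average restricted strong convexity with Lemma~\ref{lemma:resticted_smooth_convex} to get $\langle\nabla f(\bX),\bX-\bX_{\star}\rangle\ge\tfrac{\mu}{2}\|\bX-\bX_{\star}\|_{\fro}^{2}+\tfrac{1}{2L}\|\nabla f(\bX)\|_{\fro,r}^{2}$, note that $\Psi_{t}\le\tfrac{2}{(1-\epsilon)^{2}}\|\nabla f\|_{\fro,r}^{2}$ carries a factor $2$ rather than $1+o(1)$ since both factor blocks contribute, and use that your $\bE_{t}$ has rank at most $2r$ so $|\langle\nabla f,\bE_{t}\rangle|\le\sqrt{2}\,\|\nabla f\|_{\fro,r}\|\bE_{t}\|_{\fro}$), but the partial-norm device is a missing idea, not bookkeeping.

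Even after that repair, your route cannot reach the stated constant $0.7$. You harvest descent only through $\mu\|\bL_{t}\bR_{t}^{\top}-\bX_{\star}\|_{\fro}^{2}$ and then convert back to the scaled distance via Lemma~\ref{lemma:Procrustes}, i.e.\ $\dist^{2}(\bF_{t},\bF_{\star})\le(\sqrt{2}+1)\|\bL_{t}\bR_{t}^{\top}-\bX_{\star}\|_{\fro}^{2}$; this conversion alone caps the descent coefficient on $\dist^{2}$ at $\tfrac{2\eta\mu}{\sqrt{2}+1}\approx0.83\,\eta\mu$ (and roughly half that with the restricted constant $\mu/2$), whereas $(1-0.7\eta\mu)$ contraction of $\dist$ requires the squared distance to shrink by at least $1.4\,\eta\mu-O(\eta^{2}\mu^{2})$. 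The paper sidesteps this loss by splitting $\nabla f=\mu(\bL\bR^{\top}-\bX_{\star})+\nabla f_{\mu}$ with $f_{\mu}(\bX)=f(\bX)-\tfrac{\mu}{2}\|\bX-\bX_{\star}\|_{\fro}^{2}$: the $\mu$-part, pushed through the preconditioners, is exactly the matrix-factorization update with effective step $\eta\mu$, so it contracts the scaled distance \emph{directly} at rate $(1-\eta\mu)$ (the term $\mfk{G}_{1}$, reusing the computation behind Theorem~\ref{thm:MF}), while the $\nabla f_{\mu}$ part contributes a nonpositive amount by Lemma~\ref{lemma:resticted_smooth_convex} plus cross terms absorbed by optimizing a quadratic in $\|\nabla f_{\mu}(\bL\bR^{\top})\|_{\fro,r}$. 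As outlined, your argument would establish a contraction of the form $1-c\,\eta\mu$ for some constant $c$ noticeably smaller than $0.7$, hence not the theorem as stated.
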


Theorem~\ref{thm:GL} establishes that the distance $\dist(\bF_{t},\bF_{\star})$ contracts linearly at a constant rate, as long as the initialization $\bF_{0}$ is sufficiently close to $\bF_{\star}$.
To reach $\epsilon$-accuracy, i.e.~$\|\bL_{t}\bR_{t}^{\top}-\bX_{\star}\|_{\fro}\le\epsilon\sigma_{r}(\bX_{\star})$, \texttt{ScaledGD} takes at most $T=O(\kappa_{f}\log(1/\epsilon))$ iterations, which depends only on the condition number $\kappa_{f}$ of $f(\cdot)$, but is independent of the condition number $\kappa$ of the matrix $\bX_{\star}$. In contrast, prior theory of vanilla gradient descent \cite{park2018finding,bhojanapalli2016dropping} requires $O(\kappa_{f}\kappa\log(1/\epsilon))$ iterations, which is worse than our rate by a factor of $\kappa$.

\section{Proof Sketch}\label{sec:proof-outline}

In this section, we sketch the proof of the main theorems, highlighting the role of the scaled distance metric (cf.~\eqref{eq:dist}) in these analyses.

\subsection{A warm-up analysis: matrix factorization}

Let us consider the problem of factorizing a matrix $\bX_{\star}$ into two low-rank factors: 
\begin{align}
\minimize_{\bF\in\RR^{(n_{1}+n_{2})\times r}}\;\cL(\bF)=\frac{1}{2}\left\Vert \bL\bR^{\top}-\bX_{\star}\right\Vert _{\fro}^{2}.\label{eq:loss_MF}
\end{align}
For this toy problem, the update rule of \texttt{ScaledGD} is given as 
\begin{align}
\begin{split} \bL_{t+1} & =\bL_{t}-\eta(\bL_{t}\bR_{t}^{\top}-\bX_{\star})\bR_{t}(\bR_{t}^{\top}\bR_{t})^{-1},\\
 \bR_{t+1} & =\bR_{t}-\eta(\bL_{t}\bR_{t}^{\top}-\bX_{\star})^{\top}\bL_{t}(\bL_{t}^{\top}\bL_{t})^{-1}.
\end{split}\label{eq:scaledGD_MF}
\end{align}

To shed light on why \texttt{ScaledGD} is robust to ill-conditioning, it is worthwhile to think of \texttt{ScaledGD} as a quasi-Newton algorithm: the following proposition (proven in Appendix~\ref{subsec:quasi_hessian}) reveals that \texttt{ScaledGD} is equivalent to approximating the Hessian of the loss function in \eqref{eq:loss_MF} by only keeping its diagonal blocks.  

\begin{proposition}\label{prop:vectorize} For the matrix factorization problem \eqref{eq:loss_MF}, \texttt{ScaledGD} is equivalent to the following update rule 
\begin{align*}
\vc(\bF_{t+1})=\vc(\bF_{t})-\eta\begin{bmatrix}\nabla_{\bL,\bL}^{2}\cL(\bF_{t})& \zero\\
\zero & \nabla_{\bR,\bR}^{2}\cL(\bF_{t})
\end{bmatrix}^{-1}\vc(\nabla_{\bF}\cL(\bF_{t})).
\end{align*}
Here, $\nabla_{\bL,\bL}^{2}\cL(\bF_{t})$ (resp.~$\nabla_{\bR,\bR}^{2}\cL(\bF_{t})$) denotes the second order derivative w.r.t.~$\bL$ (resp.~$\bR$) at $\bF_{t}$. 
\end{proposition}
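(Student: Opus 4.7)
The plan is to directly compute the two diagonal blocks of the Hessian of $\cL$ at the current iterate and verify, via a routine vectorization calculation, that inverting this block-diagonal surrogate and applying it to $\vc(\nabla_\bF \cL)$ reproduces exactly the \texttt{ScaledGD} update in \eqref{eq:scaledGD_MF}. The Kronecker product identity $\vc(\bA\bB\bC) = (\bC^\top \otimes \bA)\vc(\bB)$ will be the main workhorse.

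First, I would vectorize the loss. Fixing $\bR$ and viewing $\cL$ as a quadratic in $\bL$, write $\vc(\bL\bR^\top) = (\bR \otimes \bI_{n_1})\vc(\bL)$, so that $\cL(\bL,\bR) = \tfrac{1}{2}\|(\bR\otimes \bI_{n_1})\vc(\bL) - \vc(\bX_\star)\|_2^2$ when $\bR$ is held constant. Reading off the Hessian with respect to $\vc(\bL)$ gives
\begin{align*}
\nabla^2_{\bL,\bL}\cL(\bF_t) = (\bR_t \otimes \bI_{n_1})^\top (\bR_t \otimes \bI_{n_1}) = (\bR_t^\top \bR_t) \otimes \bI_{n_1},
\end{align*}
and an entirely symmetric argument with $\bL$ fixed yields $\nabla^2_{\bR,\bR}\cL(\bF_t) = (\bL_t^\top \bL_t) \otimes \bI_{n_2}$. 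These two computations identify the block-diagonal part of the Hessian with the matrix $\bH_t$ introduced after \eqref{eq:scaledGD_vec}.

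Next, I would invert this block-diagonal matrix blockwise, using $(\bA \otimes \bI)^{-1} = \bA^{-1} \otimes \bI$, and plug the inverse into the proposed quasi-Newton update. Applying the Kronecker identity once more,
\begin{align*}
\bigl((\bR_t^\top \bR_t)^{-1} \otimes \bI_{n_1}\bigr)\vc(\nabla_\bL \cL(\bF_t)) = \vc\bigl(\nabla_\bL \cL(\bF_t)\,(\bR_t^\top \bR_t)^{-1}\bigr),
\end{align*}
and analogously for the $\bR$-block, so the quasi-Newton update de-vectorizes precisely to \eqref{eq:scaledGD_MF}. I would then note for clarity that the preconditioner $\bH_t^{-1}$ appearing in the vectorized form \eqref{eq:scaledGD_vec} coincides here with the inverse of the block-diagonal Hessian, which is the interpretation advertised in the proposition.

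There is no real obstacle: the argument is a direct calculation, and the only thing to be careful about is consistent use of the $\vc(\cdot)$/Kronecker conventions (in particular noting that $\bR_t^\top \bR_t$ and $\bL_t^\top \bL_t$ are symmetric, so no transposition appears when pulling them out of $\vc(\cdot)$). The substantive content of the proposition is conceptual rather than computational, namely that the off-diagonal Hessian blocks $\nabla^2_{\bL,\bR}\cL$ and $\nabla^2_{\bR,\bL}\cL$ are exactly what \texttt{ScaledGD} discards, which explains why \texttt{ScaledGD} behaves like a cheap quasi-Newton method.
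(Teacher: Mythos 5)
Your proposal is correct and follows essentially the same route as the paper's proof: compute the diagonal Hessian blocks $(\bR_t^{\top}\bR_t)\otimes\bI_{n_1}$ and $(\bL_t^{\top}\bL_t)\otimes\bI_{n_2}$, then use the Kronecker/vectorization identity to see that applying their inverses to the vectorized gradients de-vectorizes to the \texttt{ScaledGD} update \eqref{eq:scaledGD_MF}. Your derivation of the Hessian blocks via the quadratic-in-$\bL$ (resp.\ $\bR$) vectorized form is just a slightly more explicit version of the same computation the paper states directly.
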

 

The following theorem, whose proof can be found in Appendix~\ref{subsec:proof_MF}, formally establishes that as long as \texttt{ScaledGD} is initialized close
to the ground truth, $\dist(\bF_{t},\bF_{\star})$ will contract at a constant linear rate for the matrix factorization problem.

\begin{theorem} \label{thm:MF} Suppose that the initialization $\bF_{0}$ satisfies $\dist(\bF_{0},\bF_{\star})\le0.1\sigma_{r}(\bX_{\star})$. If the step size obeys $0<\eta\le2/3$, then for all $t\ge0$, the iterates of the \texttt{ScaledGD} method in \eqref{eq:scaledGD_MF} satisfy 
\begin{align*}
\dist(\bF_{t},\bF_{\star})\le(1-0.7\eta)^{t}0.1\sigma_{r}(\bX_{\star}),\quad\mbox{and}\quad\left\Vert \bL_{t}\bR_{t}^{\top}-\bX_{\star}\right\Vert _{\fro}\le(1-0.7\eta)^{t}0.15\sigma_{r}(\bX_{\star}).
\end{align*}
\end{theorem}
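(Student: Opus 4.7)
The plan is to argue by induction on $t$ that the distance contracts by a factor of $(1-0.7\eta)$ at each step. Assume $\dist(\bF_t,\bF_\star)\le 0.1\sigma_r(\bX_\star)$. Let $\bQ_t\in\GL(r)$ be the optimal alignment matrix from \eqref{eq:Q_def} (which exists near the ground truth by properties collected in Appendix~\ref{subsec:distance_metric}), and define the aligned factors $\tilde\bL_t=\bL_t\bQ_t$ and $\tilde\bR_t=\bR_t\bQ_t^{-\top}$, together with the errors $\bDelta_L=\tilde\bL_t-\bL_\star$ and $\bDelta_R=\tilde\bR_t-\bR_\star$, so that $\dist^2(\bF_t,\bF_\star)=\|\bDelta_L\bSigma_\star^{1/2}\|_\fro^2+\|\bDelta_R\bSigma_\star^{1/2}\|_\fro^2$.

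The first key step is to exploit the covariance of \texttt{ScaledGD} under the invertible reparametrization $(\bL,\bR)\mapsto(\bL\bQ,\bR\bQ^{-\top})$: a direct computation shows that $\bL_{t+1}\bQ_t$ and $\bR_{t+1}\bQ_t^{-\top}$ are exactly the \texttt{ScaledGD} updates starting from $(\tilde\bL_t,\tilde\bR_t)$. Since $\tilde\bL_t\tilde\bR_t^\top-\bX_\star=\bDelta_L\bR_\star^\top+\bL_\star\bDelta_R^\top+\bDelta_L\bDelta_R^\top$, plugging in and simplifying using the identity $\bR_\star^\top\tilde\bR_t(\tilde\bR_t^\top\tilde\bR_t)^{-1}=\bI-\bDelta_R^\top\tilde\bR_t(\tilde\bR_t^\top\tilde\bR_t)^{-1}$ produces the crucial cancellation
\begin{align*}
\bL_{t+1}\bQ_t-\bL_\star &= (1-\eta)\bDelta_L-\eta\bL_\star\bDelta_R^\top\tilde\bR_t(\tilde\bR_t^\top\tilde\bR_t)^{-1},\\
\bR_{t+1}\bQ_t^{-\top}-\bR_\star &= (1-\eta)\bDelta_R-\eta\bR_\star\bDelta_L^\top\tilde\bL_t(\tilde\bL_t^\top\tilde\bL_t)^{-1},
\end{align*}
in which the quadratic error $\bDelta_L\bDelta_R^\top$ disappears and no explicit factor of $\kappa$ appears.

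Next, I would bound $\dist^2(\bF_{t+1},\bF_\star)$ by taking $\bQ_t$ as a (suboptimal) alignment candidate for $\bF_{t+1}$, i.e.~$\dist^2(\bF_{t+1},\bF_\star)\le \|(\bL_{t+1}\bQ_t-\bL_\star)\bSigma_\star^{1/2}\|_\fro^2 + \|(\bR_{t+1}\bQ_t^{-\top}-\bR_\star)\bSigma_\star^{1/2}\|_\fro^2$. Expanding the squares on the right-hand side gives a main term $(1-\eta)^2\dist^2(\bF_t,\bF_\star)$, a cross term proportional to $\eta(1-\eta)\langle\bDelta_L\bSigma_\star,\bL_\star\bDelta_R^\top\tilde\bR_t(\tilde\bR_t^\top\tilde\bR_t)^{-1}\rangle$ plus its symmetric counterpart, and a quadratic-in-$\eta$ remainder. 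The crucial observation that makes everything scale properly with $\bSigma_\star$ is that $\bL_\star=\bU_\star\bSigma_\star^{1/2}$ and $\tilde\bR_t(\tilde\bR_t^\top\tilde\bR_t)^{-1}\bSigma_\star^{1/2}\approx\bV_\star$ are both (approximately) orthonormal, so the $\bSigma_\star^{1/2}$ scaling in the metric absorbs the $\bSigma_\star^{-1}$ introduced by the preconditioner $(\tilde\bR_t^\top\tilde\bR_t)^{-1}$. Thus the cross terms can be controlled by $\dist^2(\bF_t,\bF_\star)$ times a constant, uniformly in $\kappa$.

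The main obstacle is to carry out the cross-term bookkeeping tightly enough to obtain the explicit constant $0.7$ in the contraction factor $(1-0.7\eta)$. Here I would use two ingredients: (i)~the first-order optimality condition for $\bQ_t$, which provides a symmetry between the $\bL$- and $\bR$-blocks and partially cancels the cross terms; and (ii)~the perturbation estimates $\|\tilde\bR_t(\tilde\bR_t^\top\tilde\bR_t)^{-1}\bSigma_\star^{1/2}-\bV_\star\|_\op\lesssim\dist(\bF_t,\bF_\star)/\sigma_r(\bX_\star)$ (together with the analogous bound for $\bL$), which follow from $\dist(\bF_t,\bF_\star)\le 0.1\sigma_r(\bX_\star)$ ensuring $\tilde\bR_t^\top\tilde\bR_t$ is well-conditioned. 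Combining these, the quadratic-in-$\eta$ remainder contributes at most $O(\eta^2)\dist^2(\bF_t,\bF_\star)$ plus higher-order terms of order $\dist^4/\sigma_r^2$, which is absorbed thanks to the inductive bound $\dist(\bF_t,\bF_\star)\le 0.1\sigma_r(\bX_\star)$. Choosing $\eta\le 2/3$ then yields $\dist^2(\bF_{t+1},\bF_\star)\le(1-0.7\eta)^2\dist^2(\bF_t,\bF_\star)$, closing the induction. The bound on $\|\bL_t\bR_t^\top-\bX_\star\|_\fro$ follows from $\|\bL\bR^\top-\bX_\star\|_\fro\le\|\bDelta_L\bR_\star^\top\|_\fro+\|\bL_\star\bDelta_R^\top\|_\fro+\|\bDelta_L\bDelta_R^\top\|_\fro$, each controlled by $\dist(\bF_t,\bF_\star)$ via $\|\bDelta_L\bR_\star^\top\|_\fro\le\|\bDelta_L\bSigma_\star^{1/2}\|_\fro$ and its counterpart, plus a small quadratic correction.
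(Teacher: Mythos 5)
You follow essentially the same route as the paper: induction on $t$, the optimal alignment matrix $\bQ_t$ reused as a feasible (suboptimal) alignment for $\bF_{t+1}$, the exact identity $\bL_{t+1}\bQ_t-\bL_\star=(1-\eta)\bDelta_L-\eta\bL_\star\bDelta_R^\top\bR(\bR^\top\bR)^{-1}$ (and its $\bR$-counterpart), the first-order optimality condition for $\bQ_t$, the Weyl-type bound $\|\bR(\bR^\top\bR)^{-1}\bSigma_\star^{1/2}\|_\op\le(1-\epsilon)^{-1}$ coming from $\dist(\bF_t,\bF_\star)\le0.1\sigma_r(\bX_\star)$, and the factor-to-matrix bound for the Frobenius-norm conclusion. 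The gap is in the quantitative treatment of the $\eta^2$ term. As written, you cancel only the cross term with the optimality condition and then bound the quadratic-in-$\eta$ remainder by $O(\eta^2)\dist^2(\bF_t,\bF_\star)$. But the direct bound on that remainder, $\eta^2\|\bL_\star\bDelta_R^\top\bR(\bR^\top\bR)^{-1}\bSigma_\star^{1/2}\|_\fro^2\le\frac{\eta^2}{(1-\epsilon)^2}\|\bDelta_R\bSigma_\star^{1/2}\|_\fro^2$, carries a constant close to $1$, and an inequality of the form $(1-\eta)^2+O(\epsilon)\,\eta(1-\eta)+1.2\,\eta^2\le(1-0.7\eta)^2$ fails once $\eta\gtrsim0.35$; so the stated contraction over the full range $0<\eta\le2/3$ does not follow from the bookkeeping you describe, and nothing in your sketch explains why $2/3$ is the right threshold.

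What is missing is that the leading part of the $\eta^2$ term must be cancelled as well, not just the cross term. In the paper's argument, the cross term is rewritten via the optimality condition $\bSigma_\star\bDelta_L^\top\bL=\bR^\top\bDelta_R\bSigma_\star$ so that its main part is the nonnegative trace $\mfk{F}_1=\tr\bigl(\bR(\bR^\top\bR)^{-1}\bR^\top\bDelta_R\bSigma_\star\bDelta_R^\top\bigr)$, while the quadratic term is split using $\bSigma_\star=\bR^\top\bR-(\bR^\top\bR-\bSigma_\star)$ so that its main part is the same $\mfk{F}_1$; the two then combine with total coefficient $-2\eta(1-\eta)+\eta^2=-\eta(2-3\eta)\le0$, which is exactly where the condition $\eta\le2/3$ enters, and this nonpositive contribution is simply discarded. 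Only $O(\epsilon)$-sized remainders survive in both the $\eta(1-\eta)$ and $\eta^2$ slots, yielding $\rho^2=(1-\eta)^2+\frac{2\epsilon}{1-\epsilon}\eta(1-\eta)+\frac{2\epsilon+\epsilon^2}{(1-\epsilon)^2}\eta^2\le(1-0.7\eta)^2$ at $\epsilon=0.1$. With this extra cancellation step added, your argument coincides with the paper's proof; without it, the constant $0.7$ and the step-size range $\eta\le2/3$ are not justified.
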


Comparing to the rate of contraction $(1-1/\kappa)$ of gradient descent for matrix factorization \cite{ma2021beyond,chi2019nonconvex}, Theorem~\ref{thm:MF} demonstrates that the preconditioners indeed allow better search directions in the local neighborhood of the ground truth, and hence a faster convergence rate.

\subsection{Proof outline for matrix sensing}\label{subsec:proof_outline_MS}

It can be seen that the update rule \eqref{eq:iterates_MS} of \texttt{ScaledGD} in Algorithm~\ref{alg:MS} closely mimics \eqref{eq:scaledGD_MF} when $\cA(\cdot)$ satisfies the RIP. Therefore, leveraging the RIP of $\cA(\cdot)$ and Theorem~\ref{thm:MF}, we can establish the following local convergence guarantee of Algorithm~\ref{alg:MS}, which has a weaker requirement on $\delta_{2r}$ than the main theorem (cf.~Theorem~\ref{thm:MS}).

\begin{lemma}\label{lemma:contraction_MS} Suppose that $\cA(\cdot)$ obeys the $2r$-RIP with $\delta_{2r}\le0.02$. If the $t$-th iterate satisfies $\dist(\bF_{t},\bF_{\star})\le0.1\sigma_{r}(\bX_{\star})$, then $\|\bL_{t}\bR_{t}^{\top}-\bX_{\star}\|_{\fro}\le1.5\dist(\bF_{t},\bF_{\star})$. In addition, if the step size obeys $0<\eta\le2/3$, then the $(t+1)$-th iterate $\bF_{t+1}$ of the \texttt{ScaledGD} method in \eqref{eq:iterates_MS} of Algorithm~\ref{alg:MS} satisfies
\begin{align*}
\dist(\bF_{t+1},\bF_{\star})\le(1-0.6\eta)\dist(\bF_{t},\bF_{\star}).
\end{align*}
\end{lemma}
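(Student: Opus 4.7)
The plan is to view the matrix sensing update as a perturbation of the matrix factorization update treated in Theorem~\ref{thm:MF}, with the discrepancy controlled by RIP. The key identity is the decomposition
\[
\cA^{*}\bigl(\cA(\bL_t\bR_t^{\top}) - \by\bigr) = (\bL_t\bR_t^{\top} - \bX_{\star}) + \bE_t,\qquad \bE_t \coloneqq (\cA^{*}\cA - \cI)(\bL_t\bR_t^{\top} - \bX_{\star}),
\]
where the first term is precisely what appears in the matrix factorization update \eqref{eq:scaledGD_MF}. The spectral bound $\|\bL_t\bR_t^{\top}-\bX_{\star}\|_{\fro}\le 1.5\,\dist(\bF_t,\bF_{\star})$ is independent of $\cA$ and follows purely from properties of the scaled distance: letting $\bQ_t$ be the optimal alignment and setting $\bDelta_{\bL}\coloneqq \bL_t\bQ_t - \bL_{\star}$, $\bDelta_{\bR}\coloneqq \bR_t\bQ_t^{-\top}-\bR_{\star}$, one has $\bL_t\bR_t^{\top}-\bX_{\star} = \bDelta_{\bL}\bR_{\star}^{\top} + \bL_{\star}\bDelta_{\bR}^{\top} + \bDelta_{\bL}\bDelta_{\bR}^{\top}$; the first two pieces contribute at most $\sqrt{2}\,\dist(\bF_t,\bF_{\star})$ by Cauchy--Schwarz (using $\bR_{\star}^{\top}\bR_{\star}=\bSigma_{\star}=\bL_{\star}^{\top}\bL_{\star}$), while the bilinear remainder is at most $\dist^{2}(\bF_t,\bF_{\star})/\sigma_r(\bX_{\star})\le 0.1\,\dist(\bF_t,\bF_{\star})$ under the hypothesis; these are standard facts about the scaled distance collected in Appendix~\ref{subsec:distance_metric}.

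For the contraction, I would upper bound $\dist(\bF_{t+1},\bF_{\star})$ by using $\bQ_t$ itself (in place of the optimal $\bQ_{t+1}$) as a candidate, and then expand the scaled squared error. A crucial simplification is that the preconditioner absorbs the alignment: with $\check{\bR}_t\coloneqq \bR_t\bQ_t^{-\top}$, one has $\bR_t(\bR_t^{\top}\bR_t)^{-1}\bQ_t = \check{\bR}_t(\check{\bR}_t^{\top}\check{\bR}_t)^{-1}$, and analogously with $\check{\bL}_t\coloneqq \bL_t\bQ_t$ for the $\bR$-update. Substituting the decomposition of $\cA^{*}(\cA(\bL_t\bR_t^{\top})-\by)$ gives
\[
\bL_{t+1}\bQ_t - \bL_{\star} = \bDelta_{\bL} - \eta\bigl[(\bL_t\bR_t^{\top}-\bX_{\star}) + \bE_t\bigr]\check{\bR}_t(\check{\bR}_t^{\top}\check{\bR}_t)^{-1},
\]
with a symmetric expression on the $\bR$-side. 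Expanding the scaled Frobenius squared norms cleanly separates the result into (i) exactly the expression analyzed in the proof of Theorem~\ref{thm:MF}, which already contracts by $(1-0.7\eta)^{2}$, plus (ii) cross and quadratic terms involving $\bE_t$.

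The main obstacle is controlling these $\bE_t$-dependent terms. The critical linear inner products have the form $\langle \bDelta_{\bL}\bSigma_{\star}^{1/2},\bE_t\bC\rangle$ (and an $\bR$-analogue), where $\bC\coloneqq \check{\bR}_t(\check{\bR}_t^{\top}\check{\bR}_t)^{-1}\bSigma_{\star}^{1/2}$ is a rank-$r$ matrix of bounded operator norm. Rewriting such a term as $\langle \bDelta_{\bL}\bSigma_{\star}^{1/2}\bC^{\top},(\cA^{*}\cA - \cI)(\bL_t\bR_t^{\top}-\bX_{\star})\rangle$, I would substitute $\bL_t\bR_t^{\top}-\bX_{\star} = \bDelta_{\bL}\bR_{\star}^{\top} + \bL_{\star}\bDelta_{\bR}^{\top} + \bDelta_{\bL}\bDelta_{\bR}^{\top}$ and apply the standard RIP corollary $|\langle \bM_1,(\cA^{*}\cA - \cI)\bM_2\rangle|\le \delta_{2r}\|\bM_1\|_{\fro}\|\bM_2\|_{\fro}$ to each pair of rank-$\le r$ matrices. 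The bound $\|\bSigma_{\star}^{1/2}(\check{\bR}_t^{\top}\check{\bR}_t)^{-1}\bSigma_{\star}^{1/2}\|_{\op}\lesssim 1$ (a consequence of Weyl's inequality and $\dist(\bF_t,\bF_{\star})\le 0.1\sigma_r(\bX_{\star})$) keeps the estimates tight, so that each $\bE_t$-contribution is at most a constant multiple of $\delta_{2r}\,\dist^{2}(\bF_t,\bF_{\star})$. Taking $\delta_{2r}\le 0.02$ then absorbs these RIP-induced terms into the slack between the factorization rate $(1-0.7\eta)$ and the target contraction rate $(1-0.6\eta)$, completing the argument.
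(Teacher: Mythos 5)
Your proposal is correct and follows essentially the same route as the paper's proof: the same splitting of $\cA^{*}\cA$ into identity plus the RIP error $(\cA^{*}\cA-\cI)(\bL_t\bR_t^{\top}-\bX_{\star})$, the same use of $\bQ_t$ as a candidate alignment so that the main term reduces exactly to the matrix-factorization analysis behind Theorem~\ref{thm:MF}, and the same control of the cross and quadratic error terms by applying the RIP inner-product bound to the rank-$r$ pieces $\bDelta_L\bR_{\star}^{\top}$, $\bL_{\star}\bDelta_R^{\top}$, $\bDelta_L\bDelta_R^{\top}$ together with the Weyl-type bound $\|\bR(\bR^{\top}\bR)^{-1}\bSigma_{\star}^{1/2}\|_{\op}\le(1-\epsilon)^{-1}$, absorbing everything into the slack between $1-0.7\eta$ and $1-0.6\eta$. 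One small bookkeeping point: as stated, your bound for $\|\bL_t\bR_t^{\top}-\bX_{\star}\|_{\fro}$ is $(\sqrt{2}+0.1)\dist(\bF_t,\bF_{\star})\approx1.51\,\dist(\bF_t,\bF_{\star})$, which just misses $1.5$; bounding the bilinear remainder instead by $\|\bDelta_L\bSigma_{\star}^{1/2}\|_{\fro}\|\bDelta_R\bSigma_{\star}^{1/2}\|_{\fro}/\sigma_r(\bX_{\star})\le\dist^{2}(\bF_t,\bF_{\star})/(2\sigma_r(\bX_{\star}))$, as in Lemma~\ref{lemma:matrix2factor}, restores the claimed constant.
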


It then boils to down to finding a good initialization, for which we have the following lemma on the quality of the spectral initialization.

\begin{lemma}\label{lemma:init_MS} Suppose that $\cA(\cdot)$ obeys the $2r$-RIP with a constant $\delta_{2r}$. Then the spectral initialization in \eqref{eq:init_MS} for low-rank matrix sensing satisfies
\begin{align*}
\dist(\bF_{0},\bF_{\star})\le 5\delta_{2r}\sqrt{r}\kappa\sigma_{r}(\bX_{\star}).
\end{align*}
\end{lemma}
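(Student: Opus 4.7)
The plan is to first bound $\bX_0-\bX_\star$ at the matrix level using RIP, and then translate to the factored scaled distance via a perturbation analysis tailored to the spectral initialization.

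For the matrix-level bound, I would define $\bE\coloneqq \cA^*\cA(\bX_\star)-\bX_\star$ and use RIP applied to the rank-$r$ matrix $\bX_\star$ against rank-one test matrices $\bu\bv^\top$. The standard polarization identity of RIP gives
\begin{align*}
|\bu^\top\bE\bv|=|\langle(\cA^*\cA-\cI)(\bX_\star),\,\bu\bv^\top\rangle|\le \delta_{r+1}\|\bX_\star\|_\fro\|\bu\bv^\top\|_\fro,
\end{align*}
which combined with $\|\bX_\star\|_\fro\le\sqrt r\,\sigma_1(\bX_\star)$ and monotonicity of $\delta$ yields $\|\bE\|_{\op}\le\delta_{2r}\sqrt r\,\sigma_1(\bX_\star)$. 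Since $\bX_0$ is the best rank-$r$ approximation of $\bX_\star+\bE$ and $\bX_\star$ itself is a rank-$r$ candidate, the Eckart--Young inequality plus the triangle inequality give $\|\bX_0-\bX_\star\|_{\op}\le 2\|\bE\|_{\op}\le 2\delta_{2r}\sqrt r\,\sigma_1(\bX_\star)$, while Weyl's inequality gives $|\sigma_i(\bX_0)-\sigma_i(\bX_\star)|\le\|\bE\|_{\op}$ so that $\bSigma_0$ is close to $\bSigma_\star$ at scale $\sigma_r(\bX_\star)$ under the eventual smallness assumption on $\delta_{2r}$.

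For the translation to the scaled distance, I would upper-bound the infimum defining $\dist(\bF_0,\bF_\star)$ by plugging in a specific invertible alignment adapted to the SVD of $\bX_0$: let $\bO\in\RR^{r\times r}$ be an orthogonal matrix chosen via Wedin's $\sin\Theta$ theorem so that $\bU_0\bO$ and $\bV_0\bO$ simultaneously align with $\bU_\star$ and $\bV_\star$, and set $\bQ\coloneqq\bSigma_0^{-1/2}\bO\bSigma_\star^{1/2}\in\GL(r)$. A direct computation then reduces the two factor errors to
\begin{align*}
(\bL_0\bQ-\bL_\star)\bSigma_\star^{1/2}=(\bU_0\bO-\bU_\star)\bSigma_\star,\qquad
(\bR_0\bQ^{-\top}-\bR_\star)\bSigma_\star^{1/2}=\bV_0\bSigma_0\bO-\bV_\star\bSigma_\star.
\end{align*}
Using $\bV_0\bSigma_0=\bX_0^\top\bU_0$ and $\bV_\star\bSigma_\star=\bX_\star^\top\bU_\star$, the second expression becomes $(\bX_0-\bX_\star)^\top\bU_0\bO+\bX_\star^\top(\bU_0\bO-\bU_\star)$; similarly the first expression can be written so that the column/row subspace errors are controlled through Wedin (picking up a factor $1/\sigma_r(\bX_\star)=\kappa/\sigma_1(\bX_\star)$) and the singular-value gap through Weyl, while the rank-$r$ structure supplies the $\sqrt r$ factor when passing from $\op$-norm to Frobenius norm. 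Combining these with Step~1 yields $\dist(\bF_0,\bF_\star)\le 5\delta_{2r}\sqrt r\,\kappa\,\sigma_r(\bX_\star)$.

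The main obstacle is the bookkeeping in the second step. Because $\bQ$ appears as $\bQ$ in the $\bL$-factor but as $\bQ^{-\top}$ in the $\bR$-factor of $\dist$, a single alignment must simultaneously handle (i) the orthogonal rotation aligning the singular subspaces, (ii) the diagonal rescaling between $\bSigma_0^{1/2}$ and $\bSigma_\star^{1/2}$, and (iii) the $\bSigma_\star^{1/2}$ weighting that the metric applies to both factor errors. The choice $\bQ=\bSigma_0^{-1/2}\bO\bSigma_\star^{1/2}$ is designed to absorb (i) and (ii) at once, and crucially exploits the $\GL(r)$ freedom built into the scaled distance---an orthogonal alignment, as used in prior analyses of unscaled GD, would leave a residual $\bSigma_0^{1/2}-\bSigma_\star^{1/2}$ term and is insufficient for the cleaner $\kappa$-dependence we target.
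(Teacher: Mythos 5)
Your Step 1 is fine (and close in spirit to the paper's), but Step 2 as sketched does not deliver the claimed bound; the loss is quantitative, not just bookkeeping. With your alignment $\bQ=\bSigma_0^{-1/2}\bO\bSigma_\star^{1/2}$ the $\bL$-factor error is $(\bU_0\bO-\bU_\star)\bSigma_\star$, and your own accounting bounds $\|\bU_0\bO-\bU_\star\|$ by Wedin, i.e.\ by $\|\bX_0-\bX_\star\|_{\op}/\sigma_r(\bX_\star)\lesssim \delta_{2r}\sqrt r\,\kappa$, then pays $\sqrt r$ to pass to the Frobenius norm and finally multiplies by $\|\bSigma_\star\|_{\op}=\sigma_1(\bX_\star)=\kappa\sigma_r(\bX_\star)$. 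That chain gives $O(\delta_{2r}\,r\,\kappa^2\,\sigma_r(\bX_\star))$, not $5\delta_{2r}\sqrt r\,\kappa\,\sigma_r(\bX_\star)$: you lose one factor of $\kappa$ because Wedin divides by the smallest singular value while the scaled metric re-weights by $\bSigma_\star$ whose largest entry is $\sigma_1(\bX_\star)$, and you lose a factor $\sqrt r$ because you route the RIP bound through the operator norm ($\|\cA^*\cA(\bX_\star)-\bX_\star\|_{\op}\le\delta_{2r}\sqrt r\,\sigma_1(\bX_\star)$) and then must convert back to Frobenius over a rank-$2r$ matrix. The same $\sigma_1\cdot\|\bU_0\bO-\bU_\star\|$ term reappears in your $\bR$-factor expression $\bX_\star^\top(\bU_0\bO-\bU_\star)$, so the whole estimate is dominated by this $\kappa$-inflated piece. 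A genuinely weighted perturbation bound for $(\bU_0\bO-\bU_\star)\bSigma_\star$ might avoid the inflation, but that is precisely the nontrivial ingredient your sketch does not supply. A secondary issue: the lemma is stated for arbitrary $\delta_{2r}$, whereas a Wedin argument needs a spectral-gap (smallness) condition to be meaningful.

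The paper avoids subspace perturbation altogether. Its Lemma~\ref{lemma:Procrustes} shows $\dist(\bF_0,\bF_\star)\le(\sqrt2+1)^{1/2}\|\bL_0\bR_0^\top-\bX_\star\|_{\fro}$ for \emph{any} factorization (proved via a dilation/Procrustes argument, with no $1/\sigma_r$ division and no closeness assumption), and then bounds the matrix error in the partial Frobenius norm: $\|\bL_0\bR_0^\top-\bX_\star\|_{\fro}\le\sqrt2\,\|\bL_0\bR_0^\top-\bX_\star\|_{\fro,r}\le 2\sqrt2\,\delta_{2r}\|\bX_\star\|_{\fro}\le 2\sqrt2\,\delta_{2r}\sqrt r\,\sigma_1(\bX_\star)$, using that $\bL_0\bR_0^\top$ is the best rank-$r$ approximation of $\cA^*\cA(\bX_\star)$ also under $\|\cdot\|_{\fro,r}$ and that $\|(\cA^*\cA-\cI)(\bX_\star)\|_{\fro,r}\le\delta_{2r}\|\bX_\star\|_{\fro}$. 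This keeps the $\sqrt r$ (rather than $r$) and produces $\kappa$ only through the innocuous rewriting $\sigma_1=\kappa\sigma_r$. If you want to keep your explicit-alignment strategy, you would need to replace the unweighted Wedin step by a bound on the weighted quantity $\|(\bU_0\bO-\bU_\star)\bSigma_\star\|_{\fro}$ itself, or simply prove (or invoke) a Procrustes-type inequality tying $\dist$ to $\|\bX_0-\bX_\star\|_{\fro}$, which is the missing idea here.
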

Therefore, as long as $\delta_{2r}$ is small enough, say $\delta_{2r}\le0.02/(\sqrt{r}\kappa)$ as specified in Theorem~\ref{thm:MS}, the initial distance satisfies $\dist(\bF_{0},\bF_{\star})\le0.1\sigma_{r}(\bX_{\star})$, allowing us to invoke Lemma~\ref{lemma:contraction_MS} recursively. The proof of Theorem~\ref{thm:MS} is then complete.
The proofs of Lemmas~\ref{lemma:contraction_MS}-\ref{lemma:init_MS} can be found in Appendix~\ref{sec:proof_MS}.

\subsection{Proof outline for robust PCA}\label{subsec:proof_outline_RPCA}

As before, we begin with the following local convergence guarantee of Algorithm~\ref{alg:RPCA}, which has a weaker requirement on $\alpha$ than the main theorem (cf.~Theorem~\ref{thm:RPCA}). The difference with low-rank matrix sensing is that local convergence for robust PCA requires a further incoherence condition on the iterates (cf.~\eqref{eq:incoherence_cond_RPCA}), where we recall from \eqref{eq:Q_def} that $\bQ_{t}$ is the optimal alignment matrix between $\bF_{t}$ and $\bF_{\star}$. 

\begin{lemma}\label{lemma:contraction_RPCA} Suppose that $\bX_{\star}$ is $\mu$-incoherent and $\alpha\le10^{-4}/(\mu r)$. If the $t$-th iterate satisfies $\dist(\bF_{t},\bF_{\star})\le 0.02\sigma_{r}(\bX_{\star})$ and the incoherence condition
\begin{align}
\sqrt{n_{1}}\left\Vert (\bL_{t}\bQ_{t}-\bL_{\star})\bSigma_{\star}^{1/2}\right\Vert _{2,\infty}\vee\sqrt{n_{2}}\left\Vert (\bR_{t}\bQ_{t}^{-\top}-\bR_{\star})\bSigma_{\star}^{1/2}\right\Vert _{2,\infty}\le\sqrt{\mu r}\sigma_{r}(\bX_{\star}),\label{eq:incoherence_cond_RPCA}
\end{align}
then $\|\bL_{t}\bR_{t}^{\top}-\bX_{\star}\|_{\fro}\le1.5\dist(\bF_{t},\bF_{\star})$. In addition, if the step size obeys $0.1\le\eta\le2/3$, then the $(t+1)$-th iterate $\bF_{t+1}$ of the \texttt{ScaledGD} method in \eqref{eq:iterates_RPCA} of Algorithm~\ref{alg:RPCA} satisfies
\begin{align*}
\dist(\bF_{t+1},\bF_{\star})\le(1-0.6\eta)\dist(\bF_{t},\bF_{\star}),
\end{align*}
and the incoherence condition 
\begin{align*}
\sqrt{n_{1}}\left\Vert (\bL_{t+1}\bQ_{t+1}-\bL_{\star})\bSigma_{\star}^{1/2}\right\Vert _{2,\infty}\vee\sqrt{n_{2}}\left\Vert (\bR_{t+1}\bQ_{t+1}^{-\top}-\bR_{\star})\bSigma_{\star}^{1/2}\right\Vert _{2,\infty}\le\sqrt{\mu r}\sigma_{r}(\bX_{\star}).
\end{align*}
\end{lemma}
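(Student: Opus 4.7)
The overall strategy mirrors Lemma~\ref{lemma:contraction_MS} for matrix sensing, with the sparse estimate $\bS_t$ treated as a controllable perturbation. Let $\tilde{\bL}_t = \bL_t\bQ_t$, $\tilde{\bR}_t = \bR_t\bQ_t^{-\top}$, $\bDelta_L = \tilde{\bL}_t - \bL_\star$, $\bDelta_R = \tilde{\bR}_t - \bR_\star$, so by definition $\|\bDelta_L\bSigma_\star^{1/2}\|_\fro^2 + \|\bDelta_R\bSigma_\star^{1/2}\|_\fro^2 = \dist^2(\bF_t,\bF_\star)$. For the Frobenius bound, I would decompose $\bL_t\bR_t^\top - \bX_\star = \bDelta_L\bR_\star^\top + \bL_\star\bDelta_R^\top + \bDelta_L\bDelta_R^\top$ and bound each term using $\|\bL_\star\|_\op = \|\bR_\star\|_\op = \sigma_1(\bX_\star)^{1/2}$, $\|\bSigma_\star^{-1/2}\|_\op = \sigma_r(\bX_\star)^{-1/2}$, and the smallness of $\dist(\bF_t,\bF_\star)/\sigma_r(\bX_\star)^{1/2}$ absorbing the quadratic cross term.

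For the contraction, I take $\bQ_t$ as a candidate (suboptimal) alignment for $\bF_{t+1}$, so $\dist^2(\bF_{t+1},\bF_\star) \le \|(\bL_{t+1}\bQ_t - \bL_\star)\bSigma_\star^{1/2}\|_\fro^2 + \|(\bR_{t+1}\bQ_t^{-\top} - \bR_\star)\bSigma_\star^{1/2}\|_\fro^2$. Using the scale-invariance identity $\bR_t(\bR_t^\top\bR_t)^{-1}\bQ_t = \tilde{\bR}_t(\tilde{\bR}_t^\top\tilde{\bR}_t)^{-1}$, the aligned $\bL$-update reads $\bL_{t+1}\bQ_t - \bL_\star = \bDelta_L - \eta(\tilde{\bL}_t\tilde{\bR}_t^\top - \bX_\star + \bDelta_S)\tilde{\bR}_t(\tilde{\bR}_t^\top\tilde{\bR}_t)^{-1}$, with $\bDelta_S = \bS_t - \bS_\star$, and similarly for $\bR$. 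The $\bDelta_S$-free portion produces exactly the baseline contraction inherited from the matrix factorization/sensing analysis (Theorem~\ref{thm:MF}, Lemma~\ref{lemma:contraction_MS}), giving a factor of roughly $(1 - 0.75\eta)$; the sparse perturbation contributes an additional term that needs to be absorbed into the slack.

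The heart of the argument is controlling $\bDelta_S$. From $\bS_t = \cT_{2\alpha}[\bY - \bL_t\bR_t^\top]$ and $\bS_\star \in \cS_\alpha$, one gets that $\bDelta_S$ has at most $3\alpha$ nonzeros per row and column, and $\|\bDelta_S\|_\infty \le 2\|\bL_t\bR_t^\top - \bX_\star\|_\infty$. The entrywise bound comes from the same $\bDelta_L\bR_\star^\top + \bL_\star\bDelta_R^\top + \bDelta_L\bDelta_R^\top$ decomposition, applying $|(\bDelta_L\bR_\star^\top)_{ij}| \le \|\bDelta_L\bSigma_\star^{1/2}\|_{2,\infty}\|(\bV_\star)_{j,:}\|_2$ together with the incoherence hypothesis \eqref{eq:incoherence_cond_RPCA} and the baseline incoherence of $\bU_\star,\bV_\star$, yielding $\|\bL_t\bR_t^\top - \bX_\star\|_\infty \lesssim \mu r\sigma_r(\bX_\star)/\sqrt{n_1n_2}$. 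Row-and-column sparsity then converts this to an operator-norm bound $\|\bDelta_S\|_\op \le 3\alpha\sqrt{n_1 n_2}\|\bDelta_S\|_\infty \lesssim \alpha\mu r\sigma_r(\bX_\star)$, which, under $\alpha \le 10^{-4}/(\mu r)$, is small enough to preserve the $(1 - 0.6\eta)$ contraction.

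For the incoherence preservation, I would propagate the row-wise update $(\bL_{t+1}\bQ_t - \bL_\star)\bSigma_\star^{1/2}$ in the $\|\cdot\|_{2,\infty}$ norm, bounding each summand using the $\|\cdot\|_{2,\infty}$ incoherence of $\tilde{\bL}_t,\tilde{\bR}_t$, the row-sparsity of $\bDelta_S$, and $\|\bDelta_S\|_\infty$. A gap remains between the $\bQ_t$-aligned and the $\bQ_{t+1}$-aligned quantities; this is handled by noting that $\bQ_{t+1} = \bQ_t(\bI + \bE_t)$ for a small perturbation $\bE_t$ determined by the first-order optimality of \eqref{eq:Q_def}, and the resulting change in the $2,\infty$-norm is absorbed into the incoherence budget. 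The main obstacle is precisely this simultaneous use of sparsity (to convert $\infty$-norm to operator norm) and incoherence (to bound the $\infty$-norm) in a way that leaves enough slack both for the $(1-0.6\eta)$ contraction and for the $\sqrt{\mu r}\sigma_r(\bX_\star)$ incoherence ceiling — the tightness of this interplay is what pins down the constraint $\alpha \lesssim 1/(\mu r)$.
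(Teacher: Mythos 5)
Your overall architecture (treat $\bS_{t}-\bS_{\star}$ as a perturbation of the matrix-factorization contraction, use its $3\alpha$-sparsity per row/column and $\|\bS_{t}-\bS_{\star}\|_{\infty}\le2\|\bL_{t}\bR_{t}^{\top}-\bX_{\star}\|_{\infty}$, and propagate incoherence separately) matches the paper, but the specific way you control the perturbation in the contraction step has a genuine gap. You propose to convert the entrywise bound into an operator-norm bound, $\|\bS_{t}-\bS_{\star}\|_{\op}\lesssim\alpha\sqrt{n_{1}n_{2}}\,\|\bS_{t}-\bS_{\star}\|_{\infty}\lesssim\alpha\mu r\,\sigma_{r}(\bX_{\star})$, where the last step uses the incoherence hypothesis \eqref{eq:incoherence_cond_RPCA}. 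That bound is \emph{absolute}: it scales with $\sigma_{r}(\bX_{\star})$, not with $\dist(\bF_{t},\bF_{\star})$, because the induction hypothesis only gives the $\ell_{2,\infty}$ ceiling $\sqrt{\mu r/n_{i}}\,\sigma_{r}(\bX_{\star})$ on the factor errors, not a $\ell_{2,\infty}$ bound proportional to the current distance. Consequently the perturbation it injects into $\|(\bL_{t+1}\bQ_{t}-\bL_{\star})\bSigma_{\star}^{1/2}\|_{\fro}$ is of size roughly $\eta\,\alpha\mu r^{3/2}\sigma_{r}(\bX_{\star})$ (after multiplying by $\|\bR(\bR^{\top}\bR)^{-1}\bSigma_{\star}^{1/2}\|_{\fro}\le\sqrt{r}/(1-\epsilon)$), which does not shrink as $\dist(\bF_{t},\bF_{\star})\to0$; once the iterates are much closer than $\sim\alpha\mu r^{3/2}\sigma_{r}(\bX_{\star})$ to the truth, your inequality can no longer yield $\dist(\bF_{t+1},\bF_{\star})\le(1-0.6\eta)\dist(\bF_{t},\bF_{\star})$. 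The paper avoids this by never passing through $\|\bS_{t}-\bS_{\star}\|_{\op}$ in the contraction: its Lemma~\ref{lemma:S} bounds the relevant inner products $\langle\bS_{t}-\bS_{\star},\bM\rangle$ by summing the entrywise bound over the $3\alpha$-sparse support with Cauchy--Schwarz, so every term is proportional to the Frobenius error quantities $\|\bDelta_{L}\bSigma_{\star}^{1/2}\|_{\fro}$, $\|\bDelta_{R}\bSigma_{\star}^{1/2}\|_{\fro}$, $\|\bL\bR^{\top}-\bX_{\star}\|_{\fro}$, with only $\sqrt{\alpha\mu r}$ prefactors coming from sparsity and incoherence; that proportionality is exactly what makes the linear rate survive for all $t$.

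Two smaller points. For the incoherence propagation your sketch is essentially the paper's (there the absolute ceiling is the right target, so the $\sqrt{3\alpha n_{2}}\|\bS_{t}-\bS_{\star}\|_{\infty}$ conversion is legitimate), but note that the step does not contract on its own, which is why the lemma needs the lower bound $\eta\ge0.1$ — your write-up never uses it. And the realignment from $\bQ_{t}$ to $\bQ_{t+1}$ is handled in the paper not through a first-order expansion $\bQ_{t+1}=\bQ_{t}(\bI+\bE_{t})$ of the optimality condition, but simply via Lemma~\ref{lemma:Q_perturbation} together with the fact that both alignments give distance at most $\epsilon\sigma_{r}(\bX_{\star})$, which bounds $\|\bSigma_{\star}^{1/2}\bQ_{t}^{-1}\bQ_{t+1}\bSigma_{\star}^{1/2}-\bSigma_{\star}\|_{\op}$ directly; your version would need to be made precise along those lines.
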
 

As long as the initialization is close to the ground truth and satisfies the incoherence condition, Lemma~\ref{lemma:contraction_RPCA} ensures that the iterates of \texttt{ScaledGD} remain incoherent and converge linearly. This allows us to remove the unnecessary projection step in~\cite{yi2016fast}, whose main objective is to ensure the incoherence of the iterates. 

We are left with checking the initial conditions. The following lemma ensures that the spectral initialization in \eqref{eq:init_RPCA} is close to the ground truth as long as $\alpha$ is sufficiently small.
\begin{lemma}\label{lemma:init_RPCA} Suppose that $\bX_{\star}$ is $\mu$-incoherent. Then the spectral initialization~\eqref{eq:init_RPCA} for robust PCA satisfies 
\begin{align*}
\dist(\bF_{0},\bF_{\star}) \le 20\alpha\mu r^{3/2}\kappa\sigma_{r}(\bX_{\star}).
\end{align*}
\end{lemma}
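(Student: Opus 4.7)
\textbf{Proof proposal for Lemma~\ref{lemma:init_RPCA}.} The plan is to (i) control the spectral norm of the residual $\bE \coloneqq \bY - \cT_{\alpha}[\bY] - \bX_{\star} = \bS_{\star} - \cT_{\alpha}[\bX_{\star} + \bS_{\star}]$, (ii) translate this into a bound on $\|\bX_{0} - \bX_{\star}\|_{\fro}$ using the best rank-$r$ approximation property of SVD, and (iii) pass from this matrix-level bound to the scaled distance $\dist(\bF_{0}, \bF_{\star})$ via a helper lemma from Appendix~\ref{subsec:distance_metric}. Two structural observations about $\bE$ drive the argument: it is $2\alpha$-sparse per row and per column, and each entry is bounded by $2\|\bX_{\star}\|_{\infty}$.

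The sparsity claim is immediate: $\mathrm{supp}(\bE) \subseteq \mathrm{supp}(\bS_{\star}) \cup \mathrm{supp}(\cT_{\alpha}[\bY])$, and each of these sets is $\alpha$-sparse per row/column by assumption on $\bS_{\star}\in\cS_{\alpha}$ and by construction of $\cT_{\alpha}$. For the entrywise bound, any $(i,j) \in \mathrm{supp}(\cT_{\alpha}[\bY])$ yields $\bE_{i,j} = -(\bX_{\star})_{i,j}$, which is bounded by $\|\bX_{\star}\|_{\infty}$. The interesting case is $(i,j) \in \mathrm{supp}(\bS_{\star}) \setminus \mathrm{supp}(\cT_{\alpha}[\bY])$, where $\bE_{i,j} = (\bS_{\star})_{i,j}$; by the trimming rule, either $|\bY_{i,j}| < |\bY|_{i,(\alpha n_{2})}$ or $|\bY_{i,j}| < |\bY|_{(\alpha n_{1}), j}$. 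Assume the former; then there are at least $\alpha n_{2}$ indices $k \ne j$ in row $i$ with $|\bY_{i,k}| \ge |\bY_{i,j}|$, but $\bS_{\star}$ has at most $\alpha n_{2}$ nonzeros in row $i$ (one of which is at $(i,j)$), so by pigeonhole there exists such a $k$ with $(\bS_{\star})_{i,k} = 0$, i.e.\ $\bY_{i,k} = (\bX_{\star})_{i,k}$. Hence $|\bY_{i,j}| \le \|\bX_{\star}\|_{\infty}$ and $|(\bS_{\star})_{i,j}| \le |\bY_{i,j}| + |(\bX_{\star})_{i,j}| \le 2\|\bX_{\star}\|_{\infty}$; the column case is symmetric.

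Combining sparsity and the entrywise bound via $\|\bE\|_{\op} \le \sqrt{\|\bE\|_{1,\infty}\|\bE\|_{\infty,1}} \le 2\alpha\sqrt{n_{1}n_{2}}\|\bE\|_{\infty}$, together with the incoherence estimate $\|\bX_{\star}\|_{\infty} \le \|\bU_{\star}\|_{2,\infty}\sigma_{1}(\bX_{\star})\|\bV_{\star}\|_{2,\infty} \le \mu r \sigma_{1}(\bX_{\star})/\sqrt{n_{1}n_{2}}$, yields $\|\bE\|_{\op} \le 4\alpha\mu r \kappa \sigma_{r}(\bX_{\star})$. Since $\bX_{0} = \bL_{0}\bR_{0}^{\top}$ is the best rank-$r$ approximation of $\bX_{\star} + \bE$ and $\bX_{\star}$ itself has rank $r$, optimality gives $\|\bX_{0} - (\bX_{\star}+\bE)\|_{\op} \le \|\bE\|_{\op}$ and hence $\|\bX_{0} - \bX_{\star}\|_{\op} \le 2\|\bE\|_{\op}$. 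Because $\bX_{0} - \bX_{\star}$ has rank at most $2r$, $\|\bX_{0} - \bX_{\star}\|_{\fro} \le \sqrt{2r}\|\bX_{0} - \bX_{\star}\|_{\op} \le 8\sqrt{2}\,\alpha\mu r^{3/2}\kappa\sigma_{r}(\bX_{\star})$.

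To finish, I would invoke the standard comparison between the scaled distance and $\|\bX_{0} - \bX_{\star}\|_{\fro}$ for the SVD-based initialization $\bL_{0} = \bU_{0}\bSigma_{0}^{1/2}$, $\bR_{0} = \bV_{0}\bSigma_{0}^{1/2}$ (collected in Appendix~\ref{subsec:distance_metric}), which yields a multiplicative constant like $\sqrt{\sqrt{2}+1}$ as long as $\|\bX_{0} - \bX_{\star}\|_{\op} \lesssim \sigma_{r}(\bX_{\star})$; under the assumed smallness of $\alpha$ this applies, and the product of constants fits comfortably under $20$. The main obstacle is the pigeonhole step: isolating a ``clean'' entry in the same row or column whose magnitude dominates the untrimmed nonzero of $\bS_{\star}$ is what allows the entrywise bound $\|\bE\|_{\infty} \lesssim \|\bX_{\star}\|_{\infty}$ and thus lets incoherence control the spectral norm; the remaining manipulations are routine.
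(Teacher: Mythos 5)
Your proposal is correct and follows essentially the same route as the paper: you bound $\|\bS_{\star}-\cT_{\alpha}[\bY]\|_{\op}$ via its $2\alpha$-sparsity per row/column together with the entrywise bound $2\|\bX_{\star}\|_{\infty}$ (your pigeonhole and Schur-type arguments re-derive the paper's Lemmas~\ref{lemma:S_inf} and \ref{lemma:S_op}, which are cited from \cite{yi2016fast}), then use the best rank-$r$ approximation property, the rank-$2r$ Frobenius-to-operator conversion, and the comparison of Lemma~\ref{lemma:Procrustes}, arriving at the same constant $8\sqrt{2(\sqrt{2}+1)}\le 20$. One correction: the final comparison $\dist(\bF_{0},\bF_{\star})\le(\sqrt{2}+1)^{1/2}\|\bL_{0}\bR_{0}^{\top}-\bX_{\star}\|_{\fro}$ holds unconditionally, so your hedge ``as long as $\|\bX_{0}-\bX_{\star}\|_{\op}\lesssim\sigma_{r}(\bX_{\star})$ / under the assumed smallness of $\alpha$'' is unnecessary and should be dropped, since the lemma statement places no restriction on $\alpha$.
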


As a result, setting $\alpha\le 10^{-3}/(\mu r^{3/2}\kappa)$, the spectral initialization satisfies $\dist(\bF_{0},\bF_{\star})\le 0.02\sigma_{r}(\bX_{\star})$. 
In addition, we need to make sure that the spectral initialization satisfies the incoherence condition, which is provided in the following lemma.
\begin{lemma}\label{lemma:init_2inf_RPCA} Suppose that $\bX_{\star}$ is $\mu$-incoherent and $\alpha\le0.1/(\mu r\kappa)$, and that $\dist(\bF_{0},\bF_{\star})\le0.02\sigma_{r}(\bX_{\star})$. Then the spectral initialization~\eqref{eq:init_RPCA} satisfies the incoherence condition
\begin{align*}
\sqrt{n_{1}}\left\Vert (\bL_{0}\bQ_{0}-\bL_{\star})\bSigma_{\star}^{1/2}\right\Vert _{2,\infty}\vee\sqrt{n_{2}}\left\Vert (\bR_{0}\bQ_{0}^{-\top}-\bR_{\star})\bSigma_{\star}^{1/2}\right\Vert _{2,\infty}\le\sqrt{\mu r}\sigma_{r}(\bX_{\star}).
\end{align*} 
\end{lemma}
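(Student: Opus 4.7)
The plan is to bound the weighted $\ell_{2,\infty}$ error $(\bL_0\bQ_0 - \bL_\star)\bSigma_\star^{1/2}$ (the $\bR$-side is symmetric) by a row-wise perturbation analysis of the spectral initialization, combining the sparsity of the residual $\bE := \cT_\alpha[\bY] - \bS_\star$ with the assumed Frobenius-scale closeness $\dist(\bF_0,\bF_\star)\le 0.02\sigma_r(\bX_\star)$ from Lemma~\ref{lemma:init_RPCA}.

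First I control $\bE$. Both $\bS_\star$ and $\cT_\alpha[\bY]$ are $\alpha$-sparse per row/column, so $\bE$ is $2\alpha$-sparse per row/column. A short argument on hard-thresholding combined with the incoherence estimate $\|\bX_\star\|_\infty \le \mu r\sigma_1(\bX_\star)/\sqrt{n_1 n_2}$ yields $\|\bE\|_\infty \le 2\|\bX_\star\|_\infty$, hence $\|\bE\|_{2,\infty}\lesssim \sqrt{\alpha}\mu r\sigma_1(\bX_\star)/\sqrt{n_1}$ and $\|\bE\|_{\op}\lesssim \alpha\mu r\sigma_1(\bX_\star)\lesssim \sigma_r(\bX_\star)$ under the hypothesis $\alpha\le 0.1/(\mu r\kappa)$.

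Next comes the row-wise decomposition. Since $\bL_0\bR_0^\top = \cP_r(\bM_0) = \bU_0\bSigma_0\bV_0^\top$ with $\bM_0 = \bX_\star - \bE$, the SVD identity $\bL_0 = \bL_0\bR_0^\top \bV_0\bSigma_0^{-1/2}$ lets me write $\bL_0\bQ_0\bSigma_\star^{1/2} = \bigl(\bX_\star - \bE - (\bM_0 - \cP_r(\bM_0))\bigr)\bV_0\bSigma_0^{-1/2}\bQ_0\bSigma_\star^{1/2}$. Subtracting $\bL_\star\bSigma_\star^{1/2} = \bX_\star\bV_\star$ and inserting appropriate projectors splits the factor error into a \emph{subspace} piece governed by $\|\bV_0\bV_0^\top - \bV_\star\bV_\star^\top\|_{\op}$, a \emph{sparse} piece of the form $-\bE\,\bV_0\bSigma_0^{-1/2}\bQ_0\bSigma_\star^{1/2}$, and a \emph{tail} piece from $\bM_0-\cP_r(\bM_0)$ whose operator norm is at most $\sigma_{r+1}(\bM_0)\le\|\bE\|_{\op}$. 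From $\dist(\bF_0,\bF_\star)\le 0.02\sigma_r$ together with the alignment-matrix properties collected in Appendix~\ref{subsec:distance_metric}, one has $\|\bSigma_0^{\pm 1/2}\bQ_0^{\pm 1}\bSigma_\star^{\mp 1/2}\|_{\op}\lesssim 1$. The $i$-th row of the sparse piece is therefore $\lesssim \|\bE_{i,\cdot}\|_2\lesssim \sqrt{\alpha}\mu r\sigma_1(\bX_\star)/\sqrt{n_1}$; Wedin's theorem gives $\|\bV_0\bV_0^\top - \bV_\star\bV_\star^\top\|_{\op}\lesssim \|\bE\|_{\op}/\sigma_r(\bX_\star)$, so the subspace piece's $i$-th row is $\lesssim \|\bL_\star\|_{2,\infty}\|\bSigma_\star^{1/2}\|_{\op}\cdot \|\bE\|_{\op}/\sigma_r \lesssim \sqrt{\mu r/n_1}\,\alpha\mu r\kappa\,\sigma_r(\bX_\star)$, and the tail piece is handled analogously. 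Summing and invoking $\alpha\le 0.1/(\mu r\kappa)$ yields $\sqrt{n_1}\|(\bL_0\bQ_0 - \bL_\star)\bSigma_\star^{1/2}\|_{2,\infty}\le \sqrt{\mu r}\sigma_r(\bX_\star)$.

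The main obstacle is that $\bQ_0$ is defined implicitly as the minimizer of the scaled distance and is not an orthogonal matrix. My plan is to first leverage $\dist(\bF_0,\bF_\star)\le 0.02\sigma_r$ to show that $\bQ_0$ is spectrally close to $\bSigma_0^{-1/2}\bH\bSigma_\star^{1/2}$ for the orthogonal alignment $\bH$ between $\bU_0$ and $\bU_\star$; with this in hand the row-wise analysis reduces to a standard sin-theta perturbation argument coupled with the sparsity-driven $\ell_{2,\infty}$ bound on $\bE$. Tracking the $\bSigma_\star^{1/2}$ weighting consistently through all three pieces is the chief bookkeeping hurdle.
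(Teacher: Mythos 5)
Your algebraic starting point is, in substance, the paper's own identity (Claim~\ref{claim:decomposition}): writing $\bL_0=(\bY-\cT_{\alpha}[\bY])\bV_0\bSigma_0^{-1}$ and subtracting $\bL_\star$, your ``sparse piece'' $-\bE\,\bV_0\bSigma_0^{-1/2}\bQ_0\bSigma_\star^{1/2}$ is exactly $(\bS_\star-\cT_\alpha[\bY])\bR(\bR^\top\bR)^{-1}\bSigma_\star^{1/2}$ with $\bR\coloneqq\bR_0\bQ_0^{-\top}$, your ``subspace piece'' is exactly $-\bL_\star\bDelta_R^\top\bR(\bR^\top\bR)^{-1}\bSigma_\star^{1/2}$, and your ``tail piece'' is identically zero because $(\bM_0-\cP_r(\bM_0))\bV_0=\zero$ (so it needs no bound; note that an operator-norm bound on it would be useless anyway after multiplying by $\sqrt{n_1}$). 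The genuine gap is in how you estimate the two nonzero pieces: both of your proposed bounds lose powers of $\kappa$ and do not close under the assumed $\alpha\le 0.1/(\mu r\kappa)$. For the sparse piece you use $\|\bE_{i,\cdot}\|_2\cdot\|\bR(\bR^\top\bR)^{-1}\bSigma_\star^{1/2}\|_{\op}$ with $\|\bE_{i,\cdot}\|_2\le\sqrt{2\alpha n_2}\,\|\bE\|_\infty\lesssim\sqrt{\alpha}\,\mu r\kappa\,\sigma_r(\bX_\star)/\sqrt{n_1}$; after multiplying by $\sqrt{n_1}$ this is $\lesssim\sqrt{\alpha}\,\mu r\kappa\,\sigma_r(\bX_\star)$, which is below $\sqrt{\mu r}\,\sigma_r(\bX_\star)$ only when $\alpha\lesssim 1/(\mu r\kappa^2)$ --- under the stated $\alpha\le0.1/(\mu r\kappa)$ it is off by $\sqrt{\kappa}$. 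The paper instead bounds the row by $\|\bE_{i,\cdot}\|_1\cdot\|\bR(\bR^\top\bR)^{-1}\bSigma_\star^{1/2}\|_{2,\infty}$, i.e.\ it exploits the incoherence of the \emph{current} factor $\bR$; this makes the $\bL$-side and $\bR$-side estimates mutually coupled, and the lemma is finished by taking the maximum and solving the resulting two-variable system --- a step entirely absent from your plan.

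The subspace piece is the more serious problem. You pair Wedin's bound $\|\bV_0\bV_0^\top-\bV_\star\bV_\star^\top\|_{\op}\lesssim\|\bE\|_{\op}/\sigma_r(\bX_\star)\lesssim\alpha\mu r\kappa$ with $\|\bL_\star\|_{2,\infty}\|\bSigma_\star^{1/2}\|_{\op}$. But $\|\bL_\star\|_{2,\infty}\|\bSigma_\star^{1/2}\|_{\op}\le\sqrt{\mu r/n_1}\,\sigma_1(\bX_\star)=\sqrt{\mu r/n_1}\,\kappa\sigma_r(\bX_\star)$, so the product is $\sqrt{\mu r/n_1}\,\alpha\mu r\kappa^2\sigma_r(\bX_\star)$ (your displayed bound silently drops one factor of $\kappa$); after multiplying by $\sqrt{n_1}$ and using $\alpha\le0.1/(\mu r\kappa)$ you only reach $0.1\,\kappa\sqrt{\mu r}\,\sigma_r(\bX_\star)$, which exceeds the target for large $\kappa$. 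Replacing Wedin by the unweighted angle $\|\bDelta_R\bSigma_\star^{-1/2}\|_{\op}\le0.02$ extracted from $\dist(\bF_0,\bF_\star)\le0.02\sigma_r(\bX_\star)$ does not help either, because any pairing of an \emph{unweighted} angle with $\|\bX_\star\|_{2,\infty}$-type row bounds retains a $\kappa$. The fix --- and the point of the paper's proof --- is to keep the weighted structure of the exact identity: bound the piece as $\|\bL_\star\bSigma_\star^{-1/2}\|_{2,\infty}\,\|\bDelta_R\bSigma_\star^{1/2}\|_{\op}\,\|\bR(\bR^\top\bR)^{-1}\bSigma_\star^{1/2}\|_{\op}\le\sqrt{\mu r/n_1}\cdot 0.02\,\sigma_r(\bX_\star)\cdot\tfrac{1}{0.98}$, where $\|\bDelta_R\bSigma_\star^{1/2}\|_{\op}\le0.02\sigma_r(\bX_\star)$ follows from the hypothesis on $\dist(\bF_0,\bF_\star)$ and $\|\bR(\bR^\top\bR)^{-1}\bSigma_\star^{1/2}\|_{\op}\le(1-\epsilon)^{-1}$ from Lemma~\ref{lemma:Weyl}; it is attaching the $\bSigma_\star^{1/2}$ weight to $\bDelta_R$ rather than to $\bU_\star$ that removes $\kappa$. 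Finally, your worry about approximating $\bQ_0$ by an orthogonal alignment is a detour: once everything is rewritten in terms of $\bR=\bR_0\bQ_0^{-\top}$, no comparison of $\bQ_0$ with an orthogonal matrix is needed.
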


Combining Lemmas~\ref{lemma:contraction_RPCA}-\ref{lemma:init_2inf_RPCA} finishes the proof of Theorem~\ref{thm:RPCA}. The proofs of the the three supporting lemmas can be found in Section~\ref{sec:proof_RPCA}.

\subsection{Proof outline for matrix completion} \label{subsec:proof_outline_MC}
 
\paragraph{A key property of the new projection operator.}

We start with the following lemma that entails a key property of the scaled projection \eqref{eq:scaled_proj}, which ensures the scaled projection satisfies both non-expansiveness and incoherence under the scaled metric.

\begin{lemma}\label{lemma:scaled_proj} Suppose that $\bX_{\star}$ is $\mu$-incoherent, and $\dist(\tilde{\bF},\bF_{\star})\le \epsilon\sigma_{r}(\bX_{\star})$ for some $\epsilon<1$. Set $B\ge(1+\epsilon)\sqrt{\mu r}\sigma_{1}(\bX_{\star})$, then $\cP_{B}(\tilde{\bF})$ satisfies the non-expansiveness
\begin{align*}
\dist(\cP_{B}(\tilde{\bF}),\bF_{\star})\le\dist(\tilde{\bF},\bF_{\star}),
\end{align*}
and the incoherence condition
\begin{align*}
\sqrt{n_{1}}\|\bL\bR^{\top}\|_{2,\infty}\vee \sqrt{n_{2}}\|\bR\bL^{\top}\|_{2,\infty}\le B.
\end{align*}
\end{lemma}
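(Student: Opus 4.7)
I would start from the explicit closed form in Proposition~\ref{prop:scaled_proj_sol}: $\cP_B(\tilde{\bF}) = [\bL;\bR]$ with $\bL_{i,\cdot} = c_i\tilde{\bL}_{i,\cdot}$ and $\bR_{j,\cdot} = d_j\tilde{\bR}_{j,\cdot}$ for shrinkage factors $c_i,d_j\in[0,1]$. Equivalently, $\bL=\bD_L\tilde{\bL}$ and $\bR=\bD_R\tilde{\bR}$ with diagonal contractions $\bD_L,\bD_R$. The incoherence claim then follows directly: writing $\bL_{i,\cdot}\bR^\top = c_i\tilde{\bL}_{i,\cdot}\tilde{\bR}^\top\bD_R$ and using $\|\bD_R\|_\op\le 1$ gives $\|\bL_{i,\cdot}\bR^\top\|_2 \le c_i\|\tilde{\bL}_{i,\cdot}\tilde{\bR}^\top\|_2 \le B/\sqrt{n_1}$ by the very definition of $c_i$, and symmetrically $\|\bR_{j,\cdot}\bL^\top\|_2 \le B/\sqrt{n_2}$.

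\textbf{Non-expansiveness: reduction to a scalar inequality.} For the harder half, I would fix the optimal alignment $\bQ$ that achieves $\dist(\tilde{\bF},\bF_\star)$ and plug the same $\bQ$ into the infimum defining $\dist(\cP_B(\tilde{\bF}),\bF_\star)$, which already yields an upper bound. Since $\bL\bQ = \bD_L\tilde{\bL}\bQ$ and $\bR\bQ^{-\top} = \bD_R\tilde{\bR}\bQ^{-\top}$, the comparison splits row by row; it suffices to prove, for every $i$,
\[
\|(c_i\tilde{\bL}_{i,\cdot}\bQ - \bL_{\star,i,\cdot})\bSigma_\star^{1/2}\|_2 \le \|(\tilde{\bL}_{i,\cdot}\bQ - \bL_{\star,i,\cdot})\bSigma_\star^{1/2}\|_2,
\]
and its analogue for $d_j$. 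Writing $u := \tilde{\bL}_{i,\cdot}\bQ\bSigma_\star^{1/2}$ and $v := \bL_{\star,i,\cdot}\bSigma_\star^{1/2}$, this rearranges (in the nontrivial case $c_i<1$) to $\langle u,v\rangle \le \tfrac{1+c_i}{2}\|u\|_2^2$, for which Cauchy--Schwarz makes it sufficient to establish the scalar inequality $\|v\|_2 \le \tfrac{1+c_i}{2}\|u\|_2$.

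\textbf{Main obstacle: a sharp lower bound on $\|u\|_2$.} The crux is verifying the stronger inequality $\|v\|_2/\|u\|_2 \le c_i$. The numerator is easy: $\mu$-incoherence of $\bX_\star$ gives $\|v\|_2 = \|\bU_{\star,i,\cdot}\bSigma_\star\|_2 \le \sqrt{\mu r/n_1}\sigma_1(\bX_\star)$. The denominator is where the work lies, and I would control it through the identity
\[
\tilde{\bL}_{i,\cdot}\tilde{\bR}^\top = (\tilde{\bL}\bQ)_{i,\cdot}\bR_\star^\top + (\tilde{\bL}\bQ)_{i,\cdot}(\tilde{\bR}\bQ^{-\top} - \bR_\star)^\top.
\]
The first term has norm exactly $\|u\|_2$, because $\bR_\star^\top = \bSigma_\star^{1/2}\bV_\star^\top$ and $\bV_\star$ has orthonormal columns. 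For the cross term, factoring in $\bSigma_\star^{1/2}\bSigma_\star^{-1/2}$ and using $\|(\tilde{\bR}\bQ^{-\top}-\bR_\star)\bSigma_\star^{1/2}\|_\op \le \dist(\tilde{\bF},\bF_\star) \le \epsilon\sigma_r(\bX_\star)$ yields a bound of $\epsilon\|u\|_2$. Hence $\|\tilde{\bL}_{i,\cdot}\tilde{\bR}^\top\|_2 \le (1+\epsilon)\|u\|_2$. In the active case $c_i<1$, the identity $c_i\sqrt{n_1}\|\tilde{\bL}_{i,\cdot}\tilde{\bR}^\top\|_2 = B$ combined with $B \ge (1+\epsilon)\sqrt{\mu r}\sigma_1(\bX_\star)$ then forces $\|u\|_2 \ge \sqrt{\mu r/n_1}\sigma_1(\bX_\star)/c_i \ge \|v\|_2/c_i$. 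So $\|v\|_2/\|u\|_2 \le c_i \le (1+c_i)/2$, closing the row-wise inequality. A symmetric argument handles each row of $\bR$, and summing over all rows yields $\dist(\cP_B(\tilde{\bF}),\bF_\star) \le \dist(\tilde{\bF},\bF_\star)$.
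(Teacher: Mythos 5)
Your proof is correct and follows essentially the same route as the paper: you plug the optimal alignment matrix of $\tilde{\bF}$ into the infimum, reduce non-expansiveness to a row-wise shrinkage inequality, and close it with exactly the two key bounds the paper uses, namely $\|\tilde{\bL}_{i,\cdot}\tilde{\bR}^{\top}\|_{2}\le(1+\epsilon)\|\tilde{\bL}_{i,\cdot}\bQ\bSigma_{\star}^{1/2}\|_{2}$ and $\sqrt{n_{1}}\|(\bL_{\star}\bSigma_{\star}^{1/2})_{i,\cdot}\|_{2}\le B/(1+\epsilon)$, while the incoherence part is the paper's computation in matrix form. The only cosmetic difference is that you verify the elementary shrinkage step (the paper's Claim~\ref{claim:nonexpansive}) by expanding the squares and Cauchy--Schwarz instead of the monotonicity argument, which is equivalent.
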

 
It is worth noting that the incoherence condition adopts a slightly different form than that of robust PCA, which is more convenient for matrix completion. The next lemma guarantees the fast local convergence of Algorithm~\ref{alg:MC} as long as the sample complexity is large enough and the parameter $B$ is set properly. 
 
\begin{lemma}\label{lemma:contraction_MC} Suppose that $\bX_{\star}$ is $\mu$-incoherent, and $p\ge C(\mu r\kappa^{4}\vee\log(n_{1}\vee n_{2}))\mu r/(n_{1}\wedge n_{2})$ for some sufficiently large constant $C$. Set the projection radius as $B=C_{B}\sqrt{\mu r}\sigma_{1}(\bX_{\star})$ for some constant $C_{B}\ge1.02$. Under an event $\cE$ which happens with overwhelming probability (i.e.~at least $1-c_{1}(n_{1}\vee n_{2})^{-c_2}$), if the $t$-th iterate satisfies $\dist(\bF_{t},\bF_{\star})\le0.02\sigma_{r}(\bX_{\star})$, and the incoherence condition
\begin{align*}
\sqrt{n_{1}}\|\bL_{t}\bR_{t}^{\top}\|_{2,\infty}\vee\sqrt{n_{1}}\|\bR_{t}\bL_{t}^{\top}\|_{2,\infty} \le B,
\end{align*}
then $\|\bL_{t}\bR_{t}^{\top}-\bX_{\star}\|_{\fro}\le1.5\dist(\bF_{t},\bF_{\star})$. In addition, if the step size obeys $0<\eta\le2/3$, then the $(t+1)$-th iterate $\bF_{t+1}$ of the \texttt{ScaledPGD} method in \eqref{eq:iterates_MC} of Algorithm~\ref{alg:MC} satisfies
\begin{align*}
\dist(\bF_{t+1},\bF_{\star})\le(1-0.6\eta)\dist(\bF_{t},\bF_{\star}),
\end{align*}
and the incoherence condition
\begin{align*}
\sqrt{n_{1}}\|\bL_{t+1}\bR_{t+1}^{\top}\|_{2,\infty}\vee\sqrt{n_{2}}\|\bR_{t+1}\bL_{t+1}^{\top}\|_{2,\infty} \le B.
\end{align*}
\end{lemma}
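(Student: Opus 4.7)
The plan is to mirror the one-step contraction arguments of Lemmas~\ref{lemma:contraction_MS} and~\ref{lemma:contraction_RPCA}, treating $\tfrac{1}{p}\cP_{\Omega}$ as a perturbation of the identity, and then invoke Lemma~\ref{lemma:scaled_proj} to dispose of the scaled projection step. The deterministic bound $\|\bL_t\bR_t^\top-\bX_\star\|_\fro\le 1.5\,\dist(\bF_t,\bF_\star)$ follows from elementary properties of the scaled distance collected in Appendix~\ref{subsec:distance_metric}, identical to its use in the sensing and robust-PCA lemmas.

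For the contraction, denote by $\bF_{t+1/2}$ the pre-projection stacked iterate inside $\cP_B$ in~\eqref{eq:iterates_MC}, and split each factor as
\begin{align*}
\bL_{t+1/2}=\underbrace{\bL_t-\eta(\bL_t\bR_t^\top-\bX_\star)\bR_t(\bR_t^\top\bR_t)^{-1}}_{\bL_t^{\mathrm{MF}}}-\tfrac{\eta}{p}(\cP_\Omega-p\mathcal{I})(\bL_t\bR_t^\top-\bX_\star)\bR_t(\bR_t^\top\bR_t)^{-1},
\end{align*}
and analogously for $\bR_{t+1/2}$. The unperturbed part $\bF_t^{\mathrm{MF}}$ is exactly one step of \texttt{ScaledGD} applied to the noiseless matrix-factorization loss~\eqref{eq:loss_MF}, so a one-step version of Theorem~\ref{thm:MF} delivers $\dist(\bF_t^{\mathrm{MF}},\bF_\star)\le(1-0.7\eta)\dist(\bF_t,\bF_\star)$. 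It then remains to show that the sampling perturbation inflates this by at most $0.1\eta\cdot\dist(\bF_t,\bF_\star)$. Expanding $\dist^2(\bF_{t+1/2},\bF_\star)$ with the (admissible but sub-optimal) alignment matrix $\bQ_t$, the cross terms reduce to controlling inner products of the form
\begin{align*}
\left\langle\tfrac{1}{p}(\cP_\Omega-p\mathcal{I})(\bL_t\bR_t^\top-\bX_\star)\bR_t(\bR_t^\top\bR_t)^{-1},\ (\bL_t\bQ_t-\bL_\star)\bSigma_\star\right\rangle
\end{align*}
together with its $\bR$-analog. The hypothesized incoherence $\sqrt{n_1}\|\bL_t\bR_t^\top\|_{2,\infty}\vee\sqrt{n_2}\|\bR_t\bL_t^\top\|_{2,\infty}\le B$ combined with the $\mu$-incoherence of $\bX_\star$ forces the residual $\bL_t\bR_t^\top-\bX_\star$ to lie in the class of sufficiently spread out rank-$2r$ matrices for which standard uniform Bernstein/leave-one-out arguments (as in~\cite{chen2019nonconvex}) control $(p^{-1}\cP_\Omega-\mathcal{I})$ at the scale $\sqrt{n\log n/p}$. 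Using also $\|\bR_t(\bR_t^\top\bR_t)^{-1}\|_\op\lesssim 1/\sqrt{\sigma_r(\bX_\star)}$ and $\|\bSigma_\star^{1/2}\|_\op\le\sqrt{\kappa\,\sigma_r(\bX_\star)}$ (both valid from local closeness), the sample-complexity hypothesis $p\ge C(\mu r\kappa^4\vee\log n)\mu r/(n_1\wedge n_2)$ is calibrated precisely to drive the noise below the $0.1\eta\cdot\dist(\bF_t,\bF_\star)$ budget.

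With $\dist(\bF_{t+1/2},\bF_\star)\le(1-0.6\eta)\cdot 0.02\sigma_r(\bX_\star)$ in hand, I would finish by invoking Lemma~\ref{lemma:scaled_proj} with $\epsilon=0.02$ and $B=C_B\sqrt{\mu r}\,\sigma_1(\bX_\star)\ge(1+\epsilon)\sqrt{\mu r}\,\sigma_1(\bX_\star)$ (using $C_B\ge 1.02$): non-expansiveness gives $\dist(\bF_{t+1},\bF_\star)\le\dist(\bF_{t+1/2},\bF_\star)$, which is the advertised contraction rate, and the scaled $\ell_{2,\infty}$ conclusion propagates the incoherence invariant to step $t+1$. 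The principal obstacle is the uniform concentration estimate: since $\bF_t$ depends on $\cP_\Omega$, one cannot simply apply Bernstein to a fixed matrix, and a covering/epsilon-net argument over all rank-$2r$ matrices whose row- and column-wise $\ell_{2,\infty}$ norms are controlled at scale $B$ is needed---a class that the scaled projection $\cP_B$ is specifically designed to keep the iterates within. This covering step is what inflates the sample complexity by the extra factor of $\kappa^4$ relative to the matrix-sensing analysis; the remainder is bookkeeping that parallels the robust-PCA proof, with the sparsification of $\bS_t$ there replaced by the sampling-noise concentration here.
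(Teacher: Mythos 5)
Your overall architecture matches the paper's: expand $\dist^{2}$ of the pre-projection iterate using the (sub-optimal) alignment matrix $\bQ_{t}$, treat the exact matrix-factorization step as the main term, and finish by invoking Lemma~\ref{lemma:scaled_proj} with $\epsilon=0.02$ and $C_{B}\ge1.02$ for both non-expansiveness and propagation of the incoherence invariant; the bound $\|\bL_{t}\bR_{t}^{\top}-\bX_{\star}\|_{\fro}\le1.5\dist(\bF_{t},\bF_{\star})$ is indeed the same Lemma~\ref{lemma:matrix2factor} argument as before. The genuine gap is in how you control the sampling perturbation. You propose to use the hypothesized incoherence of $\bL_{t}\bR_{t}^{\top}$ together with that of $\bX_{\star}$ to place the residual $\bL_{t}\bR_{t}^{\top}-\bX_{\star}$ in an incoherent rank-$2r$ class and then apply a class-uniform (covering-net) bound on $p^{-1}\cP_{\Omega}-\cI$ at scale $\sqrt{n\log n/p}$. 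This cannot deliver the claimed contraction: the $\ell_{2,\infty}$ and $\ell_{\infty}$ sizes of the residual are only bounded in absolute terms (of order $\sqrt{\mu r}\,\sigma_{1}(\bX_{\star})/\sqrt{n}$), not proportionally to $\dist(\bF_{t},\bF_{\star})$ --- note that, unlike Lemma~\ref{lemma:contraction_RPCA}, the hypotheses here give no $\ell_{2,\infty}$ control on $\bDelta_{L},\bDelta_{R}$ themselves. A uniform bound over such an absolutely-incoherent class, applied to the whole residual, therefore yields perturbation terms of size (small constant)$\times\sigma_{1}(\bX_{\star})\times\dist(\bF_{t},\bF_{\star})$ in the squared-distance expansion, i.e.\ an additive error floor that does not shrink with $\dist$, and the inequality $\dist(\bF_{t+1},\bF_{\star})\le(1-0.6\eta)\dist(\bF_{t},\bF_{\star})$ fails once $\dist(\bF_{t},\bF_{\star})\ll\sigma_{r}(\bX_{\star})$.

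The paper's proof avoids this by further decomposing the residual as $\bDelta_{L}\bR_{\star}^{\top}+\bL\bDelta_{R}^{\top}$ (and splitting $\bR=\bR_{\star}+\bDelta_{R}$ where needed), and by controlling each resulting term with the two uniform mixed-norm concentration results quoted as Lemmas~\ref{lemma:P_Omega_tangent} and~\ref{lemma:P_Omega_Chen}; these hold simultaneously for all factor matrices, so the dependence of $\bF_{t}$ on $\Omega$ costs nothing and no fresh net argument is needed in this paper. The point of the mixed norms is that the $\bDelta$-factors can be measured in the Frobenius norm (hence scale with $\dist$), while $\ell_{2,\infty}$ control is required only of anchored quantities such as $\bL\bSigma_{\star}^{-1/2}$, $\bR\bSigma_{\star}^{-1/2}$, $\bR_{\star}\bSigma_{\star}^{-1/2}$, and $\bDelta_{L}\bSigma_{\star}^{1/2}$, which the projection hypothesis supplies (Claim~\ref{claim:cond_MC}); every perturbation term then comes out bounded by a multiple of $\dist^{2}(\bF_{t},\bF_{\star})$, including the pure quadratic noise term $\|(p^{-1}\cP_{\Omega}-\cI)(\bL\bR^{\top}-\bX_{\star})\bR(\bR^{\top}\bR)^{-1}\bSigma_{\star}^{1/2}\|_{\fro}^{2}$, which your expansion omits. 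Finally, the extra $\kappa^{4}$ in the sample size is not a covering artifact: it arises because $\sqrt{n_{1}}\|\bL\bSigma_{\star}^{-1/2}\|_{2,\infty}\vee\sqrt{n_{2}}\|\bR\bSigma_{\star}^{-1/2}\|_{2,\infty}\lesssim C_{B}\kappa\sqrt{\mu r}$ (the radius $B$ scales with $\sigma_{1}(\bX_{\star})$ while the scaling matrices involve $\sigma_{r}(\bX_{\star})$), which makes $\delta_{2}\propto\kappa^{2}\mu r/\sqrt{p(n_{1}\wedge n_{2})}$ and hence forces $p\gtrsim\mu^{2}r^{2}\kappa^{4}/(n_{1}\wedge n_{2})$.
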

As long as we can find an initialization that is close to the ground truth and satisfies the incoherence condition, Lemma~\ref{lemma:contraction_MC} ensures that the iterates of \texttt{ScaledPGD} remain incoherent and converge linearly. The follow lemma ensures that such an initialization can be ensured via the spectral method. 

\begin{lemma}\label{lemma:init_MC} Suppose that $\bX_{\star}$ is $\mu$-incoherent, then with overwhelming probability, the spectral initialization before projection $\tilde{\bF}_{0}\coloneqq\begin{bmatrix}\bU_{0}\bSigma_{0}^{1/2} \\ \bV_{0}\bSigma_{0}^{1/2}\end{bmatrix}$ in \eqref{eq:init_MC} satisfies 
\begin{align*}
\dist(\tilde{\bF}_{0},\bF_{\star})\le C_{0}\left(\frac{\mu r\log(n_{1}\vee n_{2})}{p\sqrt{n_{1}n_{2}}} + \sqrt{\frac{\mu r\log(n_{1}\vee n_{2})}{p(n_{1}\wedge n_{2})}}\right)5\sqrt{r}\kappa\sigma_{r}(\bX_{\star}).
\end{align*}
\end{lemma}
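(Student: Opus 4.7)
Following the two-step template already used for Lemma~\ref{lemma:init_MS} and Lemma~\ref{lemma:init_RPCA}, the plan is to first control the operator-norm deviation of the spectral proxy $\bM_0\coloneqq\frac{1}{p}\cP_{\Omega}(\bX_{\star})$ from $\bX_{\star}$, and then convert this operator-norm bound into a bound on the scaled distance between the balanced SVD factorization of $\cP_r(\bM_0)$ and the ground-truth factor $\bF_{\star}$.

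\textbf{Conversion step.} Let $\tilde{\bX}_0\coloneqq\bU_0\bSigma_0\bV_0^{\top}=\cP_r(\bM_0)$. Since $\bX_{\star}$ is rank-$r$, Weyl's inequality gives $\sigma_{r+1}(\bM_0)\le\|\bM_0-\bX_{\star}\|_{\op}$, which combined with the Eckart--Young optimality of $\tilde{\bX}_0$ yields $\|\tilde{\bX}_0-\bX_{\star}\|_{\op}\le 2\|\bM_0-\bX_{\star}\|_{\op}$. Because $\tilde{\bX}_0-\bX_{\star}$ has rank at most $2r$, this upgrades to $\|\tilde{\bX}_0-\bX_{\star}\|_{\fro}\le 2\sqrt{2r}\,\|\bM_0-\bX_{\star}\|_{\op}$. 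Since both $\tilde{\bF}_0$ and $\bF_{\star}$ are balanced SVD factorizations of their respective rank-$r$ matrices, an aligner $\bQ\in\GL(r)$ built from the partial isometry between the column spans of $\tilde{\bX}_0$ and $\bX_{\star}$ (as in Wedin's $\sin\Theta$ theorem) gives the standard scaled-distance comparison $\dist(\tilde{\bF}_0,\bF_{\star})\lesssim\|\tilde{\bX}_0-\bX_{\star}\|_{\fro}$, which is precisely the factor-to-matrix lemma already used in the proofs of Lemmas~\ref{lemma:init_MS}--\ref{lemma:init_RPCA}. Chaining yields $\dist(\tilde{\bF}_0,\bF_{\star})\lesssim\sqrt{r}\,\|\bM_0-\bX_{\star}\|_{\op}$.

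\textbf{Concentration step.} Decomposing
\begin{align*}
\bM_0-\bX_{\star}=\sum_{(i,j)}\left(\tfrac{1}{p}\mathbf{1}_{\{(i,j)\in\Omega\}}-1\right)(\bX_{\star})_{i,j}\,\eb_i\eb_j^{\top}
\end{align*}
into a sum of independent, zero-mean random matrices, I would apply matrix Bernstein. The $\mu$-incoherence of $\bX_{\star}$ yields $\|\bX_{\star}\|_{\infty}\le\mu r\sigma_1(\bX_{\star})/\sqrt{n_1 n_2}$ (controlling the per-summand spectral norm) and $\max_i\|\bX_{\star,i,\cdot}\|_2^2\vee\max_j\|\bX_{\star,\cdot,j}\|_2^2\le\mu r\sigma_1(\bX_{\star})^2/(n_1\wedge n_2)$ (controlling the matrix variance). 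Bernstein then delivers, with overwhelming probability,
\begin{align*}
\|\bM_0-\bX_{\star}\|_{\op}\lesssim\kappa\sigma_r(\bX_{\star})\left(\frac{\mu r\log(n_1\vee n_2)}{p\sqrt{n_1 n_2}}+\sqrt{\frac{\mu r\log(n_1\vee n_2)}{p(n_1\wedge n_2)}}\right).
\end{align*}
Combining the two steps and absorbing constants into $C_0$ yields the stated bound.

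\textbf{Main obstacle.} The concentration step is a fairly standard matrix Bernstein computation, so the principal technical work lies in the conversion step: one must produce a \emph{single} invertible $\bQ$ whose forward action aligns $\bL_0$ with $\bL_{\star}$ and whose inverse-transpose simultaneously aligns $\bR_0$ with $\bR_{\star}$, without incurring any hidden factor of $\kappa$. The balanced-SVD structure (so that both factors share the same $\tilde{\bSigma}_0^{1/2}$) is what enables a symmetric construction of $\bQ$, and one needs $\|\bM_0-\bX_{\star}\|_{\op}\lesssim\sigma_r(\bX_{\star})$ to apply Wedin's theorem cleanly. No circularity arises since this separation condition is strictly weaker than the sample-complexity assumption of Theorem~\ref{thm:MC}.
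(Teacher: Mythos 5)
Your proposal follows essentially the same route as the paper's proof: bound $\dist(\tilde{\bF}_{0},\bF_{\star})$ by $\|\bU_{0}\bSigma_{0}\bV_{0}^{\top}-\bX_{\star}\|_{\fro}$ via Lemma~\ref{lemma:Procrustes}, pass to the operator norm using the rank-$2r$ structure, use best rank-$r$ optimality to get $2\|(p^{-1}\cP_{\Omega}-\cI)(\bX_{\star})\|_{\op}$, and finish with the incoherence-based concentration bound (the paper simply cites Lemma~\ref{lemma:P_Omega_fixed}, which you propose to re-derive via matrix Bernstein). One small correction: no Wedin-type alignment or separation condition $\|\bM_{0}-\bX_{\star}\|_{\op}\lesssim\sigma_{r}(\bX_{\star})$ is needed in the conversion step, since Lemma~\ref{lemma:Procrustes} holds unconditionally for any factorization of $\bU_{0}\bSigma_{0}\bV_{0}^{\top}$ --- which matters here because the lemma statement imposes no lower bound on $p$.
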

Therefore, as long as $p\ge C\mu r^{2}\kappa^{2}\log(n_{1}\vee n_{2})/(n_{1}\wedge n_{2})$ for some sufficiently large constant $C$, the initial distance satisfies $\dist(\tilde{\bF}_{0},\bF_{\star})\le0.02\sigma_{r}(\bX_{\star})$. One can then invoke Lemma~\ref{lemma:scaled_proj} to see that $\bF_{0}=\cP_{B}(\tilde{\bF}_{0})$ meets the requirements of Lemma~\ref{lemma:contraction_MC} due to the non-expansiveness and incoherence properties of the projection operator. The proofs of the the the supporting lemmas can be found in Section~\ref{sec:proof_MC}.

\section{Numerical Experiments}\label{sec:numerical}

In this section, we provide numerical experiments to corroborate our theoretical findings, with the codes available at 
\begin{center}
\url{https://github.com/Titan-Tong/ScaledGD}.
\end{center} 
The simulations are performed in Matlab with a 3.6 GHz Intel Xeon Gold 6244 CPU.

\subsection{Comparison with vanilla GD}
To begin, we compare the iteration complexity of \texttt{ScaledGD} with vanilla gradient descent (GD). The update rule of vanilla GD for solving \eqref{eq:problem} is given as
\begin{align}
\begin{split} \bL_{t+1} & =\bL_{t} - \eta_{\texttt{GD}} \nabla_{\bL}\cL(\bL_{t},\bR_{t}),\\
\bR_{t+1} & =\bR_{t} - \eta_{\texttt{GD}} \nabla_{\bR}\cL(\bL_{t},\bR_{t}),
\end{split}\label{eq:vanillaGD}
\end{align}
where $\eta_{\texttt{GD}}=\eta/\sigma_{1}(\bX_{\star})$ stands for the step size for gradient descent. This choice is often recommended by the theory of vanilla GD \cite{tu2015low,yi2016fast,ma2017implicit} and the scaling by $\sigma_{1}(\bX_{\star})$ is needed for its convergence. For ease of comparison, we fix $\eta=0.5$ for both \texttt{ScaledGD} and vanilla GD (see Figure~\ref{fig:scaledGD_stepsizes} for justifications). Both algorithms start from the same spectral initialization. To avoid notational clutter, we work on square {\it asymmetric} matrices with $n_{1}=n_{2}=n$. We consider four low-rank matrix estimation tasks:
\begin{figure}[!ht]
\centering
\begin{tabular}{cc}
 \includegraphics[width=0.45\textwidth]{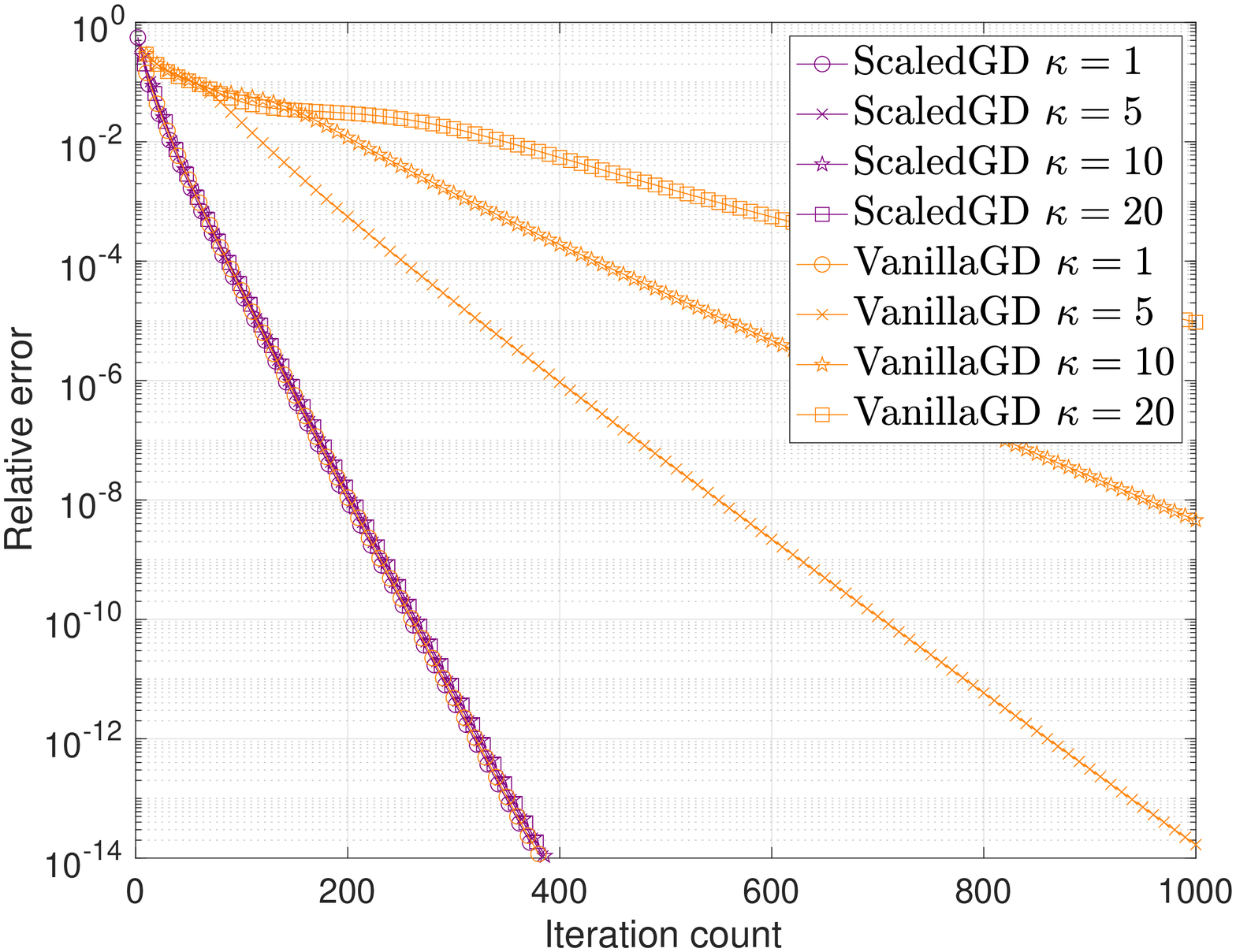} & 
 \includegraphics[width=0.45\textwidth]{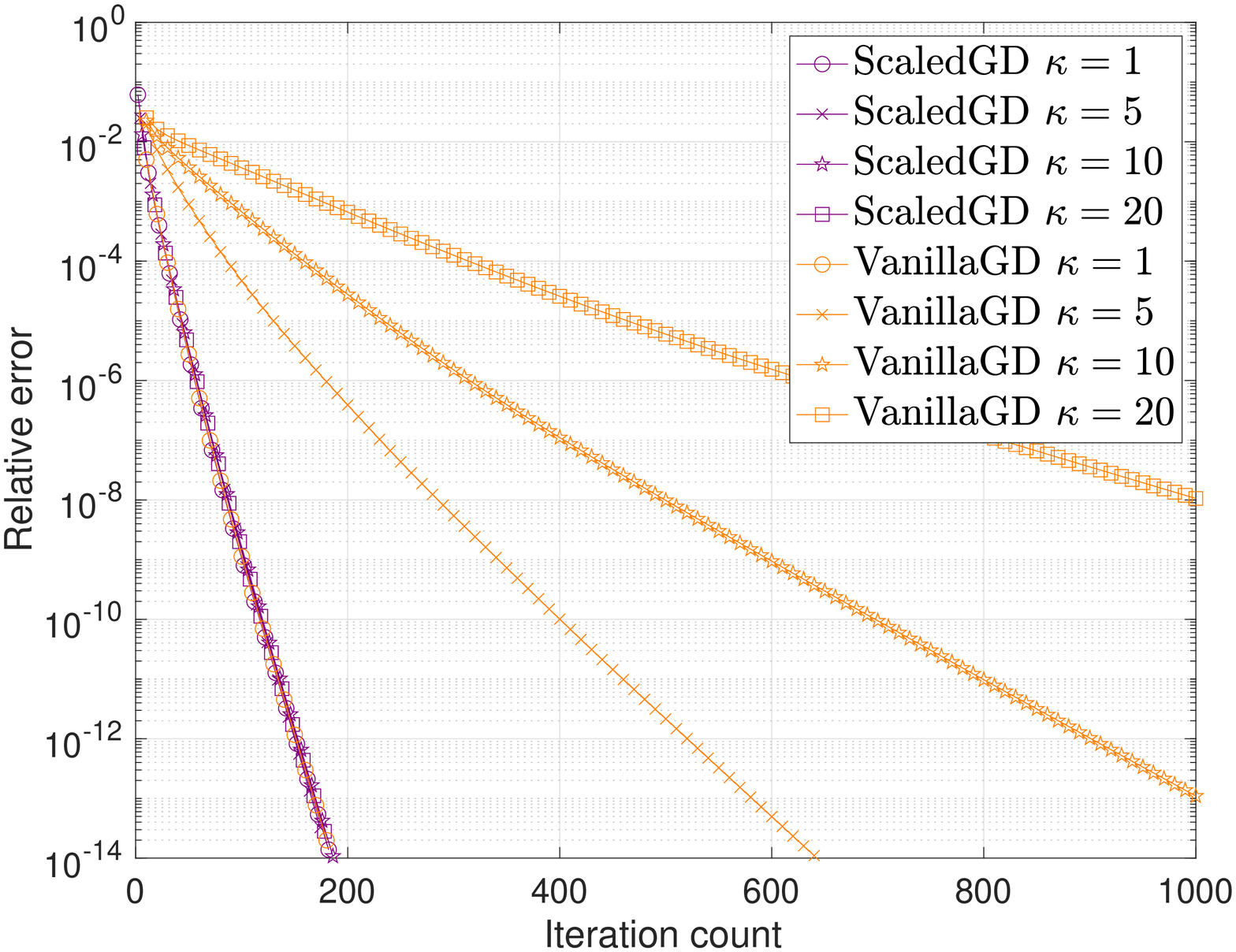} \\
	(a) Matrix sensing & (b) Robust PCA \\
	 $n=200, r=10, m=5nr$ &  $n=1000, r=10, \alpha=0.1$ \\
 \includegraphics[width=0.45\textwidth]{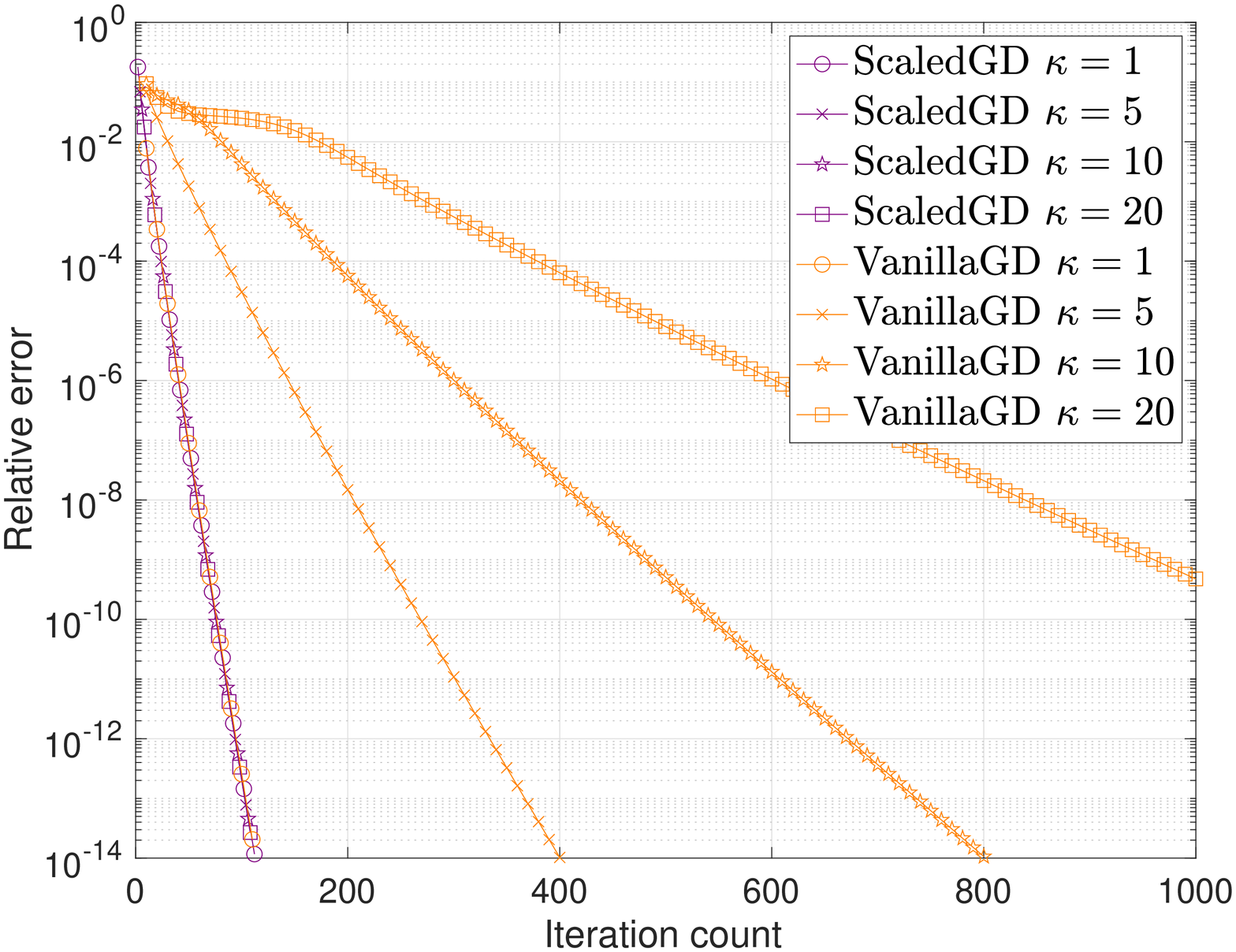}  &
 \includegraphics[width=0.45\textwidth]{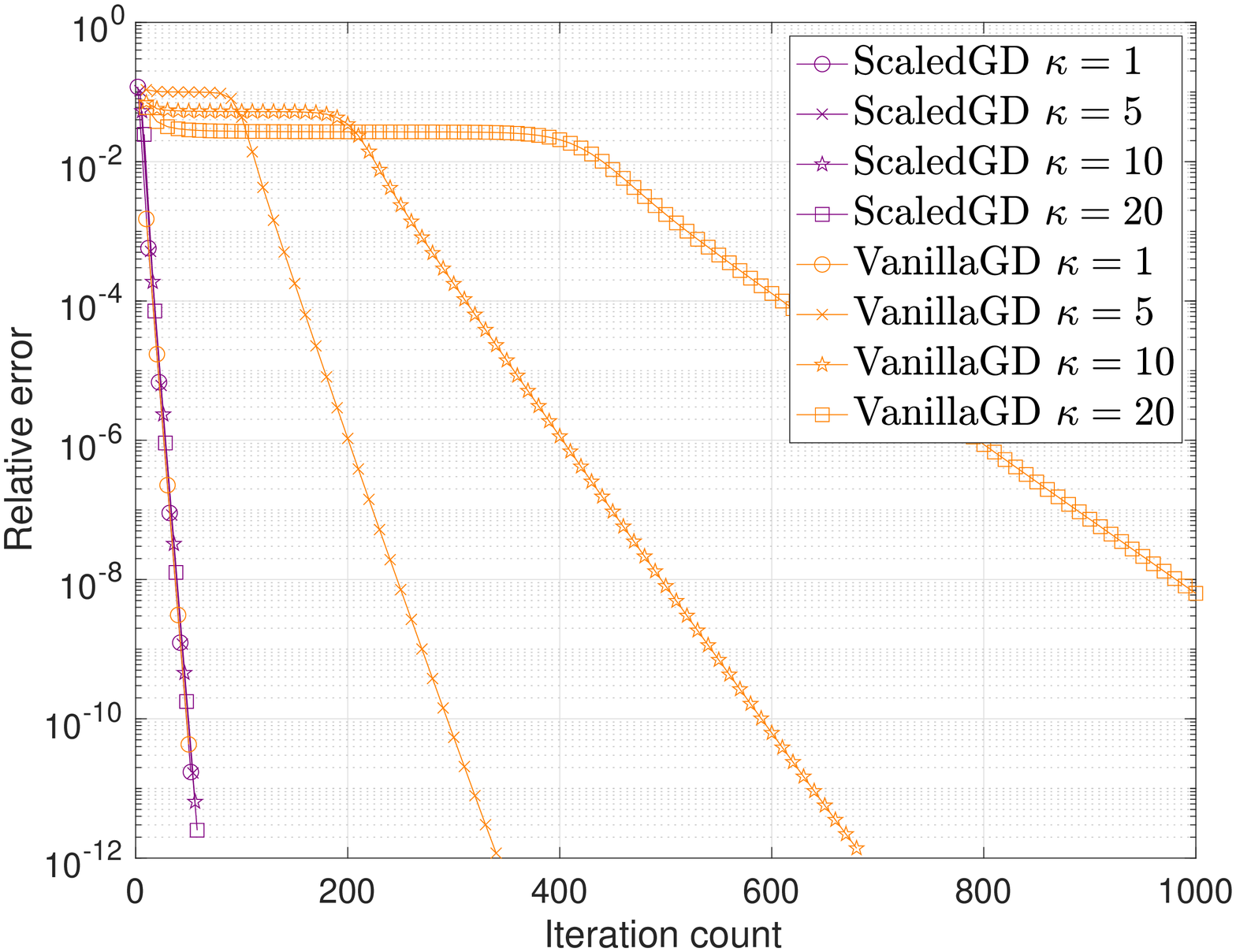} \\
	 (c) Matrix completion  & (d) Hankel matrix completion\\
	  $n=1000, r=10, p=0.2$ & $n=1000, r=10, p=0.2$
\end{tabular}
\caption{The relative errors of \texttt{ScaledGD} and vanilla GD with respect to the iteration count under different condition numbers $\kappa=1,5,10,20$ for (a) matrix sensing, (b) robust PCA, (c) matrix completion, and (d) Hankel matrix completion.}\label{fig:scaledGD_all}
\end{figure} 
\begin{figure}[!ht]
\centering
\begin{tabular}{cc}
 \includegraphics[width=0.45\textwidth]{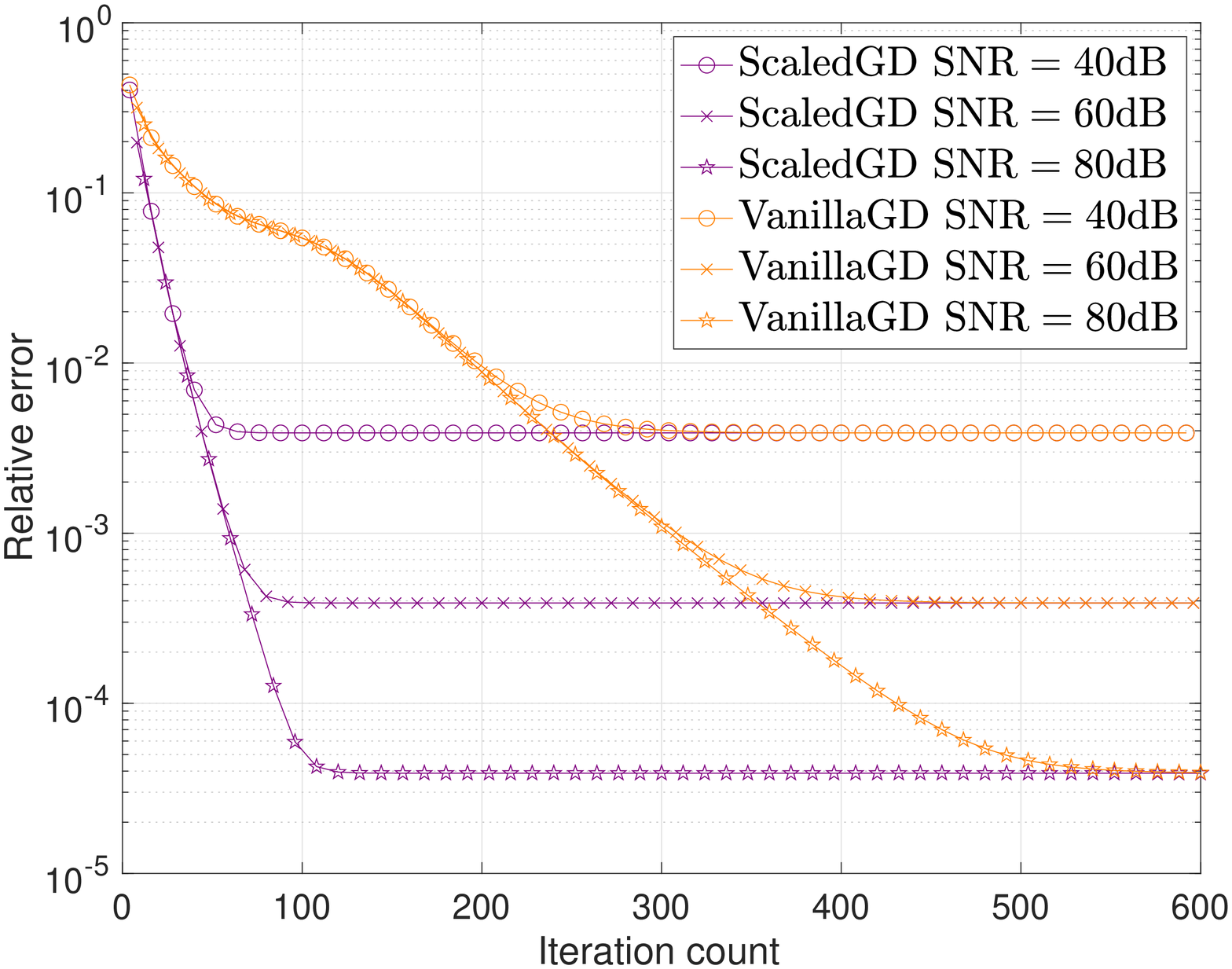} & 
 \includegraphics[width=0.45\textwidth]{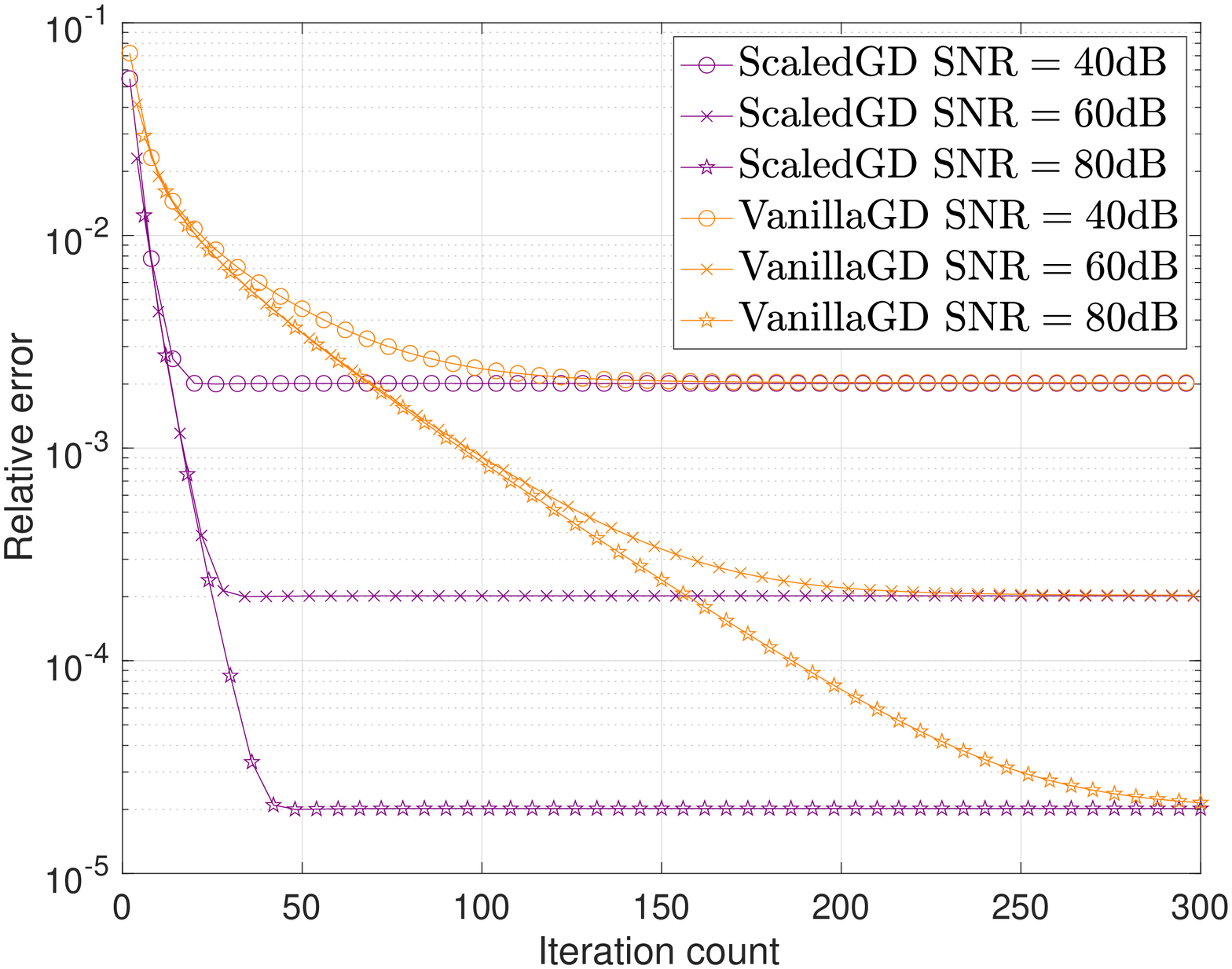} \\
	(a) Matrix Sensing & (b) Robust PCA \\
	 $n=200, r=10, m=5nr$ &  $n=1000, r=10, \alpha=0.1$ \\
 \includegraphics[width=0.45\textwidth]{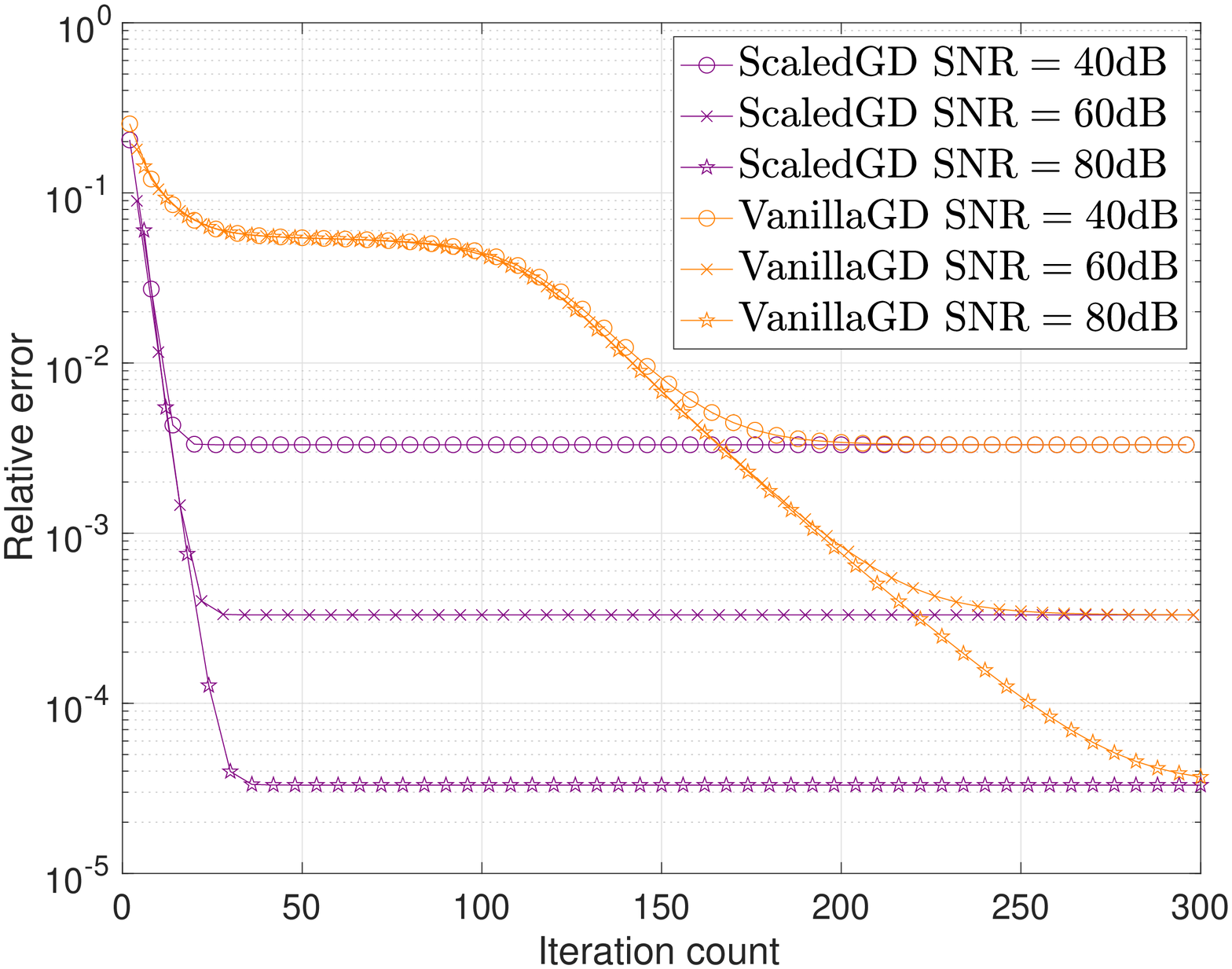}  &
 \includegraphics[width=0.45\textwidth]{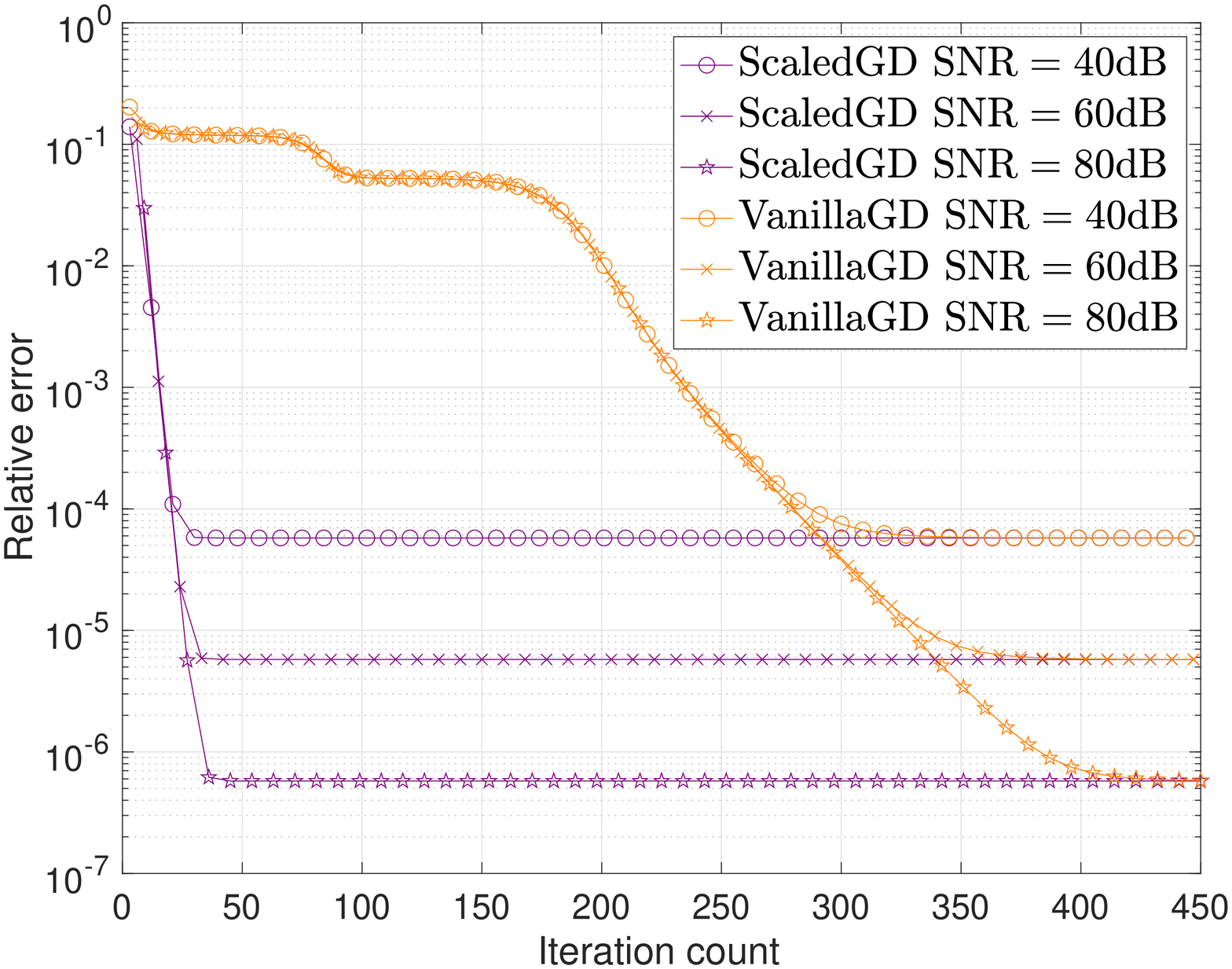} \\
	 (c) Matrix completion  & (d) Hankel matrix completion\\
	  $n=1000, r=10, p=0.2$ & $n=1000, r=10, p=0.2$
\end{tabular}
\caption{The relative errors of \texttt{ScaledGD} and vanilla GD with respect to the iteration count under the condition number $\kappa=10$ and signal-to-noise ratios $\mathrm{SNR}=40,60,80\mathrm{dB}$ for (a) matrix sensing, (b) robust PCA, (c) matrix completion, and (d) Hankel matrix completion.}\label{fig:scaledGD_noise}
\end{figure}
\begin{itemize}
\item {\em Low-rank matrix sensing.} The problem formulation is detailed in Section~\ref{subsec:scaledGD_MS}. Here, we collect $m=5nr$ measurements in the form of $\by_{k} = \langle\bA_{k}, \bX_{\star}\rangle+\bw_{k}$, in which the measurement matrices $\bA_{k}$ are generated with i.i.d.~Gaussian entries with zero mean and variance $1/m$, and $\bw_{k}\sim\cN(0,\sigma_{w}^{2})$ are i.i.d.~Gaussian noises.
\item {\em Robust PCA.} The problem formulation is stated in Section~\ref{subsec:scaledGD_RPCA}. We generate the corruption with a sparse matrix $\bS_{\star} \in \cS_{\alpha}$ with $\alpha = 0.1$. More specifically, we generate a matrix with standard Gaussian entries and pass it through $\cT_{\alpha}[\cdot]$ to obtain $\bS_{\star}$. The observation is $\bY=\bX_{\star}+\bS_{\star}+\bW$, where $\bW_{i,j}\sim\cN(0,\sigma_{w}^{2})$ are i.i.d.~Gaussian noises.
\item {\em Matrix completion.} The problem formulation is stated in Section~\ref{subsec:scaledGD_MC}. We assume random Bernoulli observations, where each entry of $\bX_{\star}$ is observed with probability $p=0.2$ independently. The observation is $\bY=\cP_{\Omega}(\bX_{\star}+\bW)$, where $\bW_{i,j}\sim\cN(0,\sigma_{w}^{2})$ are i.i.d.~Gaussian noises. Moreover, we perform the scaled gradient updates without projections. 
\item {\em Hankel matrix completion.} Briefly speaking, a Hankel matrix shares the same value along each skew-diagonal, and we aim at recovering a low-rank Hankel matrix from observing a few skew-diagonals \cite{chen2014robust,cai2018spectral}. We assume random Bernoulli observations, where each skew-diagonal of $\bX_{\star}$ is observed with probability $p=0.2$ independently. 
The loss function is
\begin{align}
\cL(\bL,\bR) = \frac{1}{2p}\left\Vert\cH_{\Omega}(\bL\bR^{\top}-\bY)\right\Vert_{\fro}^2 + \frac{1}{2}\left\Vert(\cI-\cH)(\bL\bR^{\top})\right\Vert_{\fro}^2,\label{eq:loss_HankelMC}
\end{align}
where $\cI(\cdot)$ denotes the identity operator, and the Hankel projection is defined as $\cH(\bX)\coloneqq\sum_{k=1}^{2n-1} \langle\bH_{k}, \bX\rangle \bH_{k}$, which maps $\bX$ to its closest Hankel matrix. Here, the Hankel basis matrix $\bH_{k}$ is the $n\times n$ matrix with the entries in the $k$-th skew diagonal as $\frac{1}{\sqrt{\omega_{k}}}$, and all other entries as $0$, where $\omega_{k}$ is the length of the $k$-th skew diagonal. Note that $\bX$ is a Hankel matrix if and only if $(\cI-\cH)(\bX)=\zero$. The Hankel projection on the observation index set $\Omega$ is defined as $\cH_{\Omega}(\bX)\coloneqq\sum_{k\in\Omega}\langle\bH_{k}, \bX\rangle\bH_{k}$. The observation is $\bY=\cH_{\Omega}(\bX_{\star}+\bW)$, where $\bW$ is a Hankel matrix whose entries along each skew-diagonal are i.i.d.~Gaussian noises $\cN(0,\sigma_{w}^{2})$.
\end{itemize}

For the first three problems, we generate the ground truth matrix $\bX_{\star}\in\RR^{n\times n}$ in the following way. We first generate an $n\times r$ matrix with i.i.d.~random signs, and take its $r$ left singular vectors as $\bU_{\star}$, and similarly for $\bV_{\star}$. The singular values are set to be linearly distributed from $1$ to $1/\kappa$. The ground truth is then defined as $\bX_{\star}=\bU_{\star}\bSigma_{\star}\bV_{\star}^{\top}$ which has the specified condition number $\kappa$ and rank $r$. For Hankel matrix completion, we generate $\bX_{\star}$ as an $n\times n$ Hankel matrix with entries given as
\begin{align*}
(\bX_{\star})_{i,j} = \sum_{\ell=1}^{r} \frac{\sigma_{\ell}}{n} e^{2\pi\imath (i+j-2)f_{\ell}}, \quad i,j=1,\dots,n,
\end{align*}
where $f_{\ell}$, $\ell=1,\dots,r$ are randomly chosen from $1/n, 2/n, \dots, 1$, and $\sigma_{\ell}$ are linearly distributed from $1$ to $1/\kappa$. The Vandermonde decomposition lemma tells that $\bX_{\star}$ has rank $r$ and singular values $\sigma_{\ell}$, $\ell=1,\dots,r$.

We first illustrate the convergence performance under noise-free observations, i.e.~$\sigma_{w}=0$. We plot the relative reconstruction error $\|\bX_{t}-\bX_{\star}\|_{\fro}/\|\bX_{\star}\|_{\fro}$ with respect to the iteration count $t$ in Figure~\ref{fig:scaledGD_all} for the four problems under different condition numbers $\kappa = 1,5,10,20$.
For all these models, we can see that \texttt{ScaledGD} has a convergence rate independent of $\kappa$, with all curves almost overlay on each other. Under good conditioning $\kappa=1$, \texttt{ScaledGD} converges at the same rate as vanilla GD; under ill conditioning, i.e.~when $\kappa$ is large, \texttt{ScaledGD} converges much faster than vanilla GD and leads to significant computational savings.

We next move to demonstrate that \texttt{ScaledGD} is robust to small additive noises. Denote the signal-to-noise ratio as 
$\mathrm{SNR}\coloneqq 10\log_{10}\frac{\|\bX_{\star}\|_{\fro}^2}{n^2\sigma_{w}^2}$ in dB.
We plot the reconstruction error $\|\bX_{t}-\bX_{\star}\|_{\fro}/\|\bX_{\star}\|_{\fro}$ with respect to the iteration count $t$ in Figure~\ref{fig:scaledGD_noise} under the condition number $\kappa=10$ and various $\mathrm{SNR}=40,60,80\mathrm{dB}$. We can see that \texttt{ScaledGD} and vanilla GD achieve the same statistical error eventually, but \texttt{ScaledGD} converges much faster. In addition, the convergence speeds are not influenced by the noise levels.

Careful readers might wonder how sensitivity our comparisons are with respect to the choice of step sizes. To address this, we illustrate the convergence speeds of both \texttt{ScaledGD} and vanilla GD under different step sizes $\eta$ for matrix completion (under the same setting as Figure~\ref{fig:scaledGD_all}~(c)), where similar plots can be obtained for other problems as well. We run both algorithms for at most $80$ iterations, and terminate if the relative error exceeds $10^{2}$ (which happens if the step size is too large and the algorithm diverges). Figure~\ref{fig:scaledGD_stepsizes} plots the relative error with respect to the step size $\eta$ for both algorithms, where we can see that \texttt{ScaledGD} outperforms vanilla GD over a large range of step sizes, even under optimized values for performance. Hence, our choice of $\eta = 0.5$ in previous experiments renders a typical comparison between  \texttt{ScaledGD} and vanilla GD. 
\begin{figure}[ht]
\centering
 \includegraphics[width=0.45\textwidth]{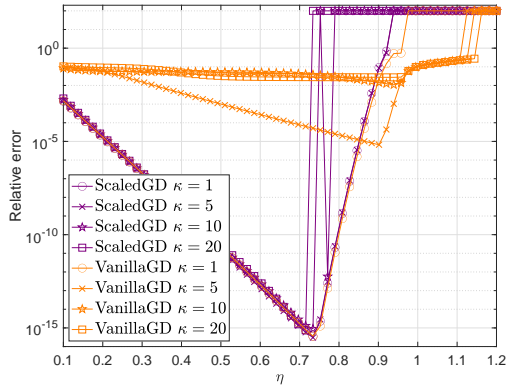} 
\caption{The relative errors of \texttt{ScaledGD} and vanilla GD after $80$ iterations with respect to different step sizes $\eta$ from $0.1$ to $1.2$, for matrix completion with $n=1000, r=10, p=0.2$.  }\label{fig:scaledGD_stepsizes}
\end{figure}

\subsection{Run time comparisons}

\begin{figure}[!ht]
\centering
\begin{tabular}{cc}
 \includegraphics[width=0.45\textwidth]{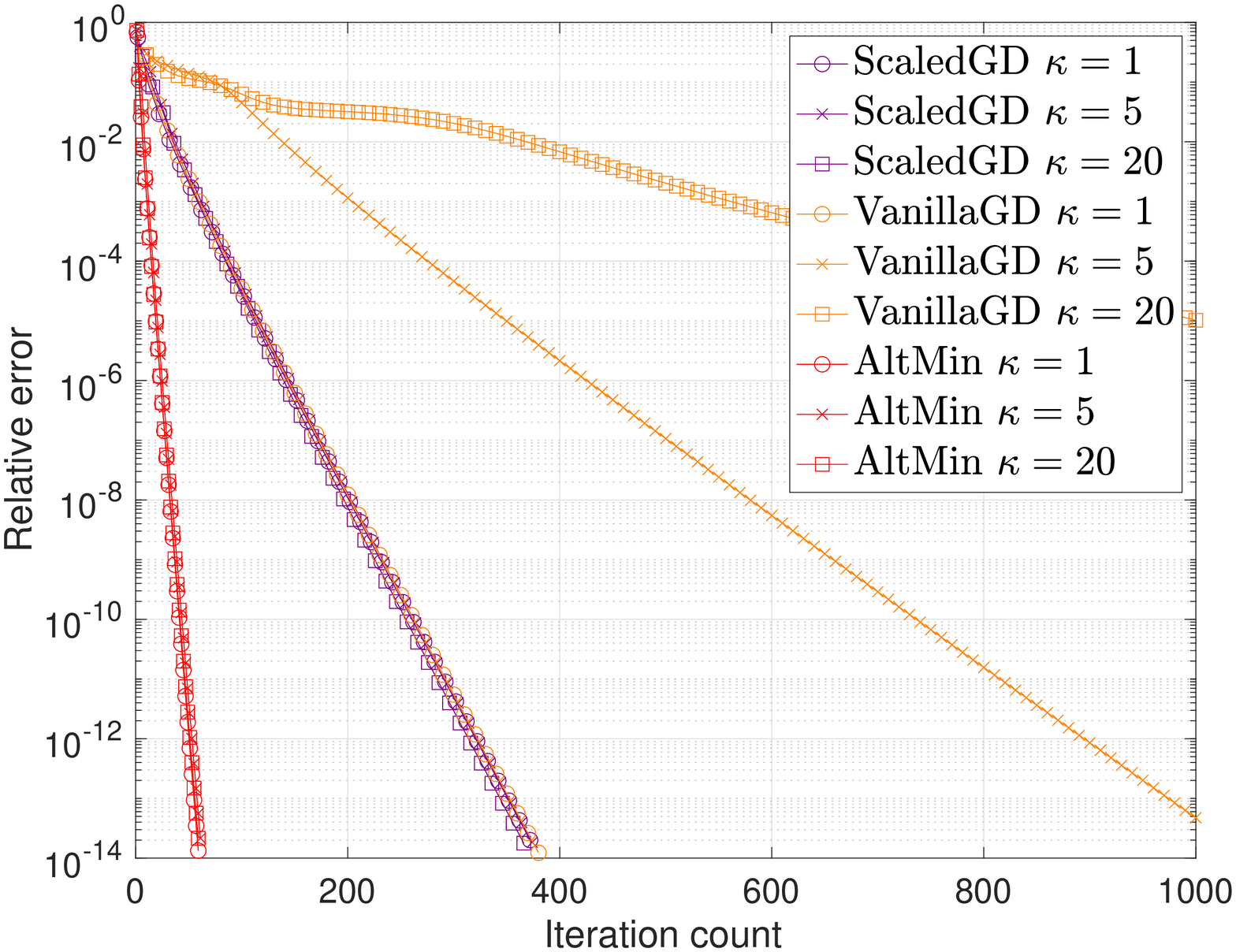} & 
 \includegraphics[width=0.45\textwidth]{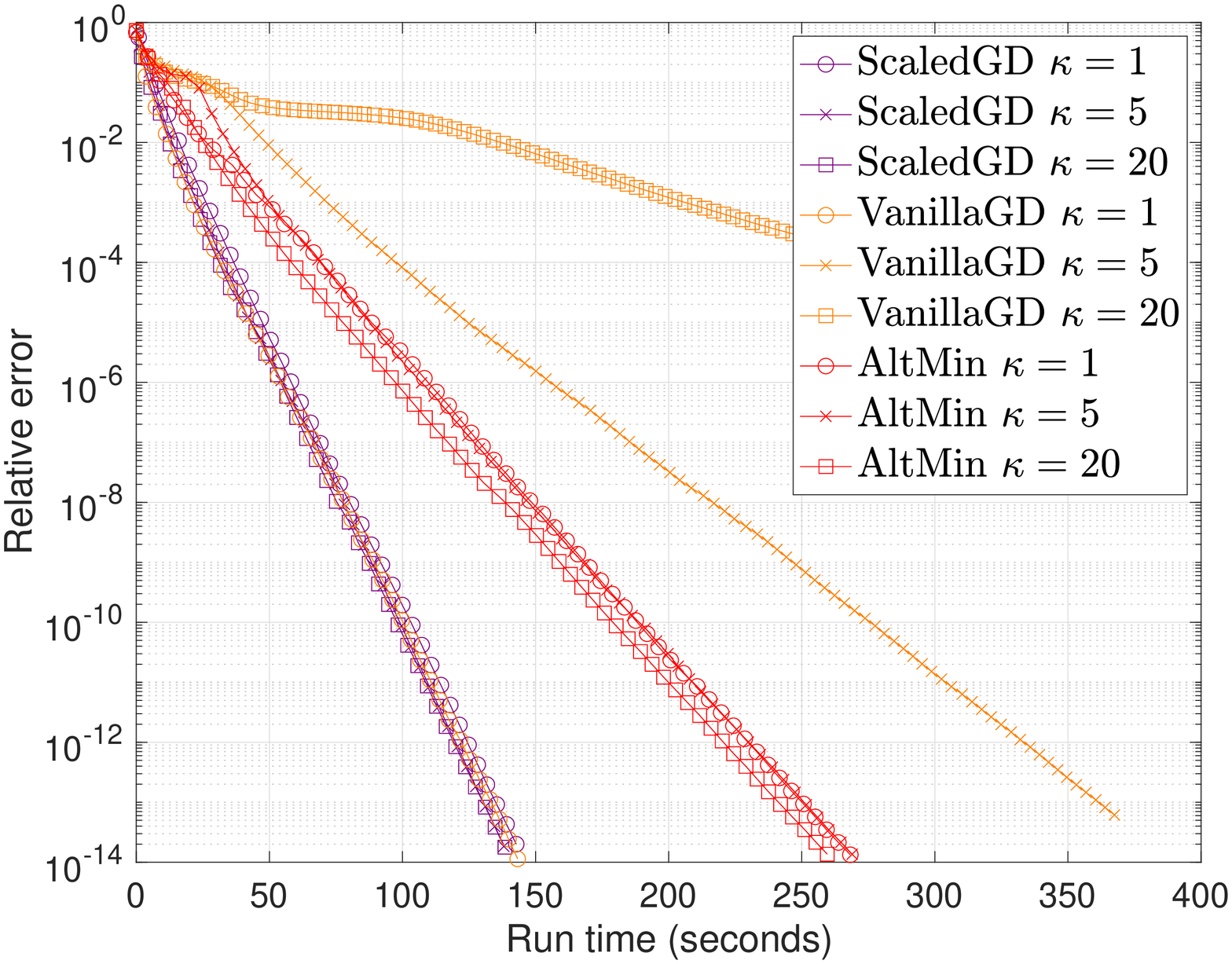} \\
	(a)  iteration count with $r=10$ & (b) run time with $r = 10$ \\
 \includegraphics[width=0.45\textwidth]{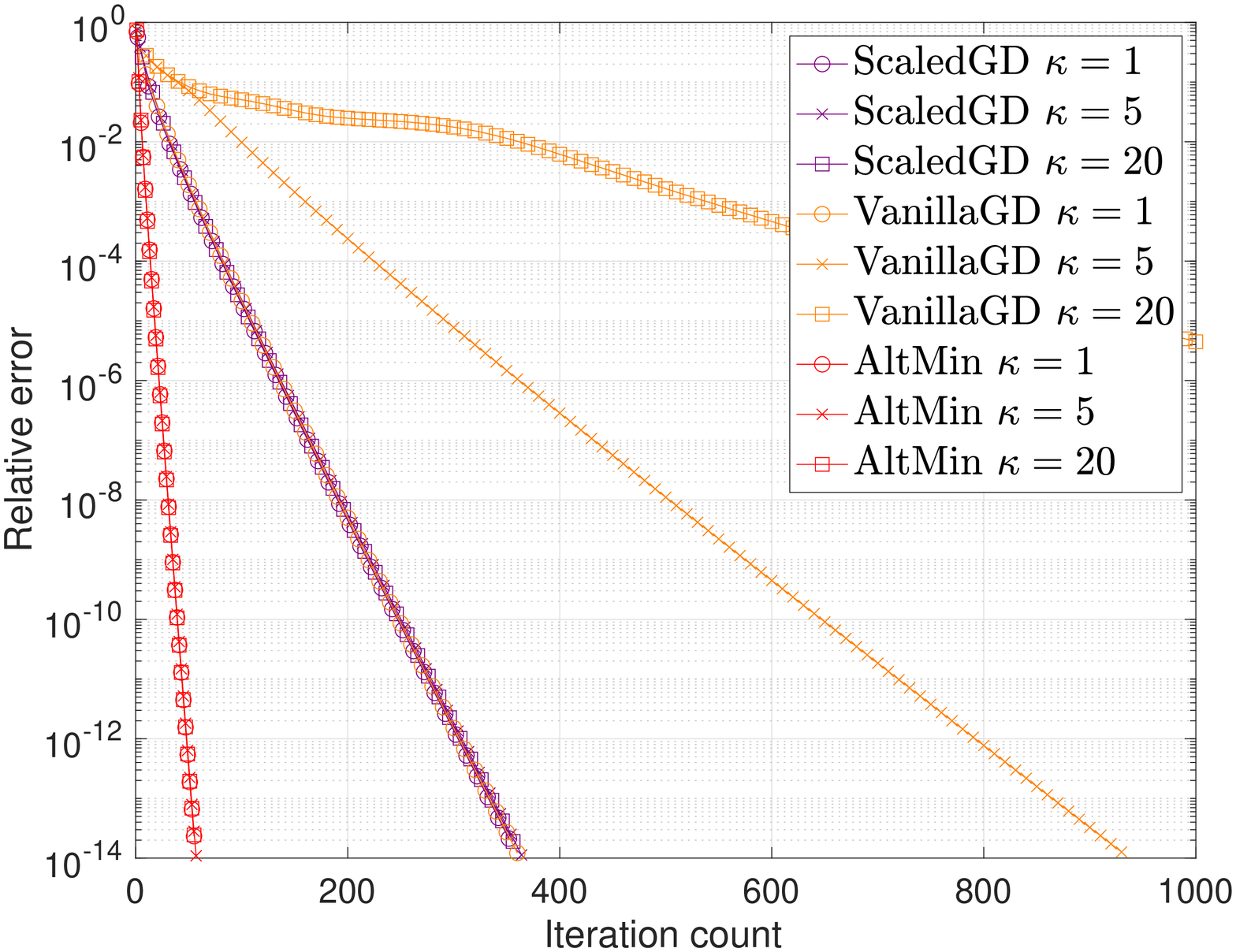} & 
 \includegraphics[width=0.45\textwidth]{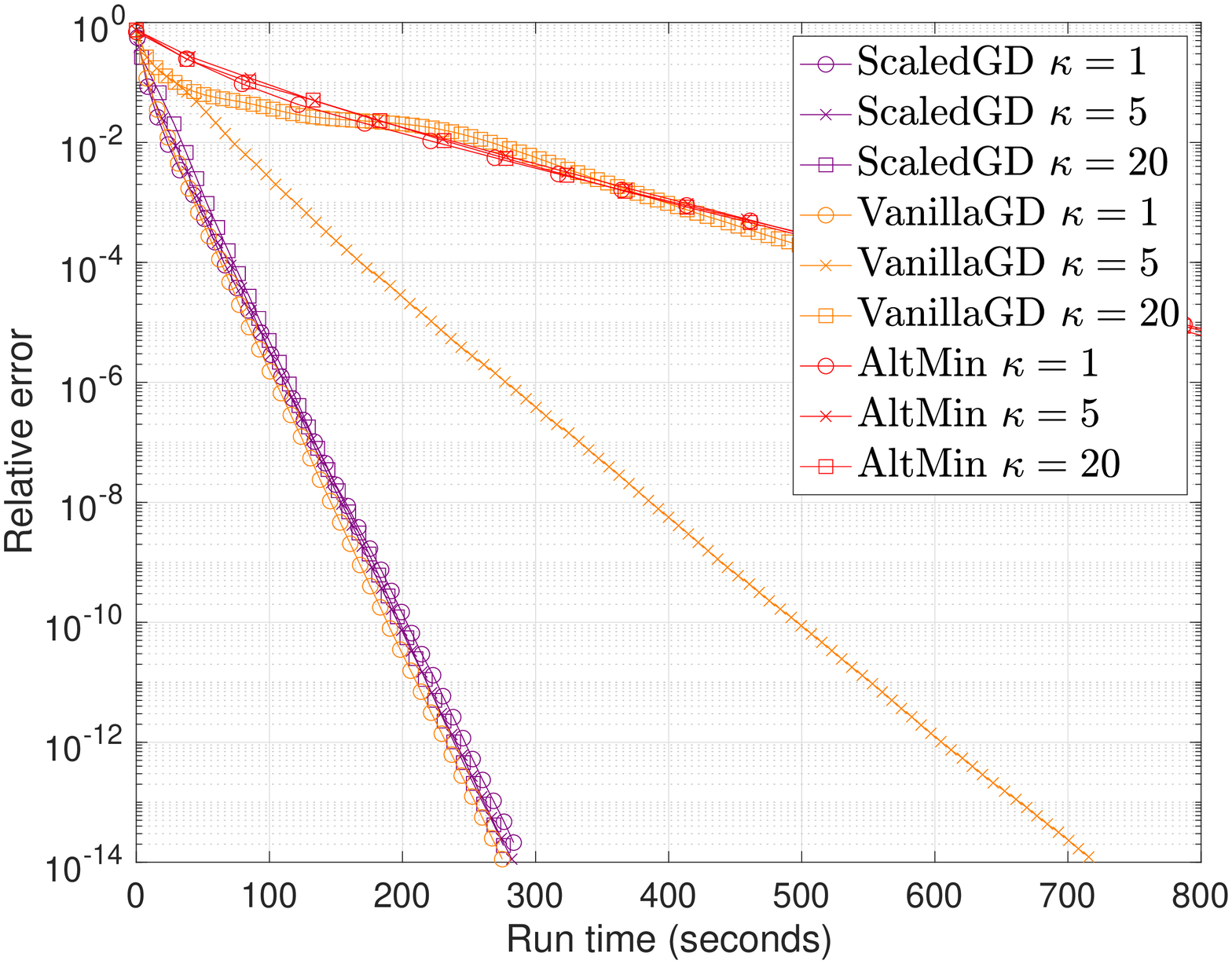} \\
	(c)  iteration count with $r=20$ & (d) run time with $r = 20$ \\
\end{tabular}
\caption{The relative errors of \texttt{ScaledGD}, vanilla GD and \texttt{AltMin} with respect to the iteration count and run time (in seconds) under different condition numbers $\kappa=1,5,20$ for matrix sensing with $n=200$, and $m=5nr$. (a, b): $r=10$; (c, d): $r=20$.}\label{fig:scaledGD_time_MS}
\end{figure}

\begin{figure}[!ht]
\centering
\begin{tabular}{cc}
 \includegraphics[width=0.45\textwidth]{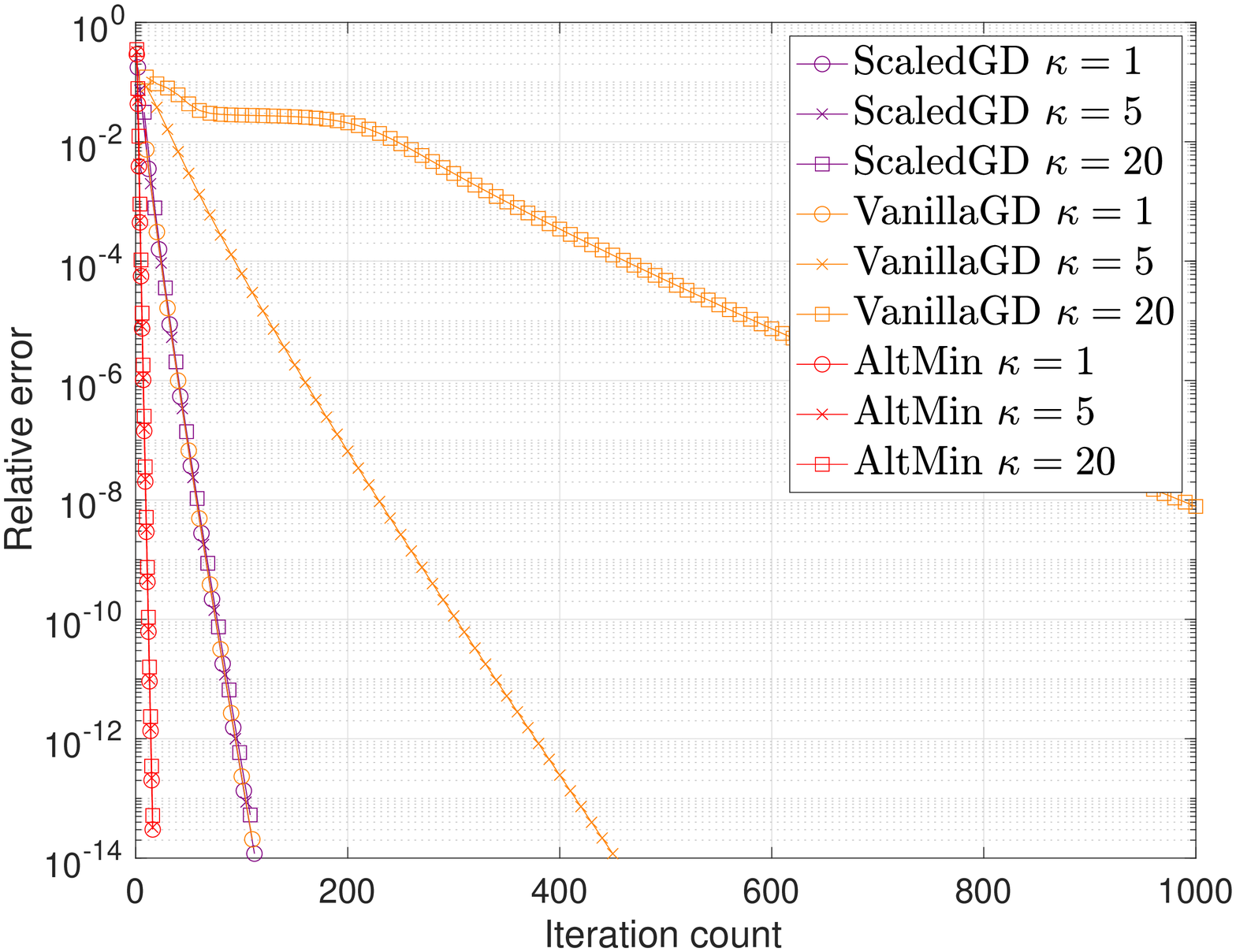} & 
 \includegraphics[width=0.45\textwidth]{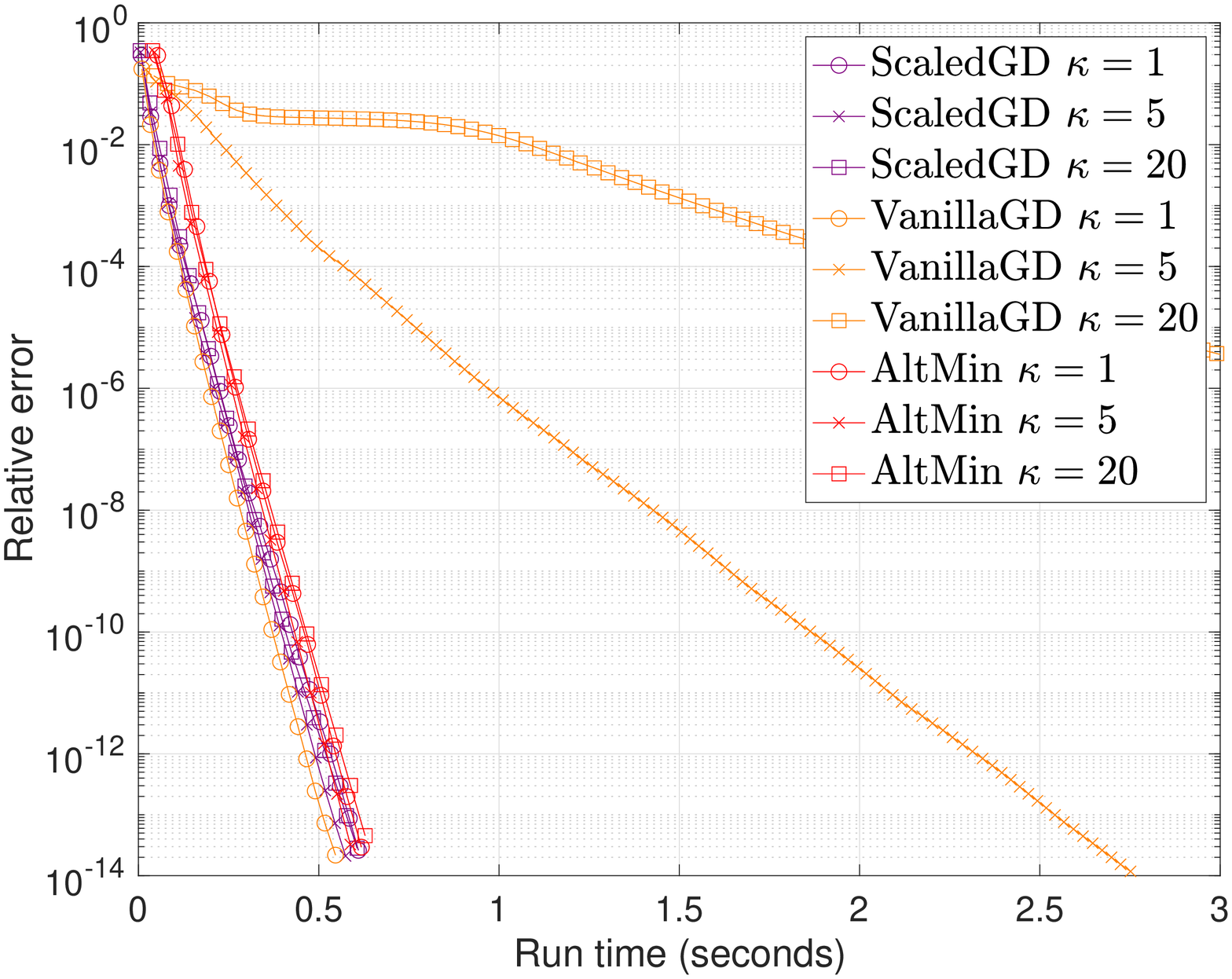} \\ 
	(a)  iteration count with $r=10$ & (b) run time with $r = 10$ \\
 \includegraphics[width=0.45\textwidth]{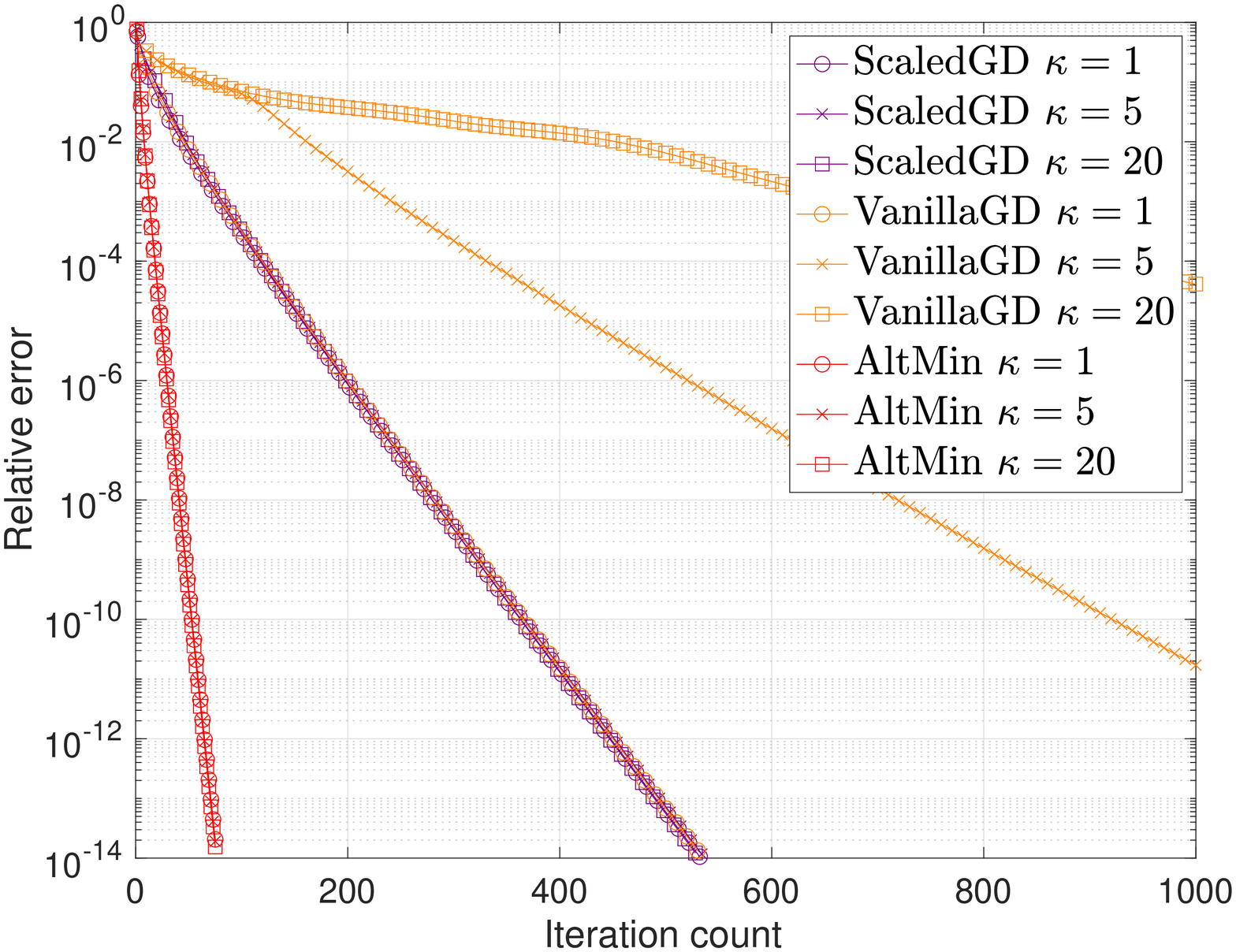} &
 \includegraphics[width=0.45\textwidth]{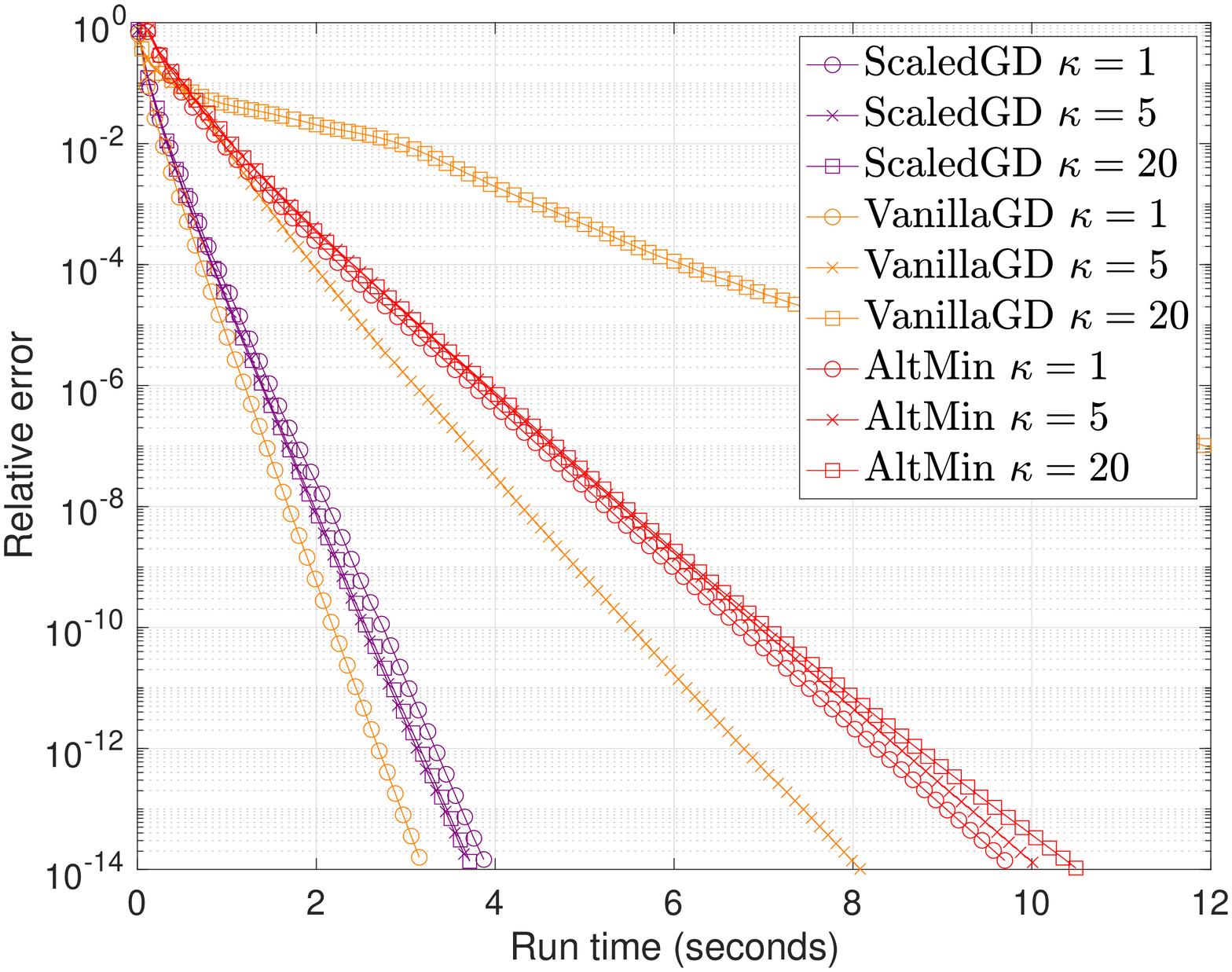} \\
	(c)  iteration count with $r=50$ & (d) run time with $r = 50$ \\
\end{tabular}
\caption{The relative errors of \texttt{ScaledGD}, vanilla GD and \texttt{AltMin} with respect to the iteration count and run time (in seconds) under different condition numbers $\kappa=1,5,20$ for matrix completion with $n=1000$, and $p=0.2$. (a, b): $r=10$; (c, d): $r=50$.}\label{fig:scaledGD_time_MC}
\end{figure}

We now compare the run time of \texttt{ScaledGD} with vanilla GD and alternating minimization (\texttt{AltMin}) \cite{jain2013low}. Specifically, for matrix sensing, alternating minimization (\texttt{AltMinSense}) updates the factors alternatively as
\begin{align*}
\bL_{t+1} &= \argmin_{\bL}\;\left\Vert \cA(\bL\bR_{t}^{\top})-\by\right\Vert _{2}^2, \\
\bR_{t+1} &= \argmin_{\bR}\;\left\Vert \cA(\bL_{t+1}\bR^{\top})-\by\right\Vert _{2}^2,
\end{align*}
which corresponds to solving two least-squares problems. For matrix completion, the update rule of alternating minimization proceeds as
\begin{align*}
\bL_{t+1} &= \argmin_{\bL}\;\left\Vert \cP_{\Omega}(\bL\bR_{t}^{\top}-\bY)\right\Vert _{2}^2, \\
\bR_{t+1} &= \argmin_{\bR}\;\left\Vert \cP_{\Omega}(\bL_{t+1}\bR^{\top}-\bY)\right\Vert _{2}^2,
\end{align*}
which can be implemented more efficiently since each row of $\bL$ (resp.~$\bR$) can be updated independently via solving a much smaller least-squares problem due to the decomposable structure of the objective function. It is worth noting that, to the best of our knowledge, this most natural variant of alternating minimization for matrix completion still eludes from a provable performance guarantee, nonetheless, we choose it to compare against due to its popularity and excellent empirical performance.

Figure~\ref{fig:scaledGD_time_MS} plots the relative errors of \texttt{ScaledGD}, vanilla GD and alternating minimization (\texttt{AltMin}) with respect to the iteration count and run time (in seconds) under different condition numbers $\kappa=1,5,20$; and similarly, Figure~\ref{fig:scaledGD_time_MC} plots the corresponding results for matrix completion. It can be seen that, both \texttt{ScaledGD} and \texttt{AltMin} admit a convergence rate that is independent of the condition number, where the per-iteration complexity of \texttt{AltMin} is much higher than that of \texttt{ScaledGD}. As expected, the run time of \texttt{ScaledGD} only adds a minimal overhead to vanilla GD while being much more robust to ill-conditioning. Noteworthily, \texttt{AltMin} takes much more time and becomes significantly slower than \texttt{ScaledGD} when the rank $r$ is larger. Nonetheless, we emphasize that since the run time is impacted by many factors in terms of problem parameters as well as implementation details, our purpose is to demonstrate the competitive performance of \texttt{ScaledGD} over alternatives, rather than claiming it as the state-of-the-art.

\section{Conclusions}\label{sec:discussion}

This paper proposes scaled gradient descent (\texttt{ScaledGD}) for factored low-rank matrix estimation, which maintains the low per-iteration computational complexity of vanilla gradient descent, but offers significant speed-up in terms of the convergence rate with respect to the condition number $\kappa$ of the low-rank matrix. In particular, we rigorously establish that for low-rank matrix sensing, robust PCA, and matrix completion, to reach $\epsilon$-accuracy, \texttt{ScaledGD} only takes $O(\log(1/\epsilon))$ iterations without the dependency on the condition number when initialized via the spectral method, under standard assumptions. The key to our analysis is the introduction of a new distance metric that takes into account the preconditioning and unbalancedness of the low-rank factors, and we have developed new tools to analyze the trajectory of \texttt{ScaledGD} under this new metric. This work opens up many venues for future research, as we discuss below. 
\begin{itemize}
\item {\em Improved analysis.} In this paper, we have focused on establishing the fast local convergence rate. It is interesting to study if the theory developed herein can be further strengthened in terms of sample complexity and the size of basin of attraction. For matrix completion, it will be interesting to see if a similar guarantee continues to hold in the absence of the projection, which will generalize recent works \cite{ma2017implicit,chen2019nonconvex} that successfully removed these projections for vanilla gradient descent. 
\item {\em Other low-rank recovery problems.} Besides the problems studied herein, there are many other applications involving the recovery of an ill-conditioned low-rank matrix, such as robust PCA with missing data, quadratic sampling, and so on. It is of interest to establish fast convergence rates of \texttt{ScaledGD} that are independent of the condition number for these problems as well. In addition, it is worthwhile to explore if a similar preconditioning trick can be useful to problems beyond low-rank matrix estimation. One recent attempt is to generalize \texttt{ScaledGD} for low-rank tensor estimation \cite{tong2021scaling}. 

\item {\em Acceleration schemes?} As it is evident from our analysis of the general loss case, \texttt{ScaledGD} may still converge slowly when the loss function is ill-conditioned over low-rank matrices, i.e.~$\kappa_{f}$ is large. In this case, it might be of interest to combine techniques such as momentum \cite{kyrillidis2012matrix} from the optimization literature to further accelerate the convergence. In our companion paper \cite{tong2021low}, we have extended \texttt{ScaledGD} to nonsmooth formulations, which possess better curvatures than their smooth counterparts for certain problems. 
\end{itemize}

\section*{Acknowledgements}

The work of T.~Tong and Y.~Chi is supported in part by ONR under the grants N00014-18-1-2142 and N00014-19-1-2404, by ARO under the grant W911NF-18-1-0303, and by NSF under the grants CAREER ECCS-1818571, CCF-1806154 and CCF-1901199.

\bibliographystyle{alphaabbr}
\bibliography{bibfileNonconvex_TSP}

\appendix

\section{Technical Lemmas}

This section gathers several useful lemmas that will be used in the appendix. Throughout all lemmas, we use $\bX_{\star}$ to denote the ground truth low-rank matrix, with its compact SVD as $\bX_{\star}=\bU_{\star}\bSigma_{\star}\bV_{\star}^{\top}$, and the stacked factor matrix is defined as $\bF_{\star}=\begin{bmatrix}\bL_{\star}\\ \bR_{\star}\end{bmatrix}=\begin{bmatrix}\bU_{\star}\bSigma_{\star}^{1/2}\\ \bV_{\star}\bSigma_{\star}^{1/2}\end{bmatrix}$.

\subsection{New distance metric}\label{subsec:distance_metric}

We begin with the investigation of the new distance metric \eqref{eq:dist}, where the matrix $\bQ$ that attains the infimum, if exists, is called the optimal alignment matrix between $\bF$ and $\bF_{\star}$; see \eqref{eq:Q_def}. Notice that \eqref{eq:dist} involves a minimization problem over an open set (the set of invertible matrices). Hence the minimizer, i.e.~the optimal alignment matrix between $\bF$ and $\bF_{\star}$ is not guaranteed to be attained. Fortunately, a simple sufficient condition guarantees the existence of the minimizer; see the lemma below.

\begin{lemma}\label{lemma:Q_existence} Fix any factor matrix $\bF=\begin{bmatrix}\bL \\ \bR\end{bmatrix}\in\RR^{(n_{1}+n_2)\times r}$. Suppose that
\begin{align}
\dist(\bF,\bF_{\star})=\sqrt{\inf_{\bQ\in\GL(r)}\left\Vert \left(\bL\bQ-\bL_{\star}\right)\bSigma_{\star}^{1/2}\right\Vert _{\fro}^{2}+\left\Vert \left(\bR\bQ^{-\top}-\bR_{\star}\right)\bSigma_{\star}^{1/2}\right\Vert _{\fro}^{2}}
<\sigma_{r}(\bX_{\star}),\label{eq:Q_existence_condition}
\end{align}
then the minimizer of the above minimization problem is attained at some $\bQ\in\GL(r)$, i.e.~the optimal alignment matrix $\bQ$ between $\bF$ and $\bF_{\star}$ exists. 
\end{lemma}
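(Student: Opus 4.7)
The plan is a standard compactness argument: show that any minimizing sequence $\{\bQ_k\} \subset \GL(r)$ for the infimum in \eqref{eq:Q_existence_condition} stays inside a compact subset of $\GL(r)$, and then extract a limit. Since $\GL(r)$ is an open, non-compact subset of $\RR^{r\times r}$, two failure modes must be excluded: (i) $\bQ_k$ approaching the variety of singular matrices, which would make $\bQ_k^{-\top}$ blow up, and (ii) $\|\bQ_k\|_{\op}$ diverging to infinity. The strict gap $\dist(\bF,\bF_\star) < \sigma_r(\bX_\star)$ will rule out both.

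Write
\[
h(\bQ) \coloneqq \left\|(\bL\bQ-\bL_\star)\bSigma_\star^{1/2}\right\|_\fro^2 + \left\|(\bR\bQ^{-\top}-\bR_\star)\bSigma_\star^{1/2}\right\|_\fro^2,
\]
and take $\{\bQ_k\}\subset\GL(r)$ with $h(\bQ_k)\to d^2\coloneqq\dist^2(\bF,\bF_\star) < \sigma_r(\bX_\star)^2$. The first step will be to argue that both $\bL$ and $\bR$ have full column rank~$r$. Indeed, for any $\bQ$ with $h(\bQ) < \sigma_r(\bX_\star)^2$, the first term of $h$ alone yields
\[
\left\|\bL\bQ\bSigma_\star^{1/2} - \bU_\star\bSigma_\star\right\|_{\op} \;\le\; \left\|\bL\bQ\bSigma_\star^{1/2} - \bU_\star\bSigma_\star\right\|_\fro \;<\; \sigma_r(\bX_\star),
\]
and since $\bL_\star\bSigma_\star^{1/2} = \bU_\star\bSigma_\star$ has smallest singular value $\sigma_r(\bX_\star)$, Weyl's inequality gives $\sigma_r(\bL\bQ\bSigma_\star^{1/2}) > 0$. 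As both $\bQ$ and $\bSigma_\star^{1/2}$ are invertible, this forces $\rank(\bL)=r$; the symmetric argument applied to the second term of $h$ gives $\rank(\bR)=r$.

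Next, from $h(\bQ_k)\le c$ for some $c<\sigma_r(\bX_\star)^2$ and the triangle inequality, $\|\bL\bQ_k\bSigma_\star^{1/2}\|_\fro$ is uniformly bounded; invertibility of $\bSigma_\star^{1/2}$ then bounds $\|\bL\bQ_k\|_\fro$, and the full-rank property of $\bL$, through $\|\bL\bQ_k\|_\fro \ge \sigma_r(\bL)\,\|\bQ_k\|_\fro$, yields a uniform bound $\|\bQ_k\|_\fro \le M_1$. Repeating this argument with the second term of $h$ using $\bR$ and $\bQ_k^{-\top}$ delivers $\|\bQ_k^{-\top}\|_\fro \le M_2$, i.e.~$\sigma_r(\bQ_k)\ge 1/M_2 > 0$. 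Therefore $\{\bQ_k\}$ lies in the compact set $\{\bQ\in\RR^{r\times r}:\|\bQ\|_\fro\le M_1,\,\sigma_r(\bQ)\ge 1/M_2\}\subset\GL(r)$. Extracting a convergent subsequence $\bQ_k\to\bQ^\star\in\GL(r)$ and using continuity of $h$ on $\GL(r)$ gives $h(\bQ^\star)=d^2$, so the infimum is attained at $\bQ^\star$.

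The only nontrivial ingredient is the opening step, where the quantitative gap $d<\sigma_r(\bX_\star)$ is converted, via Weyl's inequality for singular values, into the qualitative statement that $\bL$ and $\bR$ both have full column rank. Once this is in hand, the remainder is a routine tightness-plus-continuity argument and does not rely on any problem-specific structure beyond the invertibility of $\bSigma_\star^{1/2}$.
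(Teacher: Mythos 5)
Your proof is correct and follows essentially the same route as the paper's: both convert the strict gap $\dist(\bF,\bF_{\star})<\sigma_{r}(\bX_{\star})$ into singular-value control via Weyl's inequality so that any competitive alignment matrix is confined, together with its inverse, to a compact subset of $\GL(r)$, and then conclude by continuity. The only cosmetic difference is that you run a minimizing-sequence/subsequence argument based on $\sigma_{r}(\bL),\sigma_{r}(\bR)>0$, whereas the paper restricts the optimization to an explicit compact constraint set (after the change of variables $\bQ=\bar{\bQ}\bH$) and invokes the Weierstrass extreme value theorem.
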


\begin{proof} In view of the condition \eqref{eq:Q_existence_condition} and the definition of infimum, one knows that there must exist a matrix $\bar{\bQ}\in\GL(r)$ such that 
\begin{align*}
\sqrt{\left\Vert \left(\bL\bar{\bQ}-\bL_{\star}\right)\bSigma_{\star}^{1/2}\right\Vert _{\fro}^{2}+\left\Vert \left(\bR\bar{\bQ}^{-\top}-\bR_{\star}\right)\bSigma_{\star}^{1/2}\right\Vert _{\fro}^{2}} \le \epsilon\sigma_{r}(\bX_{\star}),
\end{align*}
for some $\epsilon$ obeying $0<\epsilon<1$. It further implies that 
\begin{align*}
\left\Vert \left(\bL\bar{\bQ}-\bL_{\star}\right)\bSigma_{\star}^{-1/2}\right\Vert _{\op} \vee \left\Vert \left(\bR\bar{\bQ}^{-\top}-\bR_{\star}\right)\bSigma_{\star}^{-1/2}\right\Vert _{\op} \le \epsilon.
\end{align*}
Invoke Weyl's inequality $|\sigma_{r}(\bA)-\sigma_{r}(\bB)| \le \|\bA-\bB\|_{\op}$, and use that $\sigma_{r}(\bL_{\star}\bSigma_{\star}^{-1/2})=\sigma_{r}(\bU_{\star})=1$ to obtain
\begin{align}
\sigma_{r}(\bL\bar{\bQ}\bSigma_{\star}^{-1/2}) \ge \sigma_{r}(\bL_{\star}\bSigma_{\star}^{-1/2}) - \left\Vert \left(\bL\bar{\bQ}-\bL_{\star}\right)\bSigma_{\star}^{-1/2}\right\Vert _{\op} \ge 1-\epsilon.\label{eq:first_sigma}
\end{align}
In addition, it is straightforward to verify that 
\begin{align}
 & \inf_{\bQ\in\GL(r)}\quad\left\Vert \left(\bL\bQ-\bL_{\star}\right)\bSigma_{\star}^{1/2}\right\Vert _{\fro}^{2}+\left\Vert \left(\bR\bQ^{-\top}-\bR_{\star}\right)\bSigma_{\star}^{1/2}\right\Vert _{\fro}^{2}\label{eq:first_inf}\\
 & \quad=\inf_{\bH\in\GL(r)}\quad\left\Vert \left(\bL\bar{\bQ}\bH-\bL_{\star}\right)\bSigma_{\star}^{1/2}\right\Vert _{\fro}^{2}+\left\Vert \left(\bR\bar{\bQ}^{-\top}\bH^{-\top}-\bR_{\star}\right)\bSigma_{\star}^{1/2}\right\Vert _{\fro}^{2}.\label{eq:second_inf}
\end{align}
Indeed, if the minimizer of the second optimization problem (cf.~\eqref{eq:second_inf}) is attained at some $\bH$, then $\bar{\bQ}\bH$ must be the minimizer of the first problem \eqref{eq:first_inf}. 
Therefore, from now on, we focus on proving that the minimizer of the second problem \eqref{eq:second_inf} is attained at some $\bH$. In view of \eqref{eq:first_inf} and \eqref{eq:second_inf}, one has 
\begin{align*}
 & \inf_{\bH\in\GL(r)}\;\left\Vert \left(\bL\bar{\bQ}\bH-\bL_{\star}\right)\bSigma_{\star}^{1/2}\right\Vert _{\fro}^{2}+\left\Vert \left(\bR\bar{\bQ}^{-\top}\bH^{-\top}-\bR_{\star}\right)\bSigma_{\star}^{1/2}\right\Vert _{\fro}^{2}\nonumber \\
 & \quad\le\left\Vert \left(\bL\bar{\bQ}-\bL_{\star}\right)\bSigma_{\star}^{1/2}\right\Vert _{\fro}^{2}+\left\Vert \left(\bR\bar{\bQ}^{-\top}-\bR_{\star}\right)\bSigma_{\star}^{1/2}\right\Vert _{\fro}^{2},
\end{align*}
Clearly, for any $\bar{\bQ}\bH$ to yield a smaller distance than $\bar{\bQ}$, $\bH$ must obey
\begin{align*}
\sqrt{\left\Vert \left(\bL\bar{\bQ}\bH-\bL_{\star}\right)\bSigma_{\star}^{1/2}\right\Vert _{\fro}^{2}+\left\Vert \left(\bR\bar{\bQ}^{-\top}\bH^{-\top}-\bR_{\star}\right)\bSigma_{\star}^{1/2}\right\Vert _{\fro}^{2}} \le\epsilon\sigma_{r}(\bX_{\star}).
\end{align*}
It further implies that
\begin{align*}
\left\Vert \left(\bL\bar{\bQ}\bH-\bL_{\star}\right)\bSigma_{\star}^{-1/2}\right\Vert _{\op} \vee \left\Vert \left(\bR\bar{\bQ}^{-\top}\bH^{-\top}-\bR_{\star}\right)\bSigma_{\star}^{-1/2}\right\Vert _{\op} \le \epsilon.
\end{align*}
Invoke Weyl's inequality $|\sigma_{1}(\bA)-\sigma_{1}(\bB)| \le \|\bA-\bB\|_{\op}$, and use that $\sigma_{1}(\bL_{\star}\bSigma_{\star}^{-1/2})=\sigma_{1}(\bU_{\star})=1$ to obtain
\begin{align}
\sigma_{1}(\bL\bar{\bQ}\bH\bSigma_{\star}^{-1/2}) \le \sigma_{1}(\bL_{\star}\bSigma_{\star}^{-1/2}) + \left\Vert\left(\bL\bar{\bQ}\bH-\bL_{\star}\right)\bSigma_{\star}^{-1/2}\right\Vert _{\op} \le 1+\epsilon.\label{eq:second_sigma}
\end{align}
Combine \eqref{eq:first_sigma} and \eqref{eq:second_sigma}, and use the relation $\sigma_{r}(\bA)\sigma_{1}(\bB) \le \sigma_{1}(\bA\bB)$ to obtain
\begin{align*}
\sigma_{r}(\bL\bar{\bQ}\bSigma_{\star}^{-1/2})\sigma_{1}(\bSigma_{\star}^{1/2}\bH\bSigma_{\star}^{-1/2}) \le \sigma_{1}(\bL\bar{\bQ}\bH\bSigma_{\star}^{-1/2}) \le \frac{1+\epsilon}{1-\epsilon}\sigma_{r}(\bL\bar{\bQ}\bSigma_{\star}^{-1/2}).
\end{align*}
As a result, one has $\sigma_{1}(\bSigma_{\star}^{1/2}\bH\bSigma_{\star}^{-1/2}) \le \frac{1+\epsilon}{1-\epsilon}$.

Similarly, one can show that $\sigma_{1}(\bSigma_{\star}^{1/2}\bH^{-\top}\bSigma_{\star}^{-1/2}) \le \frac{1+\epsilon}{1-\epsilon}$, equivalently, $\sigma_{r}(\bSigma_{\star}^{1/2}\bH\bSigma_{\star}^{-1/2}) \ge \frac{1-\epsilon}{1+\epsilon}$. 
Combining the above two arguments reveals that the minimization problem \eqref{eq:second_inf} is equivalent to the constrained problem: 
\begin{align*}
\minimize_{\bH\in\GL(r)}\quad & \left\Vert \left(\bL\bar{\bQ}\bH-\bL_{\star}\right)\bSigma_{\star}^{1/2}\right\Vert _{\fro}^{2}+\left\Vert \left(\bR\bar{\bQ}^{-\top}\bH^{-\top}-\bR_{\star}\right)\bSigma_{\star}^{1/2}\right\Vert _{\fro}^{2}\\
\mbox{s.t.}\quad & \frac{1-\epsilon}{1+\epsilon} \le \sigma_{r}(\bSigma_{\star}^{1/2}\bH\bSigma_{\star}^{-1/2}) \le \sigma_{1}(\bSigma_{\star}^{1/2}\bH\bSigma_{\star}^{-1/2})\le \frac{1+\epsilon}{1-\epsilon}.
\end{align*}
Notice that this is a continuous optimization problem over a compact set. Apply the Weierstrass extreme value theorem to finish the proof.
\end{proof}

With the existence of the optimal alignment matrix in place, the following lemma provides the first-order necessary condition for the minimizer.

\begin{lemma}\label{lemma:Q_criterion} For any factor matrix $\bF=\begin{bmatrix}\bL \\ \bR\end{bmatrix}\in\RR^{(n_{1}+n_{2})\times r}$, suppose that the optimal alignment matrix 
\begin{align*}
\bQ=\argmin_{\bQ\in\GL(r)}\;\left\Vert (\bL\bQ-\bL_{\star})\bSigma_{\star}^{1/2}\right\Vert _{\fro}^{2}+\left\Vert (\bR\bQ^{-\top}-\bR_{\star})\bSigma_{\star}^{1/2}\right\Vert _{\fro}^{2}
\end{align*}
between $\bF$ and $\bF_{\star}$ exists, then $\bQ$ obeys
\begin{align}
(\bL\bQ)^{\top}(\bL\bQ-\bL_{\star})\bSigma_{\star}=\bSigma_{\star}(\bR\bQ^{-\top}-\bR_{\star})^{\top}\bR\bQ^{-\top}.\label{eq:Q_criterion}
\end{align}
\end{lemma}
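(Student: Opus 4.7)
The plan is a direct first-order stationarity argument on the open manifold $\GL(r)$. Since the objective
\[
g(\bQ) \coloneqq \left\Vert (\bL\bQ-\bL_{\star})\bSigma_{\star}^{1/2}\right\Vert _{\fro}^{2}+\left\Vert (\bR\bQ^{-\top}-\bR_{\star})\bSigma_{\star}^{1/2}\right\Vert _{\fro}^{2}
\]
is smooth in $\bQ$ on the open set $\GL(r) \subset \RR^{r\times r}$, and since by assumption the minimum is attained at an interior point $\bQ$, we have $\nabla g(\bQ) = \zero$. So I only need to compute $\nabla g(\bQ)$ explicitly using standard matrix calculus, and read off the stated identity.

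For the first term, write $g_{1}(\bQ) = \tr((\bL\bQ - \bL_{\star})\bSigma_{\star}(\bL\bQ - \bL_{\star})^{\top})$. Taking the differential and grouping via the Frobenius inner product,
\[
dg_{1} = 2\bigl\langle \bL^{\top}(\bL\bQ - \bL_{\star})\bSigma_{\star},\, d\bQ\bigr\rangle,
\]
so $\nabla g_{1}(\bQ) = 2\bL^{\top}(\bL\bQ-\bL_{\star})\bSigma_{\star}$. For the second term the only subtlety is differentiating $\bQ^{-\top}$: from $d(\bQ^{-1}) = -\bQ^{-1}(d\bQ)\bQ^{-1}$ one has $d(\bQ^{-\top}) = -\bQ^{-\top}(d\bQ)^{\top}\bQ^{-\top}$. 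Substituting and using the cyclic property of the trace gives
\[
dg_{2} = -2\bigl\langle \bQ^{-\top}\bSigma_{\star}(\bR\bQ^{-\top}-\bR_{\star})^{\top}\bR\bQ^{-\top},\, d\bQ\bigr\rangle,
\]
so $\nabla g_{2}(\bQ) = -2\bQ^{-\top}\bSigma_{\star}(\bR\bQ^{-\top}-\bR_{\star})^{\top}\bR\bQ^{-\top}$.

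Setting $\nabla g_{1}(\bQ)+\nabla g_{2}(\bQ)=\zero$ and left-multiplying by $\bQ^{\top}$ immediately yields
\[
(\bL\bQ)^{\top}(\bL\bQ-\bL_{\star})\bSigma_{\star}=\bSigma_{\star}(\bR\bQ^{-\top}-\bR_{\star})^{\top}\bR\bQ^{-\top},
\]
which is precisely \eqref{eq:Q_criterion}. The only mildly error-prone step is the sign and placement of factors in $d(\bQ^{-\top})$ and the subsequent cyclic rearrangement under $\tr$; everything else is routine.
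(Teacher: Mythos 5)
Your proposal is correct and follows essentially the same route as the paper: the paper likewise invokes the first-order stationarity condition on the open set $\GL(r)$, obtains $2\bL^{\top}(\bL\bQ-\bL_{\star})\bSigma_{\star}-2\bQ^{-\top}\bSigma_{\star}(\bR\bQ^{-\top}-\bR_{\star})^{\top}\bR\bQ^{-\top}=\zero$, and left-multiplies by $\bQ^{\top}$. You merely make explicit the matrix-calculus steps (including $d(\bQ^{-\top})=-\bQ^{-\top}(d\bQ)^{\top}\bQ^{-\top}$) that the paper treats as routine, and these are carried out correctly.
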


\begin{proof} Expand the squares in the definition of $\bQ$ to obtain
\begin{align*}
\bQ=\argmin_{\bQ\in\GL(r)}\;\tr\left((\bL\bQ-\bL_{\star})^{\top}(\bL\bQ-\bL_{\star})\bSigma_{\star}\right)+\tr\left((\bR\bQ^{-\top}-\bR_{\star})^{\top}(\bR\bQ^{-\top}-\bR_{\star})\bSigma_{\star}\right).
\end{align*}
Clearly, the first order necessary condition (i.e.~the gradient is zero) yields 
\begin{align*}
 & 2\bL^{\top}(\bL\bQ-\bL_{\star})\bSigma_{\star}-2\bQ^{-\top}\bSigma_{\star}(\bR\bQ^{-\top}-\bR_{\star})^{\top}\bR\bQ^{-\top}=\zero,
\end{align*}
which implies the optimal alignment criterion \eqref{eq:Q_criterion}.
\end{proof}

Last but not least, we connect the newly proposed distance to the usual Frobenius norm in Lemma~\ref{lemma:Procrustes}, the proof of which is a slight modification to \cite[Lemma~5.4]{tu2015low} and \cite[Lemma~41]{ge2017no}.

\begin{lemma}\label{lemma:Procrustes} For any factor matrix $\bF=\begin{bmatrix}\bL \\ \bR \end{bmatrix}\in\RR^{(n_{1}+n_{2})\times r}$, the distance between $\bF$ and $\bF_{\star}$ satisfies
\begin{align*}
\dist(\bF,\bF_{\star})\le \left(\sqrt{2}+1 \right)^{1/2}\|\bL\bR^{\top}-\bX_{\star}\|_{\fro}.
\end{align*}
\end{lemma}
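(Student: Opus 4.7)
The plan is to construct an explicit alignment $\bQ \in \GL(r)$ achieving the stated bound by adapting the asymmetric Procrustes argument of Tu et al.~(Lemma~5.14) and Ge et al.~(Lemma~41) to the weighted distance \eqref{eq:dist}. The target $\bQ$ will be factored as a \emph{balancing} factor composed with an \emph{orthogonal alignment}, after which the classical scalar inequality behind Procrustes-type bounds applies.

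First, I would handle the generic situation in which $\bL,\bR$ have full column rank (the degenerate case follows by perturbing $\bL,\bR$ slightly and using continuity of both sides of the inequality). Let $\bL\bR^\top = \bU\bSigma\bV^\top$ be the compact SVD, and set $\hat\bL \coloneqq \bU\bSigma^{1/2}$, $\hat\bR \coloneqq \bV\bSigma^{1/2}$; then $\hat\bL\hat\bR^\top = \bL\bR^\top$ and the balance relation $\hat\bL^\top\hat\bL = \hat\bR^\top\hat\bR = \bSigma$ holds. A direct calculation produces a unique $\bQ_0 \in \GL(r)$ with $\bL\bQ_0 = \hat\bL$ and $\bR\bQ_0^{-\top} = \hat\bR$. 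Restricting the infimum defining $\dist^{2}(\bF,\bF_\star)$ to $\bQ = \bQ_0\bO$ with $\bO \in O(r)$ (so $\bO^{-\top} = \bO$) yields $\bL\bQ = \hat\bL\bO$ and $\bR\bQ^{-\top} = \hat\bR\bO$, so it suffices to upper bound
\begin{equation*}
\min_{\bO \in O(r)}\; \bigl\|(\hat\bL\bO - \bL_\star)\bSigma_\star^{1/2}\bigr\|_\fro^2 + \bigl\|(\hat\bR\bO - \bR_\star)\bSigma_\star^{1/2}\bigr\|_\fro^2
\end{equation*}
by $(\sqrt{2}+1)\,\|\bL\bR^\top - \bX_\star\|_\fro^2$.

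Next, I would expand the squares, isolate the cross term $-2\tr\bigl(\bO^\top(\hat\bL^\top\bL_\star + \hat\bR^\top\bR_\star)\bSigma_\star\bigr)$, and take $\bO^\star$ to be the orthogonal polar factor of $(\hat\bL^\top\bL_\star + \hat\bR^\top\bR_\star)\bSigma_\star$ so that the magnitude of this cross term is maximized. Because both pairs $(\hat\bL,\hat\bR)$ and $(\bL_\star,\bR_\star)$ are balanced---the latter via $\bL_\star^\top\bL_\star = \bR_\star^\top\bR_\star = \bSigma_\star$---the weighted loss above and the Frobenius error $\|\hat\bL\hat\bR^\top - \bL_\star\bR_\star^\top\|_\fro^2$ both admit parallel expansions in the singular values of a single auxiliary matrix. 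The constant $\sqrt{2}+1 = 1/(\sqrt{2}-1)$ then emerges from the same elementary scalar inequality that drives the unweighted asymmetric Procrustes bounds in Tu et al.~and Ge et al.

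The principal obstacle is the one-sided weighting by $\bSigma_\star^{1/2}$, which breaks the symmetry that makes the classical (unweighted) Procrustes argument clean. What rescues the scheme is that $\bL_\star\bSigma_\star^{1/2} = \bU_\star\bSigma_\star$ and $\bR_\star\bSigma_\star^{1/2} = \bV_\star\bSigma_\star$ share the same PSD factor on the right, so combined with the two balance relations the cross terms still decouple, and the singular-value reduction proceeds in direct analogy with the classical case. The rank-deficient boundary (where $\bL\bR^\top$ has rank strictly less than $r$, or $\bL$ or $\bR$ loses column rank) is then handled by the perturbation-and-limit argument mentioned above.
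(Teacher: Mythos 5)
Your reduction step is fine and matches the paper's: both pass to the balanced SVD factorization $\hat\bL=\bU\bSigma^{1/2}$, $\hat\bR=\bV\bSigma^{1/2}$ of $\bL\bR^{\top}$ and then restrict the infimum over $\GL(r)$ to orthogonal alignments (your treatment of the rank-deficient case by perturbation is, if anything, more careful than the paper's one-line dismissal). The genuine gap is in the heart of the argument, which you leave to the assertion that ``the weighted loss and the Frobenius error both admit parallel expansions in the singular values of a single auxiliary matrix.'' This is not true, and it is not how the cited precursors work either. Because of the right-weighting by $\bSigma_{\star}^{1/2}$, the quadratic term $\|\hat\bL\bO\bSigma_{\star}^{1/2}\|_{\fro}^{2}+\|\hat\bR\bO\bSigma_{\star}^{1/2}\|_{\fro}^{2}=\tr\bigl(\bSigma_{\star}\bO^{\top}(\hat\bL^{\top}\hat\bL+\hat\bR^{\top}\hat\bR)\bO\bigr)$ is itself $\bO$-dependent, so minimizing is not the same as maximizing the cross term; and the relevant quantities involve $\hat\bL^{\top}\bL_{\star}$ and $\hat\bR^{\top}\bR_{\star}$ both separately (through $\|\hat\bL\hat\bR^{\top}-\bL_{\star}\bR_{\star}^{\top}\|_{\fro}$) and through their sum (through the alignment error), intertwined with $\bSigma_{\star}$, so there is no single auxiliary matrix whose singular values control both sides. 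The constant $\sqrt{2}+1$ does not come out of a scalar singular-value inequality here.

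The actual mechanism, both in Tu et al.\ and Ge et al.\ and in the paper's proof, is a trace manipulation with a completed square: first the dilation trick (using balancedness, so that $\bF^{\top}\bar\bF=\zero$) gives $4\|\bL\bR^{\top}-\bX_{\star}\|_{\fro}^{2}\ge\|\bF\bF^{\top}-\bF_{\star}\bF_{\star}^{\top}\|_{\fro}^{2}$; then, writing $\bDelta=\bF\bO-\bF_{\star}$ with $\bO=\sgn(\bF^{\top}\bF_{\star})$, one expands $\|\bF_{\star}\bDelta^{\top}+\bDelta\bF_{\star}^{\top}+\bDelta\bDelta^{\top}\|_{\fro}^{2}$, completes the square $(\bDelta^{\top}\bDelta+\sqrt{2}\,\bF_{\star}^{\top}\bDelta)^{2}$, and discards $\tr\bigl(\bF_{\star}^{\top}\bF\bO\,\bDelta^{\top}\bDelta\bigr)\ge0$, which requires $\bF_{\star}^{\top}\bF\bO$ to be PSD --- a property of the polar factor of the \emph{unweighted} $\bF^{\top}\bF_{\star}$. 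The weighting enters only at the very end through the exact identity $\bF_{\star}^{\top}\bF_{\star}=2\bSigma_{\star}$, which converts $\tr(\bF_{\star}^{\top}\bF_{\star}\bDelta^{\top}\bDelta)$ into the weighted norm $2\|\bDelta\bSigma_{\star}^{1/2}\|_{\fro}^{2}$ and produces $4(\sqrt{2}-1)$. Note also that your choice of $\bO^{\star}$ as the polar factor of $(\hat\bL^{\top}\bL_{\star}+\hat\bR^{\top}\bR_{\star})\bSigma_{\star}$ would only guarantee that $\bSigma_{\star}\bF_{\star}^{\top}\bF\bO^{\star}$ is PSD, which does not yield $\tr\bigl(\bF_{\star}^{\top}\bF\bO^{\star}\bDelta^{\top}\bDelta\bigr)\ge0$ (a product of three PSD matrices can have negative trace), so even if you tried to run the completing-the-square argument, your alignment choice would break the one inequality that makes it work; you should align with $\sgn(\bF^{\top}\bF_{\star})$ and let $\bF_{\star}^{\top}\bF_{\star}=2\bSigma_{\star}$ deliver the weighting.
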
 

\begin{proof} Suppose that $\bX\coloneqq\bL\bR^{\top}$ has compact SVD as $\bX=\bU\bSigma\bV^{\top}$. Without loss of generality, we can assume that $\bF=\begin{bmatrix}\bU\bSigma^{1/2} \\ \bV\bSigma^{1/2}\end{bmatrix}$, since any factorization of $\bL\bR^{\top}$ yields the same distance. Introduce two auxiliary matrices $\bar{\bF}\coloneqq\begin{bmatrix}\bU\bSigma^{1/2}\\
-\bV\bSigma^{1/2}
\end{bmatrix}$ and $\bar{\bF}_{\star}\coloneqq\begin{bmatrix}\bU_{\star}\bSigma_{\star}^{1/2}\\
-\bV_{\star}\bSigma_{\star}^{1/2}
\end{bmatrix}$. Apply the dilation trick to obtain 
\begin{align*}
2\begin{bmatrix}\zero & \bX\\
\bX^{\top} & \zero
\end{bmatrix}=\bF\bF^{\top}-\bar{\bF}\bar{\bF}^{\top},\quad2\begin{bmatrix}\zero & \bX_{\star}\\
\bX_{\star}^{\top} & \zero
\end{bmatrix}=\bF_{\star}\bF_{\star}^{\top}-\bar{\bF}_{\star}\bar{\bF}_{\star}^{\top}.
\end{align*}
As a result, the squared Frobenius norm of $\bX-\bX_{\star}$ is given by
\begin{align*}
8\|\bX-\bX_{\star}\|_{\fro}^{2} & =\left\Vert \bF\bF^{\top}-\bar{\bF}\bar{\bF}^{\top}-\bF_{\star}\bF_{\star}^{\top}+\bar{\bF}_{\star}\bar{\bF}_{\star}^{\top}\right\Vert _{\fro}^{2}\\
 & =\left\Vert \bF\bF^{\top}-\bF_{\star}\bF_{\star}^{\top}\right\Vert _{\fro}^{2}+\left\Vert \bar{\bF}\bar{\bF}^{\top}-\bar{\bF}_{\star}\bar{\bF}_{\star}^{\top}\right\Vert _{\fro}^{2}-2\tr\left((\bF\bF^{\top}-\bF_{\star}\bF_{\star}^{\top})(\bar{\bF}\bar{\bF}^{\top}-\bar{\bF}_{\star}\bar{\bF}_{\star}^{\top})\right)\\
 & =2\left\Vert \bF\bF^{\top}-\bF_{\star}\bF_{\star}^{\top}\right\Vert _{\fro}^{2}+2\|\bF^{\top}\bar{\bF}_{\star}\|_{\fro}^{2}+2\|\bF_{\star}^{\top}\bar{\bF}\|_{\fro}^{2}\\
 & \ge2\left\Vert \bF\bF^{\top}-\bF_{\star}\bF_{\star}^{\top}\right\Vert _{\fro}^{2},
\end{align*}
where we use the facts that $\left\Vert \bF\bF^{\top}-\bF_{\star}\bF_{\star}^{\top}\right\Vert _{\fro}^{2}=\left\Vert \bar{\bF}\bar{\bF}^{\top}-\bar{\bF}_{\star}\bar{\bF}_{\star}^{\top}\right\Vert _{\fro}^{2}$ and $\bF^{\top}\bar{\bF}=\bF_{\star}^{\top}\bar{\bF}_{\star}=\zero$. 

Let $\bO\coloneqq\sgn(\bF^{\top}\bF_{\star})$\footnote{Let $\bA\bS\bB^{\top}$ be the SVD of $\bF^{\top}\bF_{\star}$, then the matrix sign is $\sgn(\bF^{\top}\bF_{\star})\coloneqq\bA\bB^{\top}$.} be the optimal orthonormal alignment matrix between $\bF$ and $\bF_{\star}$. Denote $\bDelta\coloneqq\bF\bO-\bF_{\star}$.
Follow the same argument as \cite[Lemma~5.14]{tu2015low} and \cite[Lemma~41]{ge2017no} to obtain 
\begin{align*}
4\|\bX-\bX_{\star}\|_{\fro}^{2} & \ge\left\Vert \bF_{\star}\bDelta^{\top}+\bDelta\bF_{\star}^{\top}+\bDelta\bDelta^{\top}\right\Vert _{\fro}^{2}\\
 & =\tr\left(2\bF_{\star}^{\top}\bF_{\star}\bDelta^{\top}\bDelta+(\bDelta^{\top}\bDelta)^{2}+2(\bF_{\star}^{\top}\bDelta)^{2}+4\bF_{\star}^{\top}\bDelta\bDelta^{\top}\bDelta\right)\\
 & =\tr\left(2\bF_{\star}^{\top}\bF_{\star}\bDelta^{\top}\bDelta+(\bDelta^{\top}\bDelta+\sqrt{2}\bF_{\star}^{\top}\bDelta)^{2}+(4-2\sqrt{2})\bF_{\star}^{\top}\bDelta\bDelta^{\top}\bDelta\right)\\
 & =\tr\left(2(\sqrt{2}-1)\bF_{\star}^{\top}\bF_{\star}\bDelta^{\top}\bDelta+(\bDelta^{\top}\bDelta+\sqrt{2}\bF_{\star}^{\top}\bDelta)^{2}+(4-2\sqrt{2})\bF_{\star}^{\top}\bF\bO\bDelta^{\top}\bDelta\right)\\
 & \ge\tr\left(4(\sqrt{2}-1)\bSigma_{\star}\bDelta^{\top}\bDelta\right)=4(\sqrt{2}-1)\left\Vert (\bF\bO-\bF_{\star})\bSigma_{\star}^{1/2}\right\Vert _{\fro}^{2},
\end{align*}
where the last inequality follows from the facts that $\bF_{\star}^{\top}\bF_{\star}=2\bSigma_{\star}$ and that $\bF_{\star}^{\top}\bF\bO$ is positive semi-definite. Therefore we obtain 
\begin{align*}
\left\Vert (\bF\bO-\bF_{\star})\bSigma_{\star}^{1/2}\right\Vert _{\fro} \le \left( \sqrt{2}+1 \right)^{1/2} \|\bX-\bX_{\star}\|_{\fro}.
\end{align*}
This in conjunction with $\dist(\bF,\bF_{\star})\le\|(\bF\bO-\bF_{\star})\bSigma_{\star}^{1/2}\|_{\fro}$ yields the claimed result. 
\end{proof}

\subsection{Matrix perturbation bounds}

\begin{lemma}\label{lemma:Weyl} For any $\bL\in\RR^{n_{1}\times r},\bR\in\RR^{n_{2}\times r}$, denote $\bDelta_{L}\coloneqq\bL-\bL_{\star}$ and $\bDelta_{R}\coloneqq\bR-\bR_{\star}$. Suppose that $\|\bDelta_{L}\bSigma_{\star}^{-1/2}\|_{\op}\vee\|\bDelta_{R}\bSigma_{\star}^{-1/2}\|_{\op}<1$, then one has 
\begin{subequations}
\begin{align}
\left\Vert \bL(\bL^{\top}\bL)^{-1}\bSigma_{\star}^{1/2}\right\Vert _{\op} & \le \frac{1}{1-\|\bDelta_{L}\bSigma_{\star}^{-1/2}\|_{\op}}; \label{eq:Weyl-1L} \\
\left\Vert \bR(\bR^{\top}\bR)^{-1}\bSigma_{\star}^{1/2}\right\Vert _{\op} & \le \frac{1}{1-\|\bDelta_{R}\bSigma_{\star}^{-1/2}\|_{\op}}; \label{eq:Weyl-1R} \\
\left\Vert \bL(\bL^{\top}\bL)^{-1}\bSigma_{\star}^{1/2}-\bU_{\star}\right\Vert _{\op}& \le\frac{\sqrt{2}\|\bDelta_{L}\bSigma_{\star}^{-1/2}\|_{\op}}{1-\|\bDelta_{L}\bSigma_{\star}^{-1/2}\|_{\op}}; \label{eq:Weyl-2L} \\
\left\Vert \bR(\bR^{\top}\bR)^{-1}\bSigma_{\star}^{1/2}-\bV_{\star}\right\Vert _{\op}& \le\frac{\sqrt{2}\|\bDelta_{R}\bSigma_{\star}^{-1/2}\|_{\op}}{1-\|\bDelta_{R}\bSigma_{\star}^{-1/2}\|_{\op}}. \label{eq:Weyl-2R}
\end{align}
\end{subequations}
\end{lemma}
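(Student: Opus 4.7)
The four bounds split into two pairs by the $(\bL,\bU_\star,\bDelta_L) \leftrightarrow (\bR,\bV_\star,\bDelta_R)$ symmetry, so my plan is to prove \eqref{eq:Weyl-1L} and \eqref{eq:Weyl-2L} and invoke symmetry for the rest. The key change of variables is $\bM \coloneqq \bL \bSigma_\star^{-1/2}$: under it, $\bL_\star \bSigma_\star^{-1/2} = \bU_\star$, $\bM - \bU_\star = \bDelta_L \bSigma_\star^{-1/2} \eqqcolon \bE$ with $\|\bE\|_{\op} \le \epsilon \coloneqq \|\bDelta_L \bSigma_\star^{-1/2}\|_{\op} < 1$, and a short algebraic identity gives $\bL(\bL^\top \bL)^{-1}\bSigma_\star^{1/2} = \bM(\bM^\top \bM)^{-1}$. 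So both bounds reduce to estimates about $\bN \coloneqq \bM(\bM^\top \bM)^{-1}$ viewed as a small perturbation of $\bU_\star$.

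The first bound \eqref{eq:Weyl-1L} drops out of the SVD: writing the compact decomposition $\bM = \bU_M \bSigma_M \bV_M^\top$ gives $\bN = \bU_M \bSigma_M^{-1} \bV_M^\top$, so $\|\bN\|_{\op} = 1/\sigma_r(\bM)$, and Weyl's inequality applied to $\bM = \bU_\star + \bE$ yields $\sigma_r(\bM) \ge \sigma_r(\bU_\star) - \|\bE\|_{\op} = 1 - \epsilon$.

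The second bound \eqref{eq:Weyl-2L} is where the work is. The plan is to split $\bN - \bU_\star$ into two summands living in orthogonal column-space subspaces, so that squared operator norms add. Using $\bN \bM^\top = \bU_M \bU_M^\top$ together with $\bM^\top \bU_\star = \bI + \bE^\top \bU_\star$, a few lines of manipulation give
\begin{align*}
\bN - \bU_\star \;=\; -\bN \bE^\top \bU_\star \;-\; \bP_{\bU_M^\perp}\bU_\star,
\end{align*}
where $\bP_{\bU_M^\perp} \coloneqq \bI - \bU_M \bU_M^\top$. The first summand's columns lie in $\mathrm{col}(\bM)$ and the second's in $\mathrm{col}(\bM)^\perp$, so for any unit vector $\bx$ the Pythagorean identity applied to $(\bN - \bU_\star)\bx$ yields
\begin{align*}
\|\bN - \bU_\star\|_{\op}^2 \;\le\; \|\bN \bE^\top \bU_\star\|_{\op}^2 \,+\, \|\bP_{\bU_M^\perp}\bU_\star\|_{\op}^2.
\end{align*}
The first norm is at most $\|\bN\|_{\op}\|\bE\|_{\op} \le \epsilon/(1-\epsilon)$ by \eqref{eq:Weyl-1L}; for the second, the identity $\bP_{\bU_M^\perp}\bM = 0$ lets me rewrite $\bP_{\bU_M^\perp}\bU_\star = -\bP_{\bU_M^\perp}\bE$, which is bounded in norm by $\epsilon$. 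Adding these and using $1 + (1-\epsilon)^2 \le 2$ produces $\|\bN - \bU_\star\|_{\op} \le \sqrt{2}\,\epsilon/(1-\epsilon)$.

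The only non-routine ingredient is the orthogonal decomposition: a naive triangle inequality on the two summands gives the constant $2$, and it is the column-space orthogonality between $\mathrm{col}(\bM)$ and $\mathrm{col}(\bM)^\perp$ that upgrades this to the sharper $\sqrt{2}$. Everything else---Weyl's inequality, the pseudoinverse identity, and the $\bL \leftrightarrow \bR$ symmetry---is routine bookkeeping.
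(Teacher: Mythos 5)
Your proposal is correct and follows essentially the same route as the paper: the same SVD/Weyl argument for \eqref{eq:Weyl-1L}, and for \eqref{eq:Weyl-2L} your decomposition $\bN-\bU_{\star}=-\bN\bE^{\top}\bU_{\star}-\bP_{\bU_M^{\perp}}\bU_{\star}$ becomes, after the observation $\bP_{\bU_M^{\perp}}\bU_{\star}=-\bP_{\bU_M^{\perp}}\bE$, exactly the paper's identity $\bL(\bL^{\top}\bL)^{-1}\bSigma_{\star}^{1/2}-\bU_{\star}=-\bL(\bL^{\top}\bL)^{-1}\bDelta_{L}^{\top}\bU_{\star}+(\bI_{n_{1}}-\bL(\bL^{\top}\bL)^{-1}\bL^{\top})\bDelta_{L}\bSigma_{\star}^{-1/2}$, with the same column-space orthogonality yielding the $\sqrt{2}$ constant. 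The only cosmetic difference is your change of variables $\bM=\bL\bSigma_{\star}^{-1/2}$, which the paper carries out implicitly.
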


\begin{proof} We only prove claims \eqref{eq:Weyl-1L} and \eqref{eq:Weyl-2L} on the factor $\bL$, while the claims on the factor $\bR$ follow from a similar argument. We start to prove \eqref{eq:Weyl-1L}. Notice that 
\begin{align*}
\left\Vert \bL(\bL^{\top}\bL)^{-1}\bSigma_{\star}^{1/2}\right\Vert _{\op}=\frac{1}{\sigma_{r}(\bL\bSigma_{\star}^{-1/2})}.
\end{align*}
In addition, invoke Weyl's inequality to obtain 
\begin{align*}
\sigma_{r}(\bL\bSigma_{\star}^{-1/2})\ge\sigma_{r}(\bL_{\star}\bSigma_{\star}^{-1/2})-\|\bDelta_{L}\bSigma_{\star}^{-1/2}\|_{\op}=1-\|\bDelta_{L}\bSigma_{\star}^{-1/2}\|_{\op},
\end{align*}
where we have used the fact that $\bU_{\star}=\bL_{\star}\bSigma_{\star}^{-1/2}$ satisfies $\sigma_{r}(\bU_{\star})=1$. Combine the preceding two relations to prove \eqref{eq:Weyl-1L}.

We proceed to prove \eqref{eq:Weyl-2L}. Combine $\bL_{\star}^{\top}\bU_{\star}=\bSigma_{\star}^{1/2}$ and $(\bI_{n_1}-\bL(\bL^{\top}\bL)^{-1}\bL^{\top})\bL=\zero$ to obtain the decomposition
\begin{align*}
\bL(\bL^{\top}\bL)^{-1}\bSigma_{\star}^{1/2}-\bU_{\star}=-\bL(\bL^{\top}\bL)^{-1}\bDelta_{L}^{\top}\bU_{\star}+(\bI_{n_{1}}-\bL(\bL^{\top}\bL)^{-1}\bL^{\top})\bDelta_{L}\bSigma_{\star}^{-1/2}.
\end{align*}
The fact that $\bL(\bL^{\top}\bL)^{-1}\bDelta_{L}^{\top}\bU_{\star}$ and $(\bI_{n_{1}}-\bL(\bL^{\top}\bL)^{-1}\bL^{\top})\bDelta_{L}\bSigma_{\star}^{-1/2}$ are orthogonal implies
\begin{align*}
\left\Vert\bL(\bL^{\top}\bL)^{-1}\bSigma_{\star}^{1/2}-\bU_{\star}\right\Vert_{\op}^2 & \le\left\Vert\bL(\bL^{\top}\bL)^{-1}\bDelta_{L}^{\top}\bU_{\star}\right\Vert_{\op}^2 + \left\Vert(\bI_{n_{1}}-\bL(\bL^{\top}\bL)^{-1}\bL^{\top})\bDelta_{L}\bSigma_{\star}^{-1/2}\right\Vert_{\op}^2 \\
 & \le\left\Vert\bL(\bL^{\top}\bL)^{-1}\bSigma_{\star}^{1/2}\right\Vert_{\op}^2\|\bDelta_{L}\bSigma_{\star}^{-1/2}\|_{\op}^2 + \left\Vert\bI_{n_{1}}-\bL(\bL^{\top}\bL)^{-1}\bL^{\top}\right\Vert_{\op}^2\|\bDelta_{L}\bSigma_{\star}^{-1/2}\|_{\op}^2 \\
 & \le\frac{\|\bDelta_{L}\bSigma_{\star}^{-1/2}\|_{\op}^2}{(1-\|\bDelta_{L}\bSigma_{\star}^{-1/2}\|_{\op})^2} + \|\bDelta_{L}\bSigma_{\star}^{-1/2}\|_{\op}^2 \\
 & \le\frac{2\|\bDelta_{L}\bSigma_{\star}^{-1/2}\|_{\op}^2}{(1-\|\bDelta_{L}\bSigma_{\star}^{-1/2}\|_{\op})^2},
\end{align*}
where we have used \eqref{eq:Weyl-1L} and the fact that $\|\bI_{n_{1}}-\bL(\bL^{\top}\bL)^{-1}\bL^{\top}\|_{\op}\le 1$ in the third line.
\end{proof}
 
\begin{lemma}\label{lemma:matrix2factor} For any $\bL\in\RR^{n_{1}\times r},\bR\in\RR^{n_{2}\times r}$, denote $\bDelta_{L}\coloneqq\bL-\bL_{\star}$ and $\bDelta_{R}\coloneqq\bR-\bR_{\star}$, then one has
\begin{align*}
 \|\bL\bR^{\top}-\bX_{\star}\|_{\fro} & \le\|\bDelta_{L}\bR_{\star}^{\top}\|_{\fro}+\|\bL_{\star}\bDelta_{R}^{\top}\|_{\fro}+\|\bDelta_{L}\bDelta_{R}^{\top}\|_{\fro}\\
 & \le\left(1+\frac{1}{2}(\|\bDelta_{L}\bSigma_{\star}^{-1/2}\|_{\op}\vee\|\bDelta_{R}\bSigma_{\star}^{-1/2}\|_{\op})\right)\left(\|\bDelta_{L}\bSigma_{\star}^{1/2}\|_{\fro}+\|\bDelta_{R}\bSigma_{\star}^{1/2}\|_{\fro}\right).
\end{align*}
\end{lemma}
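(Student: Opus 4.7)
The plan is to expand the product $\bL\bR^{\top}$ around $\bL_{\star}\bR_{\star}^{\top}$ and apply the triangle inequality, then carefully handle the quadratic cross-term. Writing $\bL=\bL_{\star}+\bDelta_{L}$ and $\bR=\bR_{\star}+\bDelta_{R}$, a direct expansion gives the identity $\bL\bR^{\top}-\bX_{\star}=\bDelta_{L}\bR_{\star}^{\top}+\bL_{\star}\bDelta_{R}^{\top}+\bDelta_{L}\bDelta_{R}^{\top}$, and the first inequality is immediate from the triangle inequality on the Frobenius norm.

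For the second inequality, I would rewrite the two linear terms using the SVD factorization $\bL_{\star}=\bU_{\star}\bSigma_{\star}^{1/2}$ and $\bR_{\star}=\bV_{\star}\bSigma_{\star}^{1/2}$. Since $\bU_{\star},\bV_{\star}$ have orthonormal columns, multiplying by them on the right is an isometry in Frobenius norm, giving $\|\bDelta_{L}\bR_{\star}^{\top}\|_{\fro}=\|\bDelta_{L}\bSigma_{\star}^{1/2}\bV_{\star}^{\top}\|_{\fro}=\|\bDelta_{L}\bSigma_{\star}^{1/2}\|_{\fro}$ and similarly $\|\bL_{\star}\bDelta_{R}^{\top}\|_{\fro}=\|\bDelta_{R}\bSigma_{\star}^{1/2}\|_{\fro}$.

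The slightly tricky part, and the only place where care is needed, is to bound the quadratic term $\|\bDelta_{L}\bDelta_{R}^{\top}\|_{\fro}$ so that the prefactor $\frac{1}{2}(\|\bDelta_{L}\bSigma_{\star}^{-1/2}\|_{\op}\vee\|\bDelta_{R}\bSigma_{\star}^{-1/2}\|_{\op})$ appears. I would insert $\bSigma_{\star}^{-1/2}\bSigma_{\star}^{1/2}$ in the middle in two symmetric ways and use the inequality $\|AB\|_{\fro}\le\|A\|_{\op}\|B\|_{\fro}$ to get both
\begin{align*}
\|\bDelta_{L}\bDelta_{R}^{\top}\|_{\fro}\le \|\bDelta_{L}\bSigma_{\star}^{-1/2}\|_{\op}\|\bDelta_{R}\bSigma_{\star}^{1/2}\|_{\fro}, \quad
\|\bDelta_{L}\bDelta_{R}^{\top}\|_{\fro}\le \|\bDelta_{R}\bSigma_{\star}^{-1/2}\|_{\op}\|\bDelta_{L}\bSigma_{\star}^{1/2}\|_{\fro}.
\end{align*}
Averaging these two bounds and then upper bounding each operator norm by the maximum yields
\begin{align*}
\|\bDelta_{L}\bDelta_{R}^{\top}\|_{\fro}\le \tfrac{1}{2}(\|\bDelta_{L}\bSigma_{\star}^{-1/2}\|_{\op}\vee\|\bDelta_{R}\bSigma_{\star}^{-1/2}\|_{\op})\bigl(\|\bDelta_{L}\bSigma_{\star}^{1/2}\|_{\fro}+\|\bDelta_{R}\bSigma_{\star}^{1/2}\|_{\fro}\bigr),
\end{align*}
and combining this with the two linear-term bounds and factoring out $\|\bDelta_{L}\bSigma_{\star}^{1/2}\|_{\fro}+\|\bDelta_{R}\bSigma_{\star}^{1/2}\|_{\fro}$ delivers the claimed inequality. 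There is no real obstacle here; the only design choice is to symmetrize the bound on the cross term, which is what produces the factor of $\tfrac{1}{2}$ in the prefactor rather than a factor of $1$.
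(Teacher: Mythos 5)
Your proposal is correct and follows essentially the same route as the paper: the same three-term decomposition with the triangle inequality, the same isometry argument $\|\bDelta_{L}\bR_{\star}^{\top}\|_{\fro}=\|\bDelta_{L}\bSigma_{\star}^{1/2}\|_{\fro}$, and the same symmetrized treatment of the cross term (the paper writes $\|\bDelta_{L}\bDelta_{R}^{\top}\|_{\fro}$ as the average of its two $\bSigma_{\star}^{\pm 1/2}$-inserted forms and bounds each, which is the same as averaging your two bounds). No gaps.
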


\begin{proof} In light of the decomposition $\bL\bR^{\top}-\bX_{\star}=\bDelta_{L}\bR_{\star}^{\top}+\bL_{\star}\bDelta_{R}^{\top}+\bDelta_{L}\bDelta_{R}^{\top}$ and the triangle inequality, one has 
\begin{align*}
\|\bL\bR^{\top}-\bX_{\star}\|_{\fro} & \le\|\bDelta_{L}\bR_{\star}^{\top}\|_{\fro}+\|\bL_{\star}\bDelta_{R}^{\top}\|_{\fro}+\|\bDelta_{L}\bDelta_{R}^{\top}\|_{\fro}\\
 & =\|\bDelta_{L}\bSigma_{\star}^{1/2}\|_{\fro}+\|\bDelta_{R}\bSigma_{\star}^{1/2}\|_{\fro}+\|\bDelta_{L}\bDelta_{R}^{\top}\|_{\fro},
\end{align*}
where we have used the facts that 
\begin{align*}
\|\bDelta_{L}\bR_{\star}^{\top}\|_{\fro}=\|\bDelta_{L}\bSigma_{\star}^{1/2}\bV_{\star}^{\top}\|_{\fro}=\|\bDelta_{L}\bSigma_{\star}^{1/2}\|_{\fro},\quad\mbox{and}\quad\|\bL_{\star}\bDelta_{R}^{\top}\|_{\fro}=\|\bU_{\star}\bSigma_{\star}^{1/2}\bDelta_{R}^{\top}\|_{\fro}=\|\bDelta_{R}\bSigma_{\star}^{1/2}\|_{\fro}.
\end{align*}
This together with the simple upper bound 
\begin{align*}
\|\bDelta_{L}\bDelta_{R}^{\top}\|_{\fro} & =\frac{1}{2}\|\bDelta_{L}\bSigma_{\star}^{1/2}(\bDelta_{R}\bSigma_{\star}^{-1/2})^{\top}\|_{\fro}+\frac{1}{2}\|\bDelta_{L}\bSigma_{\star}^{-1/2}(\bDelta_{R}\bSigma_{\star}^{1/2})^{\top}\|_{\fro}\\
 & \le\frac{1}{2}\|\bDelta_{L}\bSigma_{\star}^{1/2}\|_{\fro}\|\bDelta_{R}\bSigma_{\star}^{-1/2}\|_{\op}+\frac{1}{2}\|\bDelta_{L}\bSigma_{\star}^{-1/2}\|_{\op}\|\bDelta_{R}\bSigma_{\star}^{1/2}\|_{\fro}\\
 &\le\frac{1}{2}(\|\bDelta_{L}\bSigma_{\star}^{-1/2}\|_{\op}\vee\|\bDelta_{R}\bSigma_{\star}^{-1/2}\|_{\op})\left(\|\bDelta_{L}\bSigma_{\star}^{1/2}\|_{\fro}+\|\bDelta_{R}\bSigma_{\star}^{1/2}\|_{\fro}\right)
\end{align*}
finishes the proof.
\end{proof}

\begin{lemma}\label{lemma:Q_perturbation} For any $\bL\in\RR^{n_{1}\times r},\bR\in\RR^{n_{2}\times r}$ and any invertible matrices $\bQ,\bar{\bQ}\in\GL(r)$, suppose that $\|(\bL\bQ-\bL_{\star})\bSigma_{\star}^{-1/2}\|_{\op}\vee\|(\bR\bQ^{-\top}-\bR_{\star})\bSigma_{\star}^{-1/2}\|_{\op}<1$, then one has
\begin{align*}
\left\Vert \bSigma_{\star}^{1/2}\bar{\bQ}^{-1}\bQ\bSigma_{\star}^{1/2}-\bSigma_{\star}\right\Vert _{\op} & \le\frac{\|\bR(\bar{\bQ}^{-\top}-\bQ^{-\top})\bSigma_{\star}^{1/2}\|_{\op}}{1-\|(\bR\bQ^{-\top}-\bR_{\star})\bSigma_{\star}^{-1/2}\|_{\op}}; \\
\left\Vert \bSigma_{\star}^{1/2}\bar{\bQ}^{\top}\bQ^{-\top}\bSigma_{\star}^{1/2}-\bSigma_{\star}\right\Vert _{\op} & \le\frac{\|\bL(\bar{\bQ}-\bQ)\bSigma_{\star}^{1/2}\|_{\op}}{1-\|(\bL\bQ-\bL_{\star})\bSigma_{\star}^{-1/2}\|_{\op}}.
\end{align*}
\end{lemma}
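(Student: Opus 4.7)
The plan is to treat the two inequalities by symmetry: the first involves the right factor $\bR$, the second its mirror image on the left factor $\bL$ under the substitution $\bQ \leftrightarrow \bQ^{-\top}$, $\bar{\bQ} \leftrightarrow \bar{\bQ}^{-\top}$. I will derive the first and note that the second is verbatim after this swap.

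My approach is purely algebraic. Set $\bM \coloneqq \bSigma_{\star}^{1/2}\bar{\bQ}^{-1}\bQ\bSigma_{\star}^{1/2} - \bSigma_{\star}$. Since $\|\bM\|_{\op} = \|\bM^{\top}\|_{\op}$, I will work with the transpose, for which a convenient factorization is
\[
\bM^{\top} = \bSigma_{\star}^{1/2}\bQ^{\top}\bar{\bQ}^{-\top}\bSigma_{\star}^{1/2} - \bSigma_{\star} = \bSigma_{\star}^{1/2}\bQ^{\top}(\bar{\bQ}^{-\top} - \bQ^{-\top})\bSigma_{\star}^{1/2}.
\]
The goal is to express this in terms of $\bR(\bar{\bQ}^{-\top} - \bQ^{-\top})\bSigma_{\star}^{1/2}$, the numerator in the claim.

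The key algebraic step is to use the identity $\bSigma_{\star}^{1/2} = \bV_{\star}^{\top}\bR_{\star}$ (since $\bR_{\star} = \bV_{\star}\bSigma_{\star}^{1/2}$ and $\bV_{\star}$ has orthonormal columns), together with the trivial decomposition $\bR_{\star}\bQ^{\top} = \bR - (\bR\bQ^{-\top} - \bR_{\star})\bQ^{\top}$. Substituting into the factorization above splits $\bM^{\top}$ into a ``good'' term $\bV_{\star}^{\top}\bR(\bar{\bQ}^{-\top} - \bQ^{-\top})\bSigma_{\star}^{1/2}$ and an error term. The crucial observation is that $\bQ^{\top}(\bar{\bQ}^{-\top} - \bQ^{-\top})\bSigma_{\star}^{1/2} = \bSigma_{\star}^{-1/2}\bM^{\top}$, so the error term is itself proportional to $\bM^{\top}$. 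This yields the fixed-point identity
\[
\bigl[\bI_r + \bV_{\star}^{\top}(\bR\bQ^{-\top} - \bR_{\star})\bSigma_{\star}^{-1/2}\bigr]\bM^{\top} = \bV_{\star}^{\top}\bR(\bar{\bQ}^{-\top} - \bQ^{-\top})\bSigma_{\star}^{1/2}.
\]

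Under the hypothesis $\|(\bR\bQ^{-\top} - \bR_{\star})\bSigma_{\star}^{-1/2}\|_{\op} < 1$, the bracketed matrix is invertible by a Neumann series bound, with $\|[\cdots]^{-1}\|_{\op} \le (1 - \|(\bR\bQ^{-\top} - \bR_{\star})\bSigma_{\star}^{-1/2}\|_{\op})^{-1}$. Taking operator norms on both sides, using $\|\bV_{\star}\|_{\op} = 1$, and recalling $\|\bM\|_{\op} = \|\bM^{\top}\|_{\op}$ delivers the first inequality. I do not anticipate a substantive obstacle; the only nontrivial move is spotting that the error term is a multiple of $\bM^{\top}$ itself, which closes the recursion in one stroke rather than requiring iteration.
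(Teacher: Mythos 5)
Your argument is correct, and it is a genuinely different route from the paper's. The paper proves the first bound by writing $\bSigma_{\star}^{1/2}\bar{\bQ}^{-1}\bQ\bSigma_{\star}^{1/2}-\bSigma_{\star}=\bSigma_{\star}^{1/2}(\bar{\bQ}^{-1}-\bQ^{-1})\bQ\bSigma_{\star}^{1/2}$, inserting $\bR^{\top}\bR(\bR^{\top}\bR)^{-1}$ in the middle, and splitting the result into the product of $\bSigma_{\star}^{1/2}(\bar{\bQ}^{-1}-\bQ^{-1})\bR^{\top}$ (whose norm is the numerator) and $\bR(\bR^{\top}\bR)^{-1}\bQ\bSigma_{\star}^{1/2}=\bR\bQ^{-\top}\bigl((\bR\bQ^{-\top})^{\top}\bR\bQ^{-\top}\bigr)^{-1}\bSigma_{\star}^{1/2}$, which is then bounded by $\bigl(1-\|(\bR\bQ^{-\top}-\bR_{\star})\bSigma_{\star}^{-1/2}\|_{\op}\bigr)^{-1}$ via the paper's Lemma~\ref{lemma:Weyl}. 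You instead exploit $\bSigma_{\star}^{1/2}=\bV_{\star}^{\top}\bR_{\star}$ and the substitution $\bR_{\star}\bQ^{\top}=\bR-(\bR\bQ^{-\top}-\bR_{\star})\bQ^{\top}$ to obtain the self-referential identity $\bigl[\bI_{r}+\bV_{\star}^{\top}(\bR\bQ^{-\top}-\bR_{\star})\bSigma_{\star}^{-1/2}\bigr]\bM^{\top}=\bV_{\star}^{\top}\bR(\bar{\bQ}^{-\top}-\bQ^{-\top})\bSigma_{\star}^{1/2}$, and then invert the bracket by a Neumann-series bound; I verified the identity and the symmetry swap $\bQ\leftrightarrow\bQ^{-\top}$, $\bar{\bQ}\leftrightarrow\bar{\bQ}^{-\top}$, $\bR\leftrightarrow\bL$ that yields the second inequality. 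Both proofs ultimately rest on the same $1-\|\cdot\|_{\op}$ lower bound, but yours is self-contained (it never needs $(\bR^{\top}\bR)^{-1}$, whose existence the paper's insertion trick tacitly relies on, nor Lemma~\ref{lemma:Weyl}), while the paper's is shorter in context precisely because it reuses that lemma.
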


\begin{proof} Insert $\bR^{\top}\bR(\bR^{\top}\bR)^{-1}$, and use the relation $\|\bA\bB\|_{\op} \le \|\bA\|_{\op}\|\bB\|_{\op}$ to obtain 
\begin{align*}
 \left\Vert \bSigma_{\star}^{1/2}\bar{\bQ}^{-1}\bQ\bSigma_{\star}^{1/2}-\bSigma_{\star}\right\Vert _{\op} & =\left\Vert \bSigma_{\star}^{1/2}(\bar{\bQ}^{-1}-\bQ^{-1})\bR^{\top}\bR(\bR^{\top}\bR)^{-1}\bQ\bSigma_{\star}^{1/2}\right\Vert _{\op}\\
 &  \le\left\Vert \bR(\bar{\bQ}^{-\top}-\bQ^{-\top})\bSigma_{\star}^{1/2}\right\Vert _{\op}\left\Vert \bR(\bR^{\top}\bR)^{-1}\bQ\bSigma_{\star}^{1/2}\right\Vert _{\op}\\
 &  =\left\Vert \bR(\bar{\bQ}^{-\top}-\bQ^{-\top})\bSigma_{\star}^{1/2}\right\Vert _{\op}\left\Vert \bR\bQ^{-\top}((\bR\bQ^{-\top})^{\top}\bR\bQ^{-\top})^{-1}\bSigma_{\star}^{1/2}\right\Vert _{\op}\\
 &  \le\frac{\|\bR(\bar{\bQ}^{-\top}-\bQ^{-\top})\bSigma_{\star}^{1/2}\|_{\op}}{1-\|(\bR\bQ^{-\top}-\bR_{\star})\bSigma_{\star}^{-1/2}\|_{\op}},
\end{align*}
where the last line uses Lemma~\ref{lemma:Weyl}.

Similarly, insert $\bL^{\top}\bL(\bL^{\top}\bL)^{-1}$, and use the relation $\|\bA\bB\|_{\op} \le \|\bA\|_{\op}\|\bB\|_{\op}$ to obtain
\begin{align*}
 \left\Vert \bSigma_{\star}^{1/2}\bar{\bQ}^{\top}\bQ^{-\top}\bSigma_{\star}^{1/2}-\bSigma_{\star}\right\Vert _{\op} &=\left\Vert \bSigma_{\star}^{1/2}(\bar{\bQ}^{\top}-\bQ^{\top})\bL^{\top}\bL(\bL^{\top}\bL)^{-1}\bQ^{-\top}\bSigma_{\star}^{1/2}\right\Vert _{\op}\\
 & \le\left\Vert \bL(\bar{\bQ}-\bQ)\bSigma_{\star}^{1/2}\right\Vert _{\op}\left\Vert \bL(\bL^{\top}\bL)^{-1}\bQ^{-\top}\bSigma_{\star}^{1/2}\right\Vert _{\op}\\
 & =\left\Vert \bL(\bar{\bQ}-\bQ)\bSigma_{\star}^{1/2}\right\Vert _{\op}\left\Vert \bL\bQ((\bL\bQ)^{\top}\bL\bQ)^{-1}\bSigma_{\star}^{1/2}\right\Vert _{\op}\\
 & \le\frac{\|\bL(\bar{\bQ}-\bQ)\bSigma_{\star}^{1/2}\|_{\op}}{1-\|(\bL\bQ-\bL_{\star})\bSigma_{\star}^{-1/2}\|_{\op}},
\end{align*}
where the last line uses Lemma~\ref{lemma:Weyl}. 
\end{proof}

\subsection{Partial Frobenius norm}

We introduce the partial Frobenius norm 
\begin{align}
\|\bX\|_{\fro,r}\coloneqq\sqrt{\sum_{i=1}^r \sigma_{i}^2(\bX)}=\|\cP_{r}(\bX)\|_{\fro}\label{eq:norm_Fr_def}
\end{align}
as the $\ell_{2}$ norm of the vector composed of the top-$r$ singular values of the matrix $\bX$, or equivalently as the Frobenius norm of the rank-$r$ approximation $\cP_{r}(\bX)$ defined in \eqref{eq:rank_r_proj}. 
It is straightforward to verify that $\|\cdot\|_{\fro,r}$ is a norm; see also \cite{mazeika2016singular}. The following lemma provides several equivalent and useful characterizations of this partial Frobenius norm. 
\begin{lemma}\label{lemma:norm_Fr_variation} For any $\bX\in\RR^{n_{1}\times n_{2}}$, one has
\begin{subequations}
\begin{align}
\|\bX\|_{\fro,r} & =\max_{\tilde{\bV}\in\RR^{n_{2}\times r}:\tilde{\bV}^{\top}\tilde{\bV}=\bI_{r}}\;\|\bX\tilde{\bV}\|_{\fro}\label{eq:norm_Fr_variation-1}\\
 & =\max_{\tilde{\bX}\in\RR^{n_{1}\times n_{2}}:\|\tilde{\bX}\|_{\fro}\le 1,\rank(\tilde{\bX})\le r}\;|\langle\bX,\tilde{\bX}\rangle|\label{eq:norm_Fr_variation-2}\\
 & =\max_{\tilde{\bR}\in\RR^{n_{2}\times r}:\|\tilde{\bR}\|_{\op}\le1}\;\|\bX\tilde{\bR}\|_{\fro}\label{eq:norm_Fr_variation-3}.
\end{align}
\end{subequations}
\end{lemma}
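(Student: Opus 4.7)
\textbf{Proof plan for Lemma~\ref{lemma:norm_Fr_variation}.} The plan is to prove the three characterizations in the natural order \eqref{eq:norm_Fr_variation-1} $\Rightarrow$ \eqref{eq:norm_Fr_variation-2} $\Rightarrow$ \eqref{eq:norm_Fr_variation-3}, reusing the previous identity in each step. Let $\bX=\bU\bSigma\bV^{\top}$ denote the full SVD of $\bX$ with singular values $\sigma_{1}(\bX)\ge\sigma_{2}(\bX)\ge\cdots\ge0$, and let $\bV_{r}\in\RR^{n_{2}\times r}$ collect its top-$r$ right singular vectors.

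For \eqref{eq:norm_Fr_variation-1}, first I would establish the lower bound by choosing $\tilde{\bV}=\bV_{r}$, which gives $\|\bX\bV_{r}\|_{\fro}^{2}=\sum_{i=1}^{r}\sigma_{i}^{2}(\bX)=\|\bX\|_{\fro,r}^{2}$. For the matching upper bound, I would write $\|\bX\tilde{\bV}\|_{\fro}^{2}=\tr(\tilde{\bV}^{\top}\bX^{\top}\bX\tilde{\bV})=\tr(\bX^{\top}\bX\cdot\tilde{\bV}\tilde{\bV}^{\top})$, recognize $\tilde{\bV}\tilde{\bV}^{\top}$ as an orthogonal projection of rank at most $r$, and invoke the standard fact (a direct consequence of Ky Fan's inequality) that the trace of a positive semidefinite matrix against a rank-$r$ projection is bounded by the sum of its top-$r$ eigenvalues, giving $\tr(\bX^{\top}\bX\cdot\tilde{\bV}\tilde{\bV}^{\top})\le\sum_{i=1}^{r}\sigma_{i}^{2}(\bX)=\|\bX\|_{\fro,r}^{2}$.

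For \eqref{eq:norm_Fr_variation-2}, the lower bound follows by taking $\tilde{\bX}=\cP_{r}(\bX)/\|\cP_{r}(\bX)\|_{\fro}$, for which $\langle\bX,\tilde{\bX}\rangle=\|\cP_{r}(\bX)\|_{\fro}=\|\bX\|_{\fro,r}$. For the upper bound, given any feasible $\tilde{\bX}$, I would use its compact SVD $\tilde{\bX}=\tilde{\bU}\tilde{\bD}\tilde{\bV}^{\top}$ with $\tilde{\bU},\tilde{\bV}$ having $r$ orthonormal columns, and compute
\begin{align*}
|\langle\bX,\tilde{\bX}\rangle|=|\tr(\tilde{\bD}\tilde{\bU}^{\top}\bX\tilde{\bV})|\le\|\tilde{\bD}\|_{\fro}\|\tilde{\bU}^{\top}\bX\tilde{\bV}\|_{\fro}\le\|\tilde{\bX}\|_{\fro}\|\bX\tilde{\bV}\|_{\fro}\le\|\bX\|_{\fro,r},
\end{align*}
where the first inequality is Cauchy--Schwarz, the second uses that left-multiplying by $\tilde{\bU}^{\top}$ (with orthonormal columns) is non-expansive in Frobenius norm, and the last step applies \eqref{eq:norm_Fr_variation-1}.

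For \eqref{eq:norm_Fr_variation-3}, the lower bound again comes from restricting to orthonormal $\tilde{\bR}$, which is feasible since such matrices have $\|\tilde{\bR}\|_{\op}=1$. For the upper bound I would take a compact SVD $\tilde{\bR}=\tilde{\bU}_{R}\tilde{\bD}_{R}\tilde{\bV}_{R}^{\top}$ with $\tilde{\bU}_{R}\in\RR^{n_{2}\times r}$ having orthonormal columns and $\tilde{\bD}_{R}=\diag(d_{1},\dots,d_{r})$ satisfying $0\le d_{i}\le1$, so that
\begin{align*}
\|\bX\tilde{\bR}\|_{\fro}^{2}=\tr(\tilde{\bD}_{R}^{2}\tilde{\bU}_{R}^{\top}\bX^{\top}\bX\tilde{\bU}_{R})=\sum_{i=1}^{r}d_{i}^{2}(\tilde{\bU}_{R}^{\top}\bX^{\top}\bX\tilde{\bU}_{R})_{ii}\le\tr(\tilde{\bU}_{R}^{\top}\bX^{\top}\bX\tilde{\bU}_{R})=\|\bX\tilde{\bU}_{R}\|_{\fro}^{2}\le\|\bX\|_{\fro,r}^{2},
\end{align*}
where the inner inequality uses $d_{i}^{2}\le1$ together with the non-negativity of the diagonal entries of the PSD matrix $\tilde{\bU}_{R}^{\top}\bX^{\top}\bX\tilde{\bU}_{R}$, and the final step again invokes \eqref{eq:norm_Fr_variation-1}. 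No step presents a real obstacle; the only subtle point is the appeal to Ky Fan / Eckart--Young in \eqref{eq:norm_Fr_variation-1}, which cleanly drives both subsequent reductions.
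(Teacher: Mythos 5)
Your proposal is correct and follows the same overall architecture as the paper---establish \eqref{eq:norm_Fr_variation-1} first and then leverage it---but two of the three steps go through genuinely different arguments. For \eqref{eq:norm_Fr_variation-1}, the paper invokes the extremal partial trace identity (and, in a remark, gives a self-contained induction on $r$ that peels off the top singular direction via Gram--Schmidt), whereas you write $\|\bX\tilde{\bV}\|_{\fro}^{2}=\tr(\bX^{\top}\bX\,\tilde{\bV}\tilde{\bV}^{\top})$ and bound the trace of a PSD matrix against a rank-$r$ orthogonal projection by the sum of its top-$r$ eigenvalues; this is a clean and standard route, though it relocates the nontrivial content into Ky Fan's maximum principle, so at the paper's level of rigor it is on par with citing the partial-trace identity rather than replacing it. Your proof of \eqref{eq:norm_Fr_variation-2} coincides with the paper's: compact SVD of $\tilde{\bX}$, Cauchy--Schwarz, and \eqref{eq:norm_Fr_variation-1}, with $\cP_{r}(\bX)/\|\cP_{r}(\bX)\|_{\fro}$ attaining the maximum (the degenerate case $\bX=\zero$ is trivial). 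For \eqref{eq:norm_Fr_variation-3}, you reduce directly back to \eqref{eq:norm_Fr_variation-1} by taking an SVD $\tilde{\bR}=\tilde{\bU}_{R}\tilde{\bD}_{R}\tilde{\bV}_{R}^{\top}$ and using $d_{i}\le1$ together with nonnegativity of the diagonal of the PSD matrix $\tilde{\bU}_{R}^{\top}\bX^{\top}\bX\tilde{\bU}_{R}$; the paper instead dualizes the Frobenius norm, $\|\bX\tilde{\bR}\|_{\fro}=\max_{\|\tilde{\bL}\|_{\fro}\le1}|\langle\bX,\tilde{\bL}\tilde{\bR}^{\top}\rangle|$, and reduces to \eqref{eq:norm_Fr_variation-2}. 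Both reductions are valid; yours makes the monotonicity in the singular values of $\tilde{\bR}$ explicit, while the paper's is shorter and reuses \eqref{eq:norm_Fr_variation-2} wholesale. In all three parts your extremizers ($\tilde{\bV}=\bV_{r}$, the normalized $\cP_{r}(\bX)$, and an orthonormal $\tilde{\bR}$) show the maxima are attained, so the use of ``max'' rather than ``sup'' is justified.
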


\begin{proof} The first representation \eqref{eq:norm_Fr_variation-1} follows immediately from the extremal partial trace identity; see \cite[Proposition~4.4]{mazeika2016singular}, by noticing the following relation
\begin{align*}
\sum_{i=1}^{r}\sigma_{i}^2(\bX) = \max_{\VV\subseteq\RR^{n_{2}}:\dim(\VV)=r}\;\tr\left(\bX^{\top}\bX\mid\VV\right) = \max_{\tilde{\bV}\in\RR^{n_2\times r}:\tilde{\bV}^{\top}\tilde{\bV}=\bI_{r}}\;\|\bX\tilde{\bV}\|_{\fro}^2.
\end{align*}
Here the partial trace over a vector space $\VV$ is defined as
\begin{align*}
\tr(\bX^{\top}\bX\mid\VV)\coloneqq\sum_{i=1}^{r}\tilde{\bv}_{i}^{\top}\bX^{\top}\bX\tilde{\bv}_{i},
\end{align*}
where $\{\tilde{\bv}_{i}\}_{1\le i \le r}$ is any orthonormal basis of $\VV$. The partial trace is invariant to the choice of orthonormal basis and therefore well-defined.

To prove the second representation \eqref{eq:norm_Fr_variation-2}, for any $\tilde{\bX}\in\RR^{n_{1}\times n_{2}}$ obeying $\rank(\tilde{\bX})\le r$ and $\|\tilde{\bX}\|_{\fro}\le 1$, denoting $\tilde{\bX}=\tilde{\bU}\tilde{\bSigma}\tilde{\bV}^{\top}$ as its compact SVD, one has
\begin{align*}
|\langle\bX,\tilde{\bX}\rangle| = |\langle\bX,\tilde{\bU}\tilde{\bSigma}\tilde{\bV}^{\top}\rangle| = |\langle\bX\tilde{\bV},\tilde{\bU}\tilde{\bSigma}\rangle| \le \|\bX\tilde{\bV}\|_{\fro}\|\tilde{\bU}\tilde{\bSigma}\|_{\fro} \le \|\bX\|_{\fro,r},
\end{align*}
where the last inequality follows from \eqref{eq:norm_Fr_variation-1}.
In addition, the maximum in \eqref{eq:norm_Fr_variation-2} is attained at $\tilde{\bX}=\cP_{r}(\bX)/\|\cP_{r}(\bX)\|_{\fro}$.

To prove the third representation \eqref{eq:norm_Fr_variation-3}, for any $\tilde{\bR}\in\RR^{n_{2}\times r}$ obeying $\|\tilde{\bR}\|_{\op}\le 1$, combine the variational representation of the Frobenius norm and \eqref{eq:norm_Fr_variation-2} to obtain
\begin{align*}
\|\bX\tilde{\bR}\|_{\fro} & =\max_{\tilde{\bL}\in\RR^{n_{1}\times n_{2}}:\|\tilde{\bL}\|_{\fro}\le1}\;|\langle\bX\tilde{\bR},\tilde{\bL}\rangle| \\
 & =\max_{\tilde{\bL}\in\RR^{n_{1}\times n_{2}}:\|\tilde{\bL}\|_{\fro}\le1}\;|\langle\bX,\tilde{\bL}\tilde{\bR}^{\top}\rangle|\le\|\bX\|_{\fro,r},
\end{align*}
where the last inequality follows from \eqref{eq:norm_Fr_variation-2}. In addition, the maximum in \eqref{eq:norm_Fr_variation-3} is attained at $\tilde{\bR}=\bV$, where $\bV$ denotes the top-$r$ right singular vectors of $\bX$.
\end{proof}

\begin{remark} For self-completeness, we also provide a detailed proof of the first representation \eqref{eq:norm_Fr_variation-1}. This proof is inductive on $r$. When $r=1$, we have
\begin{align*}
\sigma_{1}(\bX)=\|\bX\bv_{1}\|_{2}=\max_{\tilde{\bv}\in\RR^{n_2}:\|\tilde{\bv}\|_{2}=1}\;\|\bX\tilde{\bv}\|_{2},
\end{align*}
where $\bv_{1}$ denotes the top right singular vector of $\bX$. 
Assume that the statement holds for $\|\cdot\|_{\fro,r-1}$. Now consider $\|\cdot\|_{\fro,r}$. For any $\tilde{\bV}\in\RR^{n_{2}\times r}$ such that $\tilde{\bV}^{\top}\tilde{\bV}=\bI_{r}$, we can first pick $\tilde{\bv}_{2},\dots,\tilde{\bv}_{r}$ as a set of orthonormal vectors in the column space of $\tilde{\bV}$ that are orthogonal to $\bv_{1}$, and then pick $\tilde{\bv}_{1}$ via the Gram-Schmidt process, so that $\{\tilde{\bv}_i\}_{i=1}^r$ provides an orthonormal basis of the column space of $\tilde{\bV}$. Further, by the orthogonality of $\tilde{\bV}$, there exists an orthonormal matrix $\bO$ such that
\begin{align*}
\tilde{\bV}=[\tilde{\bv}_{1},\dots,\tilde{\bv}_{r}]\bO.
\end{align*}
Combining this formula with the induction hypothesis yields
\begin{align*}
\|\bX\tilde{\bV}\|_{\fro}^2 & =\|\bX[\tilde{\bv}_{1},\dots,\tilde{\bv}_{r}]\|_{\fro}^2 \\
& =\|\bX\tilde{\bv}_{1}\|_{2}^2+\|\bX[\tilde{\bv}_{2},\dots,\tilde{\bv}_{r}]\|_{\fro}^2 \\
& =\|\bX\tilde{\bv}_{1}\|_{2}^2+\|(\bX-\cP_{1}(\bX))[\tilde{\bv}_{2},\dots,\tilde{\bv}_{r}]\|_{\fro}^2 \\
& \le\sigma_{1}^2(\bX) + \|\bX-\cP_{1}(\bX)\|_{\fro,r-1}^2 \\
& =\sum_{i=1}^{r}\sigma_{i}^2(\bX)=\|\bX\|_{\fro,r}^2,
\end{align*}
where the first line holds since $\bO$ is orthonormal, the third line holds since $\cP_{1}(\bX)[\tilde{\bv}_{2},\dots,\tilde{\bv}_{r}]=\zero$, the fourth line follows from the induction hypothesis, and the last line follows from the definition \eqref{eq:norm_Fr_def}. 
In addition, the maximum in \eqref{eq:norm_Fr_variation-1} is attained at $\tilde{\bV}=\bV$, where $\bV$ denotes the top-$r$ right singular vectors of $\bX$. This finishes the proof.
\end{remark}

Recall that $\cP_{r}(\bX)$ denotes the best rank-$r$ approximation of $\bX$ under the Frobenius norm. It turns out that $\cP_{r}(\bX)$ is also the best rank-$r$ approximation of $\bX$ under the partial Frobenius norm $\|\cdot\|_{\fro,r}$. This claim is formally stated below; see also \cite[Theorem 4.21]{mazeika2016singular}.
\begin{lemma}\label{lemma:norm_Fr_Eckart-Yang-Mirsky} Fix any $\bX\in\RR^{n_1\times n_2}$ and recall the definition of $\cP_{r}(\bX)$ in \eqref{eq:rank_r_proj}. One has
\begin{align*}
\cP_{r}(\bX)=\argmin_{\tilde{\bX}\in\RR^{n_1\times n_2}:\rank(\tilde{\bX})\le r}\;\|\bX-\tilde{\bX}\|_{\fro,r}.
\end{align*}
\end{lemma}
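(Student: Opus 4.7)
The plan is to reduce the claim to a clean application of Weyl's inequality for singular values. First, observe that by the Eckart--Young--Mirsky theorem applied to $\bX$, the matrix $\bX - \cP_{r}(\bX)$ has singular values $\sigma_{r+1}(\bX), \sigma_{r+2}(\bX), \dots$, so
\begin{align*}
\|\bX - \cP_{r}(\bX)\|_{\fro,r}^{2} = \sum_{k=1}^{r} \sigma_{r+k}^{2}(\bX),
\end{align*}
which serves as the target benchmark.

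Next, I would take an arbitrary $\tilde{\bX} \in \RR^{n_{1}\times n_{2}}$ with $\rank(\tilde{\bX}) \le r$ and bound $\|\bX - \tilde{\bX}\|_{\fro,r}^{2} = \sum_{k=1}^{r} \sigma_{k}^{2}(\bX - \tilde{\bX})$ from below termwise. Writing $\bX = (\bX - \tilde{\bX}) + \tilde{\bX}$ and applying Weyl's inequality $\sigma_{i+j-1}(\bA + \bB) \le \sigma_{i}(\bA) + \sigma_{j}(\bB)$ with $i = k$, $j = r+1$, $\bA = \bX - \tilde{\bX}$, $\bB = \tilde{\bX}$ yields
\begin{align*}
\sigma_{k+r}(\bX) \le \sigma_{k}(\bX - \tilde{\bX}) + \sigma_{r+1}(\tilde{\bX}) = \sigma_{k}(\bX - \tilde{\bX}),
\end{align*}
since $\sigma_{r+1}(\tilde{\bX}) = 0$ by the rank constraint. (When $k + r$ exceeds $\min(n_{1}, n_{2})$, the left-hand side is zero and the inequality is trivial.)

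Finally, squaring and summing the above inequality over $k = 1, \dots, r$ gives
\begin{align*}
\|\bX - \tilde{\bX}\|_{\fro,r}^{2} = \sum_{k=1}^{r} \sigma_{k}^{2}(\bX - \tilde{\bX}) \ge \sum_{k=1}^{r} \sigma_{k+r}^{2}(\bX) = \|\bX - \cP_{r}(\bX)\|_{\fro,r}^{2},
\end{align*}
which shows $\cP_{r}(\bX)$ is a minimizer. There is no real obstacle here: the whole argument is a one-shot application of Weyl's bound, with the rank constraint on $\tilde{\bX}$ doing all the work by killing the $\sigma_{r+1}(\tilde{\bX})$ term. The only care needed is the edge case $k + r > \min(n_{1}, n_{2})$, handled by the convention $\sigma_{i}(\cdot) = 0$ for $i > \min(n_{1}, n_{2})$.
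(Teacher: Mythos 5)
Your proposal is correct and follows essentially the same argument as the paper: both apply Weyl's inequality $\sigma_{k+r}(\bX)\le\sigma_{k}(\bX-\tilde{\bX})+\sigma_{r+1}(\tilde{\bX})$ with the rank constraint killing the second term, then sum the squared inequalities over $k=1,\dots,r$ and note that $\cP_{r}(\bX)$ attains the resulting lower bound while having rank at most $r$. No gaps.
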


\begin{proof} For any $\tilde{\bX}$ of rank at most $r$, invoke Weyl's inequality to obtain 
$\sigma_{r+i}(\bX)\le\sigma_{i}(\bX-\tilde{\bX})+\sigma_{r+1}(\tilde{\bX})=\sigma_{i}(\bX-\tilde{\bX})$, for $i=1,\dots,r$. Thus one has
\begin{align*}
\|\bX-\cP_{r}(\bX)\|_{\fro,r}^2=\sum_{i=1}^{r}\sigma_{r+i}^2(\bX)\le\sum_{i=1}^{r}\sigma_{i}^2(\bX-\tilde{\bX})=\|\bX-\tilde{\bX}\|_{\fro,r}^2.
\end{align*}
The proof is finished by observing that the rank of $\cP_{r}(\bX)$ is at most $r$.
\end{proof}

\section{Proof for Low-Rank Matrix Factorization}

\subsection{Proof of Proposition~\ref{prop:vectorize}}\label{subsec:quasi_hessian}

The gradients of $\cL(\bF)$ in \eqref{eq:loss_MF} with respect to $\bL$ and $\bR$ are given as 
\begin{align*}
\nabla_{\bL}\cL(\bF)=(\bL\bR^{\top}-\bX_{\star})\bR,\quad \nabla_{\bR}\cL(\bF)=(\bL\bR^{\top}-\bX_{\star})^{\top}\bL,
\end{align*}
which can be used to compute the Hessian with respect to $\bL$ and $\bR$. Writing for the vectorized variables, the Hessians are given as 
\begin{align*}
\nabla_{\bL,\bL}^{2}\cL(\bF)=(\bR^{\top}\bR)\otimes\bI_{n_{1}},\quad \nabla_{\bR,\bR}^{2}\cL(\bF)=(\bL^{\top}\bL)\otimes\bI_{n_{2}}.
\end{align*}
Viewed in the vectorized form, the \texttt{ScaledGD} update in \eqref{eq:scaledGD} can be rewritten as 
\begin{align*}
\vc(\bL_{t+1}) & =\vc(\bL_{t})-\eta((\bR_{t}^{\top}\bR_{t})^{-1}\otimes\bI_{n_{1}})\vc((\bL_{t}\bR_{t}^{\top}-\bX_{\star})\bR_{t})\\
 & =\vc(\bL_{t})-\eta(\nabla_{\bL,\bL}^{2}\cL(\bF_{t}))^{-1}\vc(\nabla_{\bL}\cL(\bF_{t})),\\
\vc(\bR_{t+1}) & =\vc(\bR_{t})-\eta((\bL_{t}^{\top}\bL_{t})^{-1}\otimes\bI_{n_{2}})\vc((\bL_{t}\bR_{t}^{\top}-\bX_{\star})^{\top}\bL_{t})\\
 & =\vc(\bR_{t})-\eta(\nabla_{\bR,\bR}^{2}\cL(\bF_{t}))^{-1}\vc(\nabla_{\bR}\cL(\bF_{t})).
\end{align*}

\subsection{Proof of Theorem~\ref{thm:MF}}\label{subsec:proof_MF}

The proof is inductive in nature. More specifically, we intend to show that for all $t\ge0$, 
\begin{enumerate}
\item $\dist(\bF_{t},\bF_{\star})\le(1-0.7\eta)^{t}\dist(\bF_{0},\bF_{\star})\le0.1(1-0.7\eta)^{t}\sigma_{r}(\bX_{\star})$, and
\item the optimal alignment matrix $\bQ_{t}$ between $\bF_{t}$ and $\bF_{\star}$ exists.
\end{enumerate}
For the base case, i.e.~$t=0$, the first induction hypothesis trivially holds, while the second also holds true in view of Lemma~\ref{lemma:Q_existence} and the assumption that $\dist(\bF_{0},\bF_{\star})\le0.1\sigma_{r}(\bX_{\star})$.
We therefore concentrate on the induction step. Suppose that the $t$-th iterate $\bF_{t}$ obeys the aforementioned induction hypotheses. Our goal is to show that $\bF_{t+1}$ continues to satisfy those.

For notational convenience, denote $\bL\coloneqq\bL_{t}\bQ_{t}$, $\bR\coloneqq\bR_{t}\bQ_{t}^{-\top}$, $\bDelta_{L}\coloneqq\bL-\bL_{\star}$, $\bDelta_{R}\coloneqq\bR-\bR_{\star}$, and $\epsilon\coloneqq0.1$. By the definition of $\dist(\bF_{t+1},\bF_{\star})$, one has
\begin{align}
\dist^{2}(\bF_{t+1},\bF_{\star}) & \le\left\Vert (\bL_{t+1}\bQ_{t}-\bL_{\star})\bSigma_{\star}^{1/2}\right\Vert _{\fro}^{2}+\left\Vert (\bR_{t+1}\bQ_{t}^{-\top}-\bR_{\star})\bSigma_{\star}^{1/2}\right\Vert _{\fro}^{2},\label{eq:MF_expand}
\end{align}
where we recall that $\bQ_{t}$ is the optimal alignment matrix between $\bF_{t}$ and $\bF_{\star}$. Utilize the \texttt{ScaledGD} update rule \eqref{eq:scaledGD_MF} and the decomposition $\bL\bR^{\top}-\bX_{\star}=\bDelta_{L}\bR^{\top}+\bL_{\star}\bDelta_{R}^{\top}$ to obtain 
\begin{align*}
(\bL_{t+1}\bQ_{t}-\bL_{\star})\bSigma_{\star}^{1/2} & =\left(\bL-\eta(\bL\bR^{\top}-\bX_{\star})\bR(\bR^{\top}\bR)^{-1}-\bL_{\star}\right)\bSigma_{\star}^{1/2}\\
 & =\left(\bDelta_{L}-\eta(\bDelta_{L}\bR^{\top}+\bL_{\star}\bDelta_{R}^{\top})\bR(\bR^{\top}\bR)^{-1}\right)\bSigma_{\star}^{1/2}\\
 & =(1-\eta)\bDelta_{L}\bSigma_{\star}^{1/2}-\eta\bL_{\star}\bDelta_{R}^{\top}\bR(\bR^{\top}\bR)^{-1}\bSigma_{\star}^{1/2}.
\end{align*}
As a result, one can expand the first square in \eqref{eq:MF_expand} as
\begin{align}
\left\Vert (\bL_{t+1}\bQ_{t}-\bL_{\star})\bSigma_{\star}^{1/2}\right\Vert _{\fro}^{2} & =(1-\eta)^{2}\tr\left(\bDelta_{L}\bSigma_{\star}\bDelta_{L}^{\top}\right)-2\eta(1-\eta)\underbrace{\tr\left(\bL_{\star}\bDelta_{R}^{\top}\bR(\bR^{\top}\bR)^{-1}\bSigma_{\star}\bDelta_{L}^{\top}\right)}_{\mfk{M}_{1}}\nonumber \\
 & \quad+\eta^{2}\underbrace{\left\Vert \bL_{\star}\bDelta_{R}^{\top}\bR(\bR^{\top}\bR)^{-1}\bSigma_{\star}^{1/2}\right\Vert _{\fro}^{2}}_{\mfk{M}_{2}}.\label{eq:MF_Lt}
\end{align}
The first term $\tr(\bDelta_{L}\bSigma_{\star}\bDelta_{L}^{\top})$ is closely related to $\dist(\bF_{t},\bF_{\star})$, and hence our focus will be on relating $\mfk{M}_{1}$ and $\mfk{M}_{2}$ to $\dist(\bF_{t},\bF_{\star})$.
We start with the term $\mfk{M}_{1}$. Since $\bL$ and $\bR$ are aligned with $\bL_{\star}$ and $\bR_{\star}$, Lemma~\ref{lemma:Q_criterion} tells that $\bSigma_{\star}\bDelta_{L}^{\top}\bL=\bR^{\top}\bDelta_{R}\bSigma_{\star}$.
This together with $\bL_{\star}=\bL-\bDelta_{L}$ allows us to rewrite $\mfk{M}_{1}$ as 
\begin{align*}
\mfk{M}_{1} & =\tr\left(\bR(\bR^{\top}\bR)^{-1}\bSigma_{\star}\bDelta_{L}^{\top}\bL_{\star}\bDelta_{R}^{\top}\right)\\
 & =\tr\left(\bR(\bR^{\top}\bR)^{-1}\bSigma_{\star}\bDelta_{L}^{\top}\bL\bDelta_{R}^{\top}\right)-\tr\left(\bR(\bR^{\top}\bR)^{-1}\bSigma_{\star}\bDelta_{L}^{\top}\bDelta_{L}\bDelta_{R}^{\top}\right)\\
 & =\tr\left(\bR(\bR^{\top}\bR)^{-1}\bR^{\top}\bDelta_{R}\bSigma_{\star}\bDelta_{R}^{\top}\right)-\tr\left(\bR(\bR^{\top}\bR)^{-1}\bSigma_{\star}\bDelta_{L}^{\top}\bDelta_{L}\bDelta_{R}^{\top}\right).
\end{align*}
Moving on to $\mfk{M}_{2}$, we can utilize the fact $\bL_{\star}^{\top}\bL_{\star}=\bSigma_{\star}$ and the decomposition $\bSigma_{\star}=\bR^{\top}\bR-(\bR^{\top}\bR-\bSigma_{\star})$ to obtain 
\begin{align*}
\mfk{M}_{2} & =\tr\left(\bR(\bR^{\top}\bR)^{-1}\bSigma_{\star}(\bR^{\top}\bR)^{-1}\bR^{\top}\bDelta_{R}\bSigma_{\star}\bDelta_{R}^{\top}\right)\\
 & =\tr\left(\bR(\bR^{\top}\bR)^{-1}\bR^{\top}\bDelta_{R}\bSigma_{\star}\bDelta_{R}^{\top}\right)-\tr\left(\bR(\bR^{\top}\bR)^{-1}(\bR^{\top}\bR-\bSigma_{\star})(\bR^{\top}\bR)^{-1}\bR^{\top}\bDelta_{R}\bSigma_{\star}\bDelta_{R}^{\top}\right).
\end{align*}
Putting $\mfk{M}_{1}$ and $\mfk{M}_{2}$ back to \eqref{eq:MF_Lt} yields 
\begin{align*}
\left\Vert (\bL_{t+1}\bQ_{t}-\bL_{\star})\bSigma_{\star}^{1/2}\right\Vert _{\fro}^{2} & =(1-\eta)^{2}\tr\left(\bDelta_{L}\bSigma_{\star}\bDelta_{L}^{\top}\right)-\eta(2-3\eta)\underbrace{\tr\left(\bR(\bR^{\top}\bR)^{-1}\bR^{\top}\bDelta_{R}\bSigma_{\star}\bDelta_{R}^{\top}\right)}_{\mfk{F}_{1}}\\
 & \quad+2\eta(1-\eta)\underbrace{\tr\left(\bR(\bR^{\top}\bR)^{-1}\bSigma_{\star}\bDelta_{L}^{\top}\bDelta_{L}\bDelta_{R}^{\top}\right)}_{\mfk{F}_{2}}\\
 & \quad-\eta^{2}\underbrace{\tr\left(\bR(\bR^{\top}\bR)^{-1}(\bR^{\top}\bR-\bSigma_{\star})(\bR^{\top}\bR)^{-1}\bR^{\top}\bDelta_{R}\bSigma_{\star}\bDelta_{R}^{\top}\right)}_{\mfk{F}_{3}}.
\end{align*}
In what follows, we will control the three terms $\mfk{F}_{1},\mfk{F}_{2}$ and $\mfk{F}_{3}$ separately. 
\begin{enumerate}
\item Notice that $\mfk{F}_{1}$ is the inner product of two positive semi-definite matrices $\bR(\bR^{\top}\bR)^{-1}\bR^{\top}$ and $\bDelta_{R}\bSigma_{\star}\bDelta_{R}^{\top}$. Consequently we have $\mfk{F}_{1}\ge0$.
\item To control $\mfk{F}_{2}$, we need certain control on $\|\bDelta_{L}\bSigma_{\star}^{-1/2}\|_{\op}$ and $\|\bDelta_{R}\bSigma_{\star}^{-1/2}\|_{\op}$. The first induction hypothesis 
\begin{align*}
\dist(\bF_{t},\bF_{\star})=\sqrt{\|\bDelta_{L}\bSigma_{\star}^{-1/2}\bSigma_{\star}\|_{\fro}^{2}+\|\bDelta_{R}\bSigma_{\star}^{-1/2}\bSigma_{\star}\|_{\fro}^{2}} & \le\epsilon\sigma_{r}(\bX_{\star})
\end{align*}
together with the relation $\|\bA\bB\|_{\fro}\ge\|\bA\|_{\fro}\sigma_{r}(\bB)$ tells that 
\begin{align*}
\sqrt{\|\bDelta_{L}\bSigma_{\star}^{-1/2}\|_{\fro}^{2}+\|\bDelta_{R}\bSigma_{\star}^{-1/2}\|_{\fro}^{2}}\;\sigma_{r}(\bX_{\star})\le\epsilon\sigma_{r}(\bX_{\star}).
\end{align*}
In light of the relation $\|\bA\|_{\op}\le\|\bA\|_{\fro}$, this further implies 
\begin{align}
\|\bDelta_{L}\bSigma_{\star}^{-1/2}\|_{\op}\vee\|\bDelta_{R}\bSigma_{\star}^{-1/2}\|_{\op}\le\epsilon.\label{eq:cond_MF}
\end{align}
Invoke Lemma~\ref{lemma:Weyl} to see 
\begin{align*}
\left\Vert \bR(\bR^{\top}\bR)^{-1}\bSigma_{\star}^{1/2}\right\Vert _{\op} & \le \frac{1}{1-\epsilon}.
\end{align*}
With these consequences, one can bound $|\mfk{F}_{2}|$ by
\begin{align*}
|\mfk{F}_{2}| & =\Big|\tr\left(\bSigma_{\star}^{-1/2}\bDelta_{R}^{\top}\bR(\bR^{\top}\bR)^{-1}\bSigma_{\star}\bDelta_{L}^{\top}\bDelta_{L}\bSigma_{\star}^{1/2}\right)\Big|\\
 & \le\left\Vert \bSigma_{\star}^{-1/2}\bDelta_{R}^{\top}\bR(\bR^{\top}\bR)^{-1}\bSigma_{\star}^{1/2}\right\Vert _{\op}\tr\left(\bSigma_{\star}^{1/2}\bDelta_{L}^{\top}\bDelta_{L}\bSigma_{\star}^{1/2}\right)\\
 & \le\|\bDelta_{R}\bSigma_{\star}^{-1/2}\|_{\op}\left\Vert \bR(\bR^{\top}\bR)^{-1}\bSigma_{\star}^{1/2}\right\Vert _{\op}\tr\left(\bDelta_{L}\bSigma_{\star}\bDelta_{L}^{\top}\right)\\
 & \le\frac{\epsilon}{1-\epsilon}\tr\left(\bDelta_{L}\bSigma_{\star}\bDelta_{L}^{\top}\right).
\end{align*}

\item Similarly, one can bound $|\mfk{F}_{3}|$ by 
\begin{align*}
|\mfk{F}_{3}| & \le\left\Vert \bR(\bR^{\top}\bR)^{-1}(\bR^{\top}\bR-\bSigma_{\star})(\bR^{\top}\bR)^{-1}\bR^{\top}\right\Vert _{\op}\tr\left(\bDelta_{R}\bSigma_{\star}\bDelta_{R}^{\top}\right)\\
 & \le\left\Vert \bR(\bR^{\top}\bR)^{-1}\bSigma_{\star}^{1/2}\right\Vert _{\op}^{2}\left\Vert \bSigma_{\star}^{-1/2}(\bR^{\top}\bR-\bSigma_{\star})\bSigma_{\star}^{-1/2}\right\Vert _{\op}\tr\left(\bDelta_{R}\bSigma_{\star}\bDelta_{R}^{\top}\right)\\
 & \le\frac{1}{(1-\epsilon)^{2}}\left\Vert \bSigma_{\star}^{-1/2}(\bR^{\top}\bR-\bSigma_{\star})\bSigma_{\star}^{-1/2}\right\Vert _{\op}\tr\left(\bDelta_{R}\bSigma_{\star}\bDelta_{R}^{\top}\right).
\end{align*}
Further notice that 
\begin{align*}
\left\Vert \bSigma_{\star}^{-1/2}(\bR^{\top}\bR-\bSigma_{\star})\bSigma_{\star}^{-1/2}\right\Vert _{\op} & =\left\Vert \bSigma_{\star}^{-1/2}(\bR_{\star}^{\top}\bDelta_{R}+\bDelta_{R}^{\top}\bR_{\star}+\bDelta_{R}^{\top}\bDelta_{R})\bSigma_{\star}^{-1/2}\right\Vert _{\op}\\
 & \le2\|\bDelta_{R}\bSigma_{\star}^{-1/2}\|_{\op}+\|\bDelta_{R}\bSigma_{\star}^{-1/2}\|_{\op}^{2}\\
 & \le2\epsilon+\epsilon^{2}.
\end{align*}
Take the preceding two bounds together to arrive at 
\begin{align*}
|\mfk{F}_{3}|\le\frac{2\epsilon+\epsilon^{2}}{(1-\epsilon)^{2}}\tr\left(\bDelta_{R}\bSigma_{\star}\bDelta_{R}^{\top}\right).
\end{align*}
\end{enumerate}

Combining the bounds for $\mfk{F}_{1},\mfk{F}_{2},\mfk{F}_{3}$, one has 
\begin{align}
 & \left\Vert (\bL_{t+1}\bQ_{t}-\bL_{\star})\bSigma_{\star}^{1/2}\right\Vert _{\fro}^{2}=\left\Vert (1-\eta)\bDelta_{L}\bSigma_{\star}^{1/2}-\eta\bL_{\star}\bDelta_{R}^{\top}\bR(\bR^{\top}\bR)^{-1}\bSigma_{\star}^{1/2}\right\Vert _{\fro}^{2}\nonumber \\
 & \quad\le\left((1-\eta)^{2}+\frac{2\epsilon}{1-\epsilon}\eta(1-\eta)\right)\tr\left(\bDelta_{L}\bSigma_{\star}\bDelta_{L}^{\top}\right)+\frac{2\epsilon+\epsilon^{2}}{(1-\epsilon)^{2}}\eta^{2}\tr\left(\bDelta_{R}\bSigma_{\star}\bDelta_{R}^{\top}\right).\label{eq:MF_Lt_bound}
\end{align}
A similarly bound holds for the second square $\|(\bR_{t+1}\bQ_{t}-\bR_{\star})\bSigma_{\star}^{1/2}\|_{\fro}^{2}$ in \eqref{eq:MF_expand}. Therefore we obtain 
\begin{align*}
 & \left\Vert (\bL_{t+1}\bQ_{t}-\bL_{\star})\bSigma_{\star}^{1/2}\right\Vert _{\fro}^{2}+\left\Vert (\bR_{t+1}\bQ_{t}^{-\top}-\bR_{\star})\bSigma_{\star}^{1/2}\right\Vert _{\fro}^{2}\le\rho^{2}(\eta;\epsilon)\dist^{2}(\bF_{t},\bF_{\star}),
\end{align*}
where we identify 
\begin{align}
\dist^{2}(\bF_{t},\bF_{\star})=\tr(\bDelta_{L}\bSigma_{\star}\bDelta_{L}^{\top})+\tr(\bDelta_{R}\bSigma_{\star}\bDelta_{R}^{\top})\label{eq:dist_Ft}
\end{align}
and the contraction rate $\rho^{2}(\eta;\epsilon)$ is given by 
\begin{align*}
 & \rho^{2}(\eta;\epsilon)\coloneqq(1-\eta)^{2}+\frac{2\epsilon}{1-\epsilon}\eta(1-\eta)+\frac{2\epsilon+\epsilon^{2}}{(1-\epsilon)^{2}}\eta^2.
\end{align*}
With $\epsilon=0.1$ and $0<\eta\le2/3$, one has $\rho(\eta;\epsilon)\le1-0.7\eta$. Thus we conclude that 
\begin{align*}
\dist(\bF_{t+1},\bF_{\star}) & \le\sqrt{\left\Vert (\bL_{t+1}\bQ_{t}-\bL_{\star})\bSigma_{\star}^{1/2}\right\Vert _{\fro}^{2}+\left\Vert (\bR_{t+1}\bQ_{t}^{-\top}-\bR_{\star})\bSigma_{\star}^{1/2}\right\Vert _{\fro}^{2}}\\
 & \le(1-0.7\eta)\dist(\bF_{t},\bF_{\star})\\
 & \le(1-0.7\eta)^{t+1}\dist(\bF_{0},\bF_{\star})\le(1-0.7\eta)^{t+1}0.1\sigma_{r}(\bX_{\star}).
\end{align*}
This proves the first induction hypothesis. The existence of the optimal alignment matrix $\bQ_{t+1}$ between $\bF_{t+1}$ and $\bF_{\star}$ is assured by Lemma~\ref{lemma:Q_existence}, which finishes the proof for the second hypothesis.

So far, we have demonstrated the first conclusion in the theorem. The second conclusion is an easy consequence of Lemma~\ref{lemma:matrix2factor} as
\begin{align}
\begin{split}
\left\Vert \bL_{t}\bR_{t}^{\top}-\bX_{\star}\right\Vert _{\fro} & \le\left(1+\frac{\epsilon}{2}\right)\left(\|\bDelta_{L}\bSigma_{\star}^{1/2}\|_{\fro}+\|\bDelta_{R}\bSigma_{\star}^{1/2}\|_{\fro}\right)\\
 & \le\left(1+\frac{\epsilon}{2}\right)\sqrt{2}\dist(\bF_{t},\bF_{\star})\\
 & \le1.5\dist(\bF_{t},\bF_{\star}).
 \end{split}\label{eq:dist_matrix}
\end{align}
Here, the second line follows from the elementary inequality $a+b\le\sqrt{2(a^{2}+b^{2})}$ and the expression of $\dist(\bF_{t},\bF_{\star})$ in \eqref{eq:dist_Ft}.
The proof is now completed.

\section{Proof for Low-Rank Matrix Sensing}\label{sec:proof_MS}

We start by recording a useful lemma. 
\begin{lemma}[\cite{candes2011tight}]\label{lemma:RIP_distance} Suppose that $\cA(\cdot)$ obeys the $2r$-RIP with a constant $\delta_{2r}$. Then for any $\bX_{1},\bX_{2}\in\RR^{n_{1}\times n_{2}}$ of rank at most $r$, one has
\begin{align*}
\left|\langle\cA(\bX_{1}),\cA(\bX_{2})\rangle-\langle\bX_{1},\bX_{2}\rangle\right|\le\delta_{2r}\|\bX_{1}\|_{\fro}\|\bX_{2}\|_{\fro},
\end{align*}
which can be stated equivalently as 
\begin{align}
\left|\tr\left((\cA^{*}\cA-\cI)(\bX_{1})\bX_{2}^{\top}\right)\right|\le\delta_{2r}\|\bX_{1}\|_{\fro}\|\bX_{2}\|_{\fro}.\label{eq:2r-RIP}
\end{align}
\end{lemma}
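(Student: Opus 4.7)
The plan is to use a standard polarization argument. Since both $\bX_1$ and $\bX_2$ have rank at most $r$, the sum $\bX_1 + \bX_2$ and difference $\bX_1 - \bX_2$ have rank at most $2r$, so the $2r$-RIP applies to them. Combined with the parallelogram/polarization identity, this will directly yield a symmetric bilinear bound.

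By bilinearity of both $\langle \cA(\bX_1),\cA(\bX_2)\rangle$ and $\langle\bX_1,\bX_2\rangle$ in each argument, I can normalize and assume without loss of generality that $\|\bX_1\|_{\fro}=\|\bX_2\|_{\fro}=1$ (the zero case is trivial). Then the polarization identity gives
\begin{align*}
\langle\cA(\bX_1),\cA(\bX_2)\rangle - \langle\bX_1,\bX_2\rangle = \tfrac{1}{4}\Big[(\|\cA(\bX_1+\bX_2)\|_2^2-\|\bX_1+\bX_2\|_{\fro}^2) - (\|\cA(\bX_1-\bX_2)\|_2^2-\|\bX_1-\bX_2\|_{\fro}^2)\Big].
\end{align*}
Applying the $2r$-RIP to the rank-$2r$ matrices $\bX_1 \pm \bX_2$ bounds each parenthesized term in absolute value by $\delta_{2r}\|\bX_1\pm\bX_2\|_{\fro}^2$, so the triangle inequality and the parallelogram identity $\|\bX_1+\bX_2\|_{\fro}^2 + \|\bX_1-\bX_2\|_{\fro}^2 = 2(\|\bX_1\|_{\fro}^2+\|\bX_2\|_{\fro}^2) = 4$ yield
\begin{align*}
\big|\langle\cA(\bX_1),\cA(\bX_2)\rangle - \langle\bX_1,\bX_2\rangle\big| \le \tfrac{\delta_{2r}}{4}\cdot 4 = \delta_{2r} = \delta_{2r}\|\bX_1\|_{\fro}\|\bX_2\|_{\fro}.
\end{align*}
Undoing the normalization by homogeneity gives the claimed inequality in general. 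The equivalent trace formulation then follows immediately from $\langle\cA(\bX_1),\cA(\bX_2)\rangle = \langle \cA^*\cA(\bX_1),\bX_2\rangle = \tr((\cA^*\cA)(\bX_1)\bX_2^\top)$ and $\langle\bX_1,\bX_2\rangle = \tr(\bX_1\bX_2^\top)$.

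There is no real obstacle here; the only thing to be slightly careful about is that $\bX_1\pm\bX_2$ may fail to be rank-exactly-$2r$ but is always of rank at most $2r$, which is exactly what the $2r$-RIP hypothesis requires. The whole argument is a two-line polarization computation once one recognizes that the rank budget $2r$ is precisely what allows the sum and difference to fit inside the RIP hypothesis.
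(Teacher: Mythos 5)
Your proof is correct. Note that the paper does not actually prove this lemma—it is imported verbatim from the cited reference \cite{candes2011tight}—so there is no in-paper argument to compare against; your polarization argument is precisely the standard proof of that cited result. The two points that could trip this argument up are both handled properly: you normalize $\|\bX_{1}\|_{\fro}=\|\bX_{2}\|_{\fro}=1$ \emph{before} polarizing (applying polarization directly to unnormalized matrices only yields the weaker bound $\tfrac{\delta_{2r}}{2}(\|\bX_{1}\|_{\fro}^{2}+\|\bX_{2}\|_{\fro}^{2})$, and one then needs homogeneity to recover the product form), and you observe that $\bX_{1}\pm\bX_{2}$ has rank at most $2r$, which is exactly what the $2r$-RIP hypothesis covers. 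The passage to the trace form is also consistent with the paper's convention $\langle\bA,\bB\rangle=\tr(\bA\bB^{\top})$.
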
 
As a simple corollary, one has that for any matrix $\bR\in\RR^{n_{2}\times r}$:
\begin{align}
\left\Vert (\cA^{*}\cA-\cI)(\bX_{1})\bR\right\Vert _{\fro}\le\delta_{2r}\|\bX_{1}\|_{\fro}\|\bR\|_{\op}.\label{eq:coro-RIP}
\end{align}
This is due to the fact that 
\begin{align*}
\left\Vert (\cA^{*}\cA-\cI)(\bX_{1})\bR\right\Vert _{\fro} & =\max_{\tilde{\bL}:\|\tilde{\bL}\|_{\fro}\le1}\;\tr\left((\cA^{*}\cA-\cI)(\bX_{1})\bR\tilde{\bL}^{\top}\right)\\
 & \le\max_{\tilde{\bL}:\|\tilde{\bL}\|_{\fro}\le1}\;\delta_{2r}\|\bX_{1}\|_{\fro}\|\tilde{\bL}\bR^{\top}\|_{\fro}\\
 & \le\delta_{2r}\|\bX_{1}\|_{\fro}\|\bR\|_{\op}.
\end{align*}
Here, the first line follows from the variational representation of the Frobenius norm, the second line follows from \eqref{eq:2r-RIP}, and the last line follows from the relation $\|\bA\bB\|_{\fro}\le\|\bA\|_{\fro}\|\bB\|_{\op}$.

\subsection{Proof of Lemma~\ref{lemma:contraction_MS}}

The proof mostly mirrors that in Section~\ref{subsec:proof_MF}. 
First, in view of the condition $\dist(\bF_{t},\bF_{\star})\le0.1\sigma_{r}(\bX_{\star})$ and Lemma~\ref{lemma:Q_existence}, one knows that $\bQ_{t}$, the optimal alignment matrix between $\bF_{t}$ and $\bF_{\star}$ exists.
Therefore, for notational convenience, denote $\bL\coloneqq\bL_{t}\bQ_{t}$, $\bR\coloneqq\bR_{t}\bQ_{t}^{-\top}$, $\bDelta_{L}\coloneqq\bL-\bL_{\star}$, $\bDelta_{R}\coloneqq\bR-\bR_{\star}$, and $\epsilon\coloneqq0.1$. Similar to the derivation in \eqref{eq:cond_MF}, we have 
\begin{align}
\|\bDelta_{L}\bSigma_{\star}^{-1/2}\|_{\op}\vee\|\bDelta_{R}\bSigma_{\star}^{-1/2}\|_{\op}\le\epsilon.\label{eq:cond_MS}
\end{align}
The conclusion $\|\bL_{t}\bR_{t}^{\top}-\bX_{\star}\|_{\fro} \le 1.5\dist(\bF_{t},\bF_{\star})$ is a simple consequence of Lemma~\ref{lemma:matrix2factor}; see \eqref{eq:dist_matrix} for a detailed argument. From now on, we focus on proving the distance contraction.

With these notations in place, we have by the definition of $\dist(\bF_{t+1},\bF_{\star})$ that 
\begin{align}
\dist^{2}(\bF_{t+1},\bF_{\star}) & \le\left\Vert (\bL_{t+1}\bQ_{t}-\bL_{\star})\bSigma_{\star}^{1/2}\right\Vert _{\fro}^{2}+\left\Vert (\bR_{t+1}\bQ_{t}^{-\top}-\bR_{\star})\bSigma_{\star}^{1/2}\right\Vert _{\fro}^{2}.\label{eq:MS_expand}
\end{align}
Apply the update rule \eqref{eq:iterates_MS} and the decomposition $\bL\bR^{\top}-\bX_{\star}=\bDelta_{L}\bR^{\top}+\bL_{\star}\bDelta_{R}^{\top}$ to obtain 
\begin{align*}
 & (\bL_{t+1}\bQ_{t}-\bL_{\star})\bSigma_{\star}^{1/2}=\left(\bL-\eta\cA^{*}\cA(\bL\bR^{\top}-\bX_{\star})\bR(\bR^{\top}\bR)^{-1}-\bL_{\star}\right)\bSigma_{\star}^{1/2}\\
 & \quad=\left(\bDelta_{L}-\eta(\bL\bR^{\top}-\bX_{\star})\bR(\bR^{\top}\bR)^{-1}-\eta(\cA^{*}\cA-\cI)(\bL\bR^{\top}-\bX_{\star})\bR(\bR^{\top}\bR)^{-1}\right)\bSigma_{\star}^{1/2}\\
 & \quad=(1-\eta)\bDelta_{L}\bSigma_{\star}^{1/2}-\eta\bL_{\star}\bDelta_{R}^{\top}\bR(\bR^{\top}\bR)^{-1}\bSigma_{\star}^{1/2}-\eta(\cA^{*}\cA-\cI)(\bL\bR^{\top}-\bX_{\star})\bR(\bR^{\top}\bR)^{-1}\bSigma_{\star}^{1/2}.
\end{align*}
This allows us to expand the first square in \eqref{eq:MS_expand} as 
\begin{align*}
\left\Vert (\bL_{t+1}\bQ_{t}-\bL_{\star})\bSigma_{\star}^{1/2}\right\Vert _{\fro}^{2} & =\underbrace{\left\Vert (1-\eta)\bDelta_{L}\bSigma_{\star}^{1/2}-\eta\bL_{\star}\bDelta_{R}^{\top}\bR(\bR^{\top}\bR)^{-1}\bSigma_{\star}^{1/2}\right\Vert _{\fro}^{2}}_{\mfk{S}_{1}}\\
 & \quad-2\eta(1-\eta)\underbrace{\tr\left((\cA^{*}\cA-\cI)(\bL\bR^{\top}-\bX_{\star})\bR(\bR^{\top}\bR)^{-1}\bSigma_{\star}\bDelta_{L}^{\top}\right)}_{\mfk{S}_{2}}\\
 & \quad+2\eta^{2}\underbrace{\tr\left((\cA^{*}\cA-\cI)(\bL\bR^{\top}-\bX_{\star})\bR(\bR^{\top}\bR)^{-1}\bSigma_{\star}(\bR^{\top}\bR)^{-1}\bR^{\top}\bDelta_{R}\bL_{\star}^{\top}\right)}_{\mfk{S}_{3}}\\
 & \quad+\eta^{2}\underbrace{\left\Vert (\cA^{*}\cA-\cI)(\bL\bR^{\top}-\bX_{\star})\bR(\bR^{\top}\bR)^{-1}\bSigma_{\star}^{1/2}\right\Vert _{\fro}^{2}}_{\mfk{S}_{4}}.
\end{align*}
In what follows, we shall control the four terms separately, of which $\mfk{S}_{1}$ is the main term, and $\mfk{S}_{2},\mfk{S}_{3}$ and $\mfk{S}_{4}$ are perturbation terms. 
\begin{enumerate}
\item Notice that the main term $\mfk{S}_{1}$ has already been controlled in \eqref{eq:MF_Lt_bound} under the condition \eqref{eq:cond_MS}. It obeys
\begin{align*}
\mfk{S}_{1}\le\left((1-\eta)^{2}+\frac{2\epsilon}{1-\epsilon}\eta(1-\eta)\right)\|\bDelta_{L}\bSigma_{\star}^{1/2}\|_{\fro}^{2}+\frac{2\epsilon+\epsilon^{2}}{(1-\epsilon)^{2}}\eta^{2}\|\bDelta_{R}\bSigma_{\star}^{1/2}\|_{\fro}^{2}.
\end{align*}
\item For the second term $\mfk{S}_{2}$, decompose $\bL\bR^{\top}-\bX_{\star}=\bDelta_{L}\bR_{\star}^{\top}+\bL_{\star}\bDelta_{R}^{\top}+\bDelta_{L}\bDelta_{R}^{\top}$ and apply the triangle inequality to obtain
\begin{align*}
|\mfk{S}_{2}| & =\Big|\tr\left((\cA^{*}\cA-\cI)(\bDelta_{L}\bR_{\star}^{\top}+\bL_{\star}\bDelta_{R}^{\top}+\bDelta_{L}\bDelta_{R}^{\top})\bR(\bR^{\top}\bR)^{-1}\bSigma_{\star}\bDelta_{L}^{\top}\right)\Big|\\
 & \le\Big|\tr\left((\cA^{*}\cA-\cI)(\bDelta_{L}\bR_{\star}^{\top})\bR(\bR^{\top}\bR)^{-1}\bSigma_{\star}\bDelta_{L}^{\top}\right)\Big|\\
 & \quad+\Big|\tr\left((\cA^{*}\cA-\cI)(\bL_{\star}\bDelta_{R}^{\top})\bR(\bR^{\top}\bR)^{-1}\bSigma_{\star}\bDelta_{L}^{\top}\right)\Big|\\
 & \quad+\Big|\tr\left((\cA^{*}\cA-\cI)(\bDelta_{L}\bDelta_{R}^{\top})\bR(\bR^{\top}\bR)^{-1}\bSigma_{\star}\bDelta_{L}^{\top}\right)\Big|.
\end{align*}
Invoke Lemma~\ref{lemma:RIP_distance} to further obtain 
\begin{align*}
|\mfk{S}_{2}| & \le\delta_{2r}\left(\|\bDelta_{L}\bR_{\star}^{\top}\|_{\fro}+\|\bL_{\star}\bDelta_{R}^{\top}\|_{\fro}+\|\bDelta_{L}\bDelta_{R}^{\top}\|_{\fro}\right)\left\Vert \bR(\bR^{\top}\bR)^{-1}\bSigma_{\star}\bDelta_{L}^{\top}\right\Vert _{\fro}\\
 & \le\delta_{2r}\left(\|\bDelta_{L}\bR_{\star}^{\top}\|_{\fro}+\|\bL_{\star}\bDelta_{R}^{\top}\|_{\fro}+\|\bDelta_{L}\bDelta_{R}^{\top}\|_{\fro}\right)\left\Vert \bR(\bR^{\top}\bR)^{-1}\bSigma_{\star}^{1/2}\right\Vert _{\op}\|\bDelta_{L}\bSigma_{\star}^{1/2}\|_{\fro},
\end{align*}
where the second line follows from the relation $\|\bA\bB\|_{\fro}\le\|\bA\|_{\op}\|\bB\|_{\fro}$. Take the condition \eqref{eq:cond_MS} and Lemmas~\ref{lemma:Weyl} and \ref{lemma:matrix2factor} together to obtain 
\begin{align*}
\left\Vert \bR(\bR^{\top}\bR)^{-1}\bSigma_{\star}^{1/2}\right\Vert _{\op} & \le \frac{1}{1-\epsilon}; \\
\|\bDelta_{L}\bR_{\star}^{\top}\|_{\fro}+\|\bL_{\star}\bDelta_{R}^{\top}\|_{\fro}+\|\bDelta_{L}\bDelta_{R}^{\top}\|_{\fro} & \le(1+\frac{\epsilon}{2})\left(\|\bDelta_{L}\bSigma_{\star}^{1/2}\|_{\fro}+\|\bDelta_{R}\bSigma_{\star}^{1/2}\|_{\fro}\right).
\end{align*}
These consequences further imply that
\begin{align*}
|\mfk{S}_{2}| & \le\frac{\delta_{2r}(2+\epsilon)}{2(1-\epsilon)}\left(\|\bDelta_{L}\bSigma_{\star}^{1/2}\|_{\fro}+\|\bDelta_{R}\bSigma_{\star}^{1/2}\|_{\fro}\right)\|\bDelta_{L}\bSigma_{\star}^{1/2}\|_{\fro}\\
 & =\frac{\delta_{2r}(2+\epsilon)}{2(1-\epsilon)}\left(\|\bDelta_{L}\bSigma_{\star}^{1/2}\|_{\fro}^{2}+\|\bDelta_{L}\bSigma_{\star}^{1/2}\|_{\fro}\|\bDelta_{R}\bSigma_{\star}^{1/2}\|_{\fro}\right).
\end{align*}
For the term $\|\bDelta_{L}\bSigma_{\star}^{1/2}\|_{\fro}\|\bDelta_{R}\bSigma_{\star}^{1/2}\|_{\fro}$, we can apply the elementary inequality $2ab\le a^{2}+b^{2}$ to see
\begin{align*}
\|\bDelta_{L}\bSigma_{\star}^{1/2}\|_{\fro}\|\bDelta_{R}\bSigma_{\star}^{1/2}\|_{\fro}\le\frac{1}{2}\|\bDelta_{L}\bSigma_{\star}^{1/2}\|_{\fro}^{2}+\frac{1}{2}\|\bDelta_{R}\bSigma_{\star}^{1/2}\|_{\fro}^{2}.
\end{align*}
The preceding two bounds taken collectively yield 
\begin{align*}
|\mfk{S}_{2}| & \le\frac{\delta_{2r}(2+\epsilon)}{2\left(1-\epsilon\right)}\left(\frac{3}{2}\|\bDelta_{L}\bSigma_{\star}^{1/2}\|_{\fro}^{2}+\frac{1}{2}\|\bDelta_{L}\bSigma_{\star}^{1/2}\|_{\fro}^{2}\right).
\end{align*}

\item The third term $\mfk{S}_{3}$ can be similarly bounded as 
\begin{align*}
|\mfk{S}_{3}| & \le \delta_{2r}\left(\|\bDelta_{L}\bR_{\star}^{\top}\|_{\fro}+\|\bL_{\star}\bDelta_{R}^{\top}\|_{\fro}+\|\bDelta_{L}\bDelta_{R}^{\top}\|_{\fro}\right)\left\Vert \bR(\bR^{\top}\bR)^{-1}\bSigma_{\star}(\bR^{\top}\bR)^{-1}\bR^{\top}\bDelta_{R}\bL_{\star}^{\top}\right\Vert _{\fro}\\
 & \le\delta_{2r}\left(\|\bDelta_{L}\bR_{\star}^{\top}\|_{\fro}+\|\bL_{\star}\bDelta_{R}^{\top}\|_{\fro}+\|\bDelta_{L}\bDelta_{R}^{\top}\|_{\fro}\right)\left\Vert \bR(\bR^{\top}\bR)^{-1}\bSigma_{\star}^{1/2}\right\Vert _{\op}^2\|\bDelta_{R}\bL_{\star}^{\top}\|_{\fro}\\
 & \le\frac{\delta_{2r}(2+\epsilon)}{2(1-\epsilon)^{2}}\left(\|\bDelta_{L}\bSigma_{\star}^{1/2}\|_{\fro}+\|\bDelta_{R}\bSigma_{\star}^{1/2}\|_{\fro}\right)\|\bDelta_{R}\bSigma_{\star}^{1/2}\|_{\fro}\\
 & \le\frac{\delta_{2r}(2+\epsilon)}{2(1-\epsilon)^{2}}\left(\frac{1}{2}\|\bDelta_{L}\bSigma_{\star}^{1/2}\|_{\fro}^{2}+\frac{3}{2}\|\bDelta_{R}\bSigma_{\star}^{1/2}\|_{\fro}^{2}\right).
\end{align*}
\item We are then left with the last term $\mfk{S}_{4}$, for which we have 
\begin{align*}
\sqrt{\mfk{S}_{4}} & =\left\Vert (\cA^{*}\cA-\cI)(\bL\bR^{\top}-\bX_{\star})\bR(\bR^{\top}\bR)^{-1}\bSigma_{\star}^{1/2}\right\Vert _{\fro}\\
 & \le\left\Vert (\cA^{*}\cA-\cI)(\bDelta_{L}\bR_{\star}^{\top})\bR(\bR^{\top}\bR)^{-1}\bSigma_{\star}^{1/2}\right\Vert _{\fro}\\
 & \quad+\left\Vert (\cA^{*}\cA-\cI)(\bL_{\star}\bDelta_{R}^{\top})\bR(\bR^{\top}\bR)^{-1}\bSigma_{\star}^{1/2}\right\Vert _{\fro}\\
 & \quad+\left\Vert (\cA^{*}\cA-\cI)(\bDelta_{L}\bDelta_{R}^{\top})\bR(\bR^{\top}\bR)^{-1}\bSigma_{\star}^{1/2}\right\Vert _{\fro},
\end{align*}
where once again we use the decomposition $\bL\bR^{\top}-\bX_{\star}=\bDelta_{L}\bR_{\star}^{\top}+\bL_{\star}\bDelta_{R}^{\top}+\bDelta_{L}\bDelta_{R}^{\top}$. Use \eqref{eq:coro-RIP} to see that 
\begin{align*}
\sqrt{\mfk{S}_{4}} & \le\delta_{2r}\left(\|\bDelta_{L}\bR_{\star}^{\top}\|_{\fro}+\|\bL_{\star}\bDelta_{R}^{\top}\|_{\fro}+\|\bDelta_{L}\bDelta_{R}^{\top}\|_{\fro}\right)\left\Vert \bR(\bR^{\top}\bR)^{-1}\bSigma_{\star}^{1/2}\right\Vert _{\op}.
\end{align*}
Repeating the same argument in bounding $\mfk{S}_{2}$ yields 
\begin{align*}
\sqrt{\mfk{S}_{4}}\le\frac{\delta_{2r}\left(2+\epsilon\right)}{2\left(1-\epsilon\right)}\left(\|\bDelta_{L}\bSigma_{\star}^{1/2}\|_{\fro}+\|\bDelta_{R}\bSigma_{\star}^{1/2}\|_{\fro}\right).
\end{align*}
We can then take the squares of both sides and use $(a+b)^{2}\le2a^{2}+2b^{2}$ to reach 
\begin{align*}
 \mfk{S}_{4}\le\frac{\delta_{2r}^{2}(2+\epsilon)^{2}}{2(1-\epsilon)^{2}}\left(\|\bDelta_{L}\bSigma_{\star}^{1/2}\|_{\fro}^{2}+\|\bDelta_{R}\bSigma_{\star}^{1/2}\|_{\fro}^{2}\right).
\end{align*}
\end{enumerate}
Taking the bounds for $\mfk{S}_{1},\mfk{S}_{2},\mfk{S}_{3},\mfk{S}_{4}$ collectively yields
\begin{align*}
\left\Vert (\bL_{t+1}\bQ_{t}-\bL_{\star})\bSigma_{\star}^{1/2}\right\Vert _{\fro}^{2} & \le \left((1-\eta)^{2}+\frac{2\epsilon}{1-\epsilon}\eta(1-\eta)\right)\|\bDelta_{L}\bSigma_{\star}^{1/2}\|_{\fro}^{2}+\frac{2\epsilon+\epsilon^{2}}{(1-\epsilon)^{2}}\eta^{2}\|\bDelta_{R}\bSigma_{\star}^{1/2}\|_{\fro}^{2}\\
 & \quad + \frac{\delta_{2r}(2+\epsilon)}{1-\epsilon}\eta(1-\eta)\left(\frac{3}{2}\|\bDelta_{L}\bSigma_{\star}^{1/2}\|_{\fro}^{2}+\frac{1}{2}\|\bDelta_{R}\bSigma_{\star}^{1/2}\|_{\fro}^2\right)\\
 & \quad + \frac{\delta_{2r}(2+\epsilon)}{(1-\epsilon)^{2}}\eta^2\left(\frac{1}{2}\|\bDelta_{L}\bSigma_{\star}^{1/2}\|_{\fro}^{2}+\frac{3}{2}\|\bDelta_{R}\bSigma_{\star}^{1/2}\|_{\fro}^{2}\right) \\
 & \quad + \frac{\delta_{2r}^{2}(2+\epsilon)^{2}}{2(1-\epsilon)^{2}}\eta^2\left(\|\bDelta_{L}\bSigma_{\star}^{1/2}\|_{\fro}^{2}+\|\bDelta_{R}\bSigma_{\star}^{1/2}\|_{\fro}^{2}\right).
\end{align*}
Similarly, we can expand the second square in \eqref{eq:MS_expand} and obtain a similar bound. Combine both to obtain 
\begin{align*}
\left\Vert (\bL_{t+1}\bQ_{t}-\bL_{\star})\bSigma_{\star}^{1/2}\right\Vert _{\fro}^{2}+\left\Vert (\bR_{t+1}\bQ_{t}^{-\top}-\bR_{\star})\bSigma_{\star}^{1/2}\right\Vert _{\fro}^{2} & \le\rho^{2}(\eta;\epsilon,\delta_{2r})\dist^{2}(\bF_{t},\bF_{\star}),
\end{align*}
where the contraction rate is given by
\begin{align*}
\rho^{2}(\eta;\epsilon,\delta_{2r})\coloneqq(1-\eta)^{2}+\frac{2\epsilon+\delta_{2r}(4+2\epsilon)}{1-\epsilon}\eta(1-\eta)+\frac{2\epsilon+\epsilon^{2}+\delta_{2r}(4+2\epsilon)+\delta_{2r}^{2}(2+\epsilon)^{2}}{(1-\epsilon)^{2}}\eta^{2}.
\end{align*}
With $\epsilon=0.1$, $\delta_{2r}\le0.02$, and $0<\eta\le2/3$, one has $\rho(\eta;\epsilon,\delta_{2r})\le1-0.6\eta$. Thus we conclude that 
\begin{align*}
\dist(\bF_{t+1},\bF_{\star}) & \le\sqrt{\left\Vert (\bL_{t+1}\bQ_{t}-\bL_{\star})\bSigma_{\star}^{1/2}\right\Vert _{\fro}^{2}+\left\Vert (\bR_{t+1}\bQ_{t}^{-\top}-\bR_{\star})\bSigma_{\star}^{1/2}\right\Vert _{\fro}^{2}}\\
 & \le(1-0.6\eta)\dist(\bF_{t},\bF_{\star}).
\end{align*}

\subsection{Proof of Lemma~\ref{lemma:init_MS}}

With the knowledge of partial Frobenius norm $\|\cdot\|_{\fro,r}$, we are ready to establish the claimed result. Invoke Lemma~\ref{lemma:Procrustes} to relate $\dist(\bF_{0},\bF_{\star})$ to $\|\bL_{0}\bR_{0}^{\top}-\bX_{\star}\|_{\fro}$, and use that $\bL_{0}\bR_{0}^{\top}-\bX_{\star}$ has rank at most $2r$ to obtain
\begin{align*}
\dist(\bF_{0},\bF_{\star})\le\sqrt{\sqrt{2}+1}\left\Vert \bL_{0}\bR_{0}^{\top}-\bX_{\star}\right\Vert _{\fro}\le\sqrt{2(\sqrt{2}+1)}\left\Vert \bL_{0}\bR_{0}^{\top}-\bX_{\star}\right\Vert _{\fro,r}.
\end{align*}
Note that $\bL_{0}\bR_{0}^{\top}$ is the best rank-$r$ approximation of $\cA^{*}\cA(\bX_{\star})$, and apply the triangle inequality combined with Lemma~\ref{lemma:norm_Fr_Eckart-Yang-Mirsky} to obtain
\begin{align*}
\left\Vert \bL_{0}\bR_{0}^{\top}-\bX_{\star}\right\Vert _{\fro,r} & \le\left\Vert \cA^{*}\cA(\bX_{\star})-\bL_{0}\bR_{0}^{\top}\right\Vert _{\fro,r}+\left\Vert \cA^{*}\cA(\bX_{\star})-\bX_{\star}\right\Vert _{\fro,r}\\
 & \le2\left\Vert (\cA^{*}\cA-\cI)(\bX_{\star})\right\Vert _{\fro,r}\le2\delta_{2r}\|\bX_{\star}\|_{\fro}.
\end{align*}
Here, the last inequality follows from combining Lemma~\ref{lemma:norm_Fr_variation} and \eqref{eq:coro-RIP} as 
\begin{align*}
 \left\Vert (\cA^{*}\cA-\cI)(\bX_{\star})\right\Vert _{\fro,r} & =\max_{\tilde{\bR}\in\RR^{n_2\times r}:\|\tilde{\bR}\|_{\op}\le1}\;\left\Vert (\cA^{*}\cA-\cI)(\bX_{\star})\tilde{\bR}\right\Vert _{\fro}\le\delta_{2r}\|\bX_{\star}\|_{\fro}.
\end{align*}
As a result, one has 
\begin{align*}
\dist(\bF_{0},\bF_{\star}) & \le2\sqrt{2(\sqrt{2}+1)}\delta_{2r}\|\bX_{\star}\|_{\fro}\le5\delta_{2r}\sqrt{r}\kappa\sigma_{r}(\bX_{\star}).
\end{align*}

\section{Proof for Robust PCA}\label{sec:proof_RPCA}

We first establish a useful property regarding the truncation operator $\cT_{2\alpha}[\cdot]$. 
\begin{lemma}\label{lemma:S} Given $\bS_{\star}\in\cS_{\alpha}$ and $\bS=\cT_{2\alpha}[\bX_{\star}+\bS_{\star}-\bL\bR^{\top}]$, one has 
\begin{align}
\|\bS-\bS_{\star}\|_{\infty}\le2\|\bL\bR^{\top}-\bX_{\star}\|_{\infty}.\label{eq:S_delta_inf}
\end{align}
In addition, for any low-rank matrix $\bM=\bL_{M}\bR_{M}^{\top}\in\RR^{n_{1}\times n_{2}}$ with $\bL_{M}\in\RR^{n_{1}\times r},\bR_{M}\in\RR^{n_{2}\times r}$, one has 
\begin{align}
\begin{split} |\langle\bS-\bS_{\star},\bM\rangle| &\le \sqrt{3\alpha}\nu\left(\|(\bL-\bL_{\star})\bSigma_{\star}^{1/2}\|_{\fro}+\|(\bR-\bR_{\star})\bSigma_{\star}^{1/2}\|_{\fro}\right)\|\bM\|_{\fro}\\
 & \quad+2\sqrt{\alpha}\left(\sqrt{n_{1}}\|\bL_{M}\|_{2,\infty}\|\bR_{M}\|_{\fro}\wedge\sqrt{n_{2}}\|\bL_{M}\|_{\fro}\|\bR_{M}\|_{2,\infty}\right)\|\bL\bR^{\top}-\bX_{\star}\|_{\fro},
\end{split}\label{eq:S_delta_inner}
\end{align}
where $\nu$ obeys
\begin{align*}
\nu\ge\frac{\sqrt{n_{1}}}{2}\left(\|\bL\bSigma_{\star}^{-1/2}\|_{2,\infty}+\|\bL_{\star}\bSigma_{\star}^{-1/2}\|_{2,\infty}\right)\vee\frac{\sqrt{n_{2}}}{2}\left(\|\bR\bSigma_{\star}^{-1/2}\|_{2,\infty}+\|\bR_{\star}\bSigma_{\star}^{-1/2}\|_{2,\infty}\right).
\end{align*}
\end{lemma}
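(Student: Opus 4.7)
The plan is to treat Part 1 via a pigeonhole count on rows and columns and Part 2 via a two-piece decomposition of $\bS-\bS_\star$. Write $\bE\coloneqq\bL\bR^\top-\bX_\star$. For Part 1, fix $(i,j)$. If $\bS_{i,j}\ne 0$, then by definition of $\cT_{2\alpha}$ one has $\bS_{i,j}=(\bS_\star+\bE)_{i,j}$, so $|(\bS-\bS_\star)_{i,j}|=|\bE_{i,j}|\le\|\bE\|_\infty$. Otherwise $\bS_{i,j}=0$, and (assuming also $(\bS_\star)_{i,j}\ne 0$, else the claim is trivial) the entry was thresholded, hence $|(\bS_\star+\bE)_{i,j}|$ is not among the top $2\alpha n_2$ of row $i$ or not among the top $2\alpha n_1$ of column $j$. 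Say the former: among the $>2\alpha n_2$ dominating entries of row $i$, at most $\alpha n_2$ can have $(\bS_\star)_{i,k}\ne 0$, so at least one dominator $k$ satisfies $(\bS_\star)_{i,k}=0$, giving $|(\bS_\star+\bE)_{i,k}|=|\bE_{i,k}|\le\|\bE\|_\infty$; hence $|(\bS_\star+\bE)_{i,j}|\le\|\bE\|_\infty$, and the triangle inequality yields $|(\bS-\bS_\star)_{i,j}|=|(\bS_\star)_{i,j}|\le 2\|\bE\|_\infty$, establishing \eqref{eq:S_delta_inf}.

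For Part 2, let $\Omega$ be the set of indices where $\bS\ne\bS_\star$, $\Omega_1\coloneqq\{(i,j)\in\Omega:\bS_{i,j}\ne 0\}$, and $\Omega_2\coloneqq\Omega\setminus\Omega_1$. Counting nonzeros row- and column-wise gives $\Omega\subseteq\cS_{3\alpha}$ and $\Omega_2\subseteq\cS_\alpha$. A direct entrywise check produces the identity
\begin{equation*}
\bS-\bS_\star=\cP_\Omega(\bE)-\cP_{\Omega_2}(\bS_\star+\bE),
\end{equation*}
so $|\langle\bS-\bS_\star,\bM\rangle|$ splits into two pieces matching the two terms in \eqref{eq:S_delta_inner}. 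For the first piece, I use the symmetric decomposition $\bE=\tfrac12(\bL+\bL_\star)(\bR-\bR_\star)^\top+\tfrac12(\bL-\bL_\star)(\bR+\bR_\star)^\top$; combined with the hypothesis on $\nu$ (via the triangle inequality for $\|\cdot\|_{2,\infty}$) this yields the entrywise bound $|\bE_{i,j}|\le\tfrac{\nu}{\sqrt{n_1}}\|[\bDelta_R\bSigma_\star^{1/2}]_{j,\cdot}\|_2+\tfrac{\nu}{\sqrt{n_2}}\|[\bDelta_L\bSigma_\star^{1/2}]_{i,\cdot}\|_2$, where $\bDelta_L\coloneqq\bL-\bL_\star$ and $\bDelta_R\coloneqq\bR-\bR_\star$. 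Summing $|\bE_{i,j}||\bM_{i,j}|$ over $\Omega$, applying Cauchy-Schwarz inside rows and inside columns using $|\Omega\cap\text{row }i|\le 3\alpha n_2$ and $|\Omega\cap\text{col }j|\le 3\alpha n_1$, and then one more Cauchy-Schwarz against $\|\bM\|_\fro$, delivers the first term of \eqref{eq:S_delta_inner}.

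The second piece is the main technical obstacle, since the naive bound $\|\cP_{\Omega_2}(\bS_\star+\bE)\|_\infty\le\|\bE\|_\infty$ loses a factor $\sqrt{n_1n_2}$ after conversion to Frobenius norm. I would circumvent this by a greedy injective matching. Split $\Omega_2=\Omega_2^r\sqcup\Omega_2^c$ according to whether the entry was row-truncated or (only) column-truncated. For any $(i,j)\in\Omega_2^r$, the Part-1 argument actually exhibits more than $\alpha n_2$ indices $k$ in row $i$ with $(\bS_\star)_{i,k}=0$ and $|\bE_{i,k}|\ge|(\bS_\star+\bE)_{i,j}|$; since $|\Omega_2^r\cap\text{row }i|\le\alpha n_2$, a straightforward greedy pairing within each row yields an injection $(i,j)\mapsto k(i,j)$ respecting $|\bE_{i,k(i,j)}|\ge|(\bS_\star+\bE)_{i,j}|$, whence $\|\cP_{\Omega_2^r}(\bS_\star+\bE)\|_\fro\le\|\bE\|_\fro$; a symmetric column-wise argument gives $\|\cP_{\Omega_2^c}(\bS_\star+\bE)\|_\fro\le\|\bE\|_\fro$, and hence $\|\cP_{\Omega_2}(\bS_\star+\bE)\|_\fro\le 2\|\bE\|_\fro$. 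Finally, rewriting $\langle\cP_{\Omega_2}(\bS_\star+\bE),\bL_M\bR_M^\top\rangle=\langle\bL_M^\top\cP_{\Omega_2}(\bS_\star+\bE),\bR_M^\top\rangle$ and invoking Cauchy-Schwarz with the $\alpha$-column-sparsity of $\Omega_2$ together with the $\|\bL_M\|_{2,\infty}$ bound gives $2\sqrt{\alpha n_1}\|\bL_M\|_{2,\infty}\|\bR_M\|_\fro\|\bE\|_\fro$; exchanging the roles of rows and columns produces the $\wedge$ in the second term of \eqref{eq:S_delta_inner}.
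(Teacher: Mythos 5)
Your proposal is correct, and while Part 1 and the ``kept-entries'' piece of Part 2 essentially reproduce the paper's argument, your treatment of the thresholded entries takes a genuinely different route. The paper never isolates $\cP_{\Omega_{2}}(\bS_{\star}-\bE)$ as an object: it bounds each thresholded entry pointwise by the row/column order statistics $|\bDelta_{X}|_{i,(\alpha n_{2})}\vee|\bDelta_{X}|_{(\alpha n_{1}),j}$, splits the resulting sum against $|\bM|_{i,j}$ via the weighted inequality $2ab\le\beta^{-1}a^{2}+\beta b^{2}$, controls the two halves by $\sum_{\Omega_{\star}}|\bDelta_{X}|_{i,(\alpha n_{2})}^{2}\le\|\bDelta_{X}\|_{\fro}^{2}$ (order statistics) and by the $\alpha$-sparsity of $\Omega_{\star}$ against the factored $\bM$, and then optimizes $\beta$. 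You instead prove the exact support decomposition of $\bS-\bS_{\star}$ into a piece carried by $\bE$ on $\Omega$ and a piece carried by $\bS_{\star}-\bE$ on $\Omega_{2}$, establish the clean intermediate bound $\|\cP_{\Omega_{2}}(\bS_{\star}-\bE)\|_{\fro}\le2\|\bE\|_{\fro}$, and finish with a single sparse Cauchy--Schwarz, $\|\bL_{M}^{\top}\cP_{\Omega_{2}}(\bS_{\star}-\bE)\|_{\fro}\le\sqrt{\alpha n_{1}}\|\bL_{M}\|_{2,\infty}\|\cP_{\Omega_{2}}(\bS_{\star}-\bE)\|_{\fro}$, which recovers exactly the $2\sqrt{\alpha}$ term (in fact your split $\Omega_{2}=\Omega_{2}^{r}\sqcup\Omega_{2}^{c}$ gives $\sqrt{2}\|\bE\|_{\fro}$, slightly better). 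The upshot of your route is a reusable, parameter-free norm bound on the thresholding error; the cost is the Hall-type greedy matching, which is more machinery than needed -- the paper's order-statistic inequality $\alpha n_{2}|\bE|_{i,(\alpha n_{2})}^{2}\le\|\bE_{i,\cdot}\|_{2}^{2}$ together with the fact that $\Omega_{2}$ has at most $\alpha n_{2}$ entries per row yields $\|\cP_{\Omega_{2}^{r}}(\bS_{\star}-\bE)\|_{\fro}\le\|\bE\|_{\fro}$ directly, with no matching. Your symmetric decomposition $\bE=\tfrac{1}{2}(\bL+\bL_{\star})\bDelta_{R}^{\top}+\tfrac{1}{2}\bDelta_{L}(\bR+\bR_{\star})^{\top}$ is equivalent to the paper's averaging of its two asymmetric decompositions, and the $\sqrt{3\alpha}$ row/column count over $\Omega$ matches the paper's $\mfk{A}_{1}$ step.

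Two harmless slips to fix in a write-up: with your convention $\bE=\bL\bR^{\top}-\bX_{\star}$, the truncated matrix is $\cT_{2\alpha}[\bS_{\star}-\bE]$, so kept entries satisfy $\bS_{i,j}=(\bS_{\star}-\bE)_{i,j}$ and the identity should read $\bS-\bS_{\star}=-\cP_{\Omega}(\bE)-\cP_{\Omega_{2}}(\bS_{\star}-\bE)$ (the sign is immaterial since only norms of $\bE$ enter); and in Part 1 the pigeonhole gives at least $\alpha n_{2}$ (not strictly more than $\alpha n_{2}$) dominators outside $\mathrm{supp}(\bS_{\star})$, which is still enough both there and for the matching since $|\Omega_{2}^{r}\cap\mbox{row }i|\le\alpha n_{2}$.
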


\begin{proof} Denote $\bDelta_{L}\coloneqq\bL-\bL_{\star}$, $\bDelta_{R}\coloneqq\bR-\bR_{\star}$, and $\bDelta_{X}\coloneqq\bL\bR^{\top}-\bX_{\star}$. Let $\Omega,\Omega_{\star}$ be the support of $\bS$ and $\bS_{\star}$, respectively. As a result, $\bS-\bS_{\star}$ is supported on $\Omega\cup\Omega_{\star}$.

We start with proving the first claim, i.e.~\eqref{eq:S_delta_inf}.
For $(i,j)\in\Omega$, by the definition of $\cT_{2\alpha}[\cdot]$, we have $(\bS-\bS_{\star})_{i,j}=(-\bDelta_{X})_{i,j}$. For $(i,j)\in\Omega_{\star}\setminus\Omega$, one necessarily has $\bS_{i,j}=0$ and therefore $(\bS-\bS_{\star})_{i,j}=(-\bS_{\star})_{i,j}$.
Again by the definition of the operator $\cT_{2\alpha}[\cdot]$, we know $|\bS_{\star}-\bDelta_{X}|_{i,j}$ is either smaller than $|\bS_{\star}-\bDelta_{X}|_{i,(2\alpha n_{2})}$ or $|\bS_{\star}-\bDelta_{X}|_{(2\alpha n_{1}),j}$.
Furthermore, we know that $\bS_{\star}$ contains at most $\alpha$-fraction nonzero entries per row and column. Consequently, one has $|\bS_{\star}-\bDelta_{X}|_{i,j}\le|\bDelta_{X}|_{i,(\alpha n_{2})}\vee|\bDelta_{X}|_{(\alpha n_{1}),j}$. Combining the two cases above, we conclude that 
\begin{align}
|\bS-\bS_{\star}|_{i,j}\le\begin{cases}
|\bDelta_{X}|_{i,j}, & (i,j)\in\Omega\\
|\bDelta_{X}|_{i,j}+\left(|\bDelta_{X}|_{i,(\alpha n_{2})}\vee|\bDelta_{X}|_{(\alpha n_{1}),j}\right), & (i,j)\in\Omega_{\star}\setminus\Omega
\end{cases}.\label{eq:S_upper_bound}
\end{align}
This immediately implies the $\ell_{\infty}$ norm bound \eqref{eq:S_delta_inf}.

Next, we prove the second claim \eqref{eq:S_delta_inner}. Recall that $\bS-\bS_{\star}$ is supported on $\Omega\cup\Omega_{\star}$. We then have 
\begin{align*}
 |\langle\bS-\bS_{\star},\bM\rangle| &\le \sum_{(i,j)\in\Omega}|\bS-\bS_{\star}|_{i,j}|\bM|_{i,j}+\sum_{(i,j)\in\Omega_{\star}\setminus\Omega}|\bS-\bS_{\star}|_{i,j}|\bM|_{i,j}\\
 & \le\sum_{(i,j)\in\Omega\cup\Omega_{\star}}|\bDelta_{X}|_{i,j}|\bM|_{i,j}+\sum_{(i,j)\in\Omega_{\star}\setminus\Omega}\left(|\bDelta_{X}|_{i,(\alpha n_{2})}+|\bDelta_{X}|_{(\alpha n_{1}),j}\right)|\bM|_{i,j},
\end{align*}
where the second line follows from \eqref{eq:S_upper_bound}. Let $\beta>0$ be some positive number, whose value will be determined later. Use $2ab\le\beta^{-1}a^{2}+\beta b^{2}$ to further obtain 
\begin{align*}
|\langle\bS-\bS_{\star},\bM\rangle| & \le\underbrace{\sum_{(i,j)\in\Omega\cup\Omega_{\star}}|\bDelta_{X}|_{i,j}|\bM|_{i,j}}_{\mfk{A}_{1}}+\frac{1}{2\beta}\underbrace{\sum_{(i,j)\in\Omega_{\star}\setminus\Omega}\left(|\bDelta_{X}|_{i,(\alpha n_{2})}^{2}+|\bDelta_{X}|_{(\alpha n_{1}),j}^{2}\right)}_{\mfk{A}_{2}}+\beta\underbrace{\sum_{(i,j)\in\Omega_{\star}\setminus\Omega}|\bM|_{i,j}^{2}}_{\mfk{A}_{3}}.
\end{align*}
In regard to the three terms $\mfk{A}_{1},\mfk{A}_{2}$ and $\mfk{A}_{3}$, we have the following claims, whose proofs are deferred to the end.

\begin{claim}\label{claim:A_1} The first term $\mfk{A}_{1}$ satisfies
\begin{align*}
\mfk{A}_{1}\le\sqrt{3\alpha}\nu\left(\|\bDelta_{L}\bSigma_{\star}^{1/2}\|_{\fro}+\|\bDelta_{R}\bSigma_{\star}^{1/2}\|_{\fro}\right)\|\bM\|_{\fro}.
\end{align*}
\end{claim}
\begin{claim}\label{claim:A_2} The second term $\mfk{A}_{2}$ satisfies
\begin{align*}
\mfk{A}_{2}\le2\|\bDelta_{X}\|_{\fro}^{2}.
\end{align*}
\end{claim}
\begin{claim}\label{claim:A_3} The third term $\mfk{A}_{3}$ satisfies
\begin{align*}
\mfk{A}_{3}\le\alpha\left(n_{1}\|\bL_{M}\|_{2,\infty}^{2}\|\bR_{M}\|_{\fro}^{2}\wedge n_{2}\|\bL_{M}\|_{\fro}^{2}\|\bR_{M}\|_{2,\infty}^{2}\right).
\end{align*}
\end{claim}

Combine the pieces to reach 
\begin{align*}
 |\langle\bS-\bS_{\star},\bM\rangle| &\le \sqrt{3\alpha}\nu\left(\|\bDelta_{L}\bSigma_{\star}^{1/2}\|_{\fro}+\|\bDelta_{R}\bSigma_{\star}^{1/2}\|_{\fro}\right)\|\bM\|_{\fro}\\
 & \quad+\frac{\|\bDelta_{X}\|_{\fro}^{2}}{\beta}+\beta\alpha\left(n_{1}\|\bL_{M}\|_{2,\infty}^{2}\|\bR_{M}\|_{\fro}^{2}\wedge n_{2}\|\bL_{M}\|_{\fro}^{2}\|\bR_{M}\|_{2,\infty}^{2}\right).
\end{align*}
One can then choose $\beta$ optimally to yield 
\begin{align*}
 |\langle\bS-\bS_{\star},\bM\rangle| &\le \sqrt{3\alpha}\nu\left(\|\bDelta_{L}\bSigma_{\star}^{1/2}\|_{\fro}+\|\bDelta_{R}\bSigma_{\star}^{1/2}\|_{\fro}\right)\|\bM\|_{\fro}\\
 & \quad+2\sqrt{\alpha}\left(\sqrt{n_{1}}\|\bL_{M}\|_{2,\infty}\|\bR_{M}\|_{\fro}\wedge\sqrt{n_{2}}\|\bL_{M}\|_{\fro}\|\bR_{M}\|_{2,\infty}\right)\|\bDelta_{X}\|_{\fro}.
\end{align*}
This finishes the proof. 
\end{proof}

\begin{proof}[Proof of Claim~\ref{claim:A_1}] Use the decomposition $\bDelta_{X}=\bDelta_{L}\bR^{\top}+\bL_{\star}\bDelta_{R}^{\top}=\bDelta_{L}\bR_{\star}^{\top}+\bL\bDelta_{R}^{\top}$ to obtain 
\begin{align*}
|\bDelta_{X}|_{i,j} & \le\|(\bDelta_{L}\bSigma_{\star}^{1/2})_{i,\cdot}\|_{2}\|\bR\bSigma_{\star}^{-1/2}\|_{2,\infty}+\|\bL_{\star}\bSigma_{\star}^{-1/2}\|_{2,\infty}\|(\bDelta_{R}\bSigma_{\star}^{1/2})_{j,\cdot}\|_{2},\quad\mbox{and}\\
|\bDelta_{X}|_{i,j} & \le\|(\bDelta_{L}\bSigma_{\star}^{1/2})_{i,\cdot}\|_{2}\|\bR_{\star}\bSigma_{\star}^{-1/2}\|_{2,\infty}+\|\bL\bSigma_{\star}^{-1/2}\|_{2,\infty}\|(\bDelta_{R}\bSigma_{\star}^{1/2})_{j,\cdot}\|_{2}.
\end{align*}
Take the average to yield 
\begin{align*}
|\bDelta_{X}|_{i,j}\le\frac{\nu}{\sqrt{n_{2}}}\|(\bDelta_{L}\bSigma_{\star}^{1/2})_{i,\cdot}\|_{2}+\frac{\nu}{\sqrt{n_{1}}}\|(\bDelta_{R}\bSigma_{\star}^{1/2})_{j,\cdot}\|_{2},
\end{align*}
where we have used the assumption on $\nu$. With this upper bound on $|\bDelta_{X}|_{i,j}$ in place, we can further control $\mfk{A}_{1}$ as 
\begin{align*}
 \mfk{A}_{1} &\le \sum_{(i,j)\in\Omega\cup\Omega_{\star}}\frac{\nu}{\sqrt{n_{2}}}\|(\bDelta_{L}\bSigma_{\star}^{1/2})_{i,\cdot}\|_{2}|\bM|_{i,j}+\sum_{(i,j)\in\Omega\cup\Omega_{\star}}\frac{\nu}{\sqrt{n_{1}}}\|(\bDelta_{R}\bSigma_{\star}^{1/2})_{j,\cdot}\|_{2}|\bM|_{i,j}\\
 & \le\left(\sqrt{\sum_{(i,j)\in\Omega\cup\Omega_{\star}}\|(\bDelta_{L}\bSigma_{\star}^{1/2})_{i,\cdot}\|_{2}^{2}/n_{2}}+\sqrt{\sum_{(i,j)\in\Omega\cup\Omega_{\star}}\|(\bDelta_{R}\bSigma_{\star}^{1/2})_{j,\cdot}\|_{2}^{2}/n_{1}}\right)\nu\|\bM\|_{\fro}.
\end{align*}
Regarding the first term, one has 
\begin{align*}
\sum_{(i,j)\in\Omega\cup\Omega_{\star}}\|(\bDelta_{L}\bSigma_{\star}^{1/2})_{i,\cdot}\|_{2}^{2} & =\sum_{i=1}^{n_{1}}\sum_{j:(i,j)\in\Omega\cup\Omega_{\star}}\|(\bDelta_{L}\bSigma_{\star}^{1/2})_{i,\cdot}\|_{2}^{2}\\
 & \le3\alpha n_{2}\sum_{i=1}^{n_{1}}\|(\bDelta_{L}\bSigma_{\star}^{1/2})_{i,\cdot}\|_{2}^{2}\\
 & =3\alpha n_{2}\|\bDelta_{L}\bSigma_{\star}^{1/2}\|_{\fro}^{2},
\end{align*}
where the second line follows from the fact that $\Omega\cup\Omega_{\star}$ contains at most $3\alpha n_{2}$ non-zero entries in each row. Similarly, we can show that 
\begin{align*}
\sum_{(i,j)\in\Omega\cup\Omega_{\star}}\|(\bDelta_{R}\bSigma_{\star}^{1/2})_{j,\cdot}\|_{2}^{2} & \le3\alpha n_{1}\|\bDelta_{R}\bSigma_{\star}^{1/2}\|_{\fro}^{2}.
\end{align*}
In all, we arrive at 
\begin{align*}
\mfk{A}_{1}\le\sqrt{3\alpha}\nu\left(\|\bDelta_{L}\bSigma_{\star}^{1/2}\|_{\fro}+\|\bDelta_{R}\bSigma_{\star}^{1/2}\|_{\fro}\right)\|\bM\|_{\fro},
\end{align*}
which is the desired claim. 
\end{proof}

\begin{proof}[Proof of Claim~\ref{claim:A_2}] Recall that $(\bDelta_{X})_{i,(\alpha n_{2})}$ denotes the $(\alpha n_{2})$-th largest entry in the $i$-th row of $\bDelta_{X}$. One necessarily has 
\begin{align*}
\alpha n_{2}|\bDelta_{X}|_{i,(\alpha n_{2})}^{2}\le\|(\bDelta_{X})_{i,\cdot}\|_{2}^{2}.
\end{align*}
As a result, we obtain 
\begin{align*}
\sum_{(i,j)\in\Omega_{\star}\setminus\Omega}|\bDelta_{X}|_{i,(\alpha n_{2})}^{2} & \le\sum_{(i,j)\in\Omega_{\star}}|\bDelta_{X}|_{i,(\alpha n_{2})}^{2}\\
 & \le\sum_{i=1}^{n_{1}}\sum_{j:(i,j)\in\Omega_{\star}}\frac{\|(\bDelta_{X})_{i,\cdot}\|_{2}^{2}}{\alpha n_{2}}\\
 & \le\sum_{i=1}^{n_{1}}\|(\bDelta_{X})_{i,\cdot}\|_{2}^{2}=\|\bDelta_{X}\|_{\fro}^{2},
\end{align*}
where the last line follows from the fact that $\Omega_{\star}$ contains at most $\alpha n_{2}$ nonzero entries in each row. Similarly one can show that 
\begin{align*}
\sum_{(i,j)\in\Omega_{\star}\setminus\Omega}|\bDelta_{X}|_{(\alpha n_{1}),j}^{2}\le\|\bDelta_{X}\|_{\fro}^{2}.
\end{align*}
Combining the above two bounds with the definition of $\mfk{A}_{2}$ completes the proof. 
\end{proof}

\begin{proof}[Proof of Claim~\ref{claim:A_3}] By definition, $\bM=\bL_{M}\bR_{M}^{\top}$, and hence one has 
\begin{align*}
\mfk{A}_{3}=\sum_{(i,j)\in\Omega_{\star}\setminus\Omega}|(\bL_{M})_{i,\cdot}(\bR_{M})_{j,\cdot}^{\top}|^{2}\le\sum_{(i,j)\in\Omega_{\star}}|(\bL_{M})_{i,\cdot}(\bR_{M})_{j,\cdot}^{\top}|^{2}.
\end{align*}
We can further upper bound $\mfk{A}_{3}$ as 
\begin{align*}
\mfk{A}_{3} & \le\sum_{(i,j)\in\Omega_{\star}}\|(\bL_{M})_{i,\cdot}\|_{2}^{2}\|(\bR_{M})_{j,\cdot}\|_{2}^{2}\\
 & \le\sum_{i=1}^{n_{1}}\sum_{j:(i,j)\in\Omega_{\star}}\|(\bL_{M})_{i,\cdot}\|_{2}^{2}\|\bR_{M}\|_{2,\infty}^{2}\\
 & \le\sum_{i=1}^{n_{1}}\alpha n_{2}\|(\bL_{M})_{i,\cdot}\|_{2}^{2}\|\bR_{M}\|_{2,\infty}^{2}=\alpha n_{2}\|\bL_{M}\|_{\fro}^{2}\|\bR_{M}\|_{2,\infty}^{2},
\end{align*}
where the last line follows from the fact that $\Omega_{\star}$ contains at most $\alpha n_{2}$ non-zero entries in each row. Similarly, one can obtain
\begin{align*}
\mfk{A}_{3}\le\alpha n_{1}\|\bL_{M}\|_{2,\infty}^{2}\|\bR_{M}\|_{\fro}^{2},
\end{align*}
which completes the proof.
\end{proof}

\subsection{Proof of Lemma~\ref{lemma:contraction_RPCA}}

We begin with introducing several useful notations and facts. In view of the condition $\dist(\bF_{t},\bF_{\star})\le0.02\sigma_{r}(\bX_{\star})$ and Lemma~\ref{lemma:Q_existence}, one knows that $\bQ_{t}$, the optimal alignment matrix between $\bF_{t}$ and $\bF_{\star}$ exists. Therefore, for notational convenience, denote $\bL\coloneqq\bL_{t}\bQ_{t}$, $\bR\coloneqq\bR_{t}\bQ_{t}^{-\top}$, $\bDelta_{L}\coloneqq\bL-\bL_{\star}$, $\bDelta_{R}\coloneqq\bR-\bR_{\star}$, $\bS\coloneqq\bS_{t}=\cT_{2\alpha}[\bX_{\star}+\bS_{\star}-\bL\bR^{\top}]$, and $\epsilon\coloneqq 0.02$. Similar to the derivation in \eqref{eq:cond_MF}, we have 
\begin{align}
\|\bDelta_{L}\bSigma_{\star}^{-1/2}\|_{\op}\vee\|\bDelta_{R}\bSigma_{\star}^{-1/2}\|_{\op}\le\epsilon.\label{eq:cond_RPCA_op}
\end{align}
Moreover, the incoherence condition 
\begin{align}
\sqrt{n_{1}}\|\bDelta_{L}\bSigma_{\star}^{1/2}\|_{2,\infty}\vee\sqrt{n_{2}}\|\bDelta_{R}\bSigma_{\star}^{1/2}\|_{2,\infty}\le\sqrt{\mu r}\sigma_{r}(\bX_{\star})\label{eq:cond_RPCA_2inf-1}
\end{align}
implies 
\begin{align}
\sqrt{n_{1}}\|\bDelta_{L}\bSigma_{\star}^{-1/2}\|_{2,\infty}\vee\sqrt{n_{2}}\|\bDelta_{R}\bSigma_{\star}^{-1/2}\|_{2,\infty}\le\sqrt{\mu r},\label{eq:cond_RPCA_2inf-2}
\end{align}
which combined with the triangle inequality further implies 
\begin{align}
\sqrt{n_{1}}\|\bL\bSigma_{\star}^{-1/2}\|_{2,\infty}\vee\sqrt{n_{2}}\|\bR\bSigma_{\star}^{-1/2}\|_{2,\infty}\le2\sqrt{\mu r}.\label{eq:cond_RPCA_2inf-3}
\end{align}
The conclusion $\|\bL_{t}\bR_{t}^{\top}-\bX_{\star}\|_{\fro} \le 1.5\dist(\bF_{t},\bF_{\star})$ is a simple consequence of Lemma~\ref{lemma:matrix2factor}; see \eqref{eq:dist_matrix} for a detailed argument. 
In what follows, we shall prove the distance contraction and the incoherence condition separately.

\subsubsection{Distance contraction}

By the definition of $\dist^{2}(\bF_{t+1},\bF_{\star})$, one has
\begin{align}
\dist^{2}(\bF_{t+1},\bF_{\star}) & \le\left\Vert (\bL_{t+1}\bQ_{t}-\bL_{\star})\bSigma_{\star}^{1/2}\right\Vert _{\fro}^{2}+\left\Vert (\bR_{t+1}\bQ_{t}^{-\top}-\bR_{\star})\bSigma_{\star}^{1/2}\right\Vert _{\fro}^{2}.\label{eq:RPCA_expand}
\end{align}
From now on, we focus on controlling the first square $\|(\bL_{t+1}\bQ_{t}-\bL_{\star})\bSigma_{\star}^{1/2}\|_{\fro}^{2}$. In view of the update rule \eqref{eq:iterates_RPCA}, one has 
\begin{align}
(\bL_{t+1}\bQ_{t}-\bL_{\star})\bSigma_{\star}^{1/2} &= \left(\bL-\eta(\bL\bR^{\top}+\bS-\bX_{\star}-\bS_{\star})\bR(\bR^{\top}\bR)^{-1}-\bL_{\star}\right)\bSigma_{\star}^{1/2}\nonumber \\
 & =\left(\bDelta_{L}-\eta(\bL\bR^{\top}-\bX_{\star})\bR(\bR^{\top}\bR)^{-1}-\eta(\bS-\bS_{\star})\bR(\bR^{\top}\bR)^{-1}\right)\bSigma_{\star}^{1/2}\nonumber \\
 & =(1-\eta)\bDelta_{L}\bSigma_{\star}^{1/2}-\eta\bL_{\star}\bDelta_{R}^{\top}\bR(\bR^{\top}\bR)^{-1}\bSigma_{\star}^{1/2}-\eta(\bS-\bS_{\star})\bR(\bR^{\top}\bR)^{-1}\bSigma_{\star}^{1/2}.\label{eq:RPCA_Lt}
\end{align}
Here, we use the notation introduced above and the decomposition $\bL\bR^{\top}-\bX_{\star}=\bDelta_{L}\bR^{\top}+\bL_{\star}\bDelta_{R}^{\top}$. Take the squared Frobenius norm of both sides of \eqref{eq:RPCA_Lt} to obtain
\begin{align*}
\left\Vert (\bL_{t+1}\bQ_{t}-\bL_{\star})\bSigma_{\star}^{1/2}\right\Vert _{\fro}^{2} &=\underbrace{\left\Vert (1-\eta)\bDelta_{L}\bSigma_{\star}^{1/2}-\eta\bL_{\star}\bDelta_{R}^{\top}\bR(\bR^{\top}\bR)^{-1}\bSigma_{\star}^{1/2}\right\Vert _{\fro}^{2}}_{\mfk{R}_{1}}\\
 & \quad-2\eta(1-\eta)\underbrace{\tr\left((\bS-\bS_{\star})\bR(\bR^{\top}\bR)^{-1}\bSigma_{\star}\bDelta_{L}^{\top}\right)}_{\mfk{R}_{2}}\\
 & \quad+2\eta^{2}\underbrace{\tr\left((\bS-\bS_{\star})\bR(\bR^{\top}\bR)^{-1}\bSigma_{\star}(\bR^{\top}\bR)^{-1}\bR^{\top}\bDelta_{R}\bL_{\star}^{\top}\right)}_{\mfk{R}_{3}}\\
 & \quad+\eta^{2}\underbrace{\left\Vert (\bS-\bS_{\star})\bR(\bR^{\top}\bR)^{-1}\bSigma_{\star}^{1/2}\right\Vert _{\fro}^{2}}_{\mfk{R}_{4}}.
\end{align*}
In the sequel, we shall bound the four terms separately, of which $\mfk{R}_{1}$ is the main term, and $\mfk{R}_{2},\mfk{R}_{3}$ and $\mfk{R}_{4}$ are perturbation terms. 
\begin{enumerate}
\item Notice that the main term $\mfk{R}_{1}$ has already been controlled in \eqref{eq:MF_Lt_bound} under the condition \eqref{eq:cond_RPCA_op}. It obeys
\begin{align*}
\mfk{R}_{1}\le\left((1-\eta)^{2}+\frac{2\epsilon}{1-\epsilon}\eta(1-\eta)\right)\|\bDelta_{L}\bSigma_{\star}^{1/2}\|_{\fro}^{2}+\frac{2\epsilon+\epsilon^{2}}{(1-\epsilon)^{2}}\eta^{2}\|\bDelta_{R}\bSigma_{\star}^{1/2}\|_{\fro}^{2}.
\end{align*}
\item For the second term $\mfk{R}_{2}$, set $\bM\coloneqq\bDelta_{L}\bSigma_{\star}(\bR^{\top}\bR)^{-1}\bR^{\top}$ with $\bL_{M}\coloneqq\bDelta_{L}\bSigma_{\star}(\bR^{\top}\bR)^{-1}\bSigma_{\star}^{1/2}$, $\bR_{M}\coloneqq\bR\bSigma_{\star}^{-1/2}$, and then invoke Lemma~\ref{lemma:S} with $\nu\coloneqq3\sqrt{\mu r}/2$ to see
\begin{align*}
|\mfk{R}_{2}| & \le\frac{3}{2}\sqrt{3\alpha\mu r}\left(\|\bDelta_{L}\bSigma_{\star}^{1/2}\|_{\fro}+\|\bDelta_{R}\bSigma_{\star}^{1/2}\|_{\fro}\right)\left\Vert \bDelta_{L}\bSigma_{\star}(\bR^{\top}\bR)^{-1}\bR^{\top}\right\Vert _{\fro}\\
 & \quad+2\sqrt{\alpha n_{2}}\left\Vert \bDelta_{L}\bSigma_{\star}(\bR^{\top}\bR)^{-1}\bSigma_{\star}^{1/2}\right\Vert _{\fro}\|\bR\bSigma_{\star}^{-1/2}\|_{2,\infty}\|\bL\bR^{\top}-\bX_{\star}\|_{\fro}\\
 & \le\frac{3}{2}\sqrt{3\alpha\mu r}\left(\|\bDelta_{L}\bSigma_{\star}^{1/2}\|_{\fro}+\|\bDelta_{R}\bSigma_{\star}^{1/2}\|_{\fro}\right)\|\bDelta_{L}\bSigma_{\star}^{1/2}\|_{\fro}\left\Vert \bR(\bR^{\top}\bR)^{-1}\bSigma_{\star}^{1/2}\right\Vert _{\op}\\
 & \quad+2\sqrt{\alpha n_{2}}\|\bDelta_{L}\bSigma_{\star}^{1/2}\|_{\fro}\left\Vert \bSigma_{\star}^{1/2}(\bR^{\top}\bR)^{-1}\bSigma_{\star}^{1/2}\right\Vert _{\op}\|\bR\bSigma_{\star}^{-1/2}\|_{2,\infty}\|\bL\bR^{\top}-\bX_{\star}\|_{\fro}.
\end{align*}
Take the condition \eqref{eq:cond_RPCA_op} and Lemmas~\ref{lemma:Weyl} and \ref{lemma:matrix2factor} together to obtain 
\begin{align}
\begin{split}\left\Vert \bR(\bR^{\top}\bR)^{-1}\bSigma_{\star}^{1/2}\right\Vert _{\op} & \le\frac{1}{1-\epsilon};\\
\left\Vert \bSigma_{\star}^{1/2}(\bR^{\top}\bR)^{-1}\bSigma_{\star}^{1/2}\right\Vert _{\op} & =\left\Vert \bR(\bR^{\top}\bR)^{-1}\bSigma_{\star}^{1/2}\right\Vert _{\op}^{2}\le \frac{1}{(1-\epsilon)^{2}};\\
\|\bL\bR^{\top}-\bX_{\star}\|_{\fro} & \le(1+\frac{\epsilon}{2})\left(\|\bDelta_{L}\bSigma_{\star}^{1/2}\|_{\fro}+\|\bDelta_{R}\bSigma_{\star}^{1/2}\|_{\fro}\right).
\end{split}\label{eq:consequences_RPCA}
\end{align}
These consequences combined with the condition \eqref{eq:cond_RPCA_2inf-3} yield
\begin{align*}
|\mfk{R}_{2}| &\le\frac{3\sqrt{3\alpha\mu r}}{2(1-\epsilon)}\left(\|\bDelta_{L}\bSigma_{\star}^{1/2}\|_{\fro}+\|\bDelta_{R}\bSigma_{\star}^{1/2}\|_{\fro}\right)\|\bDelta_{L}\bSigma_{\star}^{1/2}\|_{\fro}\\
 & \quad+\frac{4\sqrt{\alpha\mu r}}{(1-\epsilon)^{2}}\|\bDelta_{L}\bSigma_{\star}^{1/2}\|_{\fro}(1+\frac{\epsilon}{2})\left(\|\bDelta_{L}\bSigma_{\star}^{1/2}\|_{\fro}+\|\bDelta_{R}\bSigma_{\star}^{1/2}\|_{\fro}\right)\\
 & \le\sqrt{\alpha\mu r}\frac{3\sqrt{3}+\frac{4(2+\epsilon)}{1-\epsilon}}{2(1-\epsilon)}\left(\|\bDelta_{L}\bSigma_{\star}^{1/2}\|_{\fro}^{2}+\|\bDelta_{L}\bSigma_{\star}^{1/2}\|_{\fro}\|\bDelta_{R}\bSigma_{\star}^{1/2}\|_{\fro}\right)\\
 & \le\sqrt{\alpha\mu r}\frac{3\sqrt{3}+\frac{4(2+\epsilon)}{1-\epsilon}}{2(1-\epsilon)}\left(\frac{3}{2}\|\bDelta_{L}\bSigma_{\star}^{1/2}\|_{\fro}^{2}+\frac{1}{2}\|\bDelta_{R}\bSigma_{\star}^{1/2}\|_{\fro}^{2}\right),
\end{align*}
where the last inequality holds since $2ab\le a^{2}+b^{2}$. 

\item The third term $\mfk{R}_{3}$ can be controlled similarly. Set $\bM\coloneqq\bL_{\star}\bDelta_{R}^{\top}\bR(\bR^{\top}\bR)^{-1}\bSigma_{\star}(\bR^{\top}\bR)^{-1}\bR^{\top}$ with $\bL_{M}\coloneqq\bL_{\star}\bSigma_{\star}^{-1/2}$ and $\bR_{M}\coloneqq\bR(\bR^{\top}\bR)^{-1}\bSigma_{\star}(\bR^{\top}\bR)^{-1}\bR^{\top}\bDelta_{R}\bSigma_{\star}^{1/2}$, and invoke Lemma~\ref{lemma:S} with $\nu\coloneqq3\sqrt{\mu r}/2$ to arrive at 
\begin{align*}
|\mfk{R}_{3}| & \le\frac{3}{2}\sqrt{3\alpha\mu r}\left(\|\bDelta_{L}\bSigma_{\star}^{1/2}\|_{\fro}+\|\bDelta_{R}\bSigma_{\star}^{1/2}\|_{\fro}\right)\left\Vert \bL_{\star}\bDelta_{R}^{\top}\bR(\bR^{\top}\bR)^{-1}\bSigma_{\star}(\bR^{\top}\bR)^{-1}\bR^{\top}\right\Vert _{\fro}\\
 & \quad+2\sqrt{\alpha n_{1}}\|\bL_{\star}\bSigma_{\star}^{-1/2}\|_{2,\infty}\left\Vert \bR(\bR^{\top}\bR)^{-1}\bSigma_{\star}(\bR^{\top}\bR)^{-1}\bR^{\top}\bDelta_{R}\bSigma_{\star}^{1/2}\right\Vert _{\fro}\|\bL\bR^{\top}-\bX_{\star}\|_{\fro}\\
 & \le\frac{3}{2}\sqrt{3\alpha\mu r}\left(\|\bDelta_{L}\bSigma_{\star}^{1/2}\|_{\fro}+\|\bDelta_{R}\bSigma_{\star}^{1/2}\|_{\fro}\right)\|\bDelta_{R}\bSigma_{\star}^{1/2}\|_{\fro}\left\Vert \bR(\bR^{\top}\bR)^{-1}\bSigma_{\star}^{1/2}\right\Vert _{\op}^{2}\\
 & \quad+2\sqrt{\alpha n_{1}}\|\bL_{\star}\bSigma_{\star}^{-1/2}\|_{2,\infty}\left\Vert \bR(\bR^{\top}\bR)^{-1}\bSigma_{\star}^{1/2}\right\Vert _{\op}^{2}\|\bDelta_{R}\bSigma_{\star}^{1/2}\|_{\fro}\|\bL\bR^{\top}-\bX_{\star}\|_{\fro}.
\end{align*}
Use the consequences \eqref{eq:consequences_RPCA} again to obtain 
\begin{align*}
|\mfk{R}_{3}| & \le\frac{3\sqrt{3\alpha\mu r}}{2(1-\epsilon)^{2}}\left(\|\bDelta_{L}\bSigma_{\star}^{1/2}\|_{\fro}+\|\bDelta_{R}\bSigma_{\star}^{1/2}\|_{\fro}\right)\|\bDelta_{R}\bSigma_{\star}^{1/2}\|_{\fro}\\
 & \quad+\frac{2\sqrt{\alpha\mu r}}{(1-\epsilon)^{2}}\|\bDelta_{R}\bSigma_{\star}^{1/2}\|_{\fro}(1+\frac{\epsilon}{2})\left(\|\bDelta_{L}\bSigma_{\star}^{1/2}\|_{\fro}+\|\bDelta_{R}\bSigma_{\star}^{1/2}\|_{\fro}\right)\\
 & \le\sqrt{\alpha\mu r}\frac{3\sqrt{3}+2(2+\epsilon)}{2(1-\epsilon)^{2}}\left(\|\bDelta_{L}\bSigma_{\star}^{1/2}\|_{\fro}\|\bDelta_{R}\bSigma_{\star}^{1/2}\|_{\fro}+\|\bDelta_{R}\bSigma_{\star}^{1/2}\|_{\fro}^{2}\right)\\
 & \le\sqrt{\alpha\mu r}\frac{3\sqrt{3}+2(2+\epsilon)}{2(1-\epsilon)^{2}}\left(\frac{1}{2}\|\bDelta_{L}\bSigma_{\star}^{1/2}\|_{\fro}^{2}+\frac{3}{2}\|\bDelta_{R}\bSigma_{\star}^{1/2}\|_{\fro}^{2}\right).
\end{align*}

\item For the last term $\mfk{R}_{4}$, utilize the variational representation of the Frobenius norm to see 
\begin{align*}
\sqrt{\mfk{R}_{4}}=\tr\left((\bS-\bS_{\star})\bR(\bR^{\top}\bR)^{-1}\bSigma_{\star}^{1/2}\tilde{\bL}^{\top}\right)
\end{align*}
for some $\tilde{\bL}\in\RR^{n_{1}\times r}$ obeying $\|\tilde{\bL}\|_{\fro}=1$.
Setting $\bM\coloneqq\tilde{\bL}\bSigma_{\star}^{1/2}(\bR^{\top}\bR)^{-1}\bR^{\top} =\bL_{M}\bR_{M}^{\top} $ with $\bL_{M}\coloneqq\tilde{\bL}\bSigma_{\star}^{1/2}(\bR^{\top}\bR)^{-1}\bSigma_{\star}^{1/2}$ and $\bR_{M}\coloneqq\bR\bSigma_{\star}^{-1/2}$, we are ready to apply Lemma~\ref{lemma:S} again with $\nu\coloneqq3\sqrt{\mu r}/2$ to see 
\begin{align*}
\sqrt{\mfk{R}_{4}} & \le\frac{3}{2}\sqrt{3\alpha\mu r}\left(\|\bDelta_{L}\bSigma_{\star}^{1/2}\|_{\fro}+\|\bDelta_{R}\bSigma_{\star}^{1/2}\|_{\fro}\right)\left\Vert \tilde{\bL}\bSigma_{\star}^{1/2}(\bR^{\top}\bR)^{-1}\bR^{\top}\right\Vert _{\fro}\\
 & \quad+2\sqrt{\alpha n_{2}}\left\Vert \tilde{\bL}\bSigma_{\star}^{1/2}(\bR^{\top}\bR)^{-1}\bSigma_{\star}^{1/2}\right\Vert _{\fro}\|\bR\bSigma_{\star}^{-1/2}\|_{2,\infty}\|\bL\bR^{\top}-\bX_{\star}\|_{\fro}\\
 & \le\frac{3}{2}\sqrt{3\alpha\mu r}\left(\|\bDelta_{L}\bSigma_{\star}^{1/2}\|_{\fro}+\|\bDelta_{R}\bSigma_{\star}^{1/2}\|_{\fro}\right)\left\Vert \bR(\bR^{\top}\bR)^{-1}\bSigma_{\star}^{1/2}\right\Vert _{\op}\\
 & \quad+2\sqrt{\alpha n_{2}}\left\Vert \bSigma_{\star}^{1/2}(\bR^{\top}\bR)^{-1}\bSigma_{\star}^{1/2}\right\Vert _{\op}\|\bR\bSigma_{\star}^{-1/2}\|_{2,\infty}\|\bL\bR^{\top}-\bX_{\star}\|_{\fro}.
\end{align*}
This combined with the consequences \eqref{eq:consequences_RPCA} and condition \eqref{eq:cond_RPCA_2inf-3} yields
\begin{align*}
\sqrt{\mfk{R}_{4}}\le\sqrt{\alpha\mu r}\frac{3\sqrt{3}+\frac{4(2+\epsilon)}{1-\epsilon}}{2(1-\epsilon)}\left(\|\bDelta_{L}\bSigma_{\star}^{1/2}\|_{\fro}+\|\bDelta_{R}\bSigma_{\star}^{1/2}\|_{\fro}\right).
\end{align*}
Take the square, and use the elementary inequality $(a+b)^{2}\le2a^{2}+2b^{2}$ to reach
\begin{align*}
\mfk{R}_{4}\le\alpha\mu r\frac{(3\sqrt{3}+\frac{4(2+\epsilon)}{1-\epsilon})^{2}}{2(1-\epsilon)^{2}}\left(\|\bDelta_{L}\bSigma_{\star}^{1/2}\|_{\fro}^{2}+\|\bDelta_{R}\bSigma_{\star}^{1/2}\|_{\fro}^{2}\right).
\end{align*}
\end{enumerate}
Taking collectively the bounds for $\mfk{R}_{1},\mfk{R}_{2},\mfk{R}_{3}$ and $\mfk{R}_{4}$ yields the control of $\|(\bL_{t+1}\bQ_{t}-\bL_{\star})\bSigma_{\star}^{1/2}\|_{\fro}^{2}$ as
\begin{align*}
\left\Vert (\bL_{t+1}\bQ_{t}-\bL_{\star})\bSigma_{\star}^{1/2}\right\Vert _{\fro}^{2} & \le \left((1-\eta)^{2}+\frac{2\epsilon}{1-\epsilon}\eta(1-\eta)\right)\|\bDelta_{L}\bSigma_{\star}^{1/2}\|_{\fro}^{2}+\frac{2\epsilon+\epsilon^{2}}{(1-\epsilon)^{2}}\eta^{2}\|\bDelta_{R}\bSigma_{\star}^{1/2}\|_{\fro}^{2} \\
 & \quad +\sqrt{\alpha\mu r}\frac{3\sqrt{3}+\frac{4(2+\epsilon)}{1-\epsilon}}{1-\epsilon}\eta(1-\eta)\left(\frac{3}{2}\|\bDelta_{L}\bSigma_{\star}^{1/2}\|_{\fro}^{2}+\frac{1}{2}\|\bDelta_{R}\bSigma_{\star}^{1/2}\|_{\fro}^{2}\right) \\
 & \quad +\sqrt{\alpha\mu r}\frac{3\sqrt{3}+2(2+\epsilon)}{(1-\epsilon)^{2}}\eta^2\left(\frac{1}{2}\|\bDelta_{L}\bSigma_{\star}^{1/2}\|_{\fro}^{2}+\frac{3}{2}\|\bDelta_{R}\bSigma_{\star}^{1/2}\|_{\fro}^{2}\right)\\
 & \quad +\alpha\mu r\frac{(3\sqrt{3}+\frac{4(2+\epsilon)}{1-\epsilon})^{2}}{2(1-\epsilon)^{2}}\eta^2\left(\|\bDelta_{L}\bSigma_{\star}^{1/2}\|_{\fro}^{2}+\|\bDelta_{R}\bSigma_{\star}^{1/2}\|_{\fro}^{2}\right).
\end{align*}
Similarly, we can obtain the control of $\|(\bR_{t+1}\bQ_{t}^{-\top}-\bR_{\star})\bSigma_{\star}^{1/2}\|_{\fro}^{2}$.
Combine them together and identify $\dist^{2}(\bF_{t},\bF_{\star}) = \|\bDelta_{L}\bSigma_{\star}^{1/2}\|_{\fro}^{2}+\|\bDelta_{R}\bSigma_{\star}^{1/2}\|_{\fro}^{2}$ to reach
\begin{align*}
\left\Vert (\bL_{t+1}\bQ_{t}-\bL_{\star})\bSigma_{\star}^{1/2}\right\Vert _{\fro}^{2}+\left\Vert (\bR_{t+1}\bQ_{t}^{-\top}-\bR_{\star})\bSigma_{\star}^{1/2}\right\Vert _{\fro}^{2}\le\rho^{2}(\eta;\epsilon,\alpha\mu r)\dist^{2}(\bF_{t},\bF_{\star}),
\end{align*}
where the contraction rate $\rho^{2}(\eta;\epsilon,\alpha\mu r)$ is given by 
\begin{align*}
\rho^{2}(\eta;\epsilon,\alpha\mu r) &\coloneqq (1-\eta)^{2}+\frac{2\epsilon+\sqrt{\alpha\mu r}(6\sqrt{3}+\frac{8(2+\epsilon)}{1-\epsilon})}{1-\epsilon}\eta(1-\eta)\\
 & \quad+\frac{2\epsilon+\epsilon^{2}+\sqrt{\alpha\mu r}(6\sqrt{3}+4(2+\epsilon))+\alpha\mu r(3\sqrt{3}+\frac{4(2+\epsilon)}{1-\epsilon})^{2}}{(1-\epsilon)^{2}}\eta^{2}.
\end{align*}
With $\epsilon=0.02$, $\alpha\mu r\le10^{-4}$, and $0<\eta\le2/3$, one has $\rho(\eta;\epsilon,\alpha\mu r)\le1-0.6\eta$. Thus we conclude that
\begin{align}
\dist(\bF_{t+1},\bF_{\star}) &\le \sqrt{\left\Vert (\bL_{t+1}\bQ_{t}-\bL_{\star})\bSigma_{\star}^{1/2}\right\Vert _{\fro}^{2}+\left\Vert (\bR_{t+1}\bQ_{t}^{-\top}-\bR_{\star})\bSigma_{\star}^{1/2}\right\Vert _{\fro}^{2}}\nonumber \\
 &\le(1-0.6\eta)\dist(\bF_{t},\bF_{\star}).\label{eq:RPCA_bound}
\end{align}

\subsubsection{Incoherence condition}

We start by controlling the term $\|(\bL_{t+1}\bQ_{t}-\bL_{\star})\bSigma_{\star}^{1/2}\|_{2,\infty}$. We know from \eqref{eq:RPCA_Lt} that 
\begin{align*}
(\bL_{t+1}\bQ_{t}-\bL_{\star})\bSigma_{\star}^{1/2}=(1-\eta)\bDelta_{L}\bSigma_{\star}^{1/2}-\eta\bL_{\star}\bDelta_{R}^{\top}\bR(\bR^{\top}\bR)^{-1}\bSigma_{\star}^{1/2}-\eta(\bS-\bS_{\star})\bR(\bR^{\top}\bR)^{-1}\bSigma_{\star}^{1/2}.
\end{align*}
Apply the triangle inequality to obtain 
\begin{align*}
\left\Vert (\bL_{t+1}\bQ_{t}-\bL_{\star})\bSigma_{\star}^{1/2}\right\Vert _{2,\infty} & \le(1-\eta)\|\bDelta_{L}\bSigma_{\star}^{1/2}\|_{2,\infty}+\eta\underbrace{\left\Vert \bL_{\star}\bDelta_{R}^{\top}\bR(\bR^{\top}\bR)^{-1}\bSigma_{\star}^{1/2}\right\Vert _{2,\infty}}_{\mfk{T}_{1}}\\
 & \quad+\eta\underbrace{\left\Vert (\bS-\bS_{\star})\bR(\bR^{\top}\bR)^{-1}\bSigma_{\star}^{1/2}\right\Vert _{2,\infty}}_{\mfk{T}_{2}}.
\end{align*}
The first term $\|\bDelta_{L}\bSigma_{\star}^{1/2}\|_{2,\infty}$ follows from the incoherence condition \eqref{eq:cond_RPCA_2inf-1} as
\begin{align*}
\|\bDelta_{L}\bSigma_{\star}^{1/2}\|_{2,\infty}\le\sqrt{\frac{\mu r}{n_{1}}}\sigma_{r}(\bX_{\star}).
\end{align*}
In the sequel, we shall bound the terms $\mfk{T}_{1}$ and $\mfk{T}_{2}$. 
\begin{enumerate}
\item For the term $\mfk{T}_{1}$, use the relation $\|\bA\bB\|_{2,\infty}\le\|\bA\|_{2,\infty}\|\bB\|_{\op}$, and combine the condition \eqref{eq:cond_RPCA_op}  with the consequences \eqref{eq:consequences_RPCA} to obtain
\begin{align*}
\mfk{T}_{1} & \le\|\bL_{\star}\bSigma_{\star}^{-1/2}\|_{2,\infty}\left\Vert \bSigma_{\star}^{1/2}\bDelta_{R}^{\top}\bR(\bR^{\top}\bR)^{-1}\bSigma_{\star}^{1/2}\right\Vert _{\op}\\
 & \le\|\bL_{\star}\bSigma_{\star}^{-1/2}\|_{2,\infty}\|\bDelta_{R}\bSigma_{\star}^{1/2}\|_{\op}\left\Vert \bR(\bR^{\top}\bR)^{-1}\bSigma_{\star}^{1/2}\right\Vert _{\op}\\
 & \le\frac{\epsilon}{1-\epsilon}\sqrt{\frac{\mu r}{n_{1}}}\sigma_{r}(\bX_{\star}),
\end{align*}
\item For the term $\mfk{T}_{2}$, use the relation $\|\bA\bB\|_{2,\infty}\le\|\bA\|_{2,\infty}\|\bB\|_{\op}$ to obtain 
\begin{align*}
\mfk{T}_{2} & \le\|\bS-\bS_{\star}\|_{2,\infty}\left\Vert \bR(\bR^{\top}\bR)^{-1}\bSigma_{\star}^{1/2}\right\Vert _{\op}.
\end{align*}
We know from Lemma~\ref{lemma:S} that $\bS-\bS_{\star}$ has at most $3\alpha n_{2}$ non-zero entries in each row, and $\|\bS-\bS_{\star}\|_{\infty}\le2\|\bL\bR^{\top}-\bX_{\star}\|_{\infty}$. Upper bound the $\ell_{2,\infty}$ norm by the $\ell_{\infty}$ norm as 
\begin{align*}
\|\bS-\bS_{\star}\|_{2,\infty} & \le\sqrt{3\alpha n_{2}}\|\bS-\bS_{\star}\|_{\infty}\le2\sqrt{3\alpha n_{2}}\|\bL\bR^{\top}-\bX_{\star}\|_{\infty}.
\end{align*}
Split $\bL\bR^{\top}-\bX_{\star}=\bDelta_{L}\bR^{\top}+\bL_{\star}\bDelta_{R}^{\top}$, and take the conditions \eqref{eq:cond_RPCA_2inf-1} and \eqref{eq:cond_RPCA_2inf-3} to obtain 
\begin{align*}
\|\bL\bR^{\top}-\bX_{\star}\|_{\infty} & \le\|\bDelta_{L}\bR^{\top}\|_{\infty}+\|\bL_{\star}\bDelta_{R}^{\top}\|_{\infty}\\
 & \le\|\bDelta_{L}\bSigma_{\star}^{1/2}\|_{2,\infty}\|\bR\bSigma_{\star}^{-1/2}\|_{2,\infty}+\|\bL_{\star}\bSigma_{\star}^{-1/2}\|_{2,\infty}\|\bDelta_{R}\bSigma_{\star}^{1/2}\|_{2,\infty}\\
 & \le\sqrt{\frac{\mu r}{n_{1}}}\sigma_{r}(\bX_{\star})2\sqrt{\frac{\mu r}{n_{2}}}+\sqrt{\frac{\mu r}{n_{1}}}\sqrt{\frac{\mu r}{n_{2}}}\sigma_{r}(\bX_{\star}) \\
 & =\frac{3\mu r}{\sqrt{n_{1}n_{2}}}\sigma_{r}(\bX_{\star}).
\end{align*}
This combined with the consequences \eqref{eq:consequences_RPCA} yields
\begin{align*}
\mfk{T}_{2}\le\frac{6\sqrt{3\alpha\mu r}}{1-\epsilon}\sqrt{\frac{\mu r}{n_{1}}}\sigma_{r}(\bX_{\star}).
\end{align*}
\end{enumerate}
Taking collectively the bounds for $\mfk{T}_{1},\mfk{T}_{2}$ yields the control 
\begin{align}
\left\Vert (\bL_{t+1}\bQ_{t}-\bL_{\star})\bSigma_{\star}^{1/2}\right\Vert _{2,\infty}\le\left(1-\eta+\frac{\epsilon+6\sqrt{3\alpha\mu r}}{1-\epsilon}\eta\right)\sqrt{\frac{\mu r}{n_{1}}}\sigma_{r}(\bX_{\star}).\label{eq:RPCA_Lt_bound_2inf}
\end{align}

The last step is to switch the alignment matrix from $\bQ_{t}$ to $\bQ_{t+1}$. \eqref{eq:RPCA_bound} together with Lemma~\ref{lemma:Q_existence} demonstrates the existence of $\bQ_{t+1}$. Apply the triangle inequality to obtain 
\begin{align*}
 & \left\Vert (\bL_{t+1}\bQ_{t+1}-\bL_{\star})\bSigma_{\star}^{1/2}\right\Vert _{2,\infty}\le\left\Vert (\bL_{t+1}\bQ_{t}-\bL_{\star})\bSigma_{\star}^{1/2}\right\Vert _{2,\infty}+\left\Vert \bL_{t+1}(\bQ_{t+1}-\bQ_{t})\bSigma_{\star}^{1/2}\right\Vert _{2,\infty}\\
 & \qquad\le\left\Vert (\bL_{t+1}\bQ_{t}-\bL_{\star})\bSigma_{\star}^{1/2}\right\Vert _{2,\infty}+\|\bL_{t+1}\bQ_{t}\bSigma_{\star}^{-1/2}\|_{2,\infty}\left\Vert \bSigma_{\star}^{1/2}\bQ_{t}^{-1}\bQ_{t+1}\bSigma_{\star}^{1/2}-\bSigma_{\star}\right\Vert _{\op}.
\end{align*}
We deduct from \eqref{eq:RPCA_Lt_bound_2inf} that 
\begin{align*}
\|\bL_{t+1}\bQ_{t}\bSigma_{\star}^{-1/2}\|_{2,\infty} \le \|\bL_{\star}\bSigma_{\star}^{-1/2}\|_{2,\infty}+\left\Vert (\bL_{t+1}\bQ_{t}-\bL_{\star})\bSigma_{\star}^{-1/2}\right\Vert _{2,\infty}\le\left(2-\eta+\frac{\epsilon+6\sqrt{3\alpha\mu r}}{1-\epsilon}\eta\right)\sqrt{\frac{\mu r}{n_{1}}}.
\end{align*}
Regarding the alignment matrix term, invoke Lemma~\ref{lemma:Q_perturbation} to obtain 
\begin{align*}
\left\Vert \bSigma_{\star}^{1/2}\bQ_{t}^{-1}\bQ_{t+1}\bSigma_{\star}^{1/2}-\bSigma_{\star}\right\Vert _{\op} &\le \frac{\|(\bR_{t+1}(\bQ_{t}^{-\top}-\bQ_{t+1}^{-\top})\bSigma_{\star}^{1/2}\|_{\op}}{1-\|(\bR_{t+1}\bQ_{t+1}^{-\top}-\bR_{\star})\bSigma_{\star}^{-1/2}\|_{\op}}\\
 &\le\frac{\|(\bR_{t+1}\bQ_{t}^{-\top}-\bR_{\star})\bSigma_{\star}^{1/2}\|_{\op}+\|(\bR_{t+1}\bQ_{t+1}^{-\top}-\bR_{\star})\bSigma_{\star}^{1/2}\|_{\op}}{1-\|(\bR_{t+1}\bQ_{t+1}^{-\top}-\bR_{\star})\bSigma_{\star}^{-1/2}\|_{\op}} \\
 &\le\frac{2\epsilon}{1-\epsilon}\sigma_{r}(\bX_{\star}),
\end{align*}
where we deduct from \eqref{eq:RPCA_bound} that the distances using either $\bQ_{t}$ or $\bQ_{t+1}$ are bounded by 
\begin{align*}
\|(\bR_{t+1}\bQ_{t}^{-\top}-\bR_{\star})\bSigma_{\star}^{1/2}\|_{\op} & \le\epsilon\sigma_{r}(\bX_{\star});\\
\|(\bR_{t+1}\bQ_{t+1}^{-\top}-\bR_{\star})\bSigma_{\star}^{1/2}\|_{\op} & \le\epsilon\sigma_{r}(\bX_{\star});\\
\|(\bR_{t+1}\bQ_{t+1}^{-\top}-\bR_{\star})\bSigma_{\star}^{-1/2}\|_{\op} & \le\epsilon.
\end{align*}
Combine all pieces to reach 
\begin{align*}
\left\Vert (\bL_{t+1}\bQ_{t+1}-\bL_{\star})\bSigma_{\star}^{1/2}\right\Vert _{2,\infty}\le\left(\frac{1+\epsilon}{1-\epsilon}\left(1-\eta+\frac{\epsilon+6\sqrt{3\alpha\mu r}}{1-\epsilon}\eta\right)+\frac{2\epsilon}{1-\epsilon}\right)\sqrt{\frac{\mu r}{n_{1}}}\sigma_{r}(\bX_{\star}).
\end{align*}
With $\epsilon=0.02$, $\alpha\mu r\le10^{-4}$, and $0.1\le\eta\le2/3$, we get the desired incoherence condition 
\begin{align*}
\left\Vert (\bL_{t+1}\bQ_{t+1}-\bL_{\star})\bSigma_{\star}^{1/2}\right\Vert _{2,\infty}\le\sqrt{\frac{\mu r}{n_{1}}}\sigma_{r}(\bX_{\star}).
\end{align*}
Similarly, we can prove the other part 
\begin{align*}
\left\Vert (\bR_{t+1}\bQ_{t+1}^{-\top}-\bR_{\star})\bSigma_{\star}^{1/2}\right\Vert _{2,\infty}\le\sqrt{\frac{\mu r}{n_{2}}}\sigma_{r}(\bX_{\star}).
\end{align*}

\subsection{Proof of Lemma~\ref{lemma:init_RPCA}}

We first record two lemmas from \cite{yi2016fast}, which are useful for studying the properties of the initialization. 

\begin{lemma}[{\cite[Section~6.1]{yi2016fast}}]\label{lemma:S_inf}
Given $\bS_{\star}\in\cS_{\alpha}$, one has $\|\bS_{\star}-\cT_{\alpha}[\bX_{\star}+\bS_{\star}]\|_{\infty}\le2\|\bX_{\star}\|_{\infty}$.
\end{lemma}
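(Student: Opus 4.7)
The plan is to establish the entrywise bound by a case analysis on which supports contain the entry $(i,j)$. Set $\bY \coloneqq \bX_{\star}+\bS_{\star}$, let $\Omega_{\star}$ denote the support of $\bS_{\star}$, and let $\Omega$ denote the support of $\cT_{\alpha}[\bY]$. Partition the entries into four cases according to membership in $\Omega$ and $\Omega_{\star}$. Three of the cases are routine: outside $\Omega\cup\Omega_{\star}$ both matrices vanish; on $\Omega\cap\Omega_{\star}$ the identity $(\cT_{\alpha}[\bY])_{i,j}=\bY_{i,j}=(\bX_{\star})_{i,j}+(\bS_{\star})_{i,j}$ yields $(\bS_{\star}-\cT_{\alpha}[\bY])_{i,j}=-(\bX_{\star})_{i,j}$; and on $\Omega\setminus\Omega_{\star}$ the same identity with $(\bS_{\star})_{i,j}=0$ gives $(\bS_{\star}-\cT_{\alpha}[\bY])_{i,j}=-(\bX_{\star})_{i,j}$ as well. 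Each of these contributes at most $\|\bX_{\star}\|_{\infty}$.

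The only nontrivial case is $(i,j)\in\Omega_{\star}\setminus\Omega$, where $(\cT_{\alpha}[\bY])_{i,j}=0$ and hence $(\bS_{\star}-\cT_{\alpha}[\bY])_{i,j}=(\bS_{\star})_{i,j}=\bY_{i,j}-(\bX_{\star})_{i,j}$. I would prove $|\bY_{i,j}|\le\|\bX_{\star}\|_{\infty}$, at which point the triangle inequality immediately yields $|(\bS_{\star})_{i,j}|\le 2\|\bX_{\star}\|_{\infty}$. By the definition of $\cT_{\alpha}$, $(i,j)\notin\Omega$ forces either $|\bY|_{i,j}<|\bY|_{i,(\alpha n_{2})}$ or $|\bY|_{i,j}<|\bY|_{(\alpha n_{1}),j}$; by symmetry, assume the former. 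Using $\bS_{\star}\in\cS_{\alpha}$, there are at most $\alpha n_{2}$ contaminated positions in row $i$, one of which is $(i,j)$ itself, leaving at most $\alpha n_{2}-1$ other contaminated indices in that row. Since $(i,j)$ is strictly below the $(\alpha n_{2})$-th largest, the top $\alpha n_{2}$ entries of row $i$ by magnitude all lie at indices $k\ne j$, so by pigeonhole at least one of them is a clean index with $(\bS_{\star})_{i,k}=0$, giving $|\bY_{i,k}|=|(\bX_{\star})_{i,k}|\le\|\bX_{\star}\|_{\infty}$. Since this clean entry is among the top $\alpha n_{2}$, its magnitude is at least $|\bY|_{i,(\alpha n_{2})}$, forcing $|\bY|_{i,(\alpha n_{2})}\le\|\bX_{\star}\|_{\infty}$ and therefore $|\bY_{i,j}|<\|\bX_{\star}\|_{\infty}$.

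The main obstacle is the pigeonhole step: it is essential to account for the fact that the contaminated entry $(i,j)$ itself already consumes one slot of the $\alpha n_{2}$ sparsity budget for row $i$, so that only $\alpha n_{2}-1$ other indices in that row can be contaminated. Taking the maximum over all four cases gives the claimed bound $\|\bS_{\star}-\cT_{\alpha}[\bY]\|_{\infty}\le 2\|\bX_{\star}\|_{\infty}$.
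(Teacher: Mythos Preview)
Your proof is correct and follows the standard case analysis and pigeonhole argument. The paper does not supply its own proof of this lemma---it simply cites \cite[Section~6.1]{yi2016fast}---so there is nothing to compare against beyond noting that your argument is precisely the one underlying that reference.
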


\begin{lemma}[{\cite[Lemma~1]{yi2016fast}}]\label{lemma:S_op}
For any matrix $\bM\in\cS_{\alpha}$, one has $\|\bM\|_{\op}\le\alpha\sqrt{n_{1}n_{2}}\|\bM\|_{\infty}$.
\end{lemma}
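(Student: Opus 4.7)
The plan is to bound the operator norm via the classical interpolation inequality $\|\bM\|_{\op} \le \sqrt{\|\bM\|_{1\to 1}\,\|\bM\|_{\infty\to\infty}}$, where $\|\bM\|_{1\to 1}$ equals the maximum $\ell_{1}$ norm of any column and $\|\bM\|_{\infty\to\infty}$ equals the maximum $\ell_{1}$ norm of any row. This inequality follows from the Riesz--Thorin interpolation theorem applied to the endpoints $(1\to 1)$ and $(\infty\to\infty)$, or more elementarily from the observation that for any unit vectors $\bu,\bv$, $|\bu^{\top}\bM\bv| \le \sum_{i,j}|\bu_{i}||\bM_{i,j}||\bv_{j}|$ combined with a Cauchy--Schwarz argument against the two marginals.

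With this general inequality in hand, the proof is then almost immediate from the structure of $\cS_{\alpha}$. Since $\bM \in \cS_{\alpha}$, every row of $\bM$ has at most $\alpha n_{2}$ nonzero entries, and every column has at most $\alpha n_{1}$ nonzero entries; moreover each entry is bounded in magnitude by $\|\bM\|_{\infty}$. Therefore
\begin{align*}
\|\bM\|_{\infty\to\infty} = \max_{i}\sum_{j}|\bM_{i,j}| \le \alpha n_{2}\|\bM\|_{\infty},\qquad
\|\bM\|_{1\to 1} = \max_{j}\sum_{i}|\bM_{i,j}| \le \alpha n_{1}\|\bM\|_{\infty}.
\end{align*}

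Substituting these two bounds into the interpolation inequality yields
\begin{align*}
\|\bM\|_{\op} \le \sqrt{\|\bM\|_{1\to 1}\,\|\bM\|_{\infty\to\infty}} \le \sqrt{\alpha n_{1}\|\bM\|_{\infty}\cdot\alpha n_{2}\|\bM\|_{\infty}} = \alpha\sqrt{n_{1}n_{2}}\,\|\bM\|_{\infty},
\end{align*}
which is exactly the claimed bound. There is no real obstacle here; the only point worth flagging is the interpolation inequality itself, which one may either cite as a standard fact or justify in one line by writing $\|\bM\|_{\op}^{2} = \|\bM^{\top}\bM\|_{\op} \le \|\bM^{\top}\bM\|_{\infty\to\infty} \le \|\bM^{\top}\|_{\infty\to\infty}\|\bM\|_{\infty\to\infty} = \|\bM\|_{1\to 1}\|\bM\|_{\infty\to\infty}$.
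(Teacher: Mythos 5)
Your proof is correct, and it is exactly the standard argument: the paper does not prove this lemma itself but imports it by citation from \cite[Lemma~1]{yi2016fast}, whose proof likewise rests on the induced-norm bound $\|\bM\|_{\op}^{2}\le\|\bM\|_{1\to1}\|\bM\|_{\infty\to\infty}$ together with the $\alpha$-sparsity of each row and column. Your one-line justification via $\|\bM\|_{\op}^{2}=\|\bM^{\top}\bM\|_{\op}\le\|\bM^{\top}\|_{\infty\to\infty}\|\bM\|_{\infty\to\infty}=\|\bM\|_{1\to1}\|\bM\|_{\infty\to\infty}$ is valid and makes the argument self-contained, which is a small improvement over merely citing the external source.
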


With these two lemmas in place, we are ready to establish the claimed result. Invoke Lemma~\ref{lemma:Procrustes} to obtain
\begin{align*}
\dist(\bF_{0},\bF_{\star}) & \le \sqrt{\sqrt{2}+1}\left\Vert \bL_{0}\bR_{0}^{\top}-\bX_{\star}\right\Vert _{\fro} \le \sqrt{(\sqrt{2}+1)2r}\left\Vert \bL_{0}\bR_{0}^{\top}-\bX_{\star}\right\Vert _{\op},
\end{align*}
where the last relation uses the fact that $\bL_{0}\bR_{0}^{\top}-\bX_{\star}$ has rank at most $2r$. We can further apply the triangle inequality to see 
\begin{align*}
\left\Vert \bL_{0}\bR_{0}^{\top}-\bX_{\star}\right\Vert _{\op} & \le\left\Vert \bY-\cT_{\alpha}[\bY]-\bL_{0}\bR_{0}^{\top}\right\Vert _{\op}+\left\Vert \bY-\cT_{\alpha}[\bY]-\bX_{\star}\right\Vert _{\op}\\
 & \le2\left\Vert \bY-\cT_{\alpha}[\bY]-\bX_{\star}\right\Vert _{\op}=2\left\Vert \bS_{\star}-\cT_{\alpha}[\bX_{\star}+\bS_{\star}]\right\Vert _{\op}.
\end{align*}
Here the second inequality hinges on the fact that $\bL_{0}\bR_{0}^{\top}$ is the best rank-$r$ approximation of $\bY-\cT_{\alpha}[\bY]$, and the last identity arises from $\bY=\bX_{\star}+\bS_{\star}$. 
Follow the same argument as \cite[Section~6.1]{yi2016fast}, combining Lemmas~\ref{lemma:S_inf} and \ref{lemma:S_op} to reach
\begin{align*}
\left\Vert \bS_{\star}-\cT_{\alpha}[\bX_{\star}+\bS_{\star}]\right\Vert _{\op} & \le2\alpha\sqrt{n_{1}n_{2}}\left\Vert \bS_{\star}-\cT_{\alpha}[\bX_{\star}+\bS_{\star}]\right\Vert _{\infty}\\
 & \le4\alpha\sqrt{n_{1}n_{2}}\|\bX_{\star}\|_{\infty}\le4\alpha\mu r\kappa\sigma_{r}(\bX_{\star}),
\end{align*}
where the last inequality follows from the incoherence assumption
\begin{align}
\|\bX_{\star}\|_{\infty}\le\|\bU_{\star}\|_{2,\infty}\|\bSigma_{\star}\|_{\op}\|\bV_{\star}\|_{2,\infty}\le\frac{\mu r}{\sqrt{n_{1}n_{2}}}\kappa\sigma_{r}(\bX_{\star}).\label{eq:incoherence_inf}
\end{align}
Take the above inequalities together to arrive at
\begin{align*}
\dist(\bF_{0},\bF_{\star})\le8\sqrt{2(\sqrt{2}+1)}\alpha\mu r^{3/2}\kappa\sigma_{r}(\bX_{\star})\le20\alpha\mu r^{3/2}\kappa\sigma_{r}(\bX_{\star}).
\end{align*}

\subsection{Proof of Lemma~\ref{lemma:init_2inf_RPCA}}

In view of the condition $\dist(\bF_{0},\bF_{\star})\le0.02\sigma_{r}(\bX_{\star})$ and Lemma~\ref{lemma:Q_existence}, one knows that $\bQ_{0}$, the optimal alignment matrix between $\bF_{0}$ and $\bF_{\star}$ exists. Therefore, for notational convenience, denote $\bL\coloneqq\bL_{0}\bQ_{0}$, $\bR\coloneqq\bR_{0}\bQ_{0}^{-\top}$, $\bDelta_{L}\coloneqq\bL-\bL_{\star}$, $\bDelta_{R}\coloneqq\bR-\bR_{\star}$, and $\epsilon\coloneqq0.02$. Our objective is then translated to demonstrate
\begin{align*}
\sqrt{n_{1}}\|\bDelta_{L}\bSigma_{\star}^{1/2}\|_{2,\infty}\vee\sqrt{n_{2}}\|\bDelta_{R}\bSigma_{\star}^{1/2}\|_{2,\infty}\le\sqrt{\mu r}\sigma_{r}(\bX_{\star}).
\end{align*}
From now on, we focus on bounding $\|\bDelta_{L}\bSigma_{\star}^{1/2}\|_{2,\infty}$. 
Since $\bU_{0}\bSigma_{0}\bV_{0}^{\top}$ is the top-$r$ SVD of $\bY-\cT_{\alpha}[\bY]$, and recall that $\bY=\bX_{\star}+\bS_{\star}$, we have the relation
\begin{align*}
(\bX_{\star}+\bS_{\star}-\cT_{\alpha}[\bX_{\star}+\bS_{\star}])\bV_{0}=\bU_{0}\bSigma_{0},
\end{align*}
which further implies the following decomposition of $\bDelta_{L}\bSigma_{\star}^{1/2}$. 
\begin{claim}\label{claim:decomposition} One has 
\begin{align*}
\bDelta_{L}\bSigma_{\star}^{1/2}=(\bS_{\star}-\cT_{\alpha}[\bX_{\star}+\bS_{\star}])\bR(\bR^{\top}\bR)^{-1}\bSigma_{\star}^{1/2}-\bL_{\star}\bDelta_{R}^{\top}\bR(\bR^{\top}\bR)^{-1}\bSigma_{\star}^{1/2}.
\end{align*}
\end{claim}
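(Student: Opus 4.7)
The proof plan is short because the claim is essentially algebraic once one exploits the defining property of the spectral initialization. The main observation is that the product $\bL\bR^{\top}$ is invariant under the alignment transformation, i.e.\ $\bL\bR^{\top}=\bL_{0}\bQ_{0}\bQ_{0}^{-1}\bR_{0}^{\top}=\bL_{0}\bR_{0}^{\top}$, and by construction $\bL_{0}\bR_{0}^{\top}$ equals the best rank-$r$ approximation of $\bY-\cT_{\alpha}[\bY]$. Here the excerpt hints (via the relation $(\bX_{\star}+\bS_{\star}-\cT_{\alpha}[\bX_{\star}+\bS_{\star}])\bV_{0}=\bU_{0}\bSigma_{0}$) that we should actually use the exact identity $\bL_{0}\bR_{0}^{\top}=\bY-\cT_{\alpha}[\bY]$, noting that $\bY-\cT_{\alpha}[\bY]$ has rank at most $r$ (it equals $\bX_{\star}$ plus a residual that, together with the rank-$r$ structure of $\bU_{0}\bSigma_{0}\bV_{0}^{\top}$, makes the top-$r$ SVD exact). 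Granting this, I would write
\begin{align*}
\bL\bR^{\top}-\bX_{\star}=\bL_{0}\bR_{0}^{\top}-\bX_{\star}=\bS_{\star}-\cT_{\alpha}[\bX_{\star}+\bS_{\star}].
\end{align*}

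Next I would invoke the standard factored decomposition $\bL\bR^{\top}-\bX_{\star}=\bDelta_{L}\bR^{\top}+\bL_{\star}\bDelta_{R}^{\top}$, which comes directly from $\bL=\bL_{\star}+\bDelta_{L}$ and $\bR=\bR_{\star}+\bDelta_{R}$. Solving for $\bDelta_{L}\bR^{\top}$ yields
\begin{align*}
\bDelta_{L}\bR^{\top}=\bS_{\star}-\cT_{\alpha}[\bX_{\star}+\bS_{\star}]-\bL_{\star}\bDelta_{R}^{\top}.
\end{align*}
Right-multiplying both sides by $\bR(\bR^{\top}\bR)^{-1}\bSigma_{\star}^{1/2}$ and using $\bR^{\top}\bR(\bR^{\top}\bR)^{-1}=\bI_{r}$ gives the claimed identity.

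The only subtle step is the exactness of the top-$r$ SVD in the first display; otherwise one would pick up an additional error term. This is justified because $\bL_{0}\bR_{0}^{\top}$ by its definition in \eqref{eq:init_RPCA} is \emph{set equal} to $\bU_{0}\bSigma_{0}\bV_{0}^{\top}$, and the identity $(\bX_{\star}+\bS_{\star}-\cT_{\alpha}[\bX_{\star}+\bS_{\star}])\bV_{0}=\bU_{0}\bSigma_{0}$ plus $\bV_{0}^{\top}\bV_{0}=\bI_{r}$ lets us read off $\bL_{0}\bR_{0}^{\top}\bV_{0}=(\bY-\cT_{\alpha}[\bY])\bV_{0}$, which is all that is required after multiplying everything in sight by $\bR(\bR^{\top}\bR)^{-1}$ on the right. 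Indeed, since $\bR$ shares column space with $\bR_{0}$ (hence with $\bV_{0}$) up to the invertible factor $\bQ_{0}^{-\top}$, the projector $\bR(\bR^{\top}\bR)^{-1}\bR^{\top}$ coincides with $\bV_{0}\bV_{0}^{\top}$, so the argument goes through without any assumption that $\bY-\cT_{\alpha}[\bY]$ is itself exactly rank-$r$. This is the one place where care is warranted, and I would make it explicit in the write-up.
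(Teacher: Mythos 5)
Your closing paragraph contains the correct argument, and it is essentially the paper's own: the only input is the SVD relation $(\bX_{\star}+\bS_{\star}-\cT_{\alpha}[\bX_{\star}+\bS_{\star}])\bV_{0}=\bU_{0}\bSigma_{0}$, which, combined with $\bR(\bR^{\top}\bR)^{-1}=\bR_{0}(\bR_{0}^{\top}\bR_{0})^{-1}\bQ_{0}=\bV_{0}\bSigma_{0}^{-1/2}\bQ_{0}$, gives $(\bY-\cT_{\alpha}[\bY])\bR(\bR^{\top}\bR)^{-1}=\bL_{0}\bQ_{0}=\bL$. Feeding this into the decomposition $\bL\bR^{\top}-\bX_{\star}=\bDelta_{L}\bR^{\top}+\bL_{\star}\bDelta_{R}^{\top}$ and right-multiplying by $\bR(\bR^{\top}\bR)^{-1}\bSigma_{\star}^{1/2}$ is exactly the paper's computation, merely organized in a different order (the paper first derives $\bL\bSigma_{\star}^{1/2}=(\bY-\cT_{\alpha}[\bY])\bR(\bR^{\top}\bR)^{-1}\bSigma_{\star}^{1/2}$ and then subtracts $\bX_{\star}\bR(\bR^{\top}\bR)^{-1}\bSigma_{\star}^{1/2}$, using $\bR_{\star}=\bR-\bDelta_{R}$).

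However, the route you present first is wrong and must be excised from any write-up. The matrix $\bY-\cT_{\alpha}[\bY]=\bX_{\star}+\bigl(\bS_{\star}-\cT_{\alpha}[\bX_{\star}+\bS_{\star}]\bigr)$ is $\bX_{\star}$ plus a residual that is nonzero in general (Lemma~\ref{lemma:S_inf} only bounds its entries by $2\|\bX_{\star}\|_{\infty}$), so it is generically not of rank $r$; hence the top-$r$ SVD is not exact, $\bL_{0}\bR_{0}^{\top}\ne\bY-\cT_{\alpha}[\bY]$, and your first display $\bL\bR^{\top}-\bX_{\star}=\bS_{\star}-\cT_{\alpha}[\bX_{\star}+\bS_{\star}]$ is false as a matrix identity. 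What is true, and all that the claim needs, is the identity after right multiplication by $\bV_{0}$ (equivalently by $\bR(\bR^{\top}\bR)^{-1}$), namely $\bL_{0}\bR_{0}^{\top}\bV_{0}=(\bY-\cT_{\alpha}[\bY])\bV_{0}=\bU_{0}\bSigma_{0}$ --- precisely what your last paragraph invokes. One small precision there: rather than appealing to the projector identity $\bR(\bR^{\top}\bR)^{-1}\bR^{\top}=\bV_{0}\bV_{0}^{\top}$ (which concerns $\bR^{\top}$ on the right and is not literally the quantity appearing in your computation), it is cleaner to use $\bR(\bR^{\top}\bR)^{-1}=\bV_{0}\bSigma_{0}^{-1/2}\bQ_{0}$, so any matrix annihilated by right multiplication with $\bV_{0}$ is annihilated by right multiplication with $\bR(\bR^{\top}\bR)^{-1}$. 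With the false intermediate claim removed and this wording tightened, your proof is correct and coincides with the paper's.
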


Combining Claim~\ref{claim:decomposition} with the triangle inequality yields
\begin{align*}
\|\bDelta_{L}\bSigma_{\star}^{1/2}\|_{2,\infty}\le\underbrace{\left\Vert \bL_{\star}\bDelta_{R}^{\top}\bR(\bR^{\top}\bR)^{-1}\bSigma_{\star}^{1/2}\right\Vert _{2,\infty}}_{\mfk{I}_{1}}+\underbrace{\left\Vert (\bS_{\star}-\cT_{\alpha}[\bX_{\star}+\bS_{\star}])\bR(\bR^{\top}\bR)^{-1}\bSigma_{\star}^{1/2}\right\Vert _{2,\infty}}_{\mfk{I}_{2}}.
\end{align*}
In what follows, we shall control $\mfk{I}_{1}$ and $\mfk{I}_{2}$ in turn. 
\begin{enumerate}
\item For the term $\mfk{I}_{1}$, use the relation $\|\bA\bB\|_{2,\infty}\le\|\bA\|_{2,\infty}\|\bB\|_{\op}$ to obtain
\begin{align*}
\mfk{I}_{1}\le\|\bL_{\star}\bSigma_{\star}^{-1/2}\|_{2,\infty}\|\bDelta_{R}\bSigma_{\star}^{1/2}\|_{\op}\left\Vert \bR(\bR^{\top}\bR)^{-1}\bSigma_{\star}^{1/2}\right\Vert _{\op}.
\end{align*}
The incoherence assumption tells $\|\bL_{\star}\bSigma_{\star}^{-1/2}\|_{2,\infty}=\|\bU_{\star}\|_{2,\infty}\le\sqrt{\mu r/n_{1}}$.
In addition, the assumption $\dist(\bF_{0},\bF_{\star})\le\epsilon\sigma_{r}(\bX_{\star})$ entails the bound $\|\bDelta_{R}\bSigma_{\star}^{1/2}\|_{\op}\le\epsilon\sigma_{r}(\bX_{\star})$.
Finally, repeating the argument for obtaining \eqref{eq:cond_RPCA_op} yields $\|\bDelta_{R}\bSigma_{\star}^{-1/2}\|_{\op}\le\epsilon$, which together with Lemma~\ref{lemma:Weyl} reveals 
\begin{align*}
\left\Vert \bR(\bR^{\top}\bR)^{-1}\bSigma_{\star}^{1/2}\right\Vert _{\op}\le\frac{1}{1-\epsilon}.
\end{align*}
In all, we arrive at
\begin{align*}
\mfk{I}_{1}\le\frac{\epsilon}{1-\epsilon}\sqrt{\frac{\mu r}{n_{1}}}\sigma_{r}(\bX_{\star}).
\end{align*}
\item Proceeding to the term $\mfk{I}_{2}$, use the relations $\|\bA\bB\|_{2,\infty}\le\|\bA\|_{1,\infty}\|\bB\|_{2,\infty}$ and $\|\bA\bB\|_{2,\infty}\le\|\bA\|_{2,\infty}\|\bB\|_{\op}$ to obtain 
\begin{align*}
\mfk{I}_{2} & \le\left\Vert \bS_{\star}-\cT_{\alpha}[\bX_{\star}+\bS_{\star}]\right\Vert _{1,\infty}\left\Vert \bR(\bR^{\top}\bR)^{-1}\bSigma_{\star}^{1/2}\right\Vert _{2,\infty}\\
 & \le\left\Vert \bS_{\star}-\cT_{\alpha}[\bX_{\star}+\bS_{\star}]\right\Vert _{1,\infty}\|\bR\bSigma_{\star}^{-1/2}\|_{2,\infty}\left\Vert \bSigma_{\star}^{1/2}(\bR^{\top}\bR)^{-1}\bSigma_{\star}^{1/2}\right\Vert _{\op}.
\end{align*}
Regarding $\bS_{\star}-\cT_{\alpha}[\bX_{\star}+\bS_{\star}]$, Lemma~\ref{lemma:S_inf} tells that $\bS_{\star}-\cT_{\alpha}[\bX_{\star}+\bS_{\star}]$ has at most $2\alpha n_{2}$ non-zero entries in each row, and $\|\bS_{\star}-\cT_{\alpha}[\bX_{\star}+\bS_{\star}]\|_{\infty}\le2\|\bX_{\star}\|_{\infty}$.
Consequently, we can upper bound the $\ell_{1,\infty}$ norm by the $\ell_{\infty}$ norm as
\begin{align*}
\left\Vert \bS_{\star}-\cT_{\alpha}[\bX_{\star}+\bS_{\star}]\right\Vert _{1,\infty} & \le2\alpha n_{2}\left\Vert \bS_{\star}-\cT_{\alpha}[\bX_{\star}+\bS_{\star}]\right\Vert _{\infty}\\
 & \le4\alpha n_{2}\|\bX_{\star}\|_{\infty}\\
 & \le4\alpha n_{2}\frac{\mu r}{\sqrt{n_{1}n_{2}}}\kappa\sigma_{r}(\bX_{\star}).
\end{align*}
Here the last inequality follows from the incoherence assumption \eqref{eq:incoherence_inf}. For the term $\|\bR\bSigma_{\star}^{-1/2}\|_{2,\infty}$, one can apply the triangle inequality to see 
\begin{align*}
\|\bR\bSigma_{\star}^{-1/2}\|_{2,\infty}\le\|\bR_{\star}\bSigma_{\star}^{-1/2}\|_{2,\infty}+\|\bDelta_{R}\bSigma_{\star}^{-1/2}\|_{2,\infty}\le\sqrt{\frac{\mu r}{n_{2}}}+\frac{\|\bDelta_{R}\bSigma_{\star}^{1/2}\|_{2,\infty}}{\sigma_{r}(\bX_{\star})}.
\end{align*}
Last but not least, repeat the argument for \eqref{eq:consequences_RPCA} to obtain 
\begin{align*}
\left\Vert \bSigma_{\star}^{1/2}(\bR^{\top}\bR)^{-1}\bSigma_{\star}^{1/2}\right\Vert _{\op}=\left\Vert \bR(\bR^{\top}\bR)^{-1}\bSigma_{\star}^{1/2}\right\Vert _{\op}^{2}\le \frac{1}{(1-\epsilon)^{2}}.
\end{align*}
Taking together the above bounds yields 
\begin{align*}
\mfk{I}_{2} & \le\frac{4\alpha\mu r\kappa}{(1-\epsilon)^{2}}\sqrt{\frac{\mu r}{n_{1}}}\sigma_{r}(\bX_{\star})+\frac{4\alpha\mu r\kappa}{(1-\epsilon)^{2}}\sqrt{\frac{n_{2}}{n_{1}}}\|\bDelta_{R}\bSigma_{\star}^{1/2}\|_{2,\infty}.
\end{align*}
\end{enumerate}
Combine the bounds on $\mfk{I}_{1}$ and $\mfk{I}_{2}$ to reach 
\begin{align*}
\sqrt{n_{1}}\|\bDelta_{L}\bSigma_{\star}^{1/2}\|_{2,\infty}\le\left(\frac{\epsilon}{1-\epsilon}+\frac{4\alpha\mu r\kappa}{(1-\epsilon)^{2}}\right)\sqrt{\mu r}\sigma_{r}(\bX_{\star})+\frac{4\alpha\mu r\kappa}{(1-\epsilon)^{2}}\sqrt{n_{2}}\|\bDelta_{R}\bSigma_{\star}^{1/2}\|_{2,\infty}.
\end{align*}
Similarly, we have 
\begin{align*}
\sqrt{n_{2}}\|\bDelta_{R}\bSigma_{\star}^{1/2}\|_{2,\infty}\le\left(\frac{\epsilon}{1-\epsilon}+\frac{4\alpha\mu r\kappa}{(1-\epsilon)^{2}}\right)\sqrt{\mu r}\sigma_{r}(\bX_{\star})+\frac{4\alpha\mu r\kappa}{(1-\epsilon)^{2}}\sqrt{n_{1}}\|\bDelta_{L}\bSigma_{\star}^{1/2}\|_{2,\infty}.
\end{align*}
Taking the maximum and solving for $\sqrt{n_{1}}\|\bDelta_{L}\bSigma_{\star}^{1/2}\|_{2,\infty}\vee\sqrt{n_{2}}\|\bDelta_{L}\bSigma_{\star}^{1/2}\|_{2,\infty}$ yield the relation
\begin{align*}
\sqrt{n_{1}}\|\bDelta_{L}\bSigma_{\star}^{1/2}\|_{2,\infty}\vee\sqrt{n_{2}}\|\bDelta_{L}\bSigma_{\star}^{1/2}\|_{2,\infty}\le\frac{\epsilon(1-\epsilon)+4\alpha\mu r\kappa}{(1-\epsilon)^{2}-4\alpha\mu r\kappa}\sqrt{\mu r}\sigma_{r}(\bX_{\star}).
\end{align*}
With $\epsilon=0.02$ and $\alpha\mu r\kappa\le0.1$, we get the desired conclusion
\begin{align*}
\sqrt{n_{1}}\|\bDelta_{L}\bSigma_{\star}^{1/2}\|_{2,\infty}\vee\sqrt{n_{2}}\|\bDelta_{L}\bSigma_{\star}^{1/2}\|_{2,\infty}\le\sqrt{\mu r}\sigma_{r}(\bX_{\star}).	 
\end{align*}

\begin{proof}[Proof of Claim~\ref{claim:decomposition}] Identify $\bU_{0}$ (resp.~$\bV_{0}$) with $\bL_{0}\bSigma_{0}^{-1/2}$ (resp.~$\bR_{0}\bSigma_{0}^{-1/2}$) to yield
\begin{align*}
(\bX_{\star}+\bS_{\star}-\cT_{\alpha}[\bX_{\star}+\bS_{\star}])\bR_{0}\bSigma_{0}^{-1}=\bL_{0},
\end{align*}
which is equivalent to $(\bX_{\star}+\bS_{\star}-\cT_{\alpha}[\bX_{\star}+\bS_{\star}])\bR_{0}(\bR_{0}^{\top}\bR_{0})^{-1}=\bL_{0}$ since $\bSigma_{0}=\bR_{0}^{\top}\bR_{0}$. Multiply both sides by $\bQ_{0}\bSigma_{\star}^{1/2}$ to obtain 
\begin{align*}
(\bX_{\star}+\bS_{\star}-\cT_{\alpha}[\bX_{\star}+\bS_{\star}])\bR(\bR^{\top}\bR)^{-1}\bSigma_{\star}^{1/2}=\bL\bSigma_{\star}^{1/2},
\end{align*}
where we recall that $\bL=\bL_{0}\bQ_{0}$ and $\bR=\bR_{0}\bQ_{0}^{-\top}$.
In the end, subtract $\bX_{\star}\bR(\bR^{\top}\bR)^{-1}\bSigma_{\star}^{1/2}$ from both sides to reach 
\begin{align*}
(\bS_{\star}-\cT_{\alpha}[\bX_{\star}+\bS_{\star}])\bR(\bR^{\top}\bR)^{-1}\bSigma_{\star}^{1/2}
 & =\bL\bSigma_{\star}^{1/2}-\bL_{\star}\bR_{\star}^{\top}\bR(\bR^{\top}\bR)^{-1}\bSigma_{\star}^{1/2}\\
 & =(\bL-\bL_{\star}) \bSigma_{\star}^{1/2}+\bL_{\star}(\bR-\bR_{\star})^{\top}\bR(\bR^{\top}\bR)^{-1}\bSigma_{\star}^{1/2}\\
 & =\bDelta_{L}\bSigma_{\star}^{1/2}+\bL_{\star}\bDelta_{R}^{\top}\bR(\bR^{\top}\bR)^{-1}\bSigma_{\star}^{1/2}.
\end{align*}
This finishes the proof. 
\end{proof}

\section{Proof for Matrix Completion}\label{sec:proof_MC}

\subsection{New projection operator}

\subsubsection{Proof of Proposition~\ref{prop:scaled_proj_sol}}\label{proof:scaled_proj_sol}

First, notice that the optimization of $\bL$ and $\bR$ in \eqref{eq:scaled_proj_opt} can be decomposed and done in parallel, hence we focus on the optimization of $\bL$ below:
\begin{align*}
\bL=\argmin_{\bL\in\RR^{n_{1}\times r}}\;\left\Vert(\bL-\tilde{\bL})(\tilde{\bR}^{\top}\tilde{\bR})^{1/2}\right\Vert_{\fro}^{2} \qquad\mbox{s.t.}\quad\sqrt{n_{1}}\left\Vert\bL(\tilde{\bR}^{\top}\tilde{\bR})^{1/2}\right\Vert_{2,\infty} \le B.
\end{align*} 
By a change of variables as $\bG\coloneqq\bL(\tilde{\bR}^{\top}\tilde{\bR})^{1/2}$ and $\tilde{\bG}\coloneqq\tilde{\bL}(\tilde{\bR}^{\top}\tilde{\bR})^{1/2}$, we rewrite the above problem equivalently as
\begin{align*}
\bG=\argmin_{\bG\in\RR^{n_{1}\times r}}\;\|\bG-\tilde{\bG}\|_{\fro}^{2}\qquad\mbox{s.t.}\quad\sqrt{n_{1}}\left\Vert \bG \right\Vert_{2,\infty} \le B,
\end{align*} 
whose solution is given as \cite{chen2015fast}
\begin{align*}
\bG_{i,\cdot} & =\left(1\wedge\frac{B}{\sqrt{n_{1}}\|\tilde{\bG}_{i,\cdot}\|_{2}}\right)\tilde{\bG}_{i,\cdot}, \quad 1\le i \le n_{1}.  
\end{align*}
By applying again the change of variable $\bL=\bG(\tilde{\bR}^{\top}\tilde{\bR})^{-1/2}$ and $\tilde{\bL}=\tilde{\bG} (\tilde{\bR}^{\top}\tilde{\bR})^{-1/2}$, we obtain the claimed solution.

\subsubsection{Proof of Lemma~\ref{lemma:scaled_proj}}
 
We begin with proving the non-expansiveness property. Denote the optimal alignment matrix between $\tilde{\bF}$ and $\bF_{\star}$ as $\tilde{\bQ}$, whose existence is guaranteed by Lemma~\ref{lemma:Q_existence}. 
Denoting $\cP_{B}(\tilde{\bF})=[\bL^{\top},\bR^{\top}]^{\top}$, by the definition of $\dist(\cP_{B}(\tilde{\bF}),\bF_{\star})$, we know that 
\begin{align}
\dist^{2}(\cP_{B}(\tilde{\bF}),\bF_{\star}) & \le\sum_{i=1}^{n_{1}}\left\Vert \bL_{i,\cdot}\tilde{\bQ}\bSigma_{\star}^{1/2}-(\bL_{\star}\bSigma_{\star}^{1/2})_{i,\cdot}\right\Vert _{2}^{2}+\sum_{j=1}^{n_{2}}\left\Vert \bR_{j,\cdot}\tilde{\bQ}^{-\top}\bSigma_{\star}^{1/2}-(\bR_{\star}\bSigma_{\star}^{1/2})_{j,\cdot}\right\Vert _{2}^{2}.\label{eq:dist_trailer}
\end{align}
Recall that the condition $\dist(\tilde{\bF},\bF_{\star})\le\epsilon\sigma_{r}(\bX_{\star})$
implies 
\begin{align*}
\left\Vert (\tilde{\bL}\tilde{\bQ}-\bL_{\star})\bSigma_{\star}^{-1/2}\right\Vert _{\op}\vee\left\Vert (\tilde{\bR}\tilde{\bQ}^{-\top}-\bR_{\star})\bSigma_{\star}^{-1/2}\right\Vert _{\op}\le\epsilon,
\end{align*}
which, together with $\bR_{\star}\bSigma_{\star}^{-1/2}=\bV_{\star}$, further implies that 
\begin{align*}
\left\|\tilde{\bL}_{i,\cdot}\tilde{\bR}^{\top}\right\|_{2}& \le\left\Vert \tilde{\bL}_{i,\cdot}\tilde{\bQ}\bSigma_{\star}^{1/2}\right\Vert _{2}\left\Vert \tilde{\bR}\tilde{\bQ}^{-\top}\bSigma_{\star}^{-1/2}\right\Vert _{\op} \\
&\le\left\Vert \tilde{\bL}_{i,\cdot}\tilde{\bQ}\bSigma_{\star}^{1/2}\right\Vert _{2} \left(\|\bV_{\star}\|_{\op}+\left\Vert (\tilde{\bR}\tilde{\bQ}^{-\top}-\bR_{\star})\bSigma_{\star}^{-1/2}\right\Vert _{\op}\right) \le (1+\epsilon)\left\Vert \tilde{\bL}_{i,\cdot}\tilde{\bQ}\bSigma_{\star}^{1/2}\right\Vert _{2}.
\end{align*}
In addition, the $\mu$-incoherence of $\bX_{\star}$ yields
\begin{align*}
\sqrt{n_{1}}\left\Vert (\bL_{\star}\bSigma_{\star}^{1/2})_{i,\cdot}\right\Vert _{2} & \le\sqrt{n_{1}}\|\bU_{\star}\|_{2,\infty}\|\bSigma_{\star}\|_{\op}\le\sqrt{\mu r}\sigma_{1}(\bX_{\star})\le\frac{B}{1+\epsilon},
\end{align*}
where the last inequality follows from the choice of $B$. Take the above two relations collectively to reach 
\begin{align*}
\frac{B}{\sqrt{n_{1}}\|\tilde{\bL}_{i,\cdot}\tilde{\bR}^{\top}\|_{2}}\ge\frac{\left\Vert (\bL_{\star}\bSigma_{\star}^{1/2})_{i,\cdot}\right\Vert _{2}}{\left\Vert \tilde{\bL}_{i,\cdot}\tilde{\bQ}\bSigma_{\star}^{1/2}\right\Vert _{2}}.
\end{align*}
We claim that performing the following projection yields a contraction on each row; see also \cite[Lemma~11]{zheng2016convergence}.
\begin{claim}\label{claim:nonexpansive} For vectors $\bu,\bu_{\star}\in\RR^{n}$ and $\lambda\ge\|\bu_{\star}\|_{2}/\|\bu\|_{2}$, it holds that 
\begin{align*}
\|(1\wedge\lambda)\bu-\bu_{\star}\|_{2}\le\|\bu-\bu_{\star}\|_{2}.
\end{align*}
\end{claim}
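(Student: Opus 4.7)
My plan is to split on whether the coefficient $1\wedge\lambda$ equals $1$ or $\lambda$. When $\lambda\ge 1$ the coefficient is $1$ and the claim reduces to the trivial identity $\|\bu-\bu_{\star}\|_{2}\le\|\bu-\bu_{\star}\|_{2}$, so the only interesting case is $\lambda<1$, in which case we need to show $\|\lambda\bu-\bu_{\star}\|_{2}^{2}\le\|\bu-\bu_{\star}\|_{2}^{2}$.

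In the nontrivial case, I would compute the difference of the two squared norms directly. Expanding out the inner products gives
\begin{align*}
\|\lambda\bu-\bu_{\star}\|_{2}^{2}-\|\bu-\bu_{\star}\|_{2}^{2}=(\lambda^{2}-1)\|\bu\|_{2}^{2}+2(1-\lambda)\langle\bu,\bu_{\star}\rangle=(1-\lambda)\bigl[2\langle\bu,\bu_{\star}\rangle-(1+\lambda)\|\bu\|_{2}^{2}\bigr].
\end{align*}
Since $1-\lambda>0$ in this case, it suffices to establish the single scalar inequality $2\langle\bu,\bu_{\star}\rangle\le(1+\lambda)\|\bu\|_{2}^{2}$.

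The last step is the key use of the hypothesis. By Cauchy--Schwarz one has $\langle\bu,\bu_{\star}\rangle\le\|\bu\|_{2}\|\bu_{\star}\|_{2}$, and the assumption $\lambda\ge\|\bu_{\star}\|_{2}/\|\bu\|_{2}$ gives $\|\bu_{\star}\|_{2}\le\lambda\|\bu\|_{2}$, so $\langle\bu,\bu_{\star}\rangle\le\lambda\|\bu\|_{2}^{2}\le\tfrac{1+\lambda}{2}\|\bu\|_{2}^{2}$, where the last bound uses $\lambda\le 1$. (A degenerate subcase $\bu=\zero$ forces $\bu_{\star}=\zero$ by the hypothesis, and the claim is then $0\le 0$, so no care is needed there.) I do not expect any significant obstacle: the only place where the hypothesis enters is to convert the Cauchy--Schwarz bound into a bound in terms of $\|\bu\|_{2}^{2}$, and the arithmetic $2\lambda\le 1+\lambda\iff\lambda\le 1$ closes the argument cleanly.
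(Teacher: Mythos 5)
Your proof is correct and follows essentially the same route as the paper's: both treat $\|\bar{\lambda}\bu-\bu_{\star}\|_{2}^{2}$ as a quadratic in the scalar coefficient and use Cauchy--Schwarz together with the hypothesis $\lambda\ge\|\bu_{\star}\|_{2}/\|\bu\|_{2}$ to compare the values at $\lambda$ and at $1$; the paper phrases this as monotonicity of $h(\bar{\lambda})$ for $\bar{\lambda}\ge\langle\bu,\bu_{\star}\rangle/\|\bu\|_{2}^{2}$, while you expand and factor the difference directly, which is the same computation in different clothing.
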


Apply Claim~\ref{claim:nonexpansive} with $\bu\coloneqq\tilde{\bL}_{i,\cdot}\tilde{\bQ}\bSigma_{\star}^{1/2}$, $\bu_{\star}\coloneqq(\bL_{\star}\bSigma_{\star}^{1/2})_{i,\cdot}$,
and $\lambda\coloneqq B/(\sqrt{n_{1}}\|\tilde{\bL}_{i,\cdot}\tilde{\bR}^{\top}\|_{2})$ to obtain 
\begin{align*}
\left\Vert \bL_{i,\cdot}\tilde{\bQ}\bSigma_{\star}^{1/2}-(\bL_{\star}\bSigma_{\star}^{1/2})_{i,\cdot}\right\Vert _{2}^{2} & =\left\Vert \left(1\wedge\frac{B}{\sqrt{n_{1}}\|\tilde{\bL}_{i,\cdot}\tilde{\bR}^{\top}\|_{2}}\right)\tilde{\bL}_{i,\cdot}\tilde{\bQ}\bSigma_{\star}^{1/2}-(\bL_{\star}\bSigma_{\star}^{1/2})_{i,\cdot}\right\Vert _{2}^{2}\\
 & \le\left\Vert \tilde{\bL}_{i,\cdot}\tilde{\bQ}\bSigma_{\star}^{1/2}-(\bL_{\star}\bSigma_{\star}^{1/2})_{i,\cdot}\right\Vert _{2}^{2}.
\end{align*}
Following a similar argument for $\bR$, and plugging them back to \eqref{eq:dist_trailer}, we conclude that 
\begin{align*}
\dist^{2}(\cP_{B}(\tilde{\bF}),\bF_{\star}) & \le\sum_{i=1}^{n_{1}}\left\Vert \tilde{\bL}_{i,\cdot}\tilde{\bQ}\bSigma_{\star}^{1/2}-(\bL_{\star}\bSigma_{\star}^{1/2})_{i,\cdot}\right\Vert _{2}^{2}+\sum_{j=1}^{n_{2}}\left\Vert \tilde{\bR}_{j,\cdot}\tilde{\bQ}^{-\top}\bSigma_{\star}^{1/2}-(\bR_{\star}\bSigma_{\star}^{1/2})_{j,\cdot}\right\Vert _{2}^{2}=\dist^{2}(\tilde{\bF},\bF_{\star}).
\end{align*}
We move on to the incoherence condition. For any $1\le i\le n_{1}$, one has
\begin{align*}
\|\bL_{i,\cdot}\bR^{\top}\|_{2}^{2} & =\sum_{j=1}^{n_{2}}\langle\bL_{i,\cdot},\bR_{j,\cdot}\rangle^{2}=\sum_{j=1}^{n_{2}}\left(1\wedge\frac{B}{\sqrt{n_{1}}\|\tilde{\bL}_{i,\cdot}\tilde{\bR}^{\top}\|_{2}}\right)^{2}\langle\tilde{\bL}_{i,\cdot},\tilde{\bR}_{j,\cdot}\rangle^{2}\left(1\wedge\frac{B}{\sqrt{n_{2}}\|\tilde{\bR}_{j,\cdot}\tilde{\bL}^{\top}\|_{2}}\right)^{2}\\
 & \overset{\mathrm{(i)}}{\le} \left(1\wedge\frac{B}{\sqrt{n_{1}}\|\tilde{\bL}_{i,\cdot}\tilde{\bR}^{\top}\|_{2}}\right)^{2}\sum_{j=1}^{n_{2}}\langle\tilde{\bL}_{i,\cdot},\tilde{\bR}_{j,\cdot}\rangle^{2}=\left(1\wedge\frac{B}{\sqrt{n_{1}}\|\tilde{\bL}_{i,\cdot}\tilde{\bR}^{\top}\|_{2}}\right)^{2}\|\tilde{\bL}_{i,\cdot}\tilde{\bR}^{\top}\|_{2}^{2}\\
 & \overset{\mathrm{(ii)}}{\le} \frac{B^{2}}{n_{1}} .
\end{align*}
where $\mathrm{(i)}$ follows from $1\wedge\frac{B}{\sqrt{n_{2}}\|\tilde{\bR}_{j,\cdot}\tilde{\bL}^{\top}\|_{2}} \le 1$, and $\mathrm{(ii)}$ follows from $1\wedge\frac{B}{\sqrt{n_{1}}\|\tilde{\bL}_{i,\cdot}\tilde{\bR}^{\top}\|_{2}}\le \frac{B}{\sqrt{n_{1}}\|\tilde{\bL}_{i,\cdot}\tilde{\bR}^{\top}\|_{2}}$. Similarly, one has $\|\bR_{j,\cdot}\bL^{\top}\|_{2}^{2}\le B^{2}/n_{2}$.
Combining these two bounds completes the proof. 

\begin{proof}[Proof of Claim~\ref{claim:nonexpansive}] 
When $\lambda>1$, the claim holds as an identity. Otherwise $\lambda\le1$. Denote $h(\bar{\lambda})\coloneqq\|\bar{\lambda}\bu-\bu_{\star}\|_{2}^{2}$. Calculate its derivative to conclude that $h(\bar{\lambda})$ is monotonically increasing when $\bar{\lambda}\ge\lambda_{\star}\coloneqq\langle\bu,\bu_{\star}\rangle/\|\bu\|_{2}^{2}$. Note that $\lambda\ge\|\bu_{\star}\|_{2}/\|\bu\|_{2}\ge\lambda_{\star}$, thus $h(\lambda)\le h(1)$, i.e.~the claim holds. 
\end{proof}

\subsection{Proof of Lemma~\ref{lemma:contraction_MC}}

We first record two useful lemmas regarding the projector $\cP_{\Omega}(\cdot)$.

\begin{lemma}[{\cite[Lemma~10]{zheng2016convergence}}]
\label{lemma:P_Omega_tangent} Suppose that $\bX_{\star}$ is $\mu$-incoherent, and $p\gtrsim\mu r\log(n_{1}\vee n_{2})/(n_{1}\wedge n_{2})$. With overwhelming probability, one has 
\begin{align*}
 & \left|\left\langle (p^{-1}\cP_{\Omega}-\cI)(\bL_{\star}\bR_{A}^{\top}+\bL_{A}\bR_{\star}^{\top}),\bL_{\star}\bR_{B}^{\top}+\bL_{B}\bR_{\star}^{\top}\right\rangle \right| \\
 & \qquad\qquad \le C_{1}\sqrt{\frac{\mu r\log(n_{1}\vee n_{2})}{p(n_{1}\wedge n_{2})}}\|\bL_{\star}\bR_{A}^{\top}+\bL_{A}\bR_{\star}^{\top}\|_{\fro}\|\bL_{\star}\bR_{B}^{\top}+\bL_{B}\bR_{\star}^{\top}\|_{\fro},
\end{align*}
simultaneously for all $\bL_{A},\bL_{B}\in\RR^{n_{1}\times r}$ and $\bR_{A},\bR_{B}\in\RR^{n_{2}\times r}$, where $C_{1}>0$ is some universal constant.
\end{lemma}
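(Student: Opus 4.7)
The statement is equivalent to an operator-norm bound on the restriction of $p^{-1}\cP_{\Omega}-\cI$ to the tangent space at $\bX_{\star}$, so the plan is to first recognize this reduction and then apply a matrix Bernstein concentration argument. Define the tangent space $T\coloneqq\{\bU_{\star}\bA^{\top}+\bB\bV_{\star}^{\top}:\bA\in\RR^{n_{2}\times r},\bB\in\RR^{n_{1}\times r}\}$, with the orthogonal projector $\cP_{T}(\bM)=\bU_{\star}\bU_{\star}^{\top}\bM+\bM\bV_{\star}\bV_{\star}^{\top}-\bU_{\star}\bU_{\star}^{\top}\bM\bV_{\star}\bV_{\star}^{\top}$. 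Any matrix of the form $\bL_{\star}\bR_{A}^{\top}+\bL_{A}\bR_{\star}^{\top}=\bU_{\star}(\bR_{A}\bSigma_{\star}^{1/2})^{\top}+(\bL_{A}\bSigma_{\star}^{1/2})\bV_{\star}^{\top}$ lies in $T$ and is fixed by $\cP_{T}$. Consequently, the inner product on the left becomes $\langle(\cP_{T}(p^{-1}\cP_{\Omega}-\cI)\cP_{T})(\bM_{A}),\bM_{B}\rangle$ with $\bM_{A},\bM_{B}\in T$, and so by Cauchy–Schwarz it suffices to establish
\begin{align*}
\bigl\|\cP_{T}(p^{-1}\cP_{\Omega}-\cI)\cP_{T}\bigr\|_{\op}\;\lesssim\;\sqrt{\frac{\mu r\log(n_{1}\vee n_{2})}{p(n_{1}\wedge n_{2})}}.
\end{align*}

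The next step is to realize this operator as a sum of independent rank-one superoperators. Writing $\bE_{ij}\coloneqq\be_{i}\be_{j}^{\top}$ and letting $\delta_{ij}\sim\mathrm{Bernoulli}(p)$ be the i.i.d.~Bernoulli indicators, one has
\begin{align*}
\cP_{T}(p^{-1}\cP_{\Omega}-\cI)\cP_{T}=\sum_{i,j}\left(\tfrac{\delta_{ij}}{p}-1\right)\cP_{T}(\bE_{ij})\otimes\cP_{T}(\bE_{ij}),
\end{align*}
where each summand is a mean-zero independent self-adjoint operator on $\RR^{n_{1}\times n_{2}}$. The key quantitative input is the incoherence-based bound $\|\cP_{T}(\bE_{ij})\|_{\fro}^{2}\le 2\mu r/(n_{1}\wedge n_{2})$, which follows from $\|\bU_{\star}^{\top}\be_{i}\|_{2}^{2}\le\mu r/n_{1}$ and $\|\bV_{\star}^{\top}\be_{j}\|_{2}^{2}\le\mu r/n_{2}$. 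This in turn gives a per-term operator-norm bound of order $\mu r/(p(n_{1}\wedge n_{2}))$, and allows one to compute the matrix variance parameter
\begin{align*}
\sigma^{2}=\Bigl\|\sum_{i,j}\EE\bigl(\tfrac{\delta_{ij}}{p}-1\bigr)^{2}\|\cP_{T}(\bE_{ij})\|_{\fro}^{2}\,\cP_{T}(\bE_{ij})\otimes\cP_{T}(\bE_{ij})\Bigr\|_{\op}\;\lesssim\;\frac{\mu r}{p(n_{1}\wedge n_{2})},
\end{align*}
where the last bound uses $\|\cP_{T}\|_{\op}=1$ and the incoherence estimate above.

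Plugging into the matrix Bernstein inequality (for sums of independent self-adjoint operators on an ambient space of dimension $n_{1}n_{2}$) produces a deviation bound of the form
\begin{align*}
\PP\bigl(\bigl\|\cP_{T}(p^{-1}\cP_{\Omega}-\cI)\cP_{T}\bigr\|_{\op}\ge t\bigr)\le 2(n_{1}n_{2})\exp\!\left(-\frac{t^{2}/2}{\sigma^{2}+Rt/3}\right),
\end{align*}
with $R\lesssim \mu r/(p(n_{1}\wedge n_{2}))$. Choosing $t\asymp\sqrt{\mu r\log(n_{1}\vee n_{2})/(p(n_{1}\wedge n_{2}))}$ and invoking the sample-complexity assumption $p\gtrsim \mu r\log(n_{1}\vee n_{2})/(n_{1}\wedge n_{2})$ (which ensures the sub-Gaussian term dominates the Bernstein correction) yields the target operator bound with the stated probability. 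The main technical obstacle is verifying the incoherence estimate $\|\cP_{T}(\bE_{ij})\|_{\fro}^{2}\le 2\mu r/(n_{1}\wedge n_{2})$ in a form that makes the variance computation tight; everything else follows from standard matrix Bernstein machinery and the reduction to the tangent space.
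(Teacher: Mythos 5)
Your proposal is correct, but it does not mirror anything in this paper: the paper never proves this lemma, it simply imports it as \cite[Lemma~10]{zheng2016convergence}. What you have written is a self-contained reconstruction of the classical argument that underlies the cited result (the Cand\`es--Recht/Recht-style tangent-space concentration). Your reduction is sound: since $\bSigma_{\star}^{1/2}$ is invertible, matrices of the form $\bL_{\star}\bR_{A}^{\top}+\bL_{A}\bR_{\star}^{\top}=\bU_{\star}(\bR_{A}\bSigma_{\star}^{1/2})^{\top}+(\bL_{A}\bSigma_{\star}^{1/2})\bV_{\star}^{\top}$ are exactly the elements of $T$, so Cauchy--Schwarz reduces the uniform bilinear bound to $\|\cP_{T}(p^{-1}\cP_{\Omega}-\cI)\cP_{T}\|_{\op}\lesssim\sqrt{\mu r\log(n_{1}\vee n_{2})/(p(n_{1}\wedge n_{2}))}$, and the uniformity over all $\bL_{A},\bL_{B},\bR_{A},\bR_{B}$ comes for free from the single operator-norm event. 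The quantitative ingredients also check out: $\|\cP_{T}(\bE_{ij})\|_{\fro}^{2}=\|\bU_{\star}^{\top}\be_{i}\|_{2}^{2}+\|\bV_{\star}^{\top}\be_{j}\|_{2}^{2}-\|\bU_{\star}^{\top}\be_{i}\|_{2}^{2}\|\bV_{\star}^{\top}\be_{j}\|_{2}^{2}\le 2\mu r/(n_{1}\wedge n_{2})$, the summands are PSD rank-one superoperators so the variance term collapses to $\tfrac{1-p}{p}\cdot\tfrac{2\mu r}{n_{1}\wedge n_{2}}\cP_{T}$ with $\|\cP_{T}\|_{\op}=1$, and under $p\gtrsim\mu r\log(n_{1}\vee n_{2})/(n_{1}\wedge n_{2})$ the Bernstein correction $Rt$ is dominated by $\sigma^{2}$ at the chosen $t$. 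The only points worth making explicit in a full write-up are routine: the constant in $t$ must be taken large enough that the exponent beats the dimensional prefactor $n_{1}n_{2}$ (which only contributes $\log(n_{1}n_{2})\le 2\log(n_{1}\vee n_{2})$), and $|\delta_{ij}/p-1|\le 1/p$ gives the stated $R$. Compared with the paper's citation, your route buys self-containedness at the cost of invoking the operator Bernstein inequality; it proves a slightly stronger statement (the operator-norm bound on the restricted operator), which is in fact how the cited lemma is typically established.
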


\begin{lemma}[{\cite[Lemma~8]{chen2019model},\cite[Lemma~12]{chen2019nonconvex}}]
\label{lemma:P_Omega_Chen} Suppose that $p\gtrsim\log(n_{1}\vee n_{2})/(n_{1}\wedge n_{2})$. With overwhelming probability, one has
\begin{align*}
 & \left|\left\langle (p^{-1}\cP_{\Omega}-\cI)(\bL_{A}\bR_{A}^{\top}),\bL_{B}\bR_{B}^{\top}\right\rangle \right| \\
 & \qquad\qquad \le C_{2}\sqrt{\frac{n_{1}\vee n_{2}}{p}} \left(\|\bL_{A}\|_{\fro}\|\bL_{B}\|_{2,\infty}\wedge\|\bL_{A}\|_{2,\infty}\|\bL_{B}\|_{\fro}\right)\left(\|\bR_{A}\|_{\fro}\|\bR_{B}\|_{2,\infty}\wedge\|\bR_{A}\|_{2,\infty}\|\bR_{B}\|_{\fro}\right),
\end{align*}
simultaneously for all $\bL_{A},\bL_{B}\in\RR^{n_{1}\times r}$ and $\bR_{A},\bR_{B}\in\RR^{n_{2}\times r}$, where $C_{2}>0$ is some universal constant.
\end{lemma}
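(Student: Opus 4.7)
The plan is to establish this uniform concentration by combining a pointwise Bernstein tail bound with an $\epsilon$-net argument over the four factor matrices. First, I would rewrite the inner product as a sum of independent centered random variables,
$$\langle (p^{-1}\cP_\Omega - \cI)(\bL_A\bR_A^\top),\,\bL_B\bR_B^\top\rangle \;=\; \sum_{(i,j)} \xi_{ij}\,(\bL_A\bR_A^\top)_{ij}\,(\bL_B\bR_B^\top)_{ij},$$
where $\xi_{ij}\coloneqq p^{-1}\delta_{ij}-1$ with $\delta_{ij}\sim\mathrm{Bernoulli}(p)$ independent, each satisfying $|\xi_{ij}|\le p^{-1}$ and $\mathrm{Var}(\xi_{ij})\le p^{-1}$. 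The summands then have product structure that interacts cleanly with the factored norms on the right-hand side.

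Next, for any fixed quadruple of factors I would bound the variance and maximum-magnitude proxies respecting that factored structure. Two row/column-wise applications of Cauchy--Schwarz, each with a choice of whether to absorb a row into a $\|\cdot\|_{2,\infty}$ or into a Frobenius sum, yield
$$\sigma^2 \;\lesssim\; \tfrac{1}{p}\bigl(\|\bL_A\|_\fro^2\|\bL_B\|_{2,\infty}^2 \wedge \|\bL_A\|_{2,\infty}^2\|\bL_B\|_\fro^2\bigr)\,\bigl(\|\bR_A\|_\fro^2\|\bR_B\|_{2,\infty}^2 \wedge \|\bR_A\|_{2,\infty}^2\|\bR_B\|_\fro^2\bigr),$$
together with the crude uniform bound $L \lesssim p^{-1}\|\bL_A\|_{2,\infty}\|\bR_A\|_{2,\infty}\|\bL_B\|_{2,\infty}\|\bR_B\|_{2,\infty}$. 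Scalar Bernstein then gives pointwise concentration at scale $\sqrt{\sigma^2\log n}+L\log n$ with failure probability $O(n^{-C})$ for any prescribed constant $C$.

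Then I would upgrade this pointwise estimate to the stated uniform statement via a covering argument. Because both sides of the target inequality are multilinear and homogeneous of degree one in each of the four factors, it suffices to take a union bound over the product of four unit Frobenius balls in $\RR^{n_i\times r}$, whose $\epsilon$-nets have log-cardinality $O((n_1+n_2)r\log(1/\epsilon))$. To close the union bound at the target scale $t\sim \sqrt{(n_1\vee n_2)/p}$ times the norm product, one needs $t^2/\sigma^2 \gtrsim (n_1+n_2)r$; thanks to the $1/p$ in $\sigma^2$ this is met under the assumed $p\gtrsim \log(n_1\vee n_2)/(n_1\wedge n_2)$. Lipschitz continuity of the bilinear form in each factor (with constant controlled by the other three) absorbs the interpolation between net points and arbitrary factors.

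The main obstacle will be producing the symmetric $\wedge$ structure \emph{uniformly}. Since Cauchy--Schwarz can be taken in two directions, one must essentially run the argument twice --- once with $\|\bL_A\|_\fro\|\bL_B\|_{2,\infty}$ and once with $\|\bL_A\|_{2,\infty}\|\bL_B\|_\fro$ (similarly for $\bR$) --- and then take the minimum at the end, while ensuring the covering is compatible with both the Frobenius and 2-to-infinity geometries. A clean fix is to cover only in Frobenius norm and absorb the 2-to-infinity perturbation via $\|\cdot\|_{2,\infty}\le\|\cdot\|_\fro$ at the cost of a benign constant in $\epsilon$; alternatively one may invoke the restricted-isometry-type concentration of $p^{-1}\cP_\Omega-\cI$ on low-rank matrices established in \cite{chen2019model,chen2019nonconvex}, where this lemma was originally proved, and simply rewrite the conclusion in the factored form stated here.
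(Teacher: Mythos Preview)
The paper does not give its own proof of this lemma; it is quoted verbatim from \cite{chen2019model,chen2019nonconvex} and used as a black box. So there is nothing in-paper to compare against, and I assess your proposal on its own.

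Your $\epsilon$-net route has a real gap at the union-bound step. With the four factors normalized to unit Frobenius norm, the target deviation is $t\sim\sqrt{(n_{1}\vee n_{2})/p}\cdot N$ where $N$ is the product of mixed $\|\cdot\|_{\fro}$ and $\|\cdot\|_{2,\infty}$ norms, while your variance proxy is $\sigma^{2}\lesssim p^{-1}N^{2}$. The factors of $p^{-1}$ therefore \emph{cancel}, giving $t^{2}/\sigma^{2}\sim n_{1}\vee n_{2}$, not something that grows with $1/p$; yet the product net over four $n_{i}\times r$ blocks has log-cardinality of order $(n_{1}+n_{2})r$, so the union bound closes only when $r=O(1)$. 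A second problem is the interpolation off the net: the right-hand side carries $\|\cdot\|_{2,\infty}$ factors that can be as small as $O(n^{-1/2})$, so a Frobenius-ball cover at any fixed scale cannot resolve the target, and the Lipschitz constant of the quadrilinear form in each argument is itself governed by the very random operator you are trying to control. Your suggested fix of replacing $\|\cdot\|_{2,\infty}$ by $\|\cdot\|_{\fro}$ would destroy the statement.

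The argument in the cited references sidesteps both issues by reducing to a \emph{single} random event independent of the factors. Writing $M_{ij}\coloneqq p^{-1}\delta_{ij}-1$, one expands
\[
\sum_{i,j} M_{ij}\,(\bL_{A}\bR_{A}^{\top})_{ij}(\bL_{B}\bR_{B}^{\top})_{ij}
=\sum_{k,l=1}^{r}(a^{(k,l)})^{\top}M\,b^{(k,l)},
\]
with $a^{(k,l)}_{i}=(\bL_{A})_{ik}(\bL_{B})_{il}$ and $b^{(k,l)}_{j}=(\bR_{A})_{jk}(\bR_{B})_{jl}$, and bounds each summand by $\|M\|_{\op}\|a^{(k,l)}\|_{2}\|b^{(k,l)}\|_{2}$. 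The only probabilistic input is the standard spectral estimate $\|M\|_{\op}\lesssim\sqrt{(n_{1}\vee n_{2})/p}$, valid with overwhelming probability once $p\gtrsim\log(n_{1}\vee n_{2})/(n_{1}\wedge n_{2})$. Cauchy--Schwarz over $(k,l)$ together with
\[
\sum_{k,l}\|a^{(k,l)}\|_{2}^{2}=\sum_{i}\|(\bL_{A})_{i,\cdot}\|_{2}^{2}\|(\bL_{B})_{i,\cdot}\|_{2}^{2}\le\|\bL_{A}\|_{\fro}^{2}\|\bL_{B}\|_{2,\infty}^{2}\wedge\|\bL_{A}\|_{2,\infty}^{2}\|\bL_{B}\|_{\fro}^{2}
\]
(and the analogous bound for $b$) then gives the factored right-hand side deterministically, hence uniformly in all four factors.
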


In view of the above two lemmas, define the event $\cE$ as the intersection of the events that the bounds in Lemmas~\ref{lemma:P_Omega_tangent} and \ref{lemma:P_Omega_Chen} hold, which happens with overwhelming probability. The rest of the proof is then performed under the event that $\cE$ holds. 

By the condition $\dist(\bF_{t},\bF_{\star})\le0.02\sigma_{r}(\bX_{\star})$ and Lemma~\ref{lemma:Q_existence}, one knows that $\bQ_{t}$, the optimal alignment matrix between $\bF_{t}$ and $\bF_{\star}$ exists.
Therefore, for notational convenience, we denote $\bL\coloneqq\bL_{t}\bQ_{t}$, $\bR\coloneqq\bR_{t}\bQ_{t}^{-\top}$, $\bDelta_{L}\coloneqq\bL-\bL_{\star}$, $\bDelta_{R}\coloneqq\bR-\bR_{\star}$, and $\epsilon\coloneqq0.02$. In addition, denote $\tilde{\bF}_{t+1}$ as the update before projection as
\begin{align*}
\tilde{\bF}_{t+1}\coloneqq \begin{bmatrix}\tilde{\bL}_{t+1} \\ \tilde{\bR}_{t+1}\end{bmatrix}=\begin{bmatrix}\bL_{t}-\eta p^{-1}\cP_{\Omega}(\bL_{t}\bR_{t}^{\top}-\bX_{\star})\bR_{t}(\bR_{t}^{\top}\bR_{t})^{-1}\\
\bR_{t}-\eta p^{-1}\cP_{\Omega}(\bL_{t}\bR_{t}^{\top}-\bX_{\star})^{\top}\bL_{t}(\bL_{t}^{\top}\bL_{t})^{-1}
\end{bmatrix},
\end{align*}
and therefore $\bF_{t+1}=\cP_{B}(\tilde{\bF}_{t+1})$. 
Note that in view of Lemma~\ref{lemma:scaled_proj}, it suffices to prove the following relation 
\begin{align}
\dist(\tilde{\bF}_{t+1},\bF_{\star})\le(1-0.6\eta)\dist(\bF_{t},\bF_{\star}).\label{eq:MC_goal}
\end{align}
The conclusion $\|\bL_{t}\bR_{t}^{\top}-\bX_{\star}\|_{\fro}\le1.5\dist(\bF_{t},\bF_{\star})$
is a simple consequence of Lemma~\ref{lemma:matrix2factor}; see \eqref{eq:dist_matrix} for a detailed argument. In what follows, we concentrate on proving \eqref{eq:MC_goal}. 

To begin with, we list a few easy consequences under the assumed conditions.
\begin{claim}\label{claim:cond_MC} Under conditions $\dist(\bF_{t},\bF_{\star})\le\epsilon\sigma_{r}(\bX_{\star})$ and $\sqrt{n_{1}}\|\bL\bR^{\top}\|_{2,\infty}\vee\sqrt{n_{2}}\|\bR\bL^{\top}\|_{2,\infty}\le C_{B}\sqrt{\mu r}\sigma_{1}(\bX_{\star})$, one has
\begin{subequations}
\begin{align}
\|\bDelta_{L}\bSigma_{\star}^{-1/2}\|_{\op}\vee\|\bDelta_{R}\bSigma_{\star}^{-1/2}\|_{\op} & \le\epsilon;\label{eq:cond_MC_op} \\
\left\Vert \bR(\bR^{\top}\bR)^{-1}\bSigma_{\star}^{1/2}\right\Vert _{\op} & \le\frac{1}{1-\epsilon}; \label{eq:MC_consequences-1}\\
\left\Vert \bSigma_{\star}^{1/2}(\bR^{\top}\bR)^{-1}\bSigma_{\star}^{1/2}\right\Vert _{\op} & \le\frac{1}{(1-\epsilon)^{2}}; \label{eq:MC_consequences-2} \\
\sqrt{n_{1}}\|\bL\bSigma_{\star}^{1/2}\|_{2,\infty}\vee\sqrt{n_{2}}\|\bR\bSigma_{\star}^{1/2}\|_{2,\infty} & \le\frac{C_{B}}{1-\epsilon}\sqrt{\mu r}\sigma_{1}(\bX_{\star}); \label{eq:cond_MC_2inf-1p}\\
\sqrt{n_{1}}\|\bL\bSigma_{\star}^{-1/2}\|_{2,\infty}\vee\sqrt{n_{2}}\|\bR\bSigma_{\star}^{-1/2}\|_{2,\infty} & \le\frac{C_{B}\kappa}{1-\epsilon}\sqrt{\mu r}; \label{eq:cond_MC_2inf-1n} \\
\sqrt{n_{1}}\|\bDelta_{L}\bSigma_{\star}^{1/2}\|_{2,\infty}\vee\sqrt{n_{2}}\|\bDelta_{R}\bSigma_{\star}^{1/2}\|_{2,\infty} & \le \left(1+\frac{C_{B}}{1-\epsilon}\right)\sqrt{\mu r}\sigma_{1}(\bX_{\star}). \label{eq:cond_MC_2inf-2p}
\end{align}
\end{subequations}
\end{claim}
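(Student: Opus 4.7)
The plan is to derive the six consequences sequentially, with each item resting on the earlier ones and on lemmas already established in Appendix~\ref{subsec:distance_metric}. The whole argument is essentially a collection of elementary norm manipulations; there is no deep new ingredient, so the main care lies in bookkeeping.

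First I would establish \eqref{eq:cond_MC_op}. Starting from $\dist^{2}(\bF_{t},\bF_{\star}) = \|\bDelta_{L}\bSigma_{\star}^{1/2}\|_{\fro}^{2} + \|\bDelta_{R}\bSigma_{\star}^{1/2}\|_{\fro}^{2} \le \epsilon^{2}\sigma_{r}^{2}(\bX_{\star})$, I use $\|\bA\bB\|_{\fro} \ge \|\bA\|_{\fro}\sigma_{r}(\bB)$ with $\bB = \bSigma_{\star}$ to peel off $\bSigma_\star$, obtaining $\|\bDelta_{L}\bSigma_{\star}^{-1/2}\|_{\fro} \vee \|\bDelta_{R}\bSigma_{\star}^{-1/2}\|_{\fro} \le \epsilon$, and then \eqref{eq:cond_MC_op} follows from $\|\cdot\|_{\op} \le \|\cdot\|_{\fro}$. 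This is the analogue of \eqref{eq:cond_MF} in the matrix factorization proof. Items \eqref{eq:MC_consequences-1} and \eqref{eq:MC_consequences-2} are then immediate: \eqref{eq:MC_consequences-1} is Lemma~\ref{lemma:Weyl} (inequality \eqref{eq:Weyl-1R}) combined with \eqref{eq:cond_MC_op}, and \eqref{eq:MC_consequences-2} follows by writing $\bSigma_{\star}^{1/2}(\bR^{\top}\bR)^{-1}\bSigma_{\star}^{1/2} = (\bR(\bR^{\top}\bR)^{-1}\bSigma_{\star}^{1/2})^{\top}(\bR(\bR^{\top}\bR)^{-1}\bSigma_{\star}^{1/2})$ and squaring \eqref{eq:MC_consequences-1}.

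The heart of the proof is the trick for \eqref{eq:cond_MC_2inf-1p} and \eqref{eq:cond_MC_2inf-1n}, where we need to pass from the hypothesized row bound on $\bL\bR^{\top}$ (rather than on $\bL$ or $\bR$ individually) to a row bound on $\bL\bSigma_{\star}^{\pm 1/2}$. Inserting the identity $\bR^{\top}\bR(\bR^{\top}\bR)^{-1} = \bI_{r}$ gives the factorization
\[
\bL\bSigma_{\star}^{\pm 1/2} = (\bL\bR^{\top})\bigl(\bR(\bR^{\top}\bR)^{-1}\bSigma_{\star}^{\pm 1/2}\bigr),
\]
and then the submultiplicative bound $\|\bA\bB\|_{2,\infty} \le \|\bA\|_{2,\infty}\|\bB\|_{\op}$ together with the hypothesis on $\|\bL\bR^{\top}\|_{2,\infty}$ and \eqref{eq:MC_consequences-1} immediately yields \eqref{eq:cond_MC_2inf-1p}. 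For \eqref{eq:cond_MC_2inf-1n} I additionally bound $\|\bR(\bR^{\top}\bR)^{-1}\bSigma_{\star}^{-1/2}\|_{\op} \le \|\bR(\bR^{\top}\bR)^{-1}\bSigma_{\star}^{1/2}\|_{\op}\cdot\sigma_{r}^{-1}(\bX_{\star})$, which is where the extra factor $\kappa$ appears. The bounds for $\bR$ follow by symmetry after interchanging the roles of $\bL$ and $\bR$.

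Finally, \eqref{eq:cond_MC_2inf-2p} is obtained from the triangle inequality $\|\bDelta_{L}\bSigma_{\star}^{1/2}\|_{2,\infty} \le \|\bL\bSigma_{\star}^{1/2}\|_{2,\infty} + \|\bL_{\star}\bSigma_{\star}^{1/2}\|_{2,\infty}$, where the first term is already controlled by \eqref{eq:cond_MC_2inf-1p} and the second by the $\mu$-incoherence of $\bX_{\star}$: since $\bL_{\star}\bSigma_{\star}^{1/2} = \bU_{\star}\bSigma_{\star}$, we get $\sqrt{n_{1}}\|\bL_{\star}\bSigma_{\star}^{1/2}\|_{2,\infty} \le \sqrt{n_{1}}\|\bU_{\star}\|_{2,\infty}\sigma_{1}(\bX_{\star}) \le \sqrt{\mu r}\sigma_{1}(\bX_{\star})$, and the analogous bound holds for $\bR$. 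There is no substantive obstacle: the only subtlety is in recognizing the $\bR^{\top}\bR(\bR^{\top}\bR)^{-1}$ insertion in the middle paragraph so that the hypothesis on $\|\bL\bR^{\top}\|_{2,\infty}$ can actually be used.
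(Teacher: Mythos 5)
Your proof is correct and follows essentially the same route as the paper: the operator-norm bound derived exactly as for \eqref{eq:cond_MF}, Lemma~\ref{lemma:Weyl} for \eqref{eq:MC_consequences-1}--\eqref{eq:MC_consequences-2}, transferring the incoherence of $\bL\bR^{\top}$ to the individual factors, and a triangle inequality plus the $\mu$-incoherence of $\bX_{\star}$ for \eqref{eq:cond_MC_2inf-2p}. Your step of inserting $\bR^{\top}\bR(\bR^{\top}\bR)^{-1}$ and using $\|\bA\bB\|_{2,\infty}\le\|\bA\|_{2,\infty}\|\bB\|_{\op}$ together with \eqref{eq:MC_consequences-1} is just the reciprocal form of the paper's row-wise lower bound $\|\bL\bR^{\top}\|_{2,\infty}\ge\sigma_{r}(\bR\bSigma_{\star}^{-1/2})\|\bL\bSigma_{\star}^{1/2}\|_{2,\infty}$, since $\|\bR(\bR^{\top}\bR)^{-1}\bSigma_{\star}^{1/2}\|_{\op}=1/\sigma_{r}(\bR\bSigma_{\star}^{-1/2})$.
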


Now we are ready to embark on the proof of \eqref{eq:MC_goal}. By the definition of $\dist(\tilde{\bF}_{t+1},\bF_{\star})$, one has 
\begin{align}
\dist^{2}(\tilde{\bF}_{t+1},\bF_{\star}) & \le\left\Vert (\tilde{\bL}_{t+1}\bQ_{t}-\bL_{\star})\bSigma_{\star}^{1/2}\right\Vert _{\fro}^{2}+\left\Vert (\tilde{\bR}_{t+1}\bQ_{t}^{-\top}-\bR_{\star})\bSigma_{\star}^{1/2}\right\Vert _{\fro}^{2},\label{eq:MC_expand}
\end{align}
where we recall that $\bQ_{t}$ is the optimal alignment matrix between $\bF_{t}$ and $\bF_{\star}$. 
Plug in the update rule \eqref{eq:iterates_MC} and the decomposition $\bL\bR^{\top}-\bX_{\star}=\bDelta_{L}\bR^{\top}+\bL_{\star}\bDelta_{R}^{\top}$ to obtain 
\begin{align*}
& (\tilde{\bL}_{t+1}\bQ_{t}-\bL_{\star})\bSigma_{\star}^{1/2}  =\left(\bL-\eta p^{-1}\cP_{\Omega}(\bL\bR^{\top}-\bX_{\star})\bR(\bR^{\top}\bR)^{-1}-\bL_{\star}\right)\bSigma_{\star}^{1/2}\\
 &\qquad =\bDelta_{L}\bSigma_{\star}^{1/2}-\eta(\bL\bR^{\top}-\bX_{\star})\bR(\bR^{\top}\bR)^{-1}\bSigma_{\star}^{1/2}-\eta(p^{-1}\cP_{\Omega}-\cI)(\bL\bR^{\top}-\bX_{\star})\bR(\bR^{\top}\bR)^{-1}\bSigma_{\star}^{1/2} \\
 &\qquad =(1-\eta)\bDelta_{L}\bSigma_{\star}^{1/2}-\eta\bL_{\star}\bDelta_{R}^{\top}\bR(\bR^{\top}\bR)^{-1}\bSigma_{\star}^{1/2}-\eta(p^{-1}\cP_{\Omega}-\cI)(\bL\bR^{\top}-\bX_{\star})\bR(\bR^{\top}\bR)^{-1}\bSigma_{\star}^{1/2}.
\end{align*}
This allows us to expand the first square in \eqref{eq:MC_expand} as 
\begin{align*}
\left\Vert (\tilde{\bL}_{t+1}\bQ_{t}-\bL_{\star})\bSigma_{\star}^{1/2}\right\Vert _{\fro}^{2} & =\underbrace{\left\Vert (1-\eta)\bDelta_{L}\bSigma_{\star}^{1/2}-\eta\bL_{\star}\bDelta_{R}^{\top}\bR(\bR^{\top}\bR)^{-1}\bSigma_{\star}^{1/2}\right\Vert _{\fro}^{2}}_{\mfk{P}_{1}}\\
 & \quad-2\eta(1-\eta)\underbrace{\tr\left((p^{-1}\cP_{\Omega}-\cI)(\bL\bR^{\top}-\bX_{\star})\bR(\bR^{\top}\bR)^{-1}\bSigma_{\star}\bDelta_{L}^{\top}\right)}_{\mfk{P}_{2}}\\
 & \quad+2\eta^{2}\underbrace{\tr\left((p^{-1}\cP_{\Omega}-\cI)(\bL\bR^{\top}-\bX_{\star})\bR(\bR^{\top}\bR)^{-1}\bSigma_{\star}(\bR^{\top}\bR)^{-1}\bR^{\top}\bDelta_{R}\bL_{\star}^{\top}\right)}_{\mfk{P}_{3}}\\
 & \quad+\eta^{2}\underbrace{\left\Vert (p^{-1}\cP_{\Omega}-\cI)(\bL\bR^{\top}-\bX_{\star})\bR(\bR^{\top}\bR)^{-1}\bSigma_{\star}^{1/2}\right\Vert _{\fro}^{2}}_{\mfk{P}_{4}}.
\end{align*}
In the sequel, we shall control the four terms separately, of which $\mfk{P}_{1}$ is the main term, and $\mfk{P}_{2},\mfk{P}_{3}$ and $\mfk{P}_{4}$ are perturbation terms. 
\begin{enumerate}
\item Notice that the main term $\mfk{P}_{1}$ has already been controlled in \eqref{eq:MF_Lt_bound} under the condition \eqref{eq:cond_MC_op}. It obeys 
\begin{align*}
\mfk{P}_{1}\le\left((1-\eta)^{2}+\frac{2\epsilon}{1-\epsilon}\eta(1-\eta)\right)\|\bDelta_{L}\bSigma_{\star}^{1/2}\|_{\fro}^{2}+\frac{2\epsilon+\epsilon^{2}}{(1-\epsilon)^{2}}\eta^{2}\|\bDelta_{R}\bSigma_{\star}^{1/2}\|_{\fro}^{2}.
\end{align*}
\item For the second term $\mfk{P}_{2}$, decompose $\bL\bR^{\top}-\bX_{\star}=\bDelta_{L}\bR_{\star}^{\top}+\bL\bDelta_{R}^{\top}$ and apply the triangle inequality to obtain 
\begin{align*}
|\mfk{P}_{2}| & =\Big|\tr\left((p^{-1}\cP_{\Omega}-\cI)(\bDelta_{L}\bR_{\star}^{\top}+\bL\bDelta_{R}^{\top})\bR(\bR^{\top}\bR)^{-1}\bSigma_{\star}\bDelta_{L}^{\top}\right)\Big|\\
 & \le\underbrace{\Big|\tr\left((p^{-1}\cP_{\Omega}-\cI)(\bDelta_{L}\bR_{\star}^{\top})\bR_{\star}(\bR^{\top}\bR)^{-1}\bSigma_{\star}\bDelta_{L}^{\top}\right)\Big|}_{\mfk{P}_{2,1}}\\
 & \quad+\underbrace{\Big|\tr\left((p^{-1}\cP_{\Omega}-\cI)(\bDelta_{L}\bR_{\star}^{\top})\bDelta_{R}(\bR^{\top}\bR)^{-1}\bSigma_{\star}\bDelta_{L}^{\top}\right)\Big|}_{\mfk{P}_{2,2}}\\
 & \quad+\underbrace{\Big|\tr\left((p^{-1}\cP_{\Omega}-\cI)(\bL\bDelta_{R}^{\top})\bR(\bR^{\top}\bR)^{-1}\bSigma_{\star}\bDelta_{L}^{\top}\right)\Big|}_{\mfk{P}_{2,3}}.
\end{align*}
For the first term $\mfk{P}_{2,1}$, under the event $\cE$, we can invoke Lemma~\ref{lemma:P_Omega_tangent} to obtain 
\begin{align*}
\mfk{P}_{2,1} & \le C_{1}\sqrt{\frac{\mu r\log(n_{1}\vee n_{2})}{p(n_{1}\wedge n_{2})}}\|\bDelta_{L}\bR_{\star}^{\top}\|_{\fro}\left\Vert \bDelta_{L}\bSigma_{\star}(\bR^{\top}\bR)^{-1}\bR_{\star}^{\top}\right\Vert _{\fro}\\
 & \le C_{1}\sqrt{\frac{\mu r\log(n_{1}\vee n_{2})}{p(n_{1}\wedge n_{2})}}\|\bDelta_{L}\bSigma_{\star}^{1/2}\|_{\fro}^{2}\left\Vert \bSigma_{\star}^{1/2}(\bR^{\top}\bR)^{-1}\bSigma_{\star}^{1/2}\right\Vert _{\op},
\end{align*}
where the second line follows from the relation $\|\bA\bB\|_{\fro}\le\|\bA\|_{\op}\|\bB\|_{\fro}$. 
Use the condition~\eqref{eq:MC_consequences-2} to obtain 
\begin{align*}
\mfk{P}_{2,1} & \le\frac{C_{1}}{(1-\epsilon)^{2}}\sqrt{\frac{\mu r\log(n_{1}\vee n_{2})}{p(n_{1}\wedge n_{2})}}\|\bDelta_{L}\bSigma_{\star}^{1/2}\|_{\fro}^2.
\end{align*}
Regarding the remaining terms $\mfk{P}_{2,2}$ and $\mfk{P}_{2,3}$, our main hammer is Lemma~\ref{lemma:P_Omega_Chen}. Invoking Lemma~\ref{lemma:P_Omega_Chen} under the event $\cE$ with $\bL_{A}\coloneqq\bDelta_{L}\bSigma_{\star}^{1/2}$, $\bR_{A}\coloneqq\bR_{\star}\bSigma_{\star}^{-1/2}$, $\bL_{B}\coloneqq\bDelta_{L}\bSigma_{\star}^{1/2}$, and $\bR_{B}\coloneqq\bDelta_{R}(\bR^{\top}\bR)^{-1}\bSigma_{\star}^{1/2}$, we arrive at 
\begin{align*}
\mfk{P}_{2,2} & \le C_{2}\sqrt{\frac{n_{1}\vee n_{2}}{p}}\|\bDelta_{L}\bSigma_{\star}^{1/2}\|_{2,\infty}\|\bDelta_{L}\bSigma_{\star}^{1/2}\|_{\fro}\|\bR_{\star}\bSigma_{\star}^{-1/2}\|_{2,\infty}\left\Vert \bDelta_{R}(\bR^{\top}\bR)^{-1}\bSigma_{\star}^{1/2}\right\Vert _{\fro}\\
 & \le C_{2}\sqrt{\frac{n_{1}\vee n_{2}}{p}}\|\bDelta_{L}\bSigma_{\star}^{1/2}\|_{2,\infty}\|\bDelta_{L}\bSigma_{\star}^{1/2}\|_{\fro}\|\bR_{\star}\bSigma_{\star}^{-1/2}\|_{2,\infty}\|\bDelta_{R}\bSigma_{\star}^{-1/2}\|_{\fro}\left\Vert \bSigma_{\star}^{1/2}(\bR^{\top}\bR)^{-1}\bSigma_{\star}^{1/2}\right\Vert _{\op}.
\end{align*}
Similarly, with the help of Lemma~\ref{lemma:P_Omega_Chen}, one has 
\begin{align*}
\mfk{P}_{2,3} & \le C_{2}\sqrt{\frac{n_{1}\vee n_{2}}{p}}\|\bL\bSigma_{\star}^{-1/2}\|_{2,\infty}\|\bDelta_{L}\bSigma_{\star}^{1/2}\|_{\fro}\|\bDelta_{R}\bSigma_{\star}^{1/2}\|_{\fro}\|\bR\bSigma_{\star}^{-1/2}\|_{2,\infty}\left\Vert \bSigma_{\star}^{1/2}(\bR^{\top}\bR)^{-1}\bSigma_{\star}^{1/2}\right\Vert _{\op}.
\end{align*}
Utilizing the consequences in Claim~\ref{claim:cond_MC}, we arrive at
\begin{align*}
\mfk{P}_{2,2} & \le\frac{C_{2}\kappa}{(1-\epsilon)^{2}}\left(1+\frac{C_{B}}{1-\epsilon}\right)\frac{\mu r}{\sqrt{p(n_{1}\wedge n_{2})}}\|\bDelta_{L}\bSigma_{\star}^{1/2}\|_{\fro}\|\bDelta_{R}\bSigma_{\star}^{1/2}\|_{\fro}; \\
\mfk{P}_{2,3} & \le\frac{C_{2}C_{B}^{2}\kappa^{2}}{(1-\epsilon)^{4}}\frac{\mu r}{\sqrt{p(n_{1}\wedge n_{2})}}\|\bDelta_{L}\bSigma_{\star}^{1/2}\|_{\fro}\|\bDelta_{R}\bSigma_{\star}^{1/2}\|_{\fro}.
\end{align*}
We then combine the bounds for $\mfk{P}_{2,1},\mfk{P}_{2,2}$ and $\mfk{P}_{2,3}$  to see 
\begin{align*}
\mfk{P}_{2} & \le\frac{C_{1}}{(1-\epsilon)^{2}}\sqrt{\frac{\mu r\log(n_{1}\vee n_{2})}{p(n_{1}\wedge n_{2})}}\|\bDelta_{L}\bSigma_{\star}^{1/2}\|_{\fro}^{2} \\
 & \quad+\frac{C_{2}\kappa}{(1-\epsilon)^{2}}\left(1+\frac{C_{B}}{1-\epsilon}+\frac{C_B^2\kappa}{(1-\epsilon)^2}\right) \frac{\mu r}{\sqrt{p(n_{1}\wedge n_{2})}}\|\bDelta_{L}\bSigma_{\star}^{1/2}\|_{\fro}\|\bDelta_{R}\bSigma_{\star}^{1/2}\|_{\fro} \\
 & =\delta_{1}\|\bDelta_{L}\bSigma_{\star}^{1/2}\|_{\fro}^{2}+\delta_{2}\|\bDelta_{L}\bSigma_{\star}^{1/2}\|_{\fro}\|\bDelta_{R}\bSigma_{\star}^{1/2}\|_{\fro}\\
 & \le (\delta_{1}+\frac{\delta_{2}}{2})\|\bDelta_{L}\bSigma_{\star}^{1/2}\|_{\fro}^{2}+\frac{\delta_{2}}{2}\|\bDelta_{R}\bSigma_{\star}^{1/2}\|_{\fro}^{2},
\end{align*}
where we denote 
\begin{align*}
\delta_{1}\coloneqq\frac{C_{1}}{(1-\epsilon)^{2}}\sqrt{\frac{\mu r\log(n_{1}\vee n_{2})}{p(n_{1}\wedge n_{2})}},\quad\mbox{and}\quad\delta_{2}\coloneqq\frac{C_{2}\kappa}{(1-\epsilon)^{2}}\left(1+\frac{C_{B}}{1-\epsilon}+\frac{C_B^2\kappa}{(1-\epsilon)^2}\right) \frac{\mu r}{\sqrt{p(n_{1}\wedge n_{2})}}.
\end{align*}

\item Following a similar argument for controlling $\mfk{P}_{2}$ (i.e.~repeatedly using Lemmas~\ref{lemma:P_Omega_tangent} and \ref{lemma:P_Omega_Chen}), we can obtain the following bounds for $\mfk{P}_{3}$ and $\mfk{P}_{4}$, whose proof are deferred to the end of this section. 
\begin{claim}\label{claim:P3-4} Under the event $\cE$, one has 
\begin{align*}
\mfk{P}_{3} & \le\frac{\delta_{2}}{2}\|\bDelta_{L}\bSigma_{\star}^{1/2}\|_{\fro}^{2}+(\delta_{1}+\frac{\delta_{2}}{2})\|\bDelta_{R}\bSigma_{\star}^{1/2}\|_{\fro}^{2};\\
\mfk{P}_{4} & \le\delta_{1}(\delta_{1}+\delta_{2})\|\bDelta_{L}\bSigma_{\star}^{1/2}\|_{\fro}^{2}+\delta_{2}(\delta_{1}+\delta_{2})\|\bDelta_{R}\bSigma_{\star}^{1/2}\|_{\fro}^{2}.
\end{align*}
\end{claim}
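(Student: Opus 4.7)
}
The approach mirrors the derivation already carried out for $\mfk{P}_{2}$: we expand each term as an inner product against a suitable ``test matrix,'' split both the input $\bL\bR^{\top}-\bX_{\star}$ and (when necessary) the test matrix into a ``tangent'' piece of the form $\bL_{\star}\bR_{B}^{\top}+\bL_{B}\bR_{\star}^{\top}$ plus a residual, and dispatch the tangent-tangent cross term by Lemma~\ref{lemma:P_Omega_tangent} (generating the $\delta_{1}$ contribution) and every remaining cross term by Lemma~\ref{lemma:P_Omega_Chen} (generating the $\delta_{2}$ contribution). The operator/incoherence estimates recorded in Claim~\ref{claim:cond_MC} then collapse the auxiliary Frobenius and $\ell_{2,\infty}$ norms into constant multiples of $\|\bDelta_{L}\bSigma_{\star}^{1/2}\|_{\fro}$ and $\|\bDelta_{R}\bSigma_{\star}^{1/2}\|_{\fro}$, with all $\kappa$- and $C_{B}$-dependent constants absorbed into $\delta_{2}$.

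\textbf{Bounding $\mfk{P}_{3}$.} I would rewrite $\mfk{P}_{3}=\langle (p^{-1}\cP_{\Omega}-\cI)(\bL\bR^{\top}-\bX_{\star}),\,\bL_{\star}\bDelta_{R}^{\top}\bR(\bR^{\top}\bR)^{-1}\bSigma_{\star}(\bR^{\top}\bR)^{-1}\bR^{\top}\rangle$ and observe that the right argument is already of tangent form $\bL_{\star}\bR_{B}^{\top}$. Decomposing the left argument as $\bL\bR^{\top}-\bX_{\star}=\bL_{\star}\bDelta_{R}^{\top}+\bDelta_{L}\bR^{\top}$ and splitting the test matrix $\bR^{\top}=\bR_{\star}^{\top}+\bDelta_{R}^{\top}$ where needed, the only tangent-tangent pairing is $(\bL_{\star}\bDelta_{R}^{\top},\bL_{\star}\bDelta_{R}^{\top}\bR(\bR^{\top}\bR)^{-1}\bSigma_{\star}(\bR^{\top}\bR)^{-1}\bR_{\star}^{\top})$, yielding a $\delta_{1}\|\bDelta_{R}\bSigma_{\star}^{1/2}\|_{\fro}^{2}$ bound via Lemma~\ref{lemma:P_Omega_tangent}; the two residual cross terms are handled by Lemma~\ref{lemma:P_Omega_Chen} and contribute $\delta_{2}\|\bDelta_{L}\bSigma_{\star}^{1/2}\|_{\fro}\|\bDelta_{R}\bSigma_{\star}^{1/2}\|_{\fro}$ after invoking \eqref{eq:MC_consequences-1}--\eqref{eq:cond_MC_2inf-1n}. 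A final application of $2ab\le a^{2}+b^{2}$ (exactly as in $\mfk{P}_{2}$) distributes the cross term symmetrically, producing the claimed $L\leftrightarrow R$-swapped bound.

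\textbf{Bounding $\mfk{P}_{4}$.} Because $\mfk{P}_{4}$ is a squared Frobenius norm, I would first use the variational representation $\sqrt{\mfk{P}_{4}}=\langle (p^{-1}\cP_{\Omega}-\cI)(\bL\bR^{\top}-\bX_{\star}),\,\tilde{\bL}\bSigma_{\star}^{1/2}(\bR^{\top}\bR)^{-1}\bR^{\top}\rangle$ for some $\tilde{\bL}\in\RR^{n_{1}\times r}$ with $\|\tilde{\bL}\|_{\fro}=1$. The test matrix is no longer in tangent form, so I would split $\bR^{\top}=\bR_{\star}^{\top}+\bDelta_{R}^{\top}$ to isolate its tangent part $\tilde{\bL}\bSigma_{\star}^{1/2}(\bR^{\top}\bR)^{-1}\bR_{\star}^{\top}$. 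Combined with $\bL\bR^{\top}-\bX_{\star}=\bDelta_{L}\bR_{\star}^{\top}+\bL\bDelta_{R}^{\top}$, this generates four cross terms: the unique tangent-tangent pairing $(\bDelta_{L}\bR_{\star}^{\top},\tilde{\bL}\bSigma_{\star}^{1/2}(\bR^{\top}\bR)^{-1}\bR_{\star}^{\top})$ contributes $\delta_{1}\|\bDelta_{L}\bSigma_{\star}^{1/2}\|_{\fro}$ via Lemma~\ref{lemma:P_Omega_tangent}, while the three remaining pairings each contain $\bDelta_{R}$ or an off-tangent test factor and therefore are bounded by Lemma~\ref{lemma:P_Omega_Chen} to contribute $\delta_{2}\|\bDelta_{R}\bSigma_{\star}^{1/2}\|_{\fro}$ after using Claim~\ref{claim:cond_MC}. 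Taking the supremum over $\tilde{\bL}$ gives $\sqrt{\mfk{P}_{4}}\le\delta_{1}\|\bDelta_{L}\bSigma_{\star}^{1/2}\|_{\fro}+\delta_{2}\|\bDelta_{R}\bSigma_{\star}^{1/2}\|_{\fro}$; squaring and applying $2ab\le a^{2}+b^{2}$ to the cross term $2\delta_{1}\delta_{2}\|\bDelta_{L}\bSigma_{\star}^{1/2}\|_{\fro}\|\bDelta_{R}\bSigma_{\star}^{1/2}\|_{\fro}$ yields exactly $\mfk{P}_{4}\le\delta_{1}(\delta_{1}+\delta_{2})\|\bDelta_{L}\bSigma_{\star}^{1/2}\|_{\fro}^{2}+\delta_{2}(\delta_{1}+\delta_{2})\|\bDelta_{R}\bSigma_{\star}^{1/2}\|_{\fro}^{2}$.

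\textbf{Main obstacle.} The heaviest step is the bookkeeping for $\mfk{P}_{4}$: after the variational rewriting and two cascaded decompositions, one must identify which of the four cross terms is genuinely tangent-tangent (and therefore eligible for the sharper $\delta_{1}$ bound) and route the $\bSigma_{\star}^{\pm 1/2}$ balancing factors so that the Frobenius/$\ell_{2,\infty}$/operator norms produced by Lemma~\ref{lemma:P_Omega_Chen} collapse cleanly onto $\|\bDelta_{L}\bSigma_{\star}^{1/2}\|_{\fro}$ or $\|\bDelta_{R}\bSigma_{\star}^{1/2}\|_{\fro}$ via Claim~\ref{claim:cond_MC}. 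Provided this is done uniformly for all three off-tangent pairings, the final assembly is a short calculation, and the clean factorization $(\delta_{1}+\delta_{2})(\delta_{1}\|\bDelta_{L}\bSigma_{\star}^{1/2}\|_{\fro}^{2}+\delta_{2}\|\bDelta_{R}\bSigma_{\star}^{1/2}\|_{\fro}^{2})$ emerges automatically from Young's inequality.
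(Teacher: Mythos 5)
Your plan follows essentially the same route as the paper's proof: decompose $\bL\bR^{\top}-\bX_{\star}$ (and, where needed, the test matrix via $\bR=\bR_{\star}+\bDelta_{R}$), bound the single tangent-tangent pairing with Lemma~\ref{lemma:P_Omega_tangent} and all remaining pairings with Lemma~\ref{lemma:P_Omega_Chen} together with the estimates in Claim~\ref{claim:cond_MC}, then assemble via Young's inequality (and, for $\mfk{P}_{4}$, the variational representation followed by squaring). The only cosmetic difference is that you split slightly more finely than necessary --- e.g.\ the test matrix in $\mfk{P}_{3}$ is already of tangent form $\bL_{\star}\bR_{B}^{\top}$ so no further split is needed, and your four-term split of $\mfk{P}_{4}$ matches the paper's three-term one after merging --- which is harmless.
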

\end{enumerate}
Taking the bounds for $\mfk{P}_{1},\mfk{P}_{2},\mfk{P}_{3}$ and $\mfk{P}_{4}$ collectively yields 
\begin{align*}
\left\Vert (\tilde{\bL}_{t+1}\bQ_{t}-\bL_{\star})\bSigma_{\star}^{1/2}\right\Vert _{\fro}^{2} & \le\left((1-\eta)^{2}+\frac{2\epsilon}{1-\epsilon}\eta(1-\eta)\right)\|\bDelta_{L}\bSigma_{\star}^{1/2}\|_{\fro}^{2}+\frac{2\epsilon+\epsilon^{2}}{(1-\epsilon)^{2}}\eta^{2}\|\bDelta_{R}\bSigma_{\star}^{1/2}\|_{\fro}^{2}\\
 & \quad+\eta(1-\eta)\left((2\delta_{1}+\delta_{2})\|\bDelta_{L}\bSigma_{\star}^{1/2}\|_{\fro}^{2}+\delta_{2}\|\bDelta_{R}\bSigma_{\star}^{1/2}\|_{\fro}^{2}\right)\\
 & \quad+\eta^{2}\left(\delta_{2}\|\bDelta_{L}\bSigma_{\star}^{1/2}\|_{\fro}^{2}+(2\delta_{1}+\delta_{2})\|\bDelta_{R}\bSigma_{\star}^{1/2}\|_{\fro}^{2}\right)\\
 & \quad+\eta^{2}\left(\delta_{1}(\delta_{1}+\delta_{2})\|\bDelta_{L}\bSigma_{\star}^{1/2}\|_{\fro}^{2}+\delta_{2}(\delta_{1}+\delta_{2})\|\bDelta_{R}\bSigma_{\star}^{1/2}\|_{\fro}^{2}\right).
\end{align*}
A similar upper bound holds for the second square in \eqref{eq:MC_expand}.
As a result, we reach the conclusion that 
\begin{align*}
\left\Vert (\tilde{\bL}_{t+1}\bQ_{t}-\bL_{\star})\bSigma_{\star}^{1/2}\right\Vert _{\fro}^{2}+\left\Vert (\tilde{\bR}_{t+1}\bQ_{t}^{-\top}-\bR_{\star})\bSigma_{\star}^{1/2}\right\Vert _{\fro}^{2} & \le\rho^{2}(\eta;\epsilon,\delta_{1},\delta_{2})\dist^{2}(\bF_{t},\bF_{\star}),
\end{align*}
where the contraction rate $\rho^{2}(\eta;\epsilon,\delta_{1},\delta_{2})$
is given by 
\begin{align*}
\rho^{2}(\eta;\epsilon,\delta_{1},\delta_{2})\coloneqq(1-\eta)^{2}+\left(\frac{2\epsilon}{1-\epsilon}+2(\delta_{1}+\delta_{2})\right)\eta(1-\eta)+\left(\frac{2\epsilon+\epsilon^{2}}{(1-\epsilon)^{2}}+2(\delta_{1}+\delta_{2})+(\delta_{1}+\delta_{2})^{2}\right)\eta^{2}.
\end{align*}
As long as $p\ge C(\mu r\kappa^{4}\vee\log(n_{1}\vee n_{2}))\mu r/(n_{1}\wedge n_{2})$ for some sufficiently large constant $C$, one has $\delta_{1}+\delta_{2}\le0.1$ under the setting $\epsilon=0.02$. When $0<\eta\le2/3$, one further has $\rho(\eta;\epsilon,\delta_{1},\delta_{2})\le1-0.6\eta$.
Thus we conclude that
\begin{align*}
\dist(\tilde{\bF}_{t+1},\bF_{\star}) & \le\sqrt{\left\Vert (\tilde{\bL}_{t+1}\bQ_{t}-\bL_{\star})\bSigma_{\star}^{1/2}\right\Vert _{\fro}^{2}+\left\Vert (\tilde{\bR}_{t+1}\bQ_{t}^{-\top}-\bR_{\star})\bSigma_{\star}^{1/2}\right\Vert _{\fro}^{2}}\\
 & \le(1-0.6\eta)\dist(\bF_{t},\bF_{\star}),
\end{align*}
 which is exactly the upper bound we are after; see~\eqref{eq:MC_goal}. This finishes the proof. 

\begin{proof}[Proof of Claim~\ref{claim:cond_MC}] 
First, repeating the derivation for \eqref{eq:cond_MF} obtains \eqref{eq:cond_MC_op}. 
Second, take the condition \eqref{eq:cond_MC_op} and Lemma~\ref{lemma:Weyl} together to obtain \eqref{eq:MC_consequences-1} and \eqref{eq:MC_consequences-2}.
Third, take the incoherence condition $\sqrt{n_{1}}\|\bL\bR^{\top}\|_{2,\infty}\vee\sqrt{n_{2}}\|\bR\bL^{\top}\|_{2,\infty}\le C_{B}\sqrt{\mu r}\sigma_{1}(\bX_{\star})$ together with the relations
\begin{align*}
\|\bL\bR^{\top}\|_{2,\infty} & \ge\sigma_{r}(\bR\bSigma_{\star}^{-1/2})\|\bL\bSigma_{\star}^{1/2}\|_{2,\infty} \\ 
 & \ge\left(\sigma_{r}(\bR_{\star}\bSigma_{\star}^{-1/2})-\|\bDelta_{R}\bSigma_{\star}^{-1/2}\|_{\op}\right)\|\bL\bSigma_{\star}^{1/2}\|_{2,\infty} \\ 
 & \ge(1-\epsilon)\|\bL\bSigma_{\star}^{1/2}\|_{2,\infty}; \\
\|\bR\bL^{\top}\|_{2,\infty} & \ge\sigma_{r}(\bL\bSigma_{\star}^{-1/2})\|\bR\bSigma_{\star}^{1/2}\|_{2,\infty} \\ 
 & \ge\left(\sigma_{r}(\bL_{\star}\bSigma_{\star}^{-1/2})-\|\bDelta_{L}\bSigma_{\star}^{-1/2}\|_{\op}\right)\|\bR\bSigma_{\star}^{1/2}\|_{2,\infty} \\ 
 & \ge(1-\epsilon)\|\bR\bSigma_{\star}^{1/2}\|_{2,\infty}
\end{align*}
to obtain \eqref{eq:cond_MC_2inf-1p} and \eqref{eq:cond_MC_2inf-1n}.
Finally, apply the triangle inequality together with incoherence assumption to obtain \eqref{eq:cond_MC_2inf-2p}.
\end{proof}

\begin{proof}[Proof of Claim~\ref{claim:P3-4}] We start with the term $\mfk{P}_{3}$, for which we have 
\begin{align*}
|\mfk{P}_{3}| & \le\underbrace{\Big|\tr\left((p^{-1}\cP_{\Omega}-\cI)(\bL_{\star}\bDelta_{R}^{\top})\bR(\bR^{\top}\bR)^{-1}\bSigma_{\star}(\bR^{\top}\bR)^{-1}\bR^{\top}\bDelta_{R}\bL_{\star}^{\top}\right)\Big|}_{\mfk{P}_{3,1}}\\
 & \quad+\underbrace{\Big|\tr\left((p^{-1}\cP_{\Omega}-\cI)(\bDelta_{L}\bR^{\top})\bR(\bR^{\top}\bR)^{-1}\bSigma_{\star}(\bR^{\top}\bR)^{-1}\bR^{\top}\bDelta_{R}\bL_{\star}^{\top}\right)\Big|}_{\mfk{P}_{3,2}}.
\end{align*}
Invoke Lemma~\ref{lemma:P_Omega_tangent} to bound $\mfk{P}_{3,1}$ as 
\begin{align*}
\mfk{P}_{3,1} & \le C_{1}\sqrt{\frac{\mu r\log(n_{1}\vee n_{2})}{p(n_{1}\wedge n_{2})}}\|\bL_{\star}\bDelta_{R}^{\top}\|_{\fro}\left\Vert \bL_{\star}\bDelta_{R}^{\top}\bR(\bR^{\top}\bR)^{-1}\bSigma_{\star}(\bR^{\top}\bR)^{-1}\bR^{\top}\right\Vert _{\fro}\\
 & \le C_{1}\sqrt{\frac{\mu r\log(n_{1}\vee n_{2})}{p(n_{1}\wedge n_{2})}}\|\bDelta_{R}\bSigma_{\star}^{1/2}\|_{\fro}^{2}\left\Vert \bR(\bR^{\top}\bR)^{-1}\bSigma_{\star}^{1/2}\right\Vert _{\op}^{2}.
\end{align*}
The condition \eqref{eq:MC_consequences-1} allows us to obtain a simplified bound 
\begin{align*}
\mfk{P}_{3,1} & \le\frac{C_{1}}{(1-\epsilon)^{2}}\sqrt{\frac{\mu r\log(n_{1}\vee n_{2})}{p(n_{1}\wedge n_{2})}}\|\bDelta_{R}\bSigma_{\star}^{1/2}\|_{\fro}^{2}.
\end{align*}
In regard to $\mfk{P}_{3,2}$, we apply Lemma~\ref{lemma:P_Omega_Chen} with $\bL_{A}\coloneqq\bDelta_{L}\bSigma_{\star}^{1/2}$, $\bR_{A}\coloneqq\bR\bSigma_{\star}^{-1/2}$, $\bL_{B}\coloneqq\bL_{\star}\bSigma_{\star}^{-1/2}$, and \\
$\bR_{B}\coloneqq\bR(\bR^{\top}\bR)^{-1}\bSigma_{\star}(\bR^{\top}\bR)^{-1}\bR^{\top}\bDelta_{R}\bSigma_{\star}^{1/2}$ to see
\begin{align*}
\mfk{P}_{3,2} & \le C_{2}\sqrt{\frac{n_{1}\vee n_{2}}{p}}\|\bDelta_{L}\bSigma_{\star}^{1/2}\|_{\fro}\|\bL_{\star}\bSigma_{\star}^{-1/2}\|_{2,\infty}\|\bR\bSigma_{\star}^{-1/2}\|_{2,\infty}\left\Vert \bR(\bR^{\top}\bR)^{-1}\bSigma_{\star}(\bR^{\top}\bR)^{-1}\bR^{\top}\bDelta_{R}\bSigma_{\star}^{1/2}\right\Vert _{\fro}\\
 & \le C_{2}\sqrt{\frac{n_{1}\vee n_{2}}{p}}\|\bDelta_{L}\bSigma_{\star}^{1/2}\|_{\fro}\|\bL_{\star}\bSigma_{\star}^{-1/2}\|_{2,\infty}\|\bR\bSigma_{\star}^{-1/2}\|_{2,\infty}\left\Vert \bR(\bR^{\top}\bR)^{-1}\bSigma_{\star}^{1/2}\right\Vert _{\op}^{2}\|\bDelta_{R}\bSigma_{\star}^{1/2}\|_{\fro}.
\end{align*}
Again, use the consequences in Claim~\ref{claim:cond_MC} to reach
\begin{align*}
\mfk{P}_{3,2} & \le C_{2}\sqrt{\frac{n_{1}\vee n_{2}}{p}}\|\bDelta_{L}\bSigma_{\star}^{1/2}\|_{\fro}\sqrt{\frac{\mu r}{n_{1}}} \frac{C_{B}\kappa}{1-\epsilon}\sqrt{\frac{\mu r}{n_{2}}}\frac{1}{(1-\epsilon)^{2}}\|\bDelta_{R}\bSigma_{\star}^{1/2}\|_{\fro}\\
 & =\frac{C_{2}C_B\kappa}{(1-\epsilon)^{3}} \frac{\mu r}{\sqrt{p(n_{1}\wedge n_{2})}}\|\bDelta_{L}\bSigma_{\star}^{1/2}\|_{\fro}\|\bDelta_{R}\bSigma_{\star}^{1/2}\|_{\fro}.
\end{align*}
Combine the bounds of $\mfk{P}_{3,1}$ and $\mfk{P}_{3,2}$ to reach
\begin{align*}
\mfk{P}_{3} & \le\frac{C_{1}}{(1-\epsilon)^{2}}\sqrt{\frac{\mu r\log(n_{1}\vee n_{2})}{p(n_{1}\wedge n_{2})}}\|\bDelta_{R}\bSigma_{\star}^{1/2}\|_{\fro}^{2}\\
 & \quad+\frac{C_{2}C_{B}\kappa}{(1-\epsilon)^{3}}\frac{\mu r}{\sqrt{p(n_{1}\wedge n_{2})}}\|\bDelta_{L}\bSigma_{\star}^{1/2}\|_{\fro}\|\bDelta_{R}\bSigma_{\star}^{1/2}\|_{\fro}\\
 & \le \delta_{1}\|\bDelta_{R}\bSigma_{\star}^{1/2}\|_{\fro}^{2} + \delta_{2}\|\bDelta_{L}\bSigma_{\star}^{1/2}\|_{\fro}\|\bDelta_{R}\bSigma_{\star}^{1/2}\|_{\fro} \\
 & \le \frac{\delta_{2}}{2}\|\bDelta_{L}\bSigma_{\star}^{1/2}\|_{\fro}^{2} + (\delta_{1}+\frac{\delta_{2}}{2})\|\bDelta_{R}\bSigma_{\star}^{1/2}\|_{\fro}^{2}.
\end{align*}
Moving on to the term $\mfk{P}_{4}$, we have 
\begin{align*}
\sqrt{\mfk{P}_{4}} & =\left\Vert (p^{-1}\cP_{\Omega}-\cI)(\bL\bR^{\top}-\bX_{\star})\bR(\bR^{\top}\bR)^{-1}\bSigma_{\star}^{1/2}\right\Vert _{\fro}\\
 & \le\underbrace{\Big|\tr\left((p^{-1}\cP_{\Omega}-\cI)(\bDelta_{L}\bR_{\star}^{\top})\bR_{\star}(\bR^{\top}\bR)^{-1}\bSigma_{\star}^{1/2}\tilde{\bL}^{\top}\right)\Big|}_{\mfk{P}_{4,1}}\\
 & \quad+\underbrace{\Big|\tr\left((p^{-1}\cP_{\Omega}-\cI)(\bDelta_{L}\bR_{\star}^{\top})\bDelta_{R}(\bR^{\top}\bR)^{-1}\bSigma_{\star}^{1/2}\tilde{\bL}^{\top}\right)\Big|}_{\mfk{P}_{4,2}}\\
 & \quad+\underbrace{\Big|\tr\left((p^{-1}\cP_{\Omega}-\cI)(\bL\bDelta_{R}^{\top})\bR(\bR^{\top}\bR)^{-1}\bSigma_{\star}^{1/2}\tilde{\bL}^{\top}\right)\Big|}_{\mfk{P}_{4,3}},
\end{align*}
where we have used the variational representation of the Frobenius norm for some $\tilde{\bL}\in\RR^{n_{1}\times r}$ obeying $\|\tilde{\bL}\|_{\fro}=1$.
Note that the decomposition of $\sqrt{\mfk{P}_{4}}$ is extremely similar to that of $\mfk{P}_{2}$. Therefore we can follow a similar argument (i.e.~applying Lemmas~\ref{lemma:P_Omega_tangent} and \ref{lemma:P_Omega_Chen}) to control these terms as
\begin{align*}
\mfk{P}_{4,1} & \le \frac{C_{1}}{(1-\epsilon)^{2}}\sqrt{\frac{\mu r\log(n_{1}\vee n_{2})}{p(n_{1}\wedge n_{2})}}\|\bDelta_{L}\bSigma_{\star}^{1/2}\|_{\fro}; \\
\mfk{P}_{4,2} & \le \frac{C_{2}\kappa}{(1-\epsilon)^{2}}\left(1+\frac{C_{B}}{1-\epsilon}\right)\frac{\mu r}{\sqrt{p(n_{1}\wedge n_{2})}}\|\bDelta_{R}\bSigma_{\star}^{1/2}\|_{\fro}; \\
\mfk{P}_{4,3} & \le \frac{C_{2}C_{B}^{2}\kappa^{2}}{(1-\epsilon)^{4}}\frac{\mu r}{\sqrt{p(n_{1}\wedge n_{2})}}\|\bDelta_{R}\bSigma_{\star}^{1/2}\|_{\fro}.
\end{align*}
For conciseness, we omit the details for bounding each term. Combine them to reach  
\begin{align*}
\sqrt{\mfk{P}_{4}}\le\delta_{1}\|\bDelta_{L}\bSigma_{\star}^{1/2}\|_{\fro}+\delta_{2}\|\bDelta_{R}\bSigma_{\star}^{1/2}\|_{\fro}.
\end{align*}
Finally take the square on both sides and use $2ab\le a^{2}+b^{2}$ to obtain the upper bound
\begin{align*}
\mfk{P}_{4}\le\delta_{1}(\delta_{1}+\delta_{2})\|\bDelta_{L}\bSigma_{\star}^{1/2}\|_{\fro}^{2}+\delta_{2}(\delta_{1}+\delta_{2})\|\bDelta_{R}\bSigma_{\star}^{1/2}\|_{\fro}^{2}.
\end{align*}
\end{proof}

\subsection{Proof of Lemma~\ref{lemma:init_MC}}\label{subsec:proof_init_MC}

We start by recording a useful lemma below.
\begin{lemma}[{\cite[Lemma~2]{chen2015incoherence}, \cite[Lemma~4]{chen2019nonconvex}}]\label{lemma:P_Omega_fixed}
For any fixed $\bX\in\RR^{n_{1}\times n_{2}}$, with overwhelming probability, one has 
\begin{align*}
\left\Vert (p^{-1}\cP_{\Omega}-\cI)(\bX)\right\Vert _{\op}\le C_{0}\frac{\log(n_{1}\vee n_{2})}{p}\|\bX\|_{\infty}+C_{0}\sqrt{\frac{\log(n_{1}\vee n_{2})}{p}}(\|\bX\|_{2,\infty}\vee\|\bX^{\top}\|_{2,\infty}),
\end{align*}
where $C_{0}>0$ is some universal constant that does not depend on $\bX$. 
\end{lemma}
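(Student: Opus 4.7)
The plan is to write $(p^{-1}\cP_{\Omega}-\cI)(\bX)$ as a sum of independent, mean-zero random matrices indexed by $(i,j)$, and then apply the matrix Bernstein inequality. Concretely, let $\delta_{ij} \coloneqq \mathds{1}\{(i,j)\in\Omega\}$, so that $\EE[\delta_{ij}]=p$ and $\delta_{ij}$ are mutually independent across $(i,j)$. Define
\begin{align*}
\bZ_{ij} \coloneqq \left(p^{-1}\delta_{ij}-1\right)\bX_{i,j}\,\be_{i}\be_{j}^{\top}, \qquad \bZ \coloneqq \sum_{i,j}\bZ_{ij} = (p^{-1}\cP_{\Omega}-\cI)(\bX).
\end{align*}
Each $\bZ_{ij}$ is mean-zero, and the $\bZ_{ij}$'s are mutually independent.

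First I would bound the almost-sure operator-norm size of each summand: since $|p^{-1}\delta_{ij}-1|\le p^{-1}$ and $\|\be_{i}\be_{j}^{\top}\|_{\op}=1$,
\begin{align*}
L \coloneqq \max_{i,j}\|\bZ_{ij}\|_{\op} \le p^{-1}\|\bX\|_{\infty}.
\end{align*}
Next I would compute the matrix variance proxy. Using $\EE[(p^{-1}\delta_{ij}-1)^{2}]=p^{-1}-1\le p^{-1}$, the diagonal structure gives
\begin{align*}
\sum_{i,j}\EE[\bZ_{ij}\bZ_{ij}^{\top}] = \sum_{i}\Bigl(\sum_{j}\EE[(p^{-1}\delta_{ij}-1)^{2}]\bX_{i,j}^{2}\Bigr)\be_{i}\be_{i}^{\top},
\end{align*}
whose operator norm is at most $p^{-1}\max_{i}\sum_{j}\bX_{i,j}^{2}=p^{-1}\|\bX\|_{2,\infty}^{2}$. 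An analogous calculation yields $\|\sum_{i,j}\EE[\bZ_{ij}^{\top}\bZ_{ij}]\|_{\op}\le p^{-1}\|\bX^{\top}\|_{2,\infty}^{2}$. Therefore
\begin{align*}
V \coloneqq \max\Bigl\{\Bigl\|\sum_{i,j}\EE[\bZ_{ij}\bZ_{ij}^{\top}]\Bigr\|_{\op},\ \Bigl\|\sum_{i,j}\EE[\bZ_{ij}^{\top}\bZ_{ij}]\Bigr\|_{\op}\Bigr\} \le p^{-1}\left(\|\bX\|_{2,\infty}^{2}\vee\|\bX^{\top}\|_{2,\infty}^{2}\right).
\end{align*}

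With these two ingredients in place, the matrix Bernstein inequality (e.g.~Tropp's version for rectangular matrices) yields, for any $t>0$,
\begin{align*}
\PP\left\{\|\bZ\|_{\op}\ge t\right\} \le (n_{1}+n_{2})\exp\left(-\frac{t^{2}/2}{V+Lt/3}\right).
\end{align*}
To achieve the "overwhelming probability" guarantee $1-c_{1}(n_{1}\vee n_{2})^{-c_{2}}$, I would choose
\begin{align*}
t = C_{0}\sqrt{\frac{\log(n_{1}\vee n_{2})}{p}}\bigl(\|\bX\|_{2,\infty}\vee\|\bX^{\top}\|_{2,\infty}\bigr) + C_{0}\frac{\log(n_{1}\vee n_{2})}{p}\|\bX\|_{\infty}
\end{align*}
for a sufficiently large constant $C_{0}>0$. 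A routine two-regime comparison (subgaussian regime when $V$ dominates, subexponential regime when $Lt$ dominates) shows that $t^{2}/2\ge(c_{2}+1)\log(n_{1}\vee n_{2})(V+Lt/3)$, which, combined with the union-bound factor $n_{1}+n_{2}\le 2(n_{1}\vee n_{2})$, delivers the advertised tail and completes the proof.

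There is no real obstacle here; the only mildly delicate step is verifying that both terms in the chosen $t$ are needed to simultaneously dominate the variance and Bernstein-large-deviation contributions, which the two-regime argument above handles cleanly.
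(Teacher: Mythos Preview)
Your proposal is correct and is exactly the standard matrix-Bernstein argument used in the cited references \cite{chen2015incoherence,chen2019nonconvex}; the paper itself does not prove this lemma but simply imports it as a known result. Your decomposition, variance computation, and two-regime choice of $t$ are all on the mark.
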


In view of Lemma~\ref{lemma:Procrustes}, one has 
\begin{align}
\dist(\tilde{\bF}_{0},\bF_{\star})\le\sqrt{\sqrt{2}+1}\left\Vert \bU_{0}\bSigma_{0}\bV_{0}^{\top}-\bX_{\star}\right\Vert _{\fro}\le\sqrt{(\sqrt{2}+1)2r}\left\Vert \bU_{0}\bSigma_{0}\bV_{0}^{\top}-\bX_{\star}\right\Vert _{\op},\label{eq:init_MC_helper-1}
\end{align}
where the last relation uses the fact that $\bU_{0}\bSigma_{0}\bV_{0}^{\top}-\bX_{\star}$ has rank at most $2r$. Applying the triangle inequality, we obtain 
\begin{align}
\left\Vert \bU_{0}\bSigma_{0}\bV_{0}^{\top}-\bX_{\star}\right\Vert _{\op} & \le\left\Vert p^{-1}\cP_{\Omega}(\bX_{\star})-\bU_{0}\bSigma_{0}\bV_{0}^{\top}\right\Vert _{\op}+\left\Vert p^{-1}\cP_{\Omega}(\bX_{\star})-\bX_{\star}\right\Vert _{\op} \nonumber\\
 & \le2\left\Vert (p^{-1}\cP_{\Omega}-\cI)(\bX_{\star})\right\Vert _{\op}.\label{eq:init_MC_helper-2}
\end{align}
Here the second inequality hinges on the fact that $\bU_{0}\bSigma_{0}\bV_{0}^{\top}$ is the best rank-$r$ approximation to $p^{-1}\cP_{\Omega}(\bX_{\star})$, i.e. 
\begin{align*} 
\left\Vert p^{-1}\cP_{\Omega}(\bX_{\star})-\bU_{0}\bSigma_{0}\bV_{0}^{\top}\right\Vert _{\op}\le\left\Vert p^{-1}\cP_{\Omega}(\bX_{\star})-\bX_{\star}\right\Vert _{\op}.
\end{align*}
Combining \eqref{eq:init_MC_helper-1} and \eqref{eq:init_MC_helper-2} yields
\begin{align*}
\dist(\tilde{\bF}_{0},\bF_{\star})\le2\sqrt{(\sqrt{2}+1)2r}\left\Vert (p^{-1}\cP_{\Omega}-\cI)(\bX_{\star})\right\Vert _{\op}\le5\sqrt{r}\left\Vert (p^{-1}\cP_{\Omega}-\cI)(\bX_{\star})\right\Vert _{\op}.
\end{align*}
It then boils down to controlling $\left\Vert p^{-1}\cP_{\Omega}(\bX_{\star})-\bX_{\star}\right\Vert _{\op}$,
which is readily supplied by Lemma~\ref{lemma:P_Omega_fixed} as
\begin{align*}
\left\Vert (p^{-1}\cP_{\Omega}-\cI)(\bX_{\star})\right\Vert _{\op} & \le C_{0}\frac{\log(n_{1}\vee n_{2})}{p}\|\bX_{\star}\|_{\infty}+C_{0}\sqrt{\frac{\log(n_{1}\vee n_{2})}{p}} (\|\bX_{\star}\|_{2,\infty}\vee\|\bX_{\star}^{\top}\|_{2,\infty}),
\end{align*}
which holds with overwhelming probability. The proof is finished by plugging the following bounds from incoherence assumption of $\bX_{\star}$:
\begin{align*}
\|\bX_{\star}\|_{\infty} & \le\|\bU_{\star}\|_{2,\infty}\|\bSigma_{\star}\|_{\op}\|\bV_{\star}\|_{2,\infty}\le\frac{\mu r}{\sqrt{n_{1}n_{2}}}\kappa\sigma_{r}(\bX_{\star});\\
\|\bX_{\star}\|_{2,\infty} & \le\|\bU_{\star}\|_{2,\infty}\|\bSigma_{\star}\|_{\op}\|\bV_{\star}\|_{\op}\le\sqrt{\frac{\mu r}{n_{1}}}\kappa\sigma_{r}(\bX_{\star});\\
\|\bX_{\star}^{\top}\|_{2,\infty} & \le\|\bU_{\star}\|_{\op}\|\bSigma_{\star}\|_{\op}\|\bV_{\star}\|_{2,\infty}\le\sqrt{\frac{\mu r}{n_{2}}}\kappa\sigma_{r}(\bX_{\star}).
\end{align*}

\section{Proof for General Loss Functions}\label{sec:proof_GL}

We first present a useful property of restricted smooth and convex functions.
\begin{lemma}\label{lemma:resticted_smooth_convex} Suppose that $f:\RR^{n_{1}\times n_{2}}\mapsto\RR$ is rank-$2r$ restricted $L$-smooth and rank-$2r$ restricted convex. Then for any $\bX_{1},\bX_{2}\in\RR^{n_{1}\times n_{2}}$ of rank at most $r$, one has
\begin{align*}
\langle\nabla f(\bX_{1})-\nabla f(\bX_{2}), \bX_{1}-\bX_{2}\rangle\ge\frac{1}{L}\|\nabla f(\bX_{1})-\nabla f(\bX_{2})\|_{\fro,r}^2.
\end{align*}
\end{lemma}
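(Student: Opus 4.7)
\textbf{Proof plan for Lemma~\ref{lemma:resticted_smooth_convex}.} The plan is to mimic the classical co-coercivity argument (which combines smoothness at one point with convexity at another, via a single gradient step), but with two restricted modifications: (i) shift by the auxiliary convex function $\phi(\bX) \coloneqq f(\bX) - \langle \nabla f(\bX_{1}), \bX\rangle$, which is also rank-$2r$ restricted $L$-smooth and rank-$2r$ restricted convex and satisfies $\nabla\phi(\bX_{1})=\zero$; and (ii) take the gradient step along the best rank-$r$ projection of the gradient, so that all matrices we feed into the restricted inequalities stay within rank $2r$.

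More precisely, set $\bR \coloneqq \nabla\phi(\bX_{2}) = \nabla f(\bX_{2}) - \nabla f(\bX_{1})$ and $\bE \coloneqq \cP_{r}(\bR)$, so $\rank(\bE)\le r$ and, by Lemma~\ref{lemma:norm_Fr_Eckart-Yang-Mirsky} together with the definition of $\|\cdot\|_{\fro,r}$,
\begin{align*}
\langle \bR, \bE \rangle = \|\bE\|_{\fro}^{2} = \|\bR\|_{\fro,r}^{2}.
\end{align*}
Since $\rank(\bX_{2}) \le r$ and $\rank(\bE)\le r$, the point $\bY \coloneqq \bX_{2} - \tfrac{1}{L}\bE$ has rank at most $2r$. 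I would first apply rank-$2r$ restricted smoothness of $\phi$ to the pair $(\bX_{2},\bY)$, which gives
\begin{align*}
\phi(\bY) \le \phi(\bX_{2}) - \tfrac{1}{L}\langle \bR, \bE\rangle + \tfrac{1}{2L}\|\bE\|_{\fro}^{2} = \phi(\bX_{2}) - \tfrac{1}{2L}\|\bR\|_{\fro,r}^{2}.
\end{align*}
Next I would apply rank-$2r$ restricted convexity of $\phi$ to the pair $(\bX_{1},\bY)$ (both of rank at most $2r$), and use $\nabla\phi(\bX_{1}) = \zero$ to conclude $\phi(\bY) \ge \phi(\bX_{1})$. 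Chaining the two inequalities and unpacking the definition of $\phi$ yields the one-sided bound
\begin{align*}
f(\bX_{2}) - f(\bX_{1}) \ge \langle \nabla f(\bX_{1}), \bX_{2}-\bX_{1}\rangle + \tfrac{1}{2L}\|\nabla f(\bX_{1}) - \nabla f(\bX_{2})\|_{\fro,r}^{2}.
\end{align*}

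Finally, swapping the roles of $\bX_{1}$ and $\bX_{2}$ in the above argument gives the symmetric inequality (note $\|\cdot\|_{\fro,r}$ is insensitive to a sign flip), and adding the two produces the claimed co-coercivity bound. The only place I expect to have to be careful is the bookkeeping of ranks: because I want the endpoints of both the smoothness inequality and the convexity inequality to lie in the rank-$2r$ set, I must use a rank-$r$ (not full) gradient step, which is exactly why $\cP_{r}(\bR)$ appears and why the partial Frobenius norm $\|\cdot\|_{\fro,r}$, rather than the full Frobenius norm, shows up on the right-hand side.
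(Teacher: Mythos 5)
Your proposal is correct and is essentially the paper's own argument: the paper likewise chains the restricted convexity lower bound at $\bX_{1}$ with the restricted smoothness upper bound at $\bX_{2}$, evaluated at the very same trial point $\bar{\bX}=\bX_{2}-\tfrac{1}{L}\cP_{r}\bigl(\nabla f(\bX_{2})-\nabla f(\bX_{1})\bigr)$, which is exactly your $\bY$, and then symmetrizes and adds. Your introduction of the linearly shifted function $\phi$ is only a cosmetic repackaging of that same computation (the linear terms cancel on both sides of the restricted inequalities), so the two proofs coincide in substance.
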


\begin{proof}
Since $f(\cdot)$ is rank-$2r$ restricted $L$-smooth and convex, it holds for any $\bar{\bX}\in\RR^{n_{1}\times n_{2}}$ with rank at most $2r$ that
\begin{align*}
f(\bX_{1})+\langle\nabla f(\bX_{1}),\bar{\bX}-\bX_{1}\rangle\le f(\bar{\bX})\le f(\bX_{2})+\langle\nabla f(\bX_{2}),\bar{\bX}-\bX_{2}\rangle+\frac{L}{2}\|\bar{\bX}-\bX_{2}\|_{\fro}^2.
\end{align*}
Reorganize the terms to yield
\begin{align*}
f(\bX_{1})+\langle\nabla f(\bX_{1}),\bX_{2}-\bX_{1}\rangle\le f(\bX_{2})+\langle\nabla f(\bX_{2})-\nabla f(\bX_{1}),\bar{\bX}-\bX_{2}\rangle+\frac{L}{2}\|\bar{\bX}-\bX_{2}\|_{\fro}^2.
\end{align*}
Take $\bar{\bX}=\bX_{2}-\frac{1}{L}\cP_{r}(\nabla f(\bX_{2})-\nabla f(\bX_{1}))$, whose rank is at most $2r$, to see
\begin{align*}
f(\bX_{1})+\langle\nabla f(\bX_{1}),\bX_{2}-\bX_{1}\rangle+\frac{1}{2L}\left\Vert\nabla f(\bX_{2})-\nabla f(\bX_{1})\right\Vert_{\fro,r}^2 \le f(\bX_{2}).
\end{align*}
We can further switch the roles of $\bX_{1}$ and $\bX_{2}$ to obtain 
\begin{align*}
f(\bX_{2})+\langle\nabla f(\bX_{2}),\bX_{1}-\bX_{2}\rangle+\frac{1}{2L}\left\Vert\nabla f(\bX_{2})-\nabla f(\bX_{1})\right\Vert_{\fro,r}^2 \le f(\bX_{1}).
\end{align*}
Adding the above two inequalities yields the desired bound.
\end{proof}


\subsection{Proof of Theorem~\ref{thm:GL}}

Suppose that the $t$-th iterate $\bF_{t}$ obeys the condition $\dist(\bF_{t},\bF_{\star})\le 0.1\sigma_{r}(\bX_{\star})/\sqrt{\kappa_{f}}$. In view of Lemma~\ref{lemma:Q_existence}, one knows that $\bQ_{t}$, the optimal alignment matrix between $\bF_{t}$ and $\bF_{\star}$ exists. Therefore, for notational convenience, denote $\bL\coloneqq\bL_{t}\bQ_{t}$, $\bR\coloneqq\bR_{t}\bQ_{t}^{-\top}$, $\bDelta_{L}\coloneqq\bL-\bL_{\star}$, $\bDelta_{R}\coloneqq\bR-\bR_{\star}$, and $\epsilon\coloneqq0.1/\sqrt{\kappa_{f}}$. Similar to the derivation in \eqref{eq:cond_MF}, we have 
\begin{align}
\|\bDelta_{L}\bSigma_{\star}^{-1/2}\|_{\op}\vee\|\bDelta_{R}\bSigma_{\star}^{-1/2}\|_{\op}\le\epsilon.\label{eq:cond_GL}
\end{align}
The conclusion $\|\bL_{t}\bR_{t}^{\top}-\bX_{\star}\|_{\fro} \le 1.5\dist(\bF_{t},\bF_{\star})$ is a simple consequence of Lemma~\ref{lemma:matrix2factor}; see \eqref{eq:dist_matrix} for a detailed argument. From now on, we focus on proving the distance contraction. 

By the definition of $\dist(\bF_{t+1},\bF_{\star})$, one has
\begin{align}
\dist^{2}(\bF_{t+1},\bF_{\star}) & \le\left\Vert (\bL_{t+1}\bQ_{t}-\bL_{\star})\bSigma_{\star}^{1/2}\right\Vert _{\fro}^{2}+\left\Vert (\bR_{t+1}\bQ_{t}^{-\top}-\bR_{\star})\bSigma_{\star}^{1/2}\right\Vert _{\fro}^{2}.\label{eq:GL_expand}
\end{align}
Introduce an auxiliary function
\begin{align*}
f_{\mu}(\bX)=f(\bX)-\frac{\mu}{2}\|\bX-\bX_{\star}\|_{\fro}^{2}, 
\end{align*}
which is rank-$2r$ restricted $(L-\mu)$-smooth and rank-$2r$ restricted convex.
Using the \texttt{ScaledGD} update rule \eqref{eq:scaledGD_GL} and the decomposition $\bL\bR^{\top}-\bX_{\star}=\bDelta_{L}\bR^{\top}+\bL_{\star}\bDelta_{R}^{\top}$, we obtain 
\begin{align*}
(\bL_{t+1}\bQ_{t}-\bL_{\star})\bSigma_{\star}^{1/2} & =\left(\bL-\eta\nabla f(\bL\bR^{\top})\bR(\bR^{\top}\bR)^{-1}-\bL_{\star}\right)\bSigma_{\star}^{1/2}\\
 & =\left(\bL-\eta\mu(\bL\bR^{\top}-\bX_{\star})\bR(\bR^{\top}\bR)^{-1}-\eta\nabla f_{\mu}(\bL\bR^{\top})\bR(\bR^{\top}\bR)^{-1}-\bL_{\star}\right)\bSigma_{\star}^{1/2}\\
 & =(1-\eta\mu)\bDelta_{L}\bSigma_{\star}^{1/2}-\eta\mu\bL_{\star}\bDelta_{R}^{\top}\bR(\bR^{\top}\bR)^{-1}\bSigma_{\star}^{1/2} - \eta\nabla f_{\mu}(\bL\bR^{\top})\bR(\bR^{\top}\bR)^{-1}\bSigma_{\star}^{1/2}.
\end{align*}
As a result, one can expand the first square in \eqref{eq:GL_expand} as
\begin{align*}
\left\Vert(\bL_{t+1}\bQ_{t}-\bL_{\star})\bSigma_{\star}^{1/2}\right\Vert_{\fro}^2 & =\underbrace{\left\Vert(1-\eta\mu)\bDelta_{L}\bSigma_{\star}^{1/2}-\eta\mu\bL_{\star}\bDelta_{R}^{\top}\bR(\bR^{\top}\bR)^{-1}\bSigma_{\star}^{1/2}\right\Vert_{\fro}^2}_{\mfk{G}_{1}} \\
& \quad -2\eta(1-\eta\mu)\underbrace{\left\langle\nabla f_{\mu}(\bL\bR^{\top}), \bDelta_{L}\bSigma_{\star}(\bR^{\top}\bR)^{-1}\bR^{\top}-\bDelta_{L}\bR_{\star}^{\top}-\frac{1}{2}\bDelta_{L}\bDelta_{R}^{\top}\right\rangle}_{\mfk{G}_{2}} \\
& \quad -2\eta(1-\eta\mu)\left\langle\nabla f_{\mu}(\bL\bR^{\top}), \bDelta_{L}\bR_{\star}^{\top}+\frac{1}{2}\bDelta_{L}\bDelta_{R}^{\top}\right\rangle \\
& \quad +2\eta^2\mu\underbrace{\left\langle\nabla f_{\mu}(\bL\bR^{\top}), \bL_{\star}\bDelta_{R}^{\top}\bR(\bR^{\top}\bR)^{-1}\bSigma_{\star}(\bR^{\top}\bR)^{-1}\bR^{\top}\right\rangle}_{\mfk{G}_{3}} \\
& \quad +\eta^2\underbrace{\left\Vert\nabla f_{\mu}(\bL\bR^{\top})\bR(\bR^{\top}\bR)^{-1}\bSigma_{\star}^{1/2}\right\Vert_{\fro}^2}_{\mfk{G}_{4}}.
\end{align*}
In the sequel, we shall bound the four terms separately. 
\begin{enumerate}
\item Notice that the main term $\mfk{G}_{1}$ has already been controlled in \eqref{eq:MF_Lt_bound} under the condition \eqref{eq:cond_GL}. It obeys
\begin{align*}
\mfk{G}_{1}\le\left((1-\eta\mu)^{2}+\frac{2\epsilon}{1-\epsilon}\eta\mu(1-\eta\mu)\right)\|\bDelta_{L}\bSigma_{\star}^{1/2}\|_{\fro}^{2}+\frac{2\epsilon+\epsilon^{2}}{(1-\epsilon)^{2}}\eta^{2}\mu^{2}\|\bDelta_{R}\bSigma_{\star}^{1/2}\|_{\fro}^{2},
\end{align*}
as long as $\eta\mu\le2/3$.
\item For the second term $\mfk{G}_{2}$, note that $\bDelta_{L}\bSigma_{\star}(\bR^{\top}\bR)^{-1}\bR^{\top}-\bDelta_{L}\bR_{\star}^{\top}-\frac{1}{2}\bDelta_{L}\bDelta_{R}^{\top}$ has rank at most $r$. Hence we can invoke Lemma~\ref{lemma:norm_Fr_variation} to obtain
\begin{align*}
|\mfk{G}_{2}| & \le\|\nabla f_{\mu}(\bL\bR^{\top})\|_{\fro,r}\left\Vert\bDelta_{L}\bSigma_{\star}(\bR^{\top}\bR)^{-1}\bR^{\top}-\bDelta_{L}\bR_{\star}^{\top}-\frac{1}{2}\bDelta_{L}\bDelta_{R}^{\top}\right\Vert_{\fro} \\
 & \le\|\nabla f_{\mu}(\bL\bR^{\top})\|_{\fro,r}\|\bDelta_{L}\bSigma_{\star}^{1/2}\|_{\fro}\left(\left\Vert\bR(\bR^{\top}\bR)^{-1}\bSigma_{\star}^{1/2}-\bV_{\star}\right\Vert_{\op}+\frac{1}{2}\|\bDelta_{R}\bSigma_{\star}^{-1/2}\|_{\op}\right),
\end{align*}
where the second line uses $\bR_{\star} = \bV_{\star} \bSigma_{\star}^{1/2}$. Take the condition \eqref{eq:cond_GL} and Lemma~\ref{lemma:Weyl} together to obtain
\begin{align*}
\left\Vert\bR(\bR^{\top}\bR)^{-1}\bSigma_{\star}^{1/2}\right\Vert_{\op} & \le\frac{1}{1-\epsilon};\\
\left\Vert\bR(\bR^{\top}\bR)^{-1}\bSigma_{\star}^{1/2}-\bV_{\star}\right\Vert_{\op} & \le\frac{\sqrt{2}\epsilon}{1-\epsilon}.
\end{align*}
These consequences further imply that
\begin{align*}
|\mfk{G}_{2}| & \le(\frac{\sqrt{2}\epsilon}{1-\epsilon}+\frac{\epsilon}{2})\|\nabla f_{\mu}(\bL\bR^{\top})\|_{\fro,r}\|\bDelta_{L}\bSigma_{\star}^{1/2}\|_{\fro}.
\end{align*}

\item As above, the third term $\mfk{G}_{3}$ can be similarly bounded as
\begin{align*}
|\mfk{G}_{3}| & \le\|\nabla f_{\mu}(\bL\bR^{\top})\|_{\fro,r}\left\Vert\bL_{\star}\bDelta_{R}^{\top}\bR(\bR^{\top}\bR)^{-1}\bSigma_{\star}(\bR^{\top}\bR)^{-1}\bR^{\top}\right\Vert_{\fro} \\
 & \le\|\nabla f_{\mu}(\bL\bR^{\top})\|_{\fro,r}\|\bDelta_{R}\bSigma_{\star}^{1/2}\|_{\fro}\left\Vert\bR(\bR^{\top}\bR)^{-1}\bSigma_{\star}^{1/2}\right\Vert_{\op}^{2} \\
 & \le\frac{1}{(1-\epsilon)^2}\|\nabla f_{\mu}(\bL\bR^{\top})\|_{\fro,r}\|\bDelta_{R}\bSigma_{\star}^{1/2}\|_{\fro}.
\end{align*}

\item For the last term $\mfk{G}_{4}$, invoke Lemma~\ref{lemma:norm_Fr_variation} to obtain
\begin{align*}
\mfk{G}_{4}\le\|\nabla f_{\mu}(\bL\bR^{\top})\|_{\fro,r}^{2}\left\Vert\bR(\bR^{\top}\bR)^{-1}\bSigma_{\star}^{1/2}\right\Vert_{\op}^{2}\le\frac{1}{(1-\epsilon)^{2}}\|\nabla f_{\mu}(\bL\bR^{\top})\|_{\fro,r}^{2}.
\end{align*}
\end{enumerate}

\noindent Taking collectively the bounds for $\mfk{G}_{1},\mfk{G}_{2},\mfk{G}_{3}$ and $\mfk{G}_{4}$ yields 
\begin{align*}
\left\Vert (\bL_{t+1}\bQ_{t}-\bL_{\star})\bSigma_{\star}^{1/2}\right\Vert _{\fro}^{2} & \le \left((1-\eta\mu)^{2}+\frac{2\epsilon}{1-\epsilon}\eta\mu(1-\eta\mu)\right)\|\bDelta_{L}\bSigma_{\star}^{1/2}\|_{\fro}^{2}+\frac{2\epsilon+\epsilon^{2}}{(1-\epsilon)^{2}}\eta^{2}\mu^2\|\bDelta_{R}\bSigma_{\star}^{1/2}\|_{\fro}^{2} \\
 & \quad +2\eta(\frac{\sqrt{2}\epsilon}{1-\epsilon}+\frac{\epsilon}{2})(1-\eta\mu)\|\nabla f_{\mu}(\bL\bR^{\top})\|_{\fro,r}\|\bDelta_{L}\bSigma_{\star}^{1/2}\|_{\fro} \\
 & \quad -2\eta(1-\eta\mu)\left\langle\nabla f_{\mu}(\bL\bR^{\top}), \bDelta_{L}\bR_{\star}^{\top}+\frac{1}{2}\bDelta_{L}\bDelta_{R}^{\top}\right\rangle \\
 & \quad +\frac{2\eta^{2}\mu}{(1-\epsilon)^{2}}\|\nabla f_{\mu}(\bL\bR^{\top})\|_{\fro,r}\|\bDelta_{R}\bSigma_{\star}^{1/2}\|_{\fro}+\frac{\eta^2}{(1-\epsilon)^{2}}\|\nabla f_{\mu}(\bL\bR^{\top})\|_{\fro,r}^{2}.
\end{align*}
Similarly, we can obtain the control of $\|(\bR_{t+1}\bQ_{t}^{-\top}-\bR_{\star})\bSigma_{\star}^{1/2}\|_{\fro}^{2}$. Combine them together to reach
\begin{align*}
 & \left\Vert (\bL_{t+1}\bQ_{t}-\bL_{\star})\bSigma_{\star}^{1/2}\right\Vert _{\fro}^{2}+\left\Vert (\bR_{t+1}\bQ_{t}^{-\top}-\bR_{\star})\bSigma_{\star}^{1/2}\right\Vert _{\fro}^{2}\\
 & \quad\le\left((1-\eta\mu)^{2}+\frac{2\epsilon}{1-\epsilon}\eta\mu(1-\eta\mu)+\frac{2\epsilon+\epsilon^{2}}{(1-\epsilon)^{2}}\eta^{2}\mu^{2}\right)\left(\|\bDelta_{L}\bSigma_{\star}^{1/2}\|_{\fro}^{2}+\|\bDelta_{R}\bSigma_{\star}^{1/2}\|_{\fro}^{2}\right) \\
 & \qquad+2\eta\left((\frac{\sqrt{2}\epsilon}{1-\epsilon}+\frac{\epsilon}{2})(1-\eta\mu)+\frac{\eta\mu}{(1-\epsilon)^{2}}\right)\|\nabla f_{\mu}(\bL\bR^{\top})\|_{\fro,r}\left(\|\bDelta_{L}\bSigma_{\star}^{1/2}\|_{\fro}+\|\bDelta_{R}\bSigma_{\star}^{1/2}\|_{\fro}\right) \\
 & \qquad-2\eta(1-\eta\mu)\left\langle\nabla f_{\mu}(\bL\bR^{\top}), \bDelta_{L}\bR_{\star}^{\top}+\bL_{\star}\bDelta_{R}^{\top}+\bDelta_{L}\bDelta_{R}^{\top}\right\rangle+\frac{2\eta^2}{(1-\epsilon)^{2}}\|\nabla f_{\mu}(\bL\bR^{\top})\|_{\fro,r}^{2} \\
 & \quad\le\left((1-\eta\mu)^{2}+\frac{2\epsilon}{1-\epsilon}\eta\mu(1-\eta\mu)+\frac{2\epsilon+\epsilon^{2}}{(1-\epsilon)^{2}}\eta^{2}\mu^{2}\right)\left(\|\bDelta_{L}\bSigma_{\star}^{1/2}\|_{\fro}^{2}+\|\bDelta_{R}\bSigma_{\star}^{1/2}\|_{\fro}^{2}\right) \\
 & \qquad+2\eta\underbrace{\left((\frac{\sqrt{2}\epsilon}{1-\epsilon}+\frac{\epsilon}{2})(1-\eta\mu)+\frac{\eta\mu}{(1-\epsilon)^{2}}\right)}_{\mfk{C}_{1}}\|\nabla f_{\mu}(\bL\bR^{\top})\|_{\fro,r}\left(\|\bDelta_{L}\bSigma_{\star}^{1/2}\|_{\fro}+\|\bDelta_{R}\bSigma_{\star}^{1/2}\|_{\fro}\right) \\
 & \qquad-2\eta\underbrace{\left(\frac{1-\eta\mu}{L-\mu}-\frac{\eta}{(1-\epsilon)^{2}}\right)}_{\mfk{C}_{2}}\|\nabla f_{\mu}(\bL\bR^{\top})\|_{\fro,r}^{2},
\end{align*}
where the last line follows from Lemma~\ref{lemma:resticted_smooth_convex} (notice that $\nabla f_{\mu}(\bX_{\star})=\zero$) as
\begin{align*}
\langle\nabla f_{\mu}(\bL\bR^{\top}),\bDelta_{L}\bR_{\star}^{\top}+\bL_{\star}\bDelta_{R}^{\top}+\bDelta_{L}\bDelta_{R}^{\top}\rangle =
\langle\nabla f_{\mu}(\bL\bR^{\top}),\bL\bR^{\top}-\bX_{\star}\rangle\ge\frac{1}{L-\mu}\|\nabla f_{\mu}(\bL\bR^{\top})\|_{\fro,r}^{2}.
\end{align*}
Notice that $\mfk{C}_{2}>0$ as long as $\eta\le(1-\epsilon)^2/L$. Maximizing the quadratic function of $\|\nabla f_{\mu}(\bL\bR^{\top})\|_{\fro,r}$ yields
\begin{align*}
\mfk{C}_{1}\|\nabla f_{\mu}(\bL\bR^{\top})\|_{\fro,r}\left(\|\bDelta_{L}\bSigma_{\star}^{1/2}\|_{\fro}+\|\bDelta_{R}\bSigma_{\star}^{1/2}\|_{\fro}\right)-\mfk{C}_{2}\|\nabla f_{\mu}(\bL\bR^{\top})\|_{\fro,r}^{2} & \le\frac{\mfk{C}_{1}^{2}}{4\mfk{C}_{2}}\left(\|\bDelta_{L}\bSigma_{\star}^{1/2}\|_{\fro}+\|\bDelta_{R}\bSigma_{\star}^{1/2}\|_{\fro}\right)^{2} \\
 & \le\frac{\mfk{C}_{1}^{2}}{2\mfk{C}_{2}}\left(\|\bDelta_{L}\bSigma_{\star}^{1/2}\|_{\fro}^{2}+\|\bDelta_{R}\bSigma_{\star}^{1/2}\|_{\fro}^{2}\right),
\end{align*}
where the last inequality holds since $(a+b)^{2}\le2(a^{2}+b^{2})$. Identify $\dist^{2}(\bF_{t},\bF_{\star}) = \|\bDelta_{L}\bSigma_{\star}^{1/2}\|_{\fro}^{2}+\|\bDelta_{R}\bSigma_{\star}^{1/2}\|_{\fro}^{2}$ to obtain
\begin{align*} 
\left\Vert (\bL_{t+1}\bQ_{t}-\bL_{\star})\bSigma_{\star}^{1/2}\right\Vert _{\fro}^{2}+\left\Vert (\bR_{t+1}\bQ_{t}^{-\top}-\bR_{\star})\bSigma_{\star}^{1/2}\right\Vert _{\fro}^{2}\le\rho^{2}(\eta;\epsilon,\mu,L)\dist^{2}(\bF_{t},\bF_{\star}),
\end{align*}
where the contraction rate is given by
\begin{align*}
\rho^{2}(\eta;\epsilon,\mu,L) & \coloneqq(1-\eta\mu)^{2}+\frac{2\epsilon}{1-\epsilon}\eta\mu(1-\eta\mu)+\frac{2\epsilon+\epsilon^{2}}{(1-\epsilon)^{2}}\eta^{2}\mu^2+\frac{\left((\frac{\sqrt{2}\epsilon}{1-\epsilon}+\frac{\epsilon}{2})(1-\eta\mu)+\frac{\eta\mu}{(1-\epsilon)^{2}}\right)^2}{1-\eta\mu-\frac{\eta(L-\mu)}{(1-\epsilon)^{2}}}\eta(L-\mu).
\end{align*}
With $\epsilon=0.1/\sqrt{\kappa_{f}}$ and $0<\eta\le0.4/L$, one has $\rho(\eta;\epsilon,\mu,L)\le1-0.7\eta\mu$. Thus we conclude that 
\begin{align*}
\dist(\bF_{t+1},\bF_{\star}) &\le \sqrt{\left\Vert (\bL_{t+1}\bQ_{t}-\bL_{\star})\bSigma_{\star}^{1/2}\right\Vert _{\fro}^{2}+\left\Vert (\bR_{t+1}\bQ_{t}^{-\top}-\bR_{\star})\bSigma_{\star}^{1/2}\right\Vert _{\fro}^{2}} \\
 &\le(1-0.7\eta\mu)\dist(\bF_{t},\bF_{\star}),
\end{align*}
which is the desired claim. 

\begin{remark} We provide numerical details for the contraction rate. For simplicity, we shall prove $\rho(\eta;\epsilon,\mu,L)\le1-0.7\eta\mu$ under a stricter condition $\epsilon=0.02/\sqrt{\kappa_{f}}$. The stronger result under the condition $\epsilon=0.1/\sqrt{\kappa_{f}}$ can be verified through a subtler analysis.

With $\epsilon=0.02/\sqrt{\kappa_{f}}$ and $0<\eta\le0.4/L$, one can bound the terms in $\rho^{2}(\eta;\epsilon,\mu,L)$ as
\begin{align}
(1-\eta\mu)^{2}+\frac{2\epsilon}{1-\epsilon}\eta\mu(1-\eta\mu)+\frac{2\epsilon+\epsilon^{2}}{(1-\epsilon)^{2}}\eta^{2}\mu^2 & \le 1-1.959\eta\mu+1.002\eta^{2}\mu^2; \label{eq:GL_rho-1} \\
\frac{\left((\frac{\sqrt{2}\epsilon}{1-\epsilon}+\frac{\epsilon}{2})(1-\eta\mu)+\frac{\eta\mu}{(1-\epsilon)^{2}}\right)^2}{1-\eta\mu-\frac{\eta(L-\mu)}{(1-\epsilon)^{2}}}\eta(L-\mu) & \le \frac{\frac{0.0016}{\kappa_{f}}+0.078\eta\mu+1.005\eta^{2}\mu^{2}}{1-1.042\eta L}\eta L \nonumber\\
 & \le\frac{0.0016\eta\frac{L}{\kappa_{f}}+0.4\times(0.078\eta\mu+1.005\eta^{2}\mu^{2})}{1-0.4\times1.042} \nonumber \\ 
 & \le0.057\eta\mu+0.69\eta^{2}\mu^{2}, \label{eq:GL_rho-2}
\end{align}
where the last line uses the definition \eqref{eq:kappa_f} of $\kappa_{f}$. Putting \eqref{eq:GL_rho-1} and \eqref{eq:GL_rho-2} together further implies
\begin{align*}
\rho^{2}(\eta;\epsilon,\mu,L) & \le1-1.9\eta\mu+1.7\eta^{2}\mu^2\le(1-0.7\eta\mu)^{2},
\end{align*}
as long as $0<\eta\mu\le0.4$.
\end{remark}

\end{document}